\newtheorem{definition}{Definition}[section]
\newtheorem{lemma}[definition]{Lemma}
\newtheorem{corollary}[definition]{Corollary}
\newtheorem{theorem}[definition]{Theorem}
\newtheorem{example}[definition]{Example}
\newcommand{\pair}[1]{\left({#1}\right)}
\newcommand{\sqbra}[1]{\left[{#1}\right]}
\newcommand{\set}[1]{\left\{{#1}\right\}}
\newcommand{\ang}[1]{\left\langle{#1}\right\rangle}
\newcommand{\abs}[1]{\left\lvert{#1}\right\rvert}
\newcommand{\es}{\varnothing}
\newcommand{\pow}{\mathcal{P}}
\newcommand{\nat}{\mathbb{N}}
\newcommand{\integ}{\mathbb{Z}}
\numberwithin{equation}{section}
\numberwithin{figure}{section}
\numberwithin{table}{section}
\begin{document}

\title{Notes on Abstract Argumentation Theory}
\author{Anthony P. Young\footnote{Comments and suggestions to \href{mailto:peter.young@kcl.ac.uk}{peter.young@kcl.ac.uk}}}
\date{\today}
\maketitle

\begin{abstract}
\noindent This note reviews Section 2 of Dung's seminal 1995 paper on abstract argumentation theory. In particular, we clarify and make explicit all of the proofs mentioned therein, and provide more examples to illustrate the definitions, with the aim to help readers approaching abstract argumentation theory for the first time. However, we provide minimal commentary and will refer the reader to Dung's paper for the intuitions behind various concepts. The appropriate mathematical prerequisites are provided in the appendices.
\end{abstract}

\tableofcontents

\newpage

\listoffigures

\newpage

\listoftables

\newpage

\section{Introduction}

\textit{Abstract argumentation theory} \cite{Dung:95} is concerned with the formalisation and implementation of methods that resolve disagreements rationally, based on the pattern of disagreements alone. Such a need typically arises when reasoning with incomplete and contradictory information from multiple sources, whether human or machine. It provides a general approach to modelling conflict between arguments and the agents putting forward those arguments. This is based on the commonsensical idea that the ``winning'' arguments are those that are collectively consistent and adequately responds to all counterarguments. It is assumed that arguments that have no counterarguments will win \textit{by default}. Such ideas can be quite intuitive in its handling of conflict and justification \cite{Dung:95,CogSci:10}.

In this note, we review the mathematical background of abstract argumentation theory \cite[Section 2]{Dung:95}, making explicit all steps in the proofs, and occasionally providing lemmas that can make the longer proofs easier to comprehend. We also illustrate many of the concepts with examples. Further, we briefly recap the relevant aspects of directed graphs and lattice theory in the appendices.

This note will focus on definitions and technical results with minimal commentary. \textit{We do not claim originality as many of these results, especially those not explicitly stated in \cite{Dung:95}, should be folklore.} Our intention for writing this note is to collate all relevant results that may assist a reader coming to abstract argumentation theory, in particular \cite{Dung:95}, for the first time; this document can also serve as a reference for researchers. We will not cover further topics such as argument labellings (e.g. \cite{Caminada:06}), non-Dung semantics (e.g. \cite{Baroni:09,Baumann:15}), dialogical argumentation (e.g. \cite{Sanjay:09}) and structured argumentation (e.g. \cite{Bondarenko:97,Sanjay:13}).

\newpage


\section{Abstract Argumentation Frameworks}

\subsection{Definition and Basic Examples}

An \textit{abstract argumentation framework} is a directed graph (digraph) $\ang{A,R}$ where the set of nodes $A$ represent the set of arguments under consideration and the set of directed edges $R$ denote when a given argument is a counterargument to another argument or itself, usually due to logical inconsistency or conflicting values. This representation of arguments and how they disagree \textit{abstracts} away from the internal structure and content of arguments and the nature of such disagreements, hence the term ``abstract'' argumentation. This results in an \textit{external} theory of justification \cite{Dung:95}, as opposed to an \textit{internal} theory of justification concerned with whether individual arguments are valid or plausible. We assume the reader is familiar with graph theory, but have recapped the basic ideas, notation and definitions of graph theory in Appendix \ref{app:digraphs} (page \pageref{app:digraphs}).

\begin{definition}
\cite[Definition 2]{Dung:95} An \textbf{(abstract) argumentation framework} (AF) is a digraph $\ang{A,R}$ where $A$ is \textbf{the set of arguments} and $R\subseteq A^2$ is \textbf{the attack relation}. For $a,b\in A$, we say $a$ \textbf{attacks} / \textbf{is a counter-argument to} / \textbf{disagrees with} $b$ iff $(a,b)\in R$, denoted as $R(a,b)$. Further, we denote $(a,b)\notin R$ with $\neg R(a,b)$.
\end{definition}

\begin{example}\label{eg:nixon}
\cite[Example 9]{Dung:95} The \textbf{Nixon diamond} is the AF whose digraph is isomorphic to the directed cycle graph on two nodes, denoted $C_2$ (see Example \ref{eg:cycle}, page \pageref{eg:cycle}, for the notation), i.e. $A=\set{a,b}$ and $R=\set{(a,b),(b,a)}$. This is depicted in Figure \ref{fig:nixon}.

\begin{figure}[H]
\begin{center}
\begin{tikzpicture}[>=stealth',shorten >=1pt,node distance=2cm,on grid,initial/.style    ={}]
\tikzset{mystyle/.style={->,relative=false,in=0,out=0}};
\node[state] (a) at (0,0) {$ a $};
\node[state] (b) at (2,0) {$ b $};
\draw [->] (a) to [out = 45, in = 135] (b);
\draw [->] (b) to [out = 225, in = -45] (a);
\end{tikzpicture}
\caption{The AF depicting the Nixon diamond, from Example \ref{eg:nixon}.}\label{fig:nixon}
\end{center}
\end{figure}
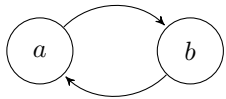
\end{example}

\begin{example}\label{eg:simple_reinstatement}
\textbf{Simple reinstatement} (e.g. \cite{Caminada:08,CogSci:10}) is the AF whose digraph is isomorphic to the directed path graph on three nodes, denoted $P_3$ (see Example \ref{eg:path}, page \pageref{eg:path}, for the notation),\footnote{This is also called a \textit{chain} of three arguments \cite{BenchCapon:14}.} i.e. $A=\set{a,b,c}$ and $R=\set{(b,a),(c,b)}$. This is depicted in Figure \ref{fig:simple_reinstatement}.

\begin{figure}[H]
\begin{center}
\begin{tikzpicture}[>=stealth',shorten >=1pt,node distance=2cm,on grid,initial/.style    ={}]
\tikzset{mystyle/.style={->,relative=false,in=0,out=0}};
\node[state] (a) at (0,0) {$ a $};
\node[state] (b) at (2,0) {$ b $};
\node[state] (c) at (4,0) {$ c $};
\draw [->] (c) to (b);
\draw [->] (b) to (a);
\end{tikzpicture}
\caption{The AF depicting simple reinstatement, from Example \ref{eg:simple_reinstatement}.}\label{fig:simple_reinstatement}
\end{center}
\end{figure}
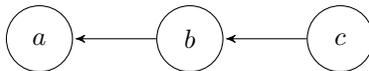
\end{example}

\begin{example}\label{eg:double_reinstatement}
\textbf{Double reinstatement}\footnote{This name may not be standard in the literature.} is the AF where $A=\set{a,b,c,e}$ and $R=\set{(b,a),(c,b),(e,b)}$. This is depicted in Figure \ref{fig:double_reinstatement}.\footnote{We will not use the letter ``$d$'' to denote arguments - see Section \ref{sec:defence_function}, page \pageref{sec:defence_function}.}

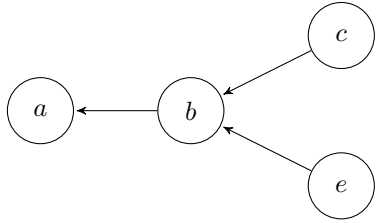
\begin{figure}[H]
\begin{center}
\begin{tikzpicture}[>=stealth',shorten >=1pt,node distance=2cm,on grid,initial/.style    ={}]
\tikzset{mystyle/.style={->,relative=false,in=0,out=0}};
\node[state] (a) at (0,0) {$ a $};
\node[state] (b) at (2,0) {$ b $};
\node[state] (c) at (4,1) {$ c $};
\node[state] (e) at (4,-1) {$ e $};
\draw [->] (c) to (b);
\draw [->] (b) to (a);
\draw [->] (e) to (b);
\end{tikzpicture}
\caption{The AF depicting double reinstatement, from Example \ref{eg:double_reinstatement}.}\label{fig:double_reinstatement}
\end{center}
\end{figure}
\end{example}

\begin{example}\label{eg:Israel_Arab}
\cite[Examples 1 and 3]{Dung:95} Consider the AF with $A=\set{a,b,c}$ and $R=\set{(a,b),(b,a),(c,b)}$. This is depicted in Figure \ref{fig:Israel_Arab}.

\begin{figure}[H]
\begin{center}
\begin{tikzpicture}[>=stealth',shorten >=1pt,node distance=2cm,on grid,initial/.style    ={}]
\tikzset{mystyle/.style={->,relative=false,in=0,out=0}};
\node[state] (a) at (0,0) {$ a $};
\node[state] (b) at (-2,0) {$ b $};
\node[state] (c) at (-4,0) {$ c $};
\draw [->] (a) to [out = 225, in = -45] (b);
\draw [->] (b) to [out = 45, in = 135] (a);
\draw [->] (c) to (b);
\end{tikzpicture}
\caption{The AF depicting \cite[Example 3]{Dung:95}, from Example \ref{eg:Israel_Arab}.}\label{fig:Israel_Arab}
\end{center}
\end{figure}
\end{example}

\begin{example}\label{eg:P4}
\cite[Example 2.3.5]{EoA} We can also have an AF whose underlying digraph is isomorphic to $P_4$, the directed path graph on four nodes (Example \ref{eg:path}, page \pageref{eg:path}), i.e. $A=\set{a,b,c,e}$, $R=\set{(e,c),(c,b),(b,a)}$. This is depicted in Figure \ref{fig:P4}.

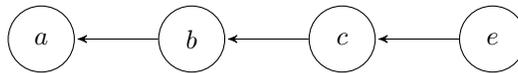
\begin{figure}[H]
\begin{center}
\begin{tikzpicture}[>=stealth',shorten >=1pt,node distance=2cm,on grid,initial/.style    ={}]
\tikzset{mystyle/.style={->,relative=false,in=0,out=0}};
\node[state] (a) at (0,0) {$ a $};
\node[state] (b) at (2,0) {$ b $};
\node[state] (c) at (4,0) {$ c $};
\node[state] (e) at (6,0) {$ e $};
\draw [->] (e) to (c);
\draw [->] (c) to (b);
\draw [->] (b) to (a);
\end{tikzpicture}
\caption{The AF depicting simple reinstatement, from Example \ref{eg:P4}.}\label{fig:P4}
\end{center}
\end{figure}
\end{example}

\begin{example}\label{eg:fri}
(See \cite[Figure 1]{Caminada_IE:07} and \cite[Figure 2]{CogSci:10}) \textbf{Floating reinstatement} is the AF where $A=\set{a,b,c,e}$, $R=\set{(a,b),(b,a),(a,c),(b,c),(c,e)}$. This AF is depicted in Figure \ref{fig:floating}.

\begin{figure}[H]
\begin{center}
\begin{tikzpicture}[>=stealth',shorten >=1pt,node distance=2cm,on grid,initial/.style    ={}]
\node[state] (a) at (0,0) {$ a $};
\node[state] (b) at (0,-2) {$ b $};
\node[state] (c) at (2,-1) {$ c $};
\node[state] (e) at (4,-1) {$ e $};
\tikzset{mystyle/.style={->,relative=false,in=0,out=0}};
\draw [->] (a) to [out=225,in=135] (b);
\draw [->] (b) to [out=45,in=-45] (a);
\draw [->] (a) to [out=0,in=135] (c);
\draw [->] (b) to [out=0,in=225] (c);
\draw [->] (c) to [out=0,in=180] (e);
\end{tikzpicture}
\end{center}
\caption{The argument framework for floating reinstatement, from Example \ref{eg:fri}}\label{fig:floating}
\end{figure}
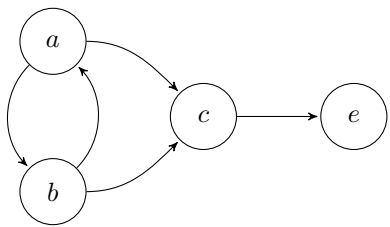
\end{example}

\noindent From now on we assume an arbitrary AF $\ang{A,R}$. Note that $R$ is not assumed to be a symmetric relation. This is to include cases where two arguments disagree in only one direction.\footnote{For recent experiments that investigate whether we can infer the direction of attacks from natural language, see \cite{Cramer:18}.}

\begin{example}
(From \cite{CogSci:10}) Let the argument $a$ represent ``Mary does not limit her phone usage. Therefore, Mary has a large phone bill.''. Let the argument $b$ represent ``Mary has a speech disorder. Therefore, Mary limits her phone usage.'' In real-life dialogues, arguments support a conclusion, and it is the conclusion (instead of any other part of the argument) that is used to agree or disagree with other arguments that support various other conclusions \cite[Section 2]{Sanjay:13}. In this example, the argument $b$ attacks $a$ because the conclusion of $b$ attacks an assumption of $a$, and the attack is not symmetric because the conclusion of $a$ does not disagree with anything $b$ has concluded or assumed.
\end{example}

\subsection{Types of Attacks and Examples}

\begin{definition}\label{def:set_attacks}
\cite[Remark 4]{Dung:95} We say $S\subseteq A$ \textbf{attacks} $a\in A$ iff $a\in S^+$.\footnote{$S^+$ denotes the set of arguments attacked by some argument in $S$ (see Definition \ref{def:S_pm}, page \pageref{def:S_pm}). This is also called the \textit{forward} or \textit{successor} set of $S$ in a digraph.} We say $a$ \textbf{attacks} $S$ iff $a\in S^-$.\footnote{$S^-$ denotes the set of arguments attacking some argument in $S$ (see Definition \ref{def:S_pm}, page \pageref{def:S_pm}). This is also called the \textit{backward} or \textit{predecessor} set of $S$ in a digraph.} We say $S$ \textbf{attacks} $T\subseteq A$ iff $S^+\cap T\neq\es$.
\end{definition}

\noindent Definition \ref{def:set_attacks} generalises attacks between individual arguments to \textit{sets} of arguments. By Corollary \ref{cor:es_pm} (page \pageref{cor:es_pm}), the empty set $\es$ can never attack any argument, nor can it be attacked by any argument.

\begin{example}
(Example \ref{eg:nixon} continued) In the Nixon diamond, $b\in\set{a}^+=a^+$, hence the set $\set{a}$ attacks $b$. By symmetry, $a\in b^+$.
\end{example}

\begin{example}
(Example \ref{eg:Israel_Arab} continued) Clearly, $c$ attacks the set $\set{a,b}$, i.e. $c\in\set{a,b}^-$, because $c$ attacks $b$.
\end{example}

\begin{example}
(Example \ref{eg:double_reinstatement} continued) In double reinstatement, $\set{c,e}$ attacks $\set{a,b}$, because both $c$ and $e$ attack $b$.
\end{example}

\begin{definition}
We say $a\in A$ is an \textbf{unattacked argument} iff $a^-=\es$.
\end{definition}

\noindent When arguments are represented by nodes of a digraph, unattacked arguments correspond to source nodes (Definition \ref{def:source_node}, page \pageref{def:source_node}). Unattacked arguments are important because we will see that they always \textit{win}. This formalises the idea that the person with the last word always wins the argument, because such arguments have no counter-argument represented in the AF \cite[Section 1]{Dung:95}. Another way of understanding this is that the claim of such an argument is seen as provisionally true until it is explicitly rebutted.

\begin{definition}\label{def:unattacked}
$U:=\set{a\in A\:\vline\:a^-=\es}$ is the \textbf{set of all unattacked arguments}.\footnote{We have a slightly less general definition here compared to \cite[Definition 2.9]{Baroni:09}, where a set $S$ of non-empty arguments is \textit{unattacked} iff $S^-=\es$, i.e. there are no arguments outside of $S$ that is attacking $S$. In our case, $U$ is the $\subseteq$-greatest such set.}
\end{definition}

\begin{example}
(Example \ref{eg:P4} continued) We have $U=\set{e}$.
\end{example}

\begin{example}
(Example \ref{eg:fri} continued) In floating reinstatement, we have $U=\es$ because every argument is being attacked.
\end{example}

\begin{corollary}\label{cor:unattacked_characterisation}
We have that $a\in U\Leftrightarrow a\notin A^+$.
\end{corollary}
\begin{proof}
$a\in U$ iff $a^-=\es$ iff $\pair{\forall b\in A}b\notin a^-$ iff $a\notin A^+$.
\end{proof}

\begin{definition}
An argument $a\in A$ is \textbf{self-attacking} iff $a\in a^+$, equivalently $a\in a^-$ by Corollary \ref{cor:plus_minus} (page \pageref{cor:plus_minus}).
\end{definition}

\begin{example}\label{eg:ab_self_attack}
Consider $A=\set{a,b}$ and $R=\set{\pair{a,a},\pair{a,b}}$. This AF is depicted in Figure \ref{fig:ab_self_attack}.

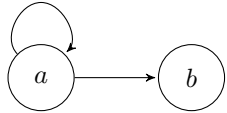
\begin{figure}[H]
\begin{center}
\begin{tikzpicture}[>=stealth',shorten >=1pt,node distance=2cm,on grid,initial/.style    ={}]
\node[state] (a) at (0,0) {$ a $};
\node[state] (b) at (2,0) {$ b $};
\tikzset{mystyle/.style={->,relative=false,in=0,out=0}};
\draw [->] (a) to [out=135,in=180] ($(a) + (0,1)$) to [out = 0, in = 45] (a);
\draw [->] (a) to (b);
\end{tikzpicture}
\end{center}
\caption{An AF with a self-attacking argument, from Example \ref{eg:ab_self_attack}.}\label{fig:ab_self_attack}
\end{figure}

\noindent In this AF, $a$ is our self-attacking argument. Further, the set of unattacked arguments $U=\es$.
\end{example}

\subsection{Basic Types of Abstract Argumentation Frameworks}

\begin{definition}
An AF is \textbf{empty} iff $A=\es$.
\end{definition}

\begin{definition}
An AF is \textbf{trivial} iff $R=\es$.
\end{definition}

\begin{definition}\label{def:symmetric}
An AF is \textbf{symmetric} iff $R$ is a non-empty symmetric relation.\footnote{Like \cite{Coste:05}, we exclude the empty relation as it is vacuously symmetric. Unlike \cite{Coste:05}, we do not restrict our attention to finite argumentation frameworks, i.e. when $A$ of $\ang{A,R}$ is a finite set. Further, we do not exclude the possibility of having self-attacking arguments, as $(a,a)\in R$ does not violate that $R$ is a symmetric relation.}
\end{definition}

\begin{definition}
An AF is \textbf{finite} iff $A$ is a finite set. Else, the AF is \textbf{infinite}.
\end{definition}

\begin{example}
All of Examples \ref{eg:nixon} to \ref{eg:ab_self_attack} above are finite, non-trivial and non-empty AFs. Of these examples, only Example \ref{eg:nixon} is a symmetric AF.
\end{example}

\noindent In this note, we do not assume that the AFs we deal with are finite; they can be finite or infinite \cite{Baumann:15}.

\begin{example}\label{eg:semi_inf}
The following AF is infinite. Let $A=\set{a_i,b_i}_{i\in\nat}$ and $R=\set{\pair{b_i, a_i},\pair{a_{i+1},b_i}}_{i\in\nat}$. This AF is depicted in Figure \ref{fig:semi_inf}.

\begin{figure}[H]
\begin{center}
\begin{tikzpicture}[>=stealth',shorten >=1pt,node distance=2cm,on grid,initial/.style    ={}]
\tikzset{mystyle/.style={->,relative=false,in=0,out=0}};
\node[state] (a0) at (0,0) {$ a_0 $};
\node[state] (b0) at (2,0) {$ b_0 $};
\node[state] (a1) at (4,0) {$ a_1 $};
\node[state] (b1) at (6,0) {$ b_1 $};
\node[state] (a2) at (8,0) {$ a_2 $};
\node (dots) at (10,0) {\Huge $ \cdots $};
\draw [->] (b0) to (a0);
\draw [->] (a1) to (b0);
\draw [->] (b1) to (a1);
\draw [->] (a2) to (b1);
\draw [->] (dots) to (a2);
\end{tikzpicture}
\end{center}
\caption{An infinite AF, from Example \ref{eg:semi_inf}}\label{fig:semi_inf}
\end{figure}
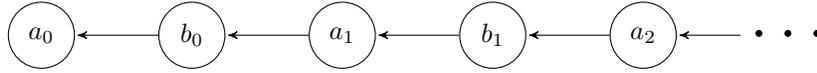
\end{example}

\begin{example}\label{eg:bi_inf}
The following AF is also infinite. Let $A=\set{a_i,b_i}_{i\in\integ}$ and $R=\set{\pair{b_i,a_i},\pair{a_{i+1},b_i}}_{i\in\integ}$. This AF is depicted in Figure \ref{fig:bi_inf}.

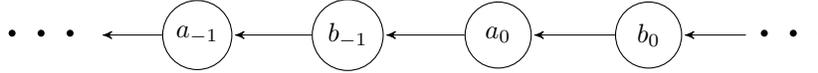
\begin{figure}[H]
\begin{center}
\begin{tikzpicture}[>=stealth',shorten >=1pt,node distance=2cm,on grid,initial/.style    ={}]
\tikzset{mystyle/.style={->,relative=false,in=0,out=0}};
\node (dotsn) at (-2,0) {\Huge $ \cdots $};
\node[state] (a-1) at (0,0) {$ a_{-1} $};
\node[state] (b-1) at (2,0) {$ b_{-1} $};
\node[state] (a0) at (4,0) {$ a_0 $};
\node[state] (b0) at (6,0) {$ b_0 $};
\node (dotsp) at (8,0) {\Huge $ \cdots $};
\draw [->] (a-1) to (dotsn);
\draw [->] (b-1) to (a-1);
\draw [->] (a0) to (b-1);
\draw [->] (b0) to (a0);
\draw [->] (dotsp) to (b0);
\end{tikzpicture}
\end{center}
\caption{An infinite AF, from Example \ref{eg:bi_inf}}\label{fig:bi_inf}
\end{figure}
\end{example}

The infinite\footnote{Clearly the AFs from Examples \ref{eg:semi_inf} and \ref{eg:bi_inf} have \textit{countably} infinitely many arguments. It is also possible for AFs to have \textit{uncountably} infinitely many arguments. See \cite[Section 3.1]{Dung:95} or \cite{young2020continuum,CGT0}} AFs from Examples \ref{eg:semi_inf} and \ref{eg:bi_inf} are \textit{locally finite} in that each argument only has one other argument attacking it. This motivates the following definition:

\begin{definition}
\cite[Definition 27]{Dung:95} An AF is \textbf{finitary} iff $\pair{\forall a\in A}\abs{a^-}<\aleph_0$.
\end{definition}

\begin{corollary}\label{cor:finite_finitary}
Finite AFs are finitary. The converse is not true.
\end{corollary}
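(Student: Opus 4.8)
The plan is to prove the two assertions separately, since ``Finite AFs are finitary'' is a universal implication to be verified directly, whereas ``the converse is not true'' is an existence claim best settled by exhibiting a counterexample.

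For the forward direction, I would fix an arbitrary finite AF $\ang{A,R}$ and an arbitrary argument $a\in A$. The key observation is that $a^-\subseteq A$ by definition of the predecessor set, and a subset of a finite set is finite, so $\abs{a^-}\leq\abs{A}<\aleph_0$. Since $a$ was chosen arbitrarily, this establishes $\pair{\forall a\in A}\abs{a^-}<\aleph_0$, which is precisely the defining condition for the AF to be finitary. This direction is essentially immediate and carries no real difficulty.

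For the converse, I would show that finitary does not imply finite by producing an infinite AF in which every argument still has only finitely many attackers. The AFs already introduced in Example \ref{eg:semi_inf} and Example \ref{eg:bi_inf} serve this purpose perfectly: in each, $A$ is (countably) infinite, so the AF is not finite, yet the text has already noted that each argument is attacked by at most one other argument. I would make this explicit by computing $a_i^-$ and $b_i^-$ from the given $R$ and observing that each such predecessor set is a singleton (or empty, in the semi-infinite case for the source $a_0$), whence $\abs{a^-}\leq 1<\aleph_0$ for every argument $a$. Thus either example is simultaneously infinite and finitary, witnessing that the implication cannot be reversed.

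There is no genuine obstacle in this corollary; the only point requiring care is the converse, where one must remember that refuting an implication requires a concrete witness rather than a general argument, and that the witness must be checked to be both infinite (so that ``finite'' fails) and finitary (so that the hypothesis of the would-be converse holds). Since suitable witnesses are already on hand from the preceding examples, I would simply cite and verify one of them rather than construct anything new.
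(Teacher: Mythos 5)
Your proposal is correct and takes essentially the same route as the paper: the forward direction via $a^-\subseteq A$ with subsets of finite sets being finite, and the converse by citing Example \ref{eg:semi_inf} as an infinite finitary witness. One small factual slip worth fixing: in Example \ref{eg:semi_inf} the argument $a_0$ is not a source (it is attacked by $b_0$, so $a_0^-=\set{b_0}$ and indeed $U=\es$ there), but since every predecessor set is a singleton the bound $\abs{a^-}\leq 1<\aleph_0$ holds regardless and your argument stands.
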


\begin{proof}
If $\ang{A,R}$ is finite, then for all $a\in A$, the set $a^-\subseteq A$ is also finite.

An example of an infinite finitary AF is depicted in Example \ref{eg:semi_inf}, where every argument has exactly one attacker (finitary) but there are countably infinitely many arguments. Therefore, the converse is not true. 
\end{proof}



\begin{example}\label{eg:non-finitary_AF}
The following is an example of a non-finitary AF. By the contrapositive of Corollary \ref{cor:finite_finitary}, the AF cannot be finite. Let $A=\set{a}\cup\set{b_i}_{i\in\nat}$ and $R=\set{\pair{b_i,a}}_{i\in\nat}$. This AF is depicted in Figure \ref{fig:non-finitary_AF}.

\begin{figure}[H]
\begin{center}
\begin{tikzpicture}[>=stealth',shorten >=1pt,node distance=2cm,on grid,initial/.style    ={}]
\tikzset{mystyle/.style={->,relative=false,in=0,out=0}};
\node[state] (0) at (4,-2) {$ a $};
\node[state] (1) at (2,0) {$ b_0 $};
\node[state] (2) at (4,0) {$ b_1 $};
\node[state] (3) at (6,0) {$ b_2 $};
\node[state] (4) at (8,0) {$ b_3 $};
\node (dottop) at (9.5,0) {\Huge $ \cdots $};
\node (dotbot) at (7.5,-1) {\Huge $ \cdots $};
\draw [->] (1) to (0);
\draw [->] (2) to (0);
\draw [->] (3) to (0);
\draw [->] (4) to (0);
\end{tikzpicture}
\end{center}
\caption{An infinite non-finitary AF, from Example \ref{eg:non-finitary_AF}.}\label{fig:non-finitary_AF}
\end{figure}
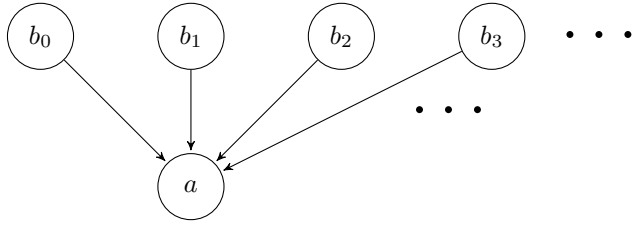
\noindent Then $a^-=\set{b_i}_{i\in\nat}$, which means it has infinitely many attackers. This $\ang{A,R}$ is therefore not finitary.
\end{example}

\begin{example}\label{eg:non_finitary2}
The following is another example of a non-finitary AF. Let $A=\set{a}\cup\set{b_i,c_i}_{i\in\nat}$ and $R=\set{\pair{b_i,a},\pair{c_i,b_i}}_{i\in\nat}$. We have $a^-=\set{b_i}_{i\in\nat}$ hence $\abs{a^-}=\aleph_0$. This AF is depicted in Figure \ref{fig:non_finitary2}.

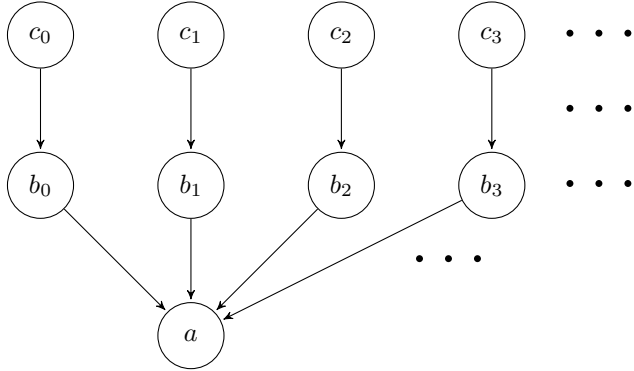
\begin{figure}[H]
\begin{center}
\begin{tikzpicture}[>=stealth',shorten >=1pt,node distance=2cm,on grid,initial/.style    ={}]
\tikzset{mystyle/.style={->,relative=false,in=0,out=0}};
\node[state] (0) at (4,-2) {$ a $};
\node[state] (1) at (2,0) {$ b_0 $};
\node[state] (2) at (4,0) {$ b_1 $};
\node[state] (3) at (6,0) {$ b_2 $};
\node[state] (4) at (8,0) {$ b_3 $};
\node (dottop) at (9.5,0) {\Huge $ \cdots $};
\node (dotbot) at (7.5,-1) {\Huge $ \cdots $};
\draw [->] (1) to (0);
\draw [->] (2) to (0);
\draw [->] (3) to (0);
\draw [->] (4) to (0);
\node[state] (5) at (2,2) {$ c_0 $};
\node[state] (6) at (4,2) {$ c_1 $};
\node[state] (7) at (6,2) {$ c_2 $};
\node[state] (8) at (8,2) {$ c_3 $};
\node (dot2) at (9.5,1) {\Huge $ \cdots $};
\node (dot1) at (9.5,2) {\Huge $ \cdots $};
\draw [->] (5) to (1);
\draw [->] (6) to (2);
\draw [->] (7) to (3);
\draw [->] (8) to (4);
\end{tikzpicture}
\end{center}
\caption{An infinite non-finitary AF, from Example \ref{eg:non_finitary2}.}\label{fig:non_finitary2}
\end{figure}
\end{example}

\begin{definition}
Let $B\subseteq A$. The \textbf{(induced) sub-framework w.r.t. $B$} is the AF $\ang{B,R\cap B^2}$.
\end{definition}

\begin{example}
(Example \ref{eg:bi_inf} continued) Clearly, the AF from Example \ref{eg:semi_inf} is a sub-framework of this AF.
\end{example}

\begin{example}
(Example \ref{eg:non_finitary2} continued) Clearly, the AF in Figure \ref{fig:non-finitary_AF} is a sub-framework of the AF in Figure \ref{fig:non_finitary2}.
\end{example}

\begin{example}
(Example \ref{eg:simple_reinstatement} continued, page \pageref{eg:simple_reinstatement}) The idea of induced sub-frameworks allow us to model the course of a dialogue (e.g. such as online debates \cite{boschi2021has,young2020ranking,young2018approx}). The set of arguments could be the arguments that have so far been mentioned during the dialogue. Consider a dialogue based on simple reinstatement (Example \ref{eg:simple_reinstatement}). We can imagine Agent 1 claiming $a$ and we have the induced sub-framework w.r.t. $\set{a}$. We then imagine Agent 2 claiming $b$, so the set of arguments mentioned so far is $\set{a,b}$, and the corresponding induced sub-framework also has the attack $R(b,a)$. Finally, Agent 1 responds by claiming $c$ and the dialogue ends, so the set of arguments mentioned so far is $\set{a,b,c}$ and we recover the full framework of simple reinstatement.
\end{example}

\subsection{Cycles in Argumentation Frameworks}

One could then imagine cycles in AFs which can represent ``never-ending'' courses of dialogue where agents can repeat the same arguments over and over again. We will see that this will make determining the winning arguments problematic \cite{BenchCapon:14}.

\begin{definition}
We say that an AF is \textbf{cyclic} iff it contains a (directed) cycle, else it is \textbf{acyclic}.
\end{definition}

\noindent Necessarily, the number of arguments in a cycle is finite.

\begin{definition}
Given a cycle in an AF, its \textbf{parity} is whether the number of arguments in the cycle is even or odd.
\end{definition}

\begin{example}
(Example \ref{eg:ab_self_attack} continued, page \pageref{eg:ab_self_attack}) This is a cyclic AF with an odd cycle, specifically the 1-cycle where the argument $a$ is self-attacking.
\end{example}

\begin{example}
(Example \ref{eg:nixon} continued, page \pageref{eg:nixon}) The Nixon diamond is a cyclic AF with a 2-cycle.
\end{example}

\begin{example}\label{eg:3cycle}
\cite[Example 2.3.1]{EoA} The following is a cyclic AF with a 3-cycle: $A=\set{a,b,c,e}$ and $R=\set{(b,a),(c,b),(e,c),(b,e)}$. This AF is depicted in Figure \ref{fig:3cycle}.

\begin{figure}[H]
\begin{center}
\begin{tikzpicture}[>=stealth',shorten >=1pt,node distance=2cm,on grid,initial/.style    ={}]
\tikzset{mystyle/.style={->,relative=false,in=0,out=0}};
\node[state] (a) at (0,0) {$ a $};
\node[state] (b) at (-2,0) {$ b $};
\node[state] (c) at (-4,1) {$ c $};
\node[state] (e) at (-4,-1) {$ e $};
\draw [->] (b) to (a);
\draw [->] (c) to (b);
\draw [->] (e) to (c);
\draw [->] (b) to (e);
\end{tikzpicture}
\end{center}
\caption{An AF containing a 3-cycle, from Example \ref{eg:3cycle}.}\label{fig:3cycle}
\end{figure}
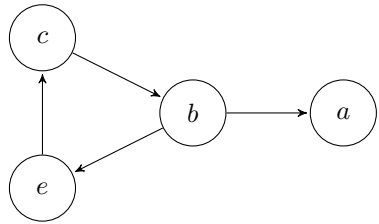
\end{example}

\begin{definition}\label{def:well_founded}
\cite[Definition 29]{Dung:95} An AF is \textbf{well-founded} iff there is no $A$-sequence $\set{a_i}_{i\in\nat}$ such that $\pair{\forall i\in\nat}R\pair{a_{i+1},a_i}$.
\end{definition}


\begin{example}
(Example \ref{eg:P4} continued, page \pageref{eg:P4}) This AF is well-founded because its attack sequence is finite, involving only four arguments.
\end{example}

\begin{example}
(Example \ref{eg:nixon} continued, page \pageref{eg:nixon}) This AF is not well-founded due to the $A$-sequence $\set{a,b,a,b,a,b,a,b,\ldots}$, because $R(a,b)$ and $R(b,a)$.
\end{example}

\begin{corollary}\label{cor:wf_unattacked}
If $\ang{A,R}$ is well-founded then $U\neq\es$. The converse is not true in general.
\end{corollary}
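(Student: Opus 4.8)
The plan is to prove the implication by contraposition and to settle the converse by exhibiting a counterexample that is already available in the paper. For the forward direction I would show that if $U=\es$ then $\ang{A,R}$ is not well-founded. Assuming the framework is non-empty (the empty AF is a degenerate case in which $U=\es$ holds vacuously, so the statement must be read as applying to non-empty AFs), I pick any $a_0\in A$; since $U=\es$ we have $a_0\notin U$, hence $a_0^-\neq\es$, so some argument attacks $a_0$.

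Next I would set up the inductive construction of an infinite attack sequence. Having chosen $a_i$, the hypothesis $U=\es$ forces $a_i\notin U$ and therefore $a_i^-\neq\es$, so I can select some $a_{i+1}\in a_i^-$, i.e. $R\pair{a_{i+1},a_i}$. Iterating produces an $A$-sequence $\set{a_i}_{i\in\nat}$ with $\pair{\forall i\in\nat}R\pair{a_{i+1},a_i}$, which by Definition \ref{def:well_founded} witnesses that $\ang{A,R}$ is not well-founded. Taking the contrapositive yields the desired implication: well-founded $\Rightarrow U\neq\es$.

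For the converse I would simply invoke Example \ref{eg:Israel_Arab}: there $c^-=\es$, so $U=\set{c}\neq\es$, yet the $2$-cycle given by $R\pair{a,b}$ and $R\pair{b,a}$ furnishes the alternating sequence $a,b,a,b,\dots$ satisfying $R\pair{a_{i+1},a_i}$ for every $i$. Hence that framework is not well-founded, showing $U\neq\es$ does not imply well-foundedness.

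The main subtlety is the infinite construction in the forward direction: choosing $a_{i+1}$ at each stage is an appeal to the axiom of dependent choice, which is genuinely needed because $A$ may be infinite and the sets $a_i^-$ carry no canonical selection. Everything else is immediate from the definitions. I would also flag the empty-AF boundary case, since strictly the claim requires $A\neq\es$.
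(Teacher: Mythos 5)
Your forward direction is the same contrapositive argument as the paper's: from $U=\es$ every argument has a non-empty backward set, so one builds an infinite sequence $\set{a_i}_{i\in\nat}$ with $R\pair{a_{i+1},a_i}$, contradicting well-foundedness. Where you diverge is the converse: the paper constructs a purpose-built infinite AF (an infinite backward chain $\set{a_i}_{i\in\nat}$ together with an isolated argument $b$, Figure \ref{fig:point_off_path}), whereas you reuse the finite AF of Example \ref{eg:Israel_Arab}, where $U=\set{c}\neq\es$ and the 2-cycle on $a,b$ yields the periodic sequence $a,b,a,b,\ldots$ witnessing non-well-foundedness. Your counterexample is simpler and finite, and is perfectly valid; the paper's choice of an \emph{acyclic} infinite example is not needed here (it earns its keep later, in the converse of Corollary \ref{cor:cyclic_not_wf}, where acyclicity is essential --- your cyclic example could not serve there). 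You also add two refinements the paper's proof leaves tacit: the step ``let $a_0\in A$'' silently assumes $A\neq\es$, and indeed the empty AF is vacuously well-founded with $U=\es$, so the statement does require non-emptiness as you flag; and the repeated selection of $a_{i+1}\in a_i^-$ is an application of dependent choice, which you are right to name since $A$ may be infinite with no canonical selector. Both observations are correct and strengthen, rather than alter, the paper's argument.
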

\begin{proof}
(Contrapositive) If $U=\es$ then $\pair{\forall a\in A}a^-\neq\es$. Let $a_0\in A$, then there is some $a_1\in a_0^-$. Similarly, there is some $a_2\in a_1^-$. So for any $a_i\in A$ there exists some $a_{i+1}\in a_i^-$. This gives an infinite sequence $\set{a_i}_{i\in\nat}$ such that $\pair{\forall i\in\nat}R\pair{a_{i+1},a_i}$. Therefore, $\ang{A,R}$ is not well-founded.

For the converse, consider $A=\set{a_i}_{i\in\nat}\cup\set{b}$ such that $\pair{\forall i\in\nat}R\pair{a_{i+1},a_i}$. This AF is depicted in Figure \ref{fig:point_off_path}.

\begin{figure}[H]
\begin{center}
\begin{tikzpicture}[>=stealth',shorten >=1pt,node distance=2cm,on grid,initial/.style    ={}]
\tikzset{mystyle/.style={->,relative=false,in=0,out=0}};
\node[state] (0) at (-2,0) {$ b $};
\node[state] (0) at (0,0) {$ a_0 $};
\node[state] (1) at (2,0) {$ a_1 $};
\node[state] (2) at (4,0) {$ a_2 $};
\node[state] (3) at (6,0) {$ a_3 $};
\node (4) at (8,0) {\Huge $ \cdots $};
\draw [->] (1) to (0);
\draw [->] (2) to (1);
\draw [->] (3) to (2);
\draw [->] (4) to (3);
\end{tikzpicture}
\end{center}
\caption{An example of an AF that satisfies $U\neq\es$ and is not well-founded, from Corollary \ref{cor:wf_unattacked}.}\label{fig:point_off_path}
\end{figure}
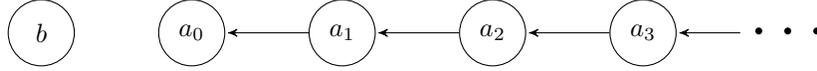

\noindent This AF has an infinite sequence of arguments with an isolated argument $b$. Notice that $U=\set{b}\neq\es$, but this AF is not well-founded.
\end{proof}


\begin{corollary}\label{cor:cyclic_not_wf}
\cite[Proposition 2]{Coste:05} Cyclic AFs are not well-founded. The converse is not true in general.
\end{corollary}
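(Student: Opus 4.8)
The plan is to prove the two assertions separately, both by unwinding Definition \ref{def:well_founded}. For the forward implication I would assume the AF is cyclic and construct an explicit infinite $A$-sequence witnessing the failure of well-foundedness; for the converse I would exhibit a single acyclic AF that is nonetheless not well-founded.

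First I would fix a directed cycle guaranteed by cyclicity, say on the finitely many distinct arguments $c_0,c_1,\ldots,c_{n-1}$ with $n\geq 1$, so that $R\pair{c_j,c_{(j+1)\bmod n}}$ holds for every $j\in\set{0,\ldots,n-1}$; the case $n=1$ is precisely a self-attacking argument. The key observation is that every node of the cycle has an attacker lying on the cycle, namely its cyclic predecessor, so that reading the cycle against the direction of its arrows never terminates. I would make this precise by defining $\set{a_i}_{i\in\nat}$ via $a_i:=c_{(-i)\bmod n}$, i.e. by traversing the cycle backwards and repeating it periodically. A direct check gives $R\pair{a_{i+1},a_i}$ for all $i\in\nat$: writing $k=(-i-1)\bmod n$, we have $a_{i+1}=c_k$ and $a_i=c_{(k+1)\bmod n}$, and $R\pair{c_k,c_{(k+1)\bmod n}}$ holds by the definition of the cycle. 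This sequence violates the condition in Definition \ref{def:well_founded}, so the AF is not well-founded.

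For the converse, I would simply point to an acyclic AF that is not well-founded. The AF in Figure \ref{fig:point_off_path} (from the proof of Corollary \ref{cor:wf_unattacked}), consisting of the infinite descending chain $\set{a_i}_{i\in\nat}$ with $R\pair{a_{i+1},a_i}$ together with the isolated argument $b$, works: it contains no directed cycle and so is acyclic, yet the sequence $\set{a_i}_{i\in\nat}$ immediately witnesses the failure of well-foundedness. (Equivalently one could cite Example \ref{eg:semi_inf}, which is acyclic but admits the non-well-founded sequence $a_0,b_0,a_1,b_1,\ldots$.) Hence the converse does not hold in general.

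I expect the only genuine obstacle to lie in the bookkeeping of attack directions in the first part: the well-foundedness condition requires $a_{i+1}$ to attack $a_i$, so the witnessing sequence must run backwards along the cycle's edges (following predecessors), and one must confirm that, because a cycle closes up on itself, the backward construction has no source at which to terminate. Everything else is routine.
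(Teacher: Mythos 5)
Your proposal is correct and follows essentially the same route as the paper: the forward direction traverses the given cycle backwards along its attack edges to produce an infinite periodic sequence violating Definition \ref{def:well_founded}, and the converse is witnessed by exactly the same acyclic, non-well-founded AF of Figure \ref{fig:point_off_path}. If anything, your explicit modular-index verification that $R\pair{a_{i+1},a_i}$ holds is more careful about attack directions than the paper's own listing of the periodic sequence.
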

\begin{proof}
If $\ang{A,R}$ has a cycle then denote that cycle as $\set{a_i}_{i=1}^k$ such that $$\pair{\forall 1\leq i\leq k-1}R\pair{a_{i+1},a_i}\text{ and }R(a_1,a_k).$$ This gives an infinite, periodic $A$-sequence $$a_1,a_k,a_{k-1},a_{k-2},\ldots,a_2,a_1,a_k,a_{k-1},\ldots$$ such that $R(a_1,a_k)$, $R(a_k,a_{k-1})$, ... etc. Therefore, $\ang{A,R}$ cannot be well-founded.

For the converse, Figure \ref{fig:point_off_path} from Corollary \ref{cor:wf_unattacked} is an example of a non-well-founded AF that is acyclic.
\end{proof}

\begin{corollary}
\cite[Proposition 3]{Coste:05} If an AF is symmetric, then it is not well-founded. The converse is not true.
\end{corollary}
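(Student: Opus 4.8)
The plan is to prove the forward implication by reducing to the already-established Corollary~\ref{cor:cyclic_not_wf}: a symmetric AF must contain a cycle, and a cyclic AF is not well-founded. Since the AF is symmetric, Definition~\ref{def:symmetric} guarantees that $R$ is \emph{non-empty}, so I can pick some $(a,b)\in R$; symmetry then yields $(b,a)\in R$ as well. There are two cases. If $a=b$, then $(a,a)\in R$ is a self-attack, which is a $1$-cycle. If $a\neq b$, then $R(a,b)$ and $R(b,a)$ together form a $2$-cycle on $\set{a,b}$. In either case the AF is cyclic, so Corollary~\ref{cor:cyclic_not_wf} immediately gives that it is not well-founded.

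Equivalently, I could exhibit the witnessing $A$-sequence directly, bypassing the cycle machinery. Setting $a_i:=a$ when $i$ is even and $a_i:=b$ when $i$ is odd yields an infinite sequence $\set{a_i}_{i\in\nat}$ for which $R(a_{i+1},a_i)$ holds at every index (it reads $R(b,a)$ at even indices and $R(a,b)$ at odd indices, both established above). This contradicts Definition~\ref{def:well_founded}. When $a=b$ the construction collapses to the constant sequence $a,a,a,\ldots$, whose successive attacks are all the self-attack $R(a,a)$.

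For the converse — a non-well-founded AF that is \emph{not} symmetric — I would reuse the AF of Figure~\ref{fig:point_off_path} from Corollary~\ref{cor:wf_unattacked}. It was already shown there to be non-well-founded (the ray $\ldots\to a_2\to a_1\to a_0$ supplies the required sequence), yet it is plainly not symmetric, since $R(a_1,a_0)$ holds while $\neg R(a_0,a_1)$. Hence non-well-foundedness does not force symmetry.

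The only genuine subtlety, and the step I would be most careful about, is the edge case $a=b$ in the forward direction: one must recall that a self-attack is permitted under Definition~\ref{def:symmetric} (a reflexive pair does not break symmetry) and that, per the paper's convention, a self-loop already counts as a ($1$-)cycle. Treating this case correctly is what makes the clean appeal to Corollary~\ref{cor:cyclic_not_wf} fully rigorous; everything else is a routine unwinding of the definitions.
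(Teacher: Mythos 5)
Your proof is correct and takes essentially the same route as the paper's: extract an attack pair $(a,b)$ from the non-empty symmetric $R$, observe this yields a short cycle, and invoke Corollary~\ref{cor:cyclic_not_wf}, then refute the converse with a non-symmetric, non-well-founded AF. Two minor deviations, both harmless or better: your explicit case split on $a=b$ actually tightens the paper's wording (the paper flatly calls the pair a 2-cycle, which is really a 1-cycle when $a=b$, a case its own Definition~\ref{def:symmetric} expressly permits), and for the converse you reuse the AF of Figure~\ref{fig:point_off_path} where the paper cites Example~\ref{eg:semi_inf} --- both counterexamples work equally well.
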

\begin{proof}
If an AF $\ang{A,R}$ is symmetric, then $R\neq\es$ is a symmetric relation. We have some $(a,b)\in R$ and hence $(b,a)\in R$. This is a 2-cycle, hence by Corollary \ref{cor:cyclic_not_wf}, this AF is not well-founded.

For the converse, Example \ref{eg:semi_inf} (page \pageref{eg:semi_inf}) is a non-well-founded AF that is not symmetric.
\end{proof}

\noindent The following result gives an equivalent characterisation of non-well-foundedness for an AF.

\begin{corollary}\label{cor:non_wf_core}
$\ang{A,R}$ is not well-founded iff there exists some $\es\neq S\subseteq A$ such that $S\subseteq S^+$.
\end{corollary}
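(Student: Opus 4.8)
The plan is to prove both directions directly from the definitions, the key being to unpack what $S\subseteq S^+$ means. Recall that $S^+$ is the set of arguments attacked by some member of $S$, so $a\in S^+$ holds iff there is some $b\in S$ with $R(b,a)$. Hence the condition $S\subseteq S^+$ says precisely that every argument in $S$ has an attacker that itself lies in $S$; informally, $S$ is a nonempty set on which the attack relation has no ``source''. This is exactly the combinatorial content that drives non-well-foundedness.

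For the forward direction, suppose $\ang{A,R}$ is not well-founded, so by Definition \ref{def:well_founded} there is an $A$-sequence $\set{a_i}_{i\in\nat}$ with $R\pair{a_{i+1},a_i}$ for every $i\in\nat$. I would take $S$ to be the range of this sequence, $S=\set{a_i\:\vline\:i\in\nat}$, which is nonempty. Given any $a_i\in S$, its successor $a_{i+1}\in S$ attacks it, so $a_i\in S^+$; therefore $S\subseteq S^+$. The sequence may well repeat values, but this is harmless, since we only require each element of $S$ to possess an attacker inside $S$.

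For the backward direction, suppose $\es\neq S\subseteq A$ with $S\subseteq S^+$. I would build the required sequence recursively: pick any $a_0\in S$ (possible since $S\neq\es$), and having chosen $a_i\in S$, use $a_i\in S\subseteq S^+$ to obtain some $a_{i+1}\in S$ with $R\pair{a_{i+1},a_i}$. This yields an infinite $A$-sequence $\set{a_i}_{i\in\nat}$ satisfying $\pair{\forall i\in\nat}R\pair{a_{i+1},a_i}$, which witnesses that $\ang{A,R}$ is not well-founded, exactly as in the forward part of Corollary \ref{cor:wf_unattacked}.

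The only delicate point — and the main obstacle — is the recursive construction in the backward direction: at each stage $a_i$ may admit several attackers inside $S$, so choosing one for every $i$ simultaneously invokes the principle of dependent choice. This is the same mild set-theoretic assumption already used implicitly in the proof of Corollary \ref{cor:wf_unattacked}, so no new machinery is required; everything else reduces to a routine unpacking of the definition of $S^+$.
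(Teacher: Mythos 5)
Your proof is correct and takes essentially the same route as the paper: in the forward direction you take $S$ to be the range of the infinite attack sequence, and in the backward direction you recursively extract a sequence from $S\subseteq S^+$ exactly as the paper's inductive construction does. Your explicit remark that the recursive step invokes dependent choice merely makes precise a selection the paper's ``induction'' performs silently, so it is a clarification rather than a divergence.
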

\begin{proof}
($\Leftarrow$) To demonstrate that the underlying $\ang{A,R}$ is not well-founded, we use induction to construct the desired sequence $\set{a_i}_{i\in\nat}$.

\begin{enumerate}
\item (Base) As $S\neq\es$, let $a_0\in S$.
\item (Inductive) Let $a_i\in S$, then $a_i\in S^+$ by the hypothesis $S\subseteq S^+$ and hence there is some $a_{i+1}\in S$ such that $R\pair{a_{i+1},a_i}$.
\end{enumerate}

\noindent By induction, $\set{a_i}_{i\in\nat}$ is the sequence satisfying $\pair{\forall i\in\nat}R\pair{a_{i+1},a_i}$. This shows that the underlying $\ang{A,R}$ is not well-founded.

($\Rightarrow$) If $\ang{A,R}$ is not well-founded, then there is a sequence $\set{a_i}_{i\in\nat}$ such that $\pair{\forall i\in\nat}R\pair{a_{i+1},a_i}$. Let $S:=\set{a_i}_{i\in\nat}$. Clearly, $S\neq\es$. For any $a_i\in S$, there is some $a_{i+1}\in S$ such that $R\pair{a_{i+1},a_i}$, therefore $a_i\in S^+$ and hence $S\subseteq S^+$.
\end{proof}

\begin{corollary}\label{cor:fin_acyc_wf}
A finite acyclic AF is well-founded. The converse is not true.
\end{corollary}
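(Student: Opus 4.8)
The statement has two parts, and I would prove each separately.

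For the forward direction — that a finite acyclic AF is well-founded — the plan is to argue the contrapositive: a finite AF that fails to be well-founded must be cyclic. So I would begin by assuming $\ang{A,R}$ is finite and, by Definition \ref{def:well_founded}, \emph{not} well-founded, so that there is an $A$-sequence $\set{a_i}_{i\in\nat}$ with $\pair{\forall i\in\nat}R\pair{a_{i+1},a_i}$. Since this sequence is infinite but $A$ is finite, the pigeonhole principle yields indices $j<k$ with $a_j=a_k$. (Equivalently, one could route this through Corollary \ref{cor:non_wf_core}, extracting a nonempty finite $S$ with $S\subseteq S^+$ and cycling within it; the pigeonhole argument is the most direct.)

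The key step is to turn this repetition into a genuine directed cycle. Reading off the attacks along the block from index $k$ down to $j$ gives a closed walk $a_k,a_{k-1},\ldots,a_{j+1},a_j$ in which each consecutive pair is an attack (since $R\pair{a_{i+1},a_i}$ for $j\leq i\leq k-1$) and $a_k=a_j$. To guarantee the vertices are distinct — so that it is a cycle and not merely a closed walk — I would choose $k$ to be the \emph{least} index for which $a_k$ coincides with some earlier term $a_j$; then $a_j,\ldots,a_{k-1}$ are pairwise distinct and $a_k=a_j$, so these arguments together with the attacks between them form a cycle of length $k-j$ (the case $k=j+1$ being simply a self-attacking $1$-cycle). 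This contradicts acyclicity, establishing the contrapositive and hence the forward direction. I expect this extraction of a simple cycle from a closed walk to be the main, and essentially only non-routine, obstacle.

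For the converse, I only need to exhibit a well-founded AF that is not finite, since well-foundedness already forces acyclicity by Corollary \ref{cor:cyclic_not_wf}. The infinite AF of Example \ref{eg:non-finitary_AF} (page \pageref{eg:non-finitary_AF}), with $A=\set{a}\cup\set{b_i}_{i\in\nat}$ and $R=\set{\pair{b_i,a}}_{i\in\nat}$, serves here: any attack sequence $\set{a_i}_{i\in\nat}$ must have its term $a_1$ attack $a_0$, and since the only attacked argument is $a$, this forces $a_0=a$ and $a_1\in\set{b_i}_{i\in\nat}$; but each $b_i$ is unattacked, so $a_2$ cannot exist. Hence no infinite attack sequence exists, the AF is well-founded, yet it is infinite, showing the converse fails.
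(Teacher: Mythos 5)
Your proof is correct and follows essentially the same route as the paper: the forward direction by contrapositive, using the pigeonhole principle on a finite $A$ to find a repeated argument in the backward attack sequence and closing it into a directed cycle, and the converse by exhibiting an infinite well-founded AF (which suffices, as you note, since Corollary \ref{cor:cyclic_not_wf} already forces acyclicity). In fact your write-up is slightly tighter in two places: you choose $k$ minimal so that $a_j,\ldots,a_{k-1}$ are pairwise distinct, making the extracted closed walk a genuine simple cycle where the paper only asserts that ``we can construct a cycle starting and ending with $b$''; and you reuse Example \ref{eg:non-finitary_AF} as the converse witness, whereas the paper constructs a fresh AF of infinitely many disjoint attacks $\set{\pair{a_i,b_i}}_{i\in\nat}$ --- both witnesses work for the same reason.
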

\begin{proof}
(Contrapositive) Let $\ang{A,R}$ be finite and not well-founded. We seek to construct an attack cycle. By non-well-foundedness, there exists an $A$-sequence $\set{a_i}_{i\in\nat}$ such that $\pair{\forall i\in\nat}R\pair{a_{i+1},a_i}$. As $A$ is finite, WLOG $\abs{A}=N\in\nat$, then the first $N+1$ terms of the sequence $\set{a_i}_{i\in\nat}$ must have some repeating argument by the pigeonhole principle. Let $b$ be such an argument, then we can construct a cycle starting and ending with $b$ through the property $\pair{\forall i\in\nat}R\pair{a_{i+1},a_i}$. Therefore, $AF$ is cyclic.\footnote{We cannot prove the contrapositive by assuming that the AF is cyclic and not well-founded because this contradicts Corollary \ref{cor:cyclic_not_wf}.}

As for the converse, the AF where $A=\set{a_i,b_i}_{i\in\nat}$ and $R=\set{\pair{a_i,b_i}}_{i\in\nat}$ is well-founded, acyclic but infinite. This AF is depicted in Figure \ref{fig:fin_acyc_wf}.

\begin{figure}[H]
\begin{center}
\begin{tikzpicture}[>=stealth',shorten >=1pt,node distance=2cm,on grid,initial/.style    ={}]
\tikzset{mystyle/.style={->,relative=false,in=0,out=0}};
\node[state] (1) at (2,0) {$ b_0 $};
\node[state] (2) at (4,0) {$ b_1 $};
\node[state] (3) at (6,0) {$ b_2 $};
\node[state] (4) at (8,0) {$ b_3 $};
\node (dottop) at (9.5,0) {\Huge $ \cdots $};
\node[state] (5) at (2,2) {$ a_0 $};
\node[state] (6) at (4,2) {$ a_1 $};
\node[state] (7) at (6,2) {$ a_2 $};
\node[state] (8) at (8,2) {$ a_3 $};
\node (dot2) at (9.5,1) {\Huge $ \cdots $};
\node (dot1) at (9.5,2) {\Huge $ \cdots $};
\draw [->] (5) to (1);
\draw [->] (6) to (2);
\draw [->] (7) to (3);
\draw [->] (8) to (4);
\end{tikzpicture}
\end{center}
\caption{An infinite AF that is well-founded, from Corollary \ref{cor:fin_acyc_wf}.}\label{fig:fin_acyc_wf}
\end{figure}
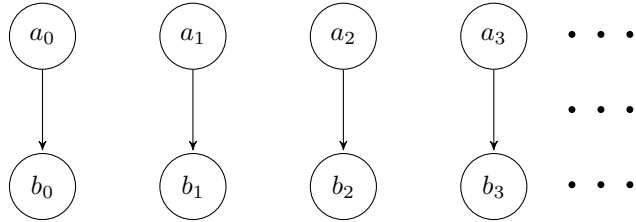
\noindent Therefore, the converse is not true. The result follows.
\end{proof}

\subsection{Controversy in Argumentation Frameworks}

\begin{definition}
We say $a\in A$ \textbf{indirectly attacks} $b\in A$ iff there is an odd-length path from $a$ to $b$ in $\ang{A,R}$.
\end{definition}

\noindent Notice that direct attacks, i.e. paths of length 1, are special cases of indirect attacks.

\begin{example}
(Example \ref{eg:P4} continued, page \pageref{eg:P4}) The argument $e$ indirectly attacks the argument $a$, as there is a path of length 3 from $e$ to $a$.
\end{example}

\begin{example}
Self-attacking arguments both directly and indirectly attack themselves.
\end{example}

\begin{definition}\label{def:indirectly_defends}
We say $a\in A$ \textbf{indirectly defends} $b\in A$ iff there is an even-length path from $a$ to $b$ in $\ang{A,R}$.
\end{definition}

\begin{example}
(Example \ref{eg:nixon} continued, page \pageref{eg:nixon}) The argument $b$ indirectly defends itself as there is a path of length 2 from itself to itself.
\end{example}

\begin{example}
Self-attacking arguments also indirectly defend themselves, by going through their loop twice to obtain a path of length 2.
\end{example}

\begin{corollary}\label{cor:no_loop_indir_self_def}
Every non-self-attacking argument in an AF indirectly defends itself.
\end{corollary}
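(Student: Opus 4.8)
The plan is to prove the claim by exhibiting the most degenerate even-length path available, namely the path of length $0$. First I would fix an arbitrary non-self-attacking argument $a\in A$ and recall Definition \ref{def:indirectly_defends}: to conclude that $a$ indirectly defends itself it suffices to produce a single even-length path from $a$ to $a$.

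Next I would point to the \emph{trivial} path from $a$ to $a$ --- the path consisting of the single vertex $a$ and no edges, which has length $0$. Since $0$ is even, this is by definition an even-length path from $a$ to itself, so Definition \ref{def:indirectly_defends} immediately yields that $a$ indirectly defends itself. The crucial feature is that this argument uses no outgoing edges of $a$ at all, and this is exactly what is needed: a non-self-attacking argument may be a sink (as is $a$ in Example \ref{eg:P4}), so no path of \emph{positive} length from $a$ back to $a$ need exist, and the trivial path is the only route available in general.

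The one genuine point to get right --- and the main obstacle --- is the status of the length-$0$ path in the underlying notion of path recapped in Appendix \ref{app:digraphs}: the whole proof hinges on admitting the trivial path as a bona fide (even-length) path, so I would state this explicitly rather than leave it tacit. It is also worth remarking on why the hypothesis singles out non-self-attacking arguments even though the trivial-path argument in fact works verbatim for every argument: the self-attacking case was just handled in the preceding example via the length-$2$ closed walk that traverses the loop twice, so the present corollary is merely the complementary case. Taken together, the two results show that every argument of any AF indirectly defends itself.
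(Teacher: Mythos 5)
Your proof is correct and takes essentially the same route as the paper, which likewise observes that every non-self-attacking argument has a path of length $0$ to itself, an even-length path (noting in a footnote, as you do, that such a trivial ``path'' is admitted but would not be called a cycle). Your closing remarks about the self-attacking case simply echo the preceding examples in the paper, so nothing further is needed.
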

\begin{proof}
Every non self-attacking argument has a path length of 0 to itself, which is an even path.\footnote{We would not call such ``paths'' cycles though.}
\end{proof}

\begin{definition}
We say \textbf{$a\in A$ is controversial w.r.t. $b\in A$} iff $a\in A$ indirectly attacks and indirectly defends $b\in A$.
\end{definition}

\begin{example}\label{eg:controversial_triangle}
\cite[Example 2.1.2]{EoA} Consider $A=\set{a,b,c}$ such that $R(a,b)$, $R(b,c)$ and $R(a,c)$. It is clear that $a$ is controversial w.r.t. $c$. This AF is depicted in Figure \ref{fig:controversial_triangle}.

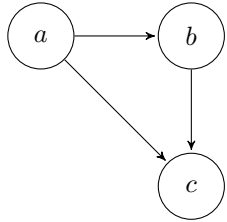
\begin{figure}[H]
\begin{center}
\begin{tikzpicture}[>=stealth',shorten >=1pt,node distance=2cm,on grid,initial/.style    ={}]
\tikzset{mystyle/.style={->,relative=false,in=0,out=0}};
\node[state] (a) at (0,0) {$ a $};
\node[state] (b) at (2,0) {$ b $};
\node[state] (c) at (2,-2) {$ c $};
\draw [->] (a) to (b);
\draw [->] (b) to (c);
\draw [->] (a) to (c);
\end{tikzpicture}
\end{center}
\caption{An example of controversy, from Example \ref{eg:controversial_triangle}.}\label{fig:controversial_triangle}
\end{figure}
\end{example}

\begin{example}
Each self-attacking argument is controversial with respect to itself, because it both indirectly attacks and indirectly defends itself.
\end{example}

\begin{definition}
We say $a\in A$ is \textbf{controversial} iff there is some $b\in A$ such that $a$ is controversial w.r.t. $b$.
\end{definition}

\begin{definition}
\cite[Definition 32(1)]{Dung:95} An AF $\ang{A,R}$ is \textbf{controversial} iff there is some controversial argument in $A$. Else, the AF is \textbf{uncontroversial}.
\end{definition}

\begin{example}
(Example \ref{eg:controversial_triangle} continued) This AF is controversial, as $a$ is a controversial argument.
\end{example}

\begin{example}
(Example \ref{eg:ab_self_attack}, page \pageref{eg:ab_self_attack} continued) This AF is controversial because it contains the self-attacking argument $a$.
\end{example}

\begin{definition}
\cite[Definition 32(2)]{Dung:95} An AF $\ang{A,R}$ is \textbf{limited controversial} iff there is \textit{no} $A$-sequence $\set{a_i}_{i\in\nat}$ such that $a_{i+1}$ is controversial with respect to $a_i$.
\end{definition}

\begin{corollary}\label{cor:LC_no_self_attack}
If an AF is limited controversial then it cannot have self-attacking arguments.
\end{corollary}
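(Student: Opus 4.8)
The plan is to prove the contrapositive: I will show that if an AF contains a self-attacking argument, then it fails to be limited controversial. So I would begin by supposing $a\in A$ is self-attacking, i.e. $(a,a)\in R$, and aim to exhibit a forbidden $A$-sequence.

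First I would establish that $a$ is controversial with respect to itself. Since $(a,a)\in R$ is a path of length $1$ (odd) from $a$ to $a$, the argument $a$ indirectly attacks itself; and traversing the self-loop twice yields a path of length $2$ (even) from $a$ to $a$, so $a$ also indirectly defends itself. Hence $a$ is controversial w.r.t. $a$. This is exactly the observation already recorded in the example immediately preceding the corollary, so this step is essentially free.

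Next I would use this to build the sequence that violates limited controversiality. Consider the constant $A$-sequence $\set{a_i}_{i\in\nat}$ defined by $a_i:=a$ for every $i\in\nat$. Then for each $i$ the term $a_{i+1}=a$ is controversial with respect to $a_i=a$, precisely because $a$ is controversial w.r.t. itself. This sequence directly witnesses that the AF is not limited controversial, and taking the contrapositive yields the corollary.

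I do not expect any genuine obstacle here; the statement is short and follows almost immediately from the definitions. The only point requiring a moment's care is recognising that the definition of limited controversiality is negated by a \emph{constant} sequence, so that a single self-attacking argument already suffices to produce the forbidden infinite sequence — one does not need infinitely many distinct controversial arguments. Everything else is a routine unwinding of the definitions of ``indirectly attacks'' and ``indirectly defends'' applied to the self-loop.
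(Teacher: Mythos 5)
Your proof is correct and follows exactly the paper's route: the paper also argues by contrapositive, notes that a self-attacking argument is controversial w.r.t. itself (via the odd self-loop and the even double-loop, as recorded in the preceding examples), and uses the constant sequence $\set{a}_{i\in\nat}$ to witness the failure of limited controversiality. No gaps.
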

\begin{proof}
(Contrapositive) If an AF has a self-attacking argument $a$, then it is controversial w.r.t. itself so the constant sequence $\set{a}_{i\in\nat}$ renders the AF not limited controversial.
\end{proof}

\noindent We will answer the converse of Corollary \ref{cor:LC_no_self_attack} in Corollary \ref{cor:odd_cycle_not_lc}.

\begin{corollary}\label{cor:uncont_LC}
Uncontroversial AFs are limited controversial. The converse is not true in general.
\end{corollary}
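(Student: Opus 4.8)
The plan is to prove the forward implication by contraposition and to settle the converse with an explicit example.

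For the forward direction, I would argue the contrapositive. Suppose $\ang{A,R}$ is not limited controversial. Then there is an $A$-sequence $\set{a_i}_{i\in\nat}$ with $a_{i+1}$ controversial w.r.t. $a_i$ for every $i\in\nat$. In particular $a_1$ is controversial w.r.t. $a_0$, so $a_1$ witnesses (via $b=a_0$) that there is a controversial argument, i.e. $\ang{A,R}$ is controversial. Taking the contrapositive gives: uncontroversial $\Rightarrow$ limited controversial. This half is immediate once the definitions are unpacked; the full infinite sequence is not really used — a single consecutive controversial pair already produces a controversial argument.

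For the converse, I would revisit the controversial triangle of Example \ref{eg:controversial_triangle}, where $A=\set{a,b,c}$ and $R=\set{\pair{a,b},\pair{b,c},\pair{a,c}}$. As observed there, $a$ is controversial w.r.t. $c$, so this AF is controversial. It remains to show it is limited controversial, i.e. that no $A$-sequence $\set{a_i}_{i\in\nat}$ has $a_{i+1}$ controversial w.r.t. $a_i$ throughout. I would first catalogue the relevant paths: the odd-length (indirect attack) paths between distinct vertices are exactly the three direct edges, while the only even-length path between distinct vertices is $a\to b\to c$ (besides the length-$0$ self-paths guaranteed by Corollary \ref{cor:no_loop_indir_self_def}). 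Running through each ordered pair, the unique pair for which one vertex both indirectly attacks and indirectly defends another is $a$ w.r.t. $c$.

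The main obstacle is precisely this finite case analysis for the example: one must verify that $a$ w.r.t. $c$ is the only pair witnessing controversy and, crucially, that no argument is controversial w.r.t. $a$. Since $c$ is a sink it cannot begin any controversy relation, and since nothing is controversial w.r.t. $a$, the single relation ``$a$ controversial w.r.t. $c$'' cannot be prepended or appended to form an infinite sequence. Hence the triangle is limited controversial yet controversial, so the converse fails. The forward direction is routine; the only care needed is the exhaustive (but finite) path bookkeeping in the example.
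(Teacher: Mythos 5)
Your proof is correct and takes essentially the same route as the paper: the forward direction is the paper's direct observation (no controversial argument, hence no infinite controversial sequence) recast as a contrapositive, and the converse uses the very same counterexample, the controversial triangle of Example \ref{eg:controversial_triangle}. The only difference is that you spell out the finite path bookkeeping showing ``$a$ controversial w.r.t.\ $c$'' is the unique controversial pair and cannot be extended to an infinite sequence, a verification the paper leaves implicit.
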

\begin{proof}
An uncontroversial AF has no controversial arguments and hence there is no infinite sequence of arguments controversial with respect to its predecessor. Therefore, such AFs are also limited controversial.

For the converse, there is no infinite sequence of arguments in the AF of Example \ref{eg:controversial_triangle} where each is controversial with respect to its predecessors. Therefore, in this example, $\ang{A,R}$ is limited controversial and controversial.
\end{proof}

The property of being limited controversial is downwards inheritable.

\begin{corollary}\label{cor:LC_subAF}
If $\ang{A',R'}\subseteq_g\ang{A,R}$ and $\ang{A,R}$ is limited controversial, then $\ang{A',R'}$ is also limited controversial. The converse is not true.
\end{corollary}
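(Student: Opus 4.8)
The plan is to prove the forward implication by contraposition and to dispose of the converse with an explicit counterexample. The single fact driving the entire argument is that directed paths are preserved under the subframework relation: since $\ang{A',R'}\subseteq_g\ang{A,R}$ gives $A'\subseteq A$ and $R'\subseteq R$, every directed path in $\ang{A',R'}$ is also a directed path in $\ang{A,R}$, of the same length and hence the same parity. Consequently, if $a$ indirectly attacks (respectively indirectly defends) $b$ in the smaller framework, then it does so in the larger one as well; controversy thus transfers \emph{upward} along $\subseteq_g$.

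For the forward direction I would assume that $\ang{A',R'}$ is \emph{not} limited controversial and derive the same for $\ang{A,R}$. By definition there is an $A'$-sequence $\set{a_i}_{i\in\nat}$ with $a_{i+1}$ controversial w.r.t.\ $a_i$ in $\ang{A',R'}$; that is, for each $i$ there are both an odd-length and an even-length path from $a_{i+1}$ to $a_i$ in $\ang{A',R'}$. First I would note that $\set{a_i}_{i\in\nat}$ is also an $A$-sequence because $A'\subseteq A$. Then, invoking path-preservation, each of these odd- and even-length paths also exists in $\ang{A,R}$, so $a_{i+1}$ indirectly attacks and indirectly defends $a_i$ in $\ang{A,R}$, i.e.\ $a_{i+1}$ is controversial w.r.t.\ $a_i$ in the larger framework. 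Hence the very same sequence witnesses that $\ang{A,R}$ is not limited controversial, completing the contrapositive.

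For the converse I would exhibit a non-limited-controversial AF possessing a limited-controversial subframework. Taking the AF of Example \ref{eg:ab_self_attack} (with a self-attacking argument $a$ and an attack on $b$) as $\ang{A,R}$, Corollary \ref{cor:LC_no_self_attack} tells us it is not limited controversial. Deleting $a$ leaves the induced subframework on $\set{b}$ with empty attack relation $\es$; this framework has no paths of positive length, so it is uncontroversial and therefore limited controversial by Corollary \ref{cor:uncont_LC}. Since this subframework is limited controversial while the ambient framework is not, the converse fails.

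The only point requiring care---though it is not a genuine obstacle---is to keep track of \emph{which} framework the relation ``controversial w.r.t.'' is evaluated in: controversy is defined relative to a fixed digraph, and the proof works precisely because passing to a larger digraph can only add paths, never remove them. This directional asymmetry is exactly what makes controversy inherit downward as limited controversiality in the main claim while simultaneously permitting the converse counterexample.
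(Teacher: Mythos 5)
Your proposal is correct and takes essentially the same route as the paper: the forward direction rests on the same observation that a witnessing sequence of controversial arguments in the sub-framework lifts, together with its odd- and even-length paths, to the ambient framework --- you merely make explicit (via contraposition and the parity-preserving path-lifting fact) what the paper's one-line argument leaves implicit. For the converse, your concrete counterexample (Example \ref{eg:ab_self_attack} with the self-attacker $a$ deleted) is a specific instance of the paper's generic construction of adjoining a self-attacking argument to an arbitrary AF, with both refutations resting on Corollary \ref{cor:LC_no_self_attack}.
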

\begin{proof}
If $\ang{A,R}$ is limited controversial, then there is no infinite $A$-sequence of arguments such that each is controversial w.r.t. to its predecessor. Therefore, any sub-framework of $\ang{A,R}$ is also limited controversial.

For the converse, consider any AF $\ang{A,R}$ and take its disjoint union with the AF consisting of a single self-attacking argument. More precisely, for $x\notin A$, consider the AF $\ang{A\cup\set{x}, R\cup\set{(x,x)}}$. The first AF is an induced subgraph of the second, but the second is not limited controversial by the contrapositive of Corollary \ref{cor:LC_no_self_attack}.
\end{proof}

As self-attacking arguments are cycles of length 1 (hence odd), the following result generalises Corollary \ref{cor:LC_no_self_attack} and also shows that its converse is not true.

\begin{corollary}\label{cor:odd_cycle_not_lc}
If an AF is limited controversial, then it has no odd cycle. The converse is not true.
\end{corollary}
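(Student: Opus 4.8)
The plan is to prove the contrapositive of the stated implication: I will show that if $\ang{A,R}$ contains an odd cycle then it is not limited controversial. The conceptual core is that \emph{any argument lying on an odd cycle is controversial with respect to itself}, which directly generalises the self-attacking case of Corollary \ref{cor:LC_no_self_attack} (a self-attack being exactly a $1$-cycle, which is odd).

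Concretely, suppose the arguments $a_1,\dots,a_k$ form a cycle with $k$ odd. Traversing the cycle once is a closed walk from $a_1$ to $a_1$ of length $k$ (odd), and traversing it twice is a closed walk of length $2k$ (even). Exactly as self-attacking arguments defend themselves by going round their loop twice, the ``paths'' in these definitions are permitted to repeat vertices and edges, so both closed walks are legitimate. Hence $a_1$ indirectly attacks itself (directly from the definition of indirect attack) and, by Definition \ref{def:indirectly_defends}, indirectly defends itself; that is, $a_1$ is controversial with respect to itself. The constant sequence $\set{a_1}_{i\in\nat}$ then has $a_{i+1}$ controversial with respect to $a_i$ for every $i$, so $\ang{A,R}$ is not limited controversial. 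Taking the contrapositive yields the implication.

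For the converse I will exhibit an \emph{infinite} acyclic AF (so it has no cycle at all, in particular no odd cycle) that is nonetheless not limited controversial; a finite such example cannot exist, since in a finite AF an infinite chain of arguments each controversial with respect to its predecessor must revisit an argument, and concatenating the corresponding odd/even walks around the resulting loop produces a closed walk of odd length, hence an odd cycle. Take $A=\set{x_i}_{i\in\nat}\cup\set{y_i}_{i\in\nat}$ with $R=\set{\pair{x_{i+1},x_i},\pair{x_{i+1},y_i},\pair{y_i,x_i}}_{i\in\nat}$. For each $i$ there is the length-$1$ (odd) attack $x_{i+1}\to x_i$ and the length-$2$ (even) path $x_{i+1}\to y_i\to x_i$, so $x_{i+1}$ is controversial with respect to $x_i$, and the sequence $\set{x_i}_{i\in\nat}$ witnesses that the AF is not limited controversial. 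Acyclicity follows because every attack strictly decreases the rank $\rho$ given by $\rho(x_i)=2i$ and $\rho(y_i)=2i+1$, so no cycle can exist.

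The step needing the most care is the appeal to the convention that indirect attack and indirect defence are defined via \emph{walks} rather than vertex-simple paths: this is what licenses traversing the odd cycle twice to obtain the even-length route in the forward direction, and it is consistent with the paper's earlier treatment of self-attacking arguments. For the converse the only real obligation is to confirm that the proposed construction is genuinely acyclic (via the rank $\rho$), so that it contains no odd cycle while still supporting an infinite controversial chain.
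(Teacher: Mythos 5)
Your proof is correct, and on the forward (contrapositive) direction it takes a genuinely simpler route than the paper. The paper marches backward around the cycle: for consecutive arguments it exhibits walks of length $i+nk$ from $a_{i+1}$ to $a_i$, whose parity flips with $k$ because $n$ is odd, and strings these together into an infinite periodic sequence in which each term is controversial w.r.t.\ its predecessor. You instead observe that a single argument on the odd cycle is controversial with respect to \emph{itself} (one lap gives an odd closed walk, two laps an even one) and take the constant sequence --- exactly the witness used in the paper's proof of Corollary \ref{cor:LC_no_self_attack}, of which your argument is the direct generalisation. Both arguments rest on the same convention, which you correctly flag: ``path'' in the definitions of indirect attack and indirect defence must mean walk (repeated vertices and edges allowed), as the paper itself assumes both when it lets self-attacking arguments indirectly defend themselves by traversing their loop twice and in its own $i+nk$ computation. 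Your version buys brevity and avoids the paper's slightly garbled indexing (its expression ``$j-i+nk$'' contains an undefined $j$), at no cost in strength; the paper's version buys nothing extra here, since the definition of limited controversy does not require the witnessing sequence to be non-constant. For the converse, your counterexample is identical to the paper's up to renaming (the chain $x_{i+1}\to x_i$ with a triangle through each $y_i$), and your rank function $\rho$ is a welcome addition: the paper merely asserts that its example has no odd cycle, whereas you actually prove acyclicity. Your side remark that no finite counterexample can exist is also sound and anticipates Corollary \ref{cor:finite_no_odd_cycles_LC}; the standard fact you implicitly invoke --- that a closed directed walk of odd length must contain a directed cycle of odd length --- is precisely what makes that sketch rigorous.
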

\begin{proof}
(Contrapositive, from \cite{BenchCapon:14}) Assume the AF has an odd cycle with arguments with $a_1$, $a_2$, ... , $a_n$, where $\pair{\forall 1\leq i\leq n}R\pair{a_i,a_{i+1}}$, $R\pair{a_n,a_1}$, and $n$ is odd. For $1\leq i \leq n$, there is a path of length $i+nk$ from $a_{i+1}$ to $a_i$, where $k\in\nat$ is the number of times around the cycle. Depending on $k$, $j-i+nk$ is both even and odd. Therefore, $a_{i+1}$ is controversial w.r.t. $a_n$, down to $a_1$ is controversial w.r.t. $a_0$. We can do this infinitely many times by continuing that $a_1$ is controversial w.r.t. $a_n$... etc. This generates our infinite $A$-sequence such that each $a_{i+1}$ is controversial w.r.t. $a_i$.

For the converse, we construct an AF that is not limited controversial, but has no odd cycle. Consider an AF with arguments $A=\set{a_i,b_i}_{i\in\nat}$ and attacks $R=\set{\pair{a_{i+1},a_i}}_{i\in\nat}\cup\set{\pair{b_i,a_i}}_{i\in\nat}\cup\set{\pair{a_{i+1},b_i}}_{i\in\nat}$. This is depicted in Figure \ref{fig:chain_triangle}.

\begin{figure}[H]
\begin{center}
\begin{tikzpicture}[>=stealth',shorten >=1pt,node distance=2cm,on grid,initial/.style    ={}]
\tikzset{mystyle/.style={->,relative=false,in=0,out=0}};
\node[state] (0) at (0,0) {$ a_0 $};
\node[state] (1) at (2,0) {$ a_1 $};
\node[state] (2) at (4,0) {$ a_2 $};
\node[state] (3) at (6,0) {$ a_3 $};
\draw [->] (1) to (0);
\draw [->] (2) to (1);
\draw [->] (3) to (2);
\draw [->] (4) to (3);
\node (4) at (8.5,0) {\Huge $ \cdots $};
\node[state] (b0) at (1,1) {$ b_0 $};
\node[state] (b1) at (3,1) {$ b_1 $};
\node[state] (b2) at (5,1) {$ b_2 $};
\draw [->] (b0) to (0);
\draw [->] (1) to (b0);
\draw [->] (b1) to (1);
\draw [->] (2) to (b1);
\draw [->] (b2) to (2);
\draw [->] (3) to (b2);
\draw [->] (7,1) to (3);
\node (5) at (8,1) {\Huge $ \cdots $};
\end{tikzpicture}
\end{center}
\caption{An example of an AF that is not limited controversial and has no odd cycle, from Corollary \ref{cor:odd_cycle_not_lc}.}\label{fig:chain_triangle}
\end{figure}
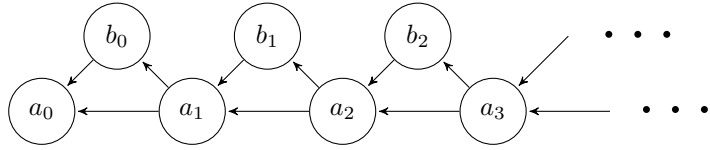

Clearly, $a_{i+1}$ is controversial w.r.t. $a_i$ for $i\in\nat$, therefore this AF is not limited controversial. However, there is no odd cycle. Therefore, the converse to this result is not true in general.
\end{proof}

\begin{corollary}\label{cor:finite_no_odd_cycles_LC}
A finite AF without any odd cycles is limited controversial.
\end{corollary}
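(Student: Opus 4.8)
The plan is to prove the contrapositive: if a finite AF $\ang{A,R}$ is \emph{not} limited controversial, then it must contain an odd cycle. This is the genuinely new content, since Corollary \ref{cor:odd_cycle_not_lc} already supplies the other implication and shows that its converse fails only for \emph{infinite} AFs; finiteness will be used exactly once, through a pigeonhole argument in the spirit of Corollary \ref{cor:fin_acyc_wf}.

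So suppose $\ang{A,R}$ is finite and not limited controversial. By definition there is an infinite $A$-sequence $\set{a_i}_{i\in\nat}$ with $a_{i+1}$ controversial w.r.t.\ $a_i$ for every $i$. Unpacking the definition, for each $i$ there is both an odd-length walk and an even-length walk from $a_{i+1}$ to $a_i$. Since $A$ is finite, the pigeonhole principle yields indices $m<n$ with $a_m=a_n$.

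Next I would concatenate walks along the finite segment $a_n,a_{n-1},\ldots,a_m$ of the sequence. For each $m\le i\le n-1$ I may independently select a walk from $a_{i+1}$ to $a_i$ of whichever parity I wish; choosing exactly one segment to be odd and all the rest even makes the total length odd. The concatenation is then an odd-length closed walk based at $a_m=a_n$, of length $\ge 1$ since the odd segment alone has positive length. It then remains to extract an odd cycle from this odd closed walk, which I would establish as a small lemma by strong induction on the length: a closed walk whose intermediate vertices are distinct is already a cycle of the same (odd) length, with a loop as the length-$1$ base case; otherwise a repeated vertex splits the closed walk into two strictly shorter closed walks whose lengths sum to an odd number, so one of them is odd and the induction hypothesis applies. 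The resulting odd cycle contradicts the hypothesis, completing the contrapositive and hence the corollary.

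The main obstacle I anticipate is not the pigeonhole step itself but the parity bookkeeping and the precise meaning of ``path''. Throughout this section ``path'' is really a \emph{walk} (edges and vertices may repeat — this is exactly what lets self-attacking arguments indirectly defend themselves via length $2$), so the reduction from an odd closed walk to an odd cycle is the tool genuinely needed and must be argued rather than asserted. I would take care to verify that the chosen odd closed walk has strictly positive length and that each of the two sub-walks produced in the inductive split is strictly shorter than the original, so that the induction is well-founded.
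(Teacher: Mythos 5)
Your proof is correct and follows essentially the same route as the paper's: the contrapositive, a pigeonhole argument on the infinite controversial sequence to find a repeated argument, and a concatenation of one odd and otherwise even walks around the repeated segment to produce an odd closed walk. The only difference is that you explicitly justify extracting an odd cycle from that odd closed walk via the induction lemma, a step the paper merely asserts (``The result is an odd cycle''), so your version is a slightly more careful rendering of the same argument rather than a different one.
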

\begin{proof}
(Contrapositive) Assume that the finite AF is not limited controversial.\footnote{We cannot assume that the AF is infinite because an infinite AF without any odd cycles does not have to be limited controversial by the converse of Corollary \ref{cor:odd_cycle_not_lc}.} Then there exists an infinite sequence of arguments $\set{a_i}_{i\in\nat}$ such that $a_{i+1}$ is controversial w.r.t. $a_i$. But as the AF is finite, this sequence must be periodic, so we have $a_0, a_1, \ldots, a_k, a_0$ for some $k\in\nat$ such that $a_{i+1}$ has both an even and an odd path to $a_i$. We construct our odd cycle as follows: we take all even paths from $a_0$ to $a_k$, $a_k$ to $a_{k-1}$, ... and $a_2$ to $a_1$, but take an odd path from $a_1$ to $a_0$. The result is an odd cycle.
\end{proof}

\subsection{Summary}

\begin{itemize}
\item An (abstract) argumentation framework (AF) is a digraph $\ang{A,R}$ where $A$ is the set of arguments under consideration and $R$ is the binary attack relation.
\item For $S\subseteq A$, $S^+\subseteq A$ is the set of arguments attacked by $S$, and $S^-\subseteq A$ is the set of arguments attacking $S$. When $S=\set{a}$ we write $a^+$ and $a^-$ respectively.
\item $U\subseteq A$ is the set of unattacked arguments, i.e. $a\in U\Leftrightarrow a^-=\es$.
\item A self-attacking argument $a$ satisfies $a\in a^+$.
\item An AF $\ang{A,R}$ is empty iff $A=\es$, trivial iff $R=\es$, finite iff $A$ is a finite set, and infinite iff $A$ is an infinite set.
\item An AF $\ang{A,R}$ is symmetric if $R$ is a non-empty symmetric relation. This does not exclude the possibility of there being self-attacking arguments.
\item An AF is finitary iff all arguments has finitely many attackers. All finite AFs are finitary, but finite AFs can have infinitely many arguments.
\item An induced argumentation sub-framework of $\ang{A,R}$ is the induced digraph on a set $B\subseteq A$.
\item An AF is cyclic iff it contains a directed cycle, else the AF is acyclic. A cycle can be odd or even depending on how many arguments it contains.
\item An AF is well-founded iff there exists no countably infinite backward chain of arguments. Cyclic AFs are not well-founded. Finite acyclic AFs are well-founded. Well-founded AFs satisfy $U\neq\es$.
\item An argument $a$ indirectly attacks an argument $b$ iff there is an odd-length path in $R$ from $a$ to $b$. An argument $a$ indirectly defends an argument $b$ iff there is an even-length path in $R$ from $a$ to $b$. We say $a$ is controversial w.r.t. $b$ iff $a$ both indirectly attacks and indirectly defends $b$. An argument $a$ is controversial iff there exists some argument $b$ w.r.t. which it is controversial. An AF is controversial iff there is some controversial argument, else it is uncontroversial.
\item An AF is limited controversial iff there is no countably infinite backward chain of controversial arguments. Uncontroversial AFs are trivially limited controversial. Limited controversial AFs have no odd cycles. A finite AF without any odd cycles is limited controversial.
\end{itemize}

\newpage

\section{Neutrality and Conflict-Freeness}

\subsection{The Neutrality Function}

\subsubsection{Definition}

\begin{definition}\label{def:neutrality_function}
Given an AF, its \textbf{neutrality function} is
\begin{align}
n:\pow\pair{A}\to&\pow\pair{A}\nonumber\\
S\mapsto& n(S):= A-S^+.
\end{align}
\end{definition}

\noindent The neutrality function of $S\subseteq A$ selects all arguments not attacked by $S$, i.e. $S$ is \textit{neutral} towards these arguments. If the underlying AF needs to be explicitly specified, we can write $n_{\mathcal{A}}$, for $\mathcal{A}=\ang{A,R}$.\footnote{This is denoted as $Pl_{AF}$ in \cite[Section 4.2]{Dung:95}.} From now, we will reserve the letter ``$n$'' to denote the neutrality function, and nothing else.\footnote{e.g. we will not use $n$ to denote arguments, or even indices in generalised operators such as unions and meets.}

\begin{corollary}
$n$ is well-defined as a function.
\end{corollary}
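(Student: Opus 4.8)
The plan is to verify the two conditions that any assignment must satisfy in order to be a genuine function: that every element of the domain receives an output, and that each such output is uniquely determined and lies in the stated codomain. Since the domain is $\pow(A)$, I would take an arbitrary $S\in\pow(A)$, i.e. an arbitrary $S\subseteq A$, and confirm that the recipe $S\mapsto A-S^+$ produces exactly one element of $\pow(A)$.

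First I would observe that $S^+$ is itself a well-defined subset of $A$: it is the forward set of $S$ determined by the fixed attack relation $R$ (Definition \ref{def:S_pm}), so for each fixed $S$ there is exactly one such set. Consequently the set difference $A-S^+$ is unambiguously determined by $S$, which gives single-valuedness. Next I would note that $A-S^+\subseteq A$, since removing elements from $A$ cannot produce anything outside $A$; hence $A-S^+\in\pow(A)$, confirming that the output genuinely lands in the codomain. Totality is immediate because $S^+$, and therefore $A-S^+$, is defined for every $S\subseteq A$, including the edge case $S=\es$ (where $\es^+=\es$ by Corollary \ref{cor:es_pm}, giving $n(\es)=A$).

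There is no real obstacle here; the statement is essentially a sanity check on the Definition \ref{def:neutrality_function}. The only point requiring any care is to recognise that ``well-defined as a function'' bundles together three routine verifications — existence of an output for every input, uniqueness of that output, and membership of that output in the codomain — each of which follows directly from the fact that $n(S)$ is written as a set difference having $A$ as its first operand.
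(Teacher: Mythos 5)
Your proposal is correct and follows essentially the same route as the paper: both verify that $n(S)=A-S^+$ lands in $\pow\pair{A}$ and that the output is uniquely determined by the input, with your direct appeal to Definition \ref{def:S_pm} playing the role of the paper's citation of Corollary \ref{cor:pm_function} for single-valuedness. Your additional remarks on totality and the edge case $S=\es$ are harmless elaborations of the same routine check.
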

\begin{proof}
Given $S\in\pow\pair{A}$, $n(S)=A-S^+\in\pow\pair{A}$ is well-defined. Further, for $S=T$, $n(S)=n(T)$ by Corollary \ref{cor:pm_function} (page \pageref{cor:pm_function}). Therefore, $n$ is a well-defined function.
\end{proof}

\begin{example}\label{eg:nixon_neutrality}
(Example \ref{eg:nixon} continued, page \pageref{eg:nixon}) For the Nixon diamond, the values of $n$ are depicted in Table \ref{tab:nixon_neutrality}. Recall that in this case $A:=\set{a,b}$ and $R=\set{(a,b),(b,a)}$.

\begin{table}[H]
\begin{center}
\begin{tabular}{ | c | c | c | c | c | }
\hline
$S$ & $\es$ & $\set{a}$ & $\set{b}$ & $A$\\\hline
$n(S)$ & $A$ & $\set{a}$ & $\set{b}$ & $\es$\\\hline
\end{tabular}
\caption{The values of the neutrality function $n$, for Example \ref{eg:nixon}.}\label{tab:nixon_neutrality}
\end{center}
\end{table}
\end{example}

\begin{example}\label{eg:simple_neutrality}
(Example \ref{eg:simple_reinstatement} continued, page \pageref{eg:simple_reinstatement}) For simple reinstatement, the values of $n$ are depicted in Table \ref{tab:simple_neutrality}. Recall that in this case $A:=\set{a,b,c}$ and $R=\set{(c,b),(b,a)}$.

\begin{table}[H]
\begin{center}
\begin{tabular}{ | c | c | c | c | c | c | c | c | c | }
\hline
$S$ & $\es$ & $\set{a}$ & $\set{b}$ & $\set{c}$ & $\set{a,b}$ & $\set{b,c}$ & $\set{a,c}$ & $A$\\\hline
$n(S)$ & $A$ & $A$ & $\set{b,c}$ & $\set{a,c}$ & $\set{b,c}$ & $\set{c}$ & $\set{a,c}$ & $\set{c}$\\\hline
\end{tabular}
\caption{The values of the neutrality function $n$, for Example \ref{eg:simple_reinstatement}.}\label{tab:simple_neutrality}
\end{center}
\end{table}
\end{example}

\subsubsection{Properties}

For any AF, the following results hold.

\begin{corollary}\label{cor:neutrality_es}
We have that $n\pair{\es}=A$.
\end{corollary}
\begin{proof}
By Corollary \ref{cor:es_pm} (page \pageref{cor:es_pm}), $\es^+=\es$ so $n\pair{\es}=A-\es=A$.
\end{proof}

\begin{corollary}
We have that $n(A) = U$.
\end{corollary}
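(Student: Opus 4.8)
The plan is to unfold both sides of the claimed equality into set-builder form and observe that they coincide. First I would apply Definition \ref{def:neutrality_function} to rewrite the left-hand side as $n(A)=A-A^+$, which reduces the task to showing $A-A^+=U$.

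Next I would characterise $A^+$. By the definition of the forward (successor) set $S^+$ (Definition \ref{def:S_pm}, page \pageref{def:S_pm}), an argument $a$ lies in $A^+$ precisely when it is attacked by some argument of $A$. Since every argument of the framework belongs to $A$, this is equivalent to saying that $a$ is attacked at all, i.e. $a^-\neq\es$. Hence $A^+=\set{a\in A\:\vline\:a^-\neq\es}$, the set of all attacked arguments.

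Taking complements within $A$, I would conclude $A-A^+=\set{a\in A\:\vline\:a^-=\es}$, which is exactly $U$ by Definition \ref{def:unattacked} (page \pageref{def:unattacked}). Chaining the equalities $n(A)=A-A^+=U$ then completes the proof.

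I do not expect any genuine obstacle here, since the statement is a direct consequence of the definitions. The only step deserving explicit mention is the identity $A^+=\set{a\in A\:\vline\:a^-\neq\es}$ — that the successor set of the entire argument universe is exactly the collection of attacked arguments — and this holds simply because every attacker of an argument is itself a member of $A$, so no potential attacker is missed when we take $S=A$.
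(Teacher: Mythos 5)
Your proof is correct and follows essentially the same route as the paper: both unfold $n(A)=A-A^+$ and use the observation that, since every attacker lies in $A$, membership in $A^+$ is equivalent to $a^-\neq\es$, so the complement is exactly $U$ by Definition \ref{def:unattacked}. The only difference is presentational — you phrase it as a set identity for $A^+$ while the paper runs the same equivalences pointwise — so nothing further is needed.
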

\begin{proof}
Let $a\in n(A)$ iff $a\in A-A^+$ iff $a\notin A^+$ iff $\pair{\forall b\in A}\neg R(b,a)$ iff $a^-=\es$. Therefore, $a\in U$ by Definition \ref{def:unattacked} (page \pageref{def:unattacked}).
\end{proof}

\begin{corollary}\label{cor:neutrality_antitone}
$n$ is $\subseteq$-antitone.
\end{corollary}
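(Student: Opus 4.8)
The plan is to unpack ``$\subseteq$-antitone'' as the assertion that $S\subseteq T$ implies $n(T)\subseteq n(S)$, and then to reduce the claim to two ingredients: that the forward-set operator $(\cdot)^+$ is $\subseteq$-monotone, and that complementation within $A$ reverses inclusions. Since $n(S)=A-S^+$ is, structurally, the monotone map $(\cdot)^+$ followed by the antitone complementation map, the composite is antitone, which is exactly what is wanted.

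Concretely, first I would fix $S\subseteq T\subseteq A$ and establish $S^+\subseteq T^+$. This is immediate from the definition of the forward set: if $a\in S^+$ then some $s\in S$ attacks $a$, and since $S\subseteq T$ that same $s$ lies in $T$, so $a\in T^+$. (If a monotonicity lemma for $(\cdot)^+$ is recorded in the appendix alongside Definition \ref{def:S_pm}, I would simply cite it rather than re-prove it.) Second, I would use that $X\subseteq Y\subseteq A$ yields $A-Y\subseteq A-X$; applying this to $S^+\subseteq T^+$ gives $A-T^+\subseteq A-S^+$, i.e. $n(T)\subseteq n(S)$ by Definition \ref{def:neutrality_function}.

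Alternatively, and perhaps more transparently for a first reading, the whole thing reduces to one element-chase: take any $a\in n(T)=A-T^+$, so $a\notin T^+$, meaning no argument in $T$ attacks $a$; since $S\subseteq T$, \textit{a fortiori} no argument in $S$ attacks $a$ either, so $a\notin S^+$ and hence $a\in A-S^+=n(S)$. This establishes $n(T)\subseteq n(S)$ directly, without isolating the monotonicity of $(\cdot)^+$ as a separate step.

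I do not anticipate any genuine obstacle here; the only point needing to be pinned down is the monotonicity of $(\cdot)^+$, and that is a one-line argument. The chief benefit of phrasing the result structurally (monotone $(\cdot)^+$ composed with antitone complementation) is that it explains \emph{why} $n$ is order-reversing, which is the behaviour that becomes relevant once $n$ is iterated or composed with itself later on.
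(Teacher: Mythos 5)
Your proof is correct and takes essentially the same route as the paper: cite the monotonicity of $\pair{\cdot}^+$ (Corollary \ref{cor:pm_monotonicity}) and observe that complementation within $A$ reverses inclusions, yielding $n(T)\subseteq n(S)$. The element-chase you sketch as an alternative is just an inlined version of the same argument, so there is nothing to reconcile.
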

\begin{proof}
If $S\subseteq T\subseteq A$, then $S^+\subseteq T^+\subseteq A$ by Corollary \ref{cor:pm_monotonicity} (page \pageref{cor:pm_monotonicity}) and hence $A-T^+\subseteq A-S^+\subseteq A$. Therefore, $n(T)\subseteq n(S)$, so $n$ is $\subseteq$-antitone.
\end{proof}

\begin{corollary}
The square of the neutrality function, $n^2(S):=n\pair{n\pair{S}}$, is $\subseteq$-monotone.
\end{corollary}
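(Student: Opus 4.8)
The plan is to prove that $n^2$ is $\subseteq$-monotone by composing the antitonicity of $n$ with itself. The key observation is that the composition of two antitone functions is monotone: if $n$ reverses inclusions, then applying it twice reverses them twice, which restores the original direction.

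First I would take arbitrary $S\subseteq T\subseteq A$ and apply Corollary \ref{cor:neutrality_antitone} once. Since $n$ is $\subseteq$-antitone, $S\subseteq T$ gives $n(T)\subseteq n(S)$. Next I would apply Corollary \ref{cor:neutrality_antitone} a second time to the reversed inclusion $n(T)\subseteq n(S)$, which yields $n\pair{n(S)}\subseteq n\pair{n(T)}$, i.e. $n^2(S)\subseteq n^2(T)$. This is precisely the statement that $n^2$ is $\subseteq$-monotone.

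I expect there to be essentially no obstacle here, as the entire argument is a two-line application of the already-established antitonicity result. The only thing to be careful about is tracking which direction each inclusion points after each application of $n$, so that the two order-reversals correctly cancel to give an order-preserving map. No separate treatment of edge cases (such as $S=\es$ or $S=A$) is needed, since Corollary \ref{cor:neutrality_antitone} already holds for all subsets.
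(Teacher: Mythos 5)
Your proof is correct and is essentially the paper's own argument: the paper simply cites the general fact that composing an antitone function with itself yields a monotone function, and your two applications of Corollary \ref{cor:neutrality_antitone} are exactly the unfolding of that fact. Nothing is missing, and you are right that no edge cases need separate treatment.
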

\begin{proof}
This follows from the fact that the composition of an antitone function with itself results in a monotone function.
\end{proof}

\begin{corollary}\label{cor:neutrality_cup_cap}
Let $I$ be an index set and $\set{S_i}_{i\in I}$ be a family of subsets of $A$. We have that
\begin{align}
n\pair{\bigcup_{i\in I}S_i}=&\bigcap_{i\in I}n\pair{S_i}\text{ and }\label{eq:neut_of_cup}\\
n\pair{\bigcap_{i\in I}S_i}\supseteq& \bigcup_{i\in I}n\pair{S_i}.\label{eq:neut_of_cap}
\end{align}
The reverse containment of Equation \ref{eq:neut_of_cap} is not true in general.
\end{corollary}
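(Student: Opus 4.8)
The plan is to reduce both identities to the corresponding behaviour of the forward-set operator $\pair{\cdot}^+$ under unions and intersections, and then apply De Morgan's laws for relative complementation in $A$. Working directly from the characterisation of $S^+$ as the set of arguments attacked by some element of $S$, I would first establish the equality $\pair{\bigcup_{i\in I}S_i}^+=\bigcup_{i\in I}S_i^+$ and the one-sided containment $\pair{\bigcap_{i\in I}S_i}^+\subseteq\bigcap_{i\in I}S_i^+$. The former holds because an argument is attacked by some element of $\bigcup_{i\in I}S_i$ iff it is attacked by some element of some $S_i$. The latter holds because any attacker lying in every $S_i$ witnesses membership in each $S_i^+$; but an element of $\bigcap_{i\in I}S_i^+$ may be attacked by \emph{different} arguments in different $S_i$, which is exactly why the reverse inclusion can fail and why the analogous statement for $n$ is only a containment.

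For \eqref{eq:neut_of_cup}, I would chain equalities: $n\pair{\bigcup_{i\in I}S_i}=A-\pair{\bigcup_{i\in I}S_i}^+=A-\bigcup_{i\in I}S_i^+=\bigcap_{i\in I}\pair{A-S_i^+}=\bigcap_{i\in I}n(S_i)$, using in turn the definition of $n$, the union equality above, and De Morgan. For \eqref{eq:neut_of_cap}, the forward-set containment gives $\pair{\bigcap_{i\in I}S_i}^+\subseteq\bigcap_{i\in I}S_i^+$; since relative complementation in $A$ reverses inclusions, $A-\bigcap_{i\in I}S_i^+\subseteq A-\pair{\bigcap_{i\in I}S_i}^+=n\pair{\bigcap_{i\in I}S_i}$, and by De Morgan the left-hand side equals $\bigcup_{i\in I}\pair{A-S_i^+}=\bigcup_{i\in I}n(S_i)$, which yields the stated $\supseteq$.

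For the final claim that the converse of \eqref{eq:neut_of_cap} fails, I would exhibit an AF in which the forward-set containment is strict, forcing the neutrality containment to be strict as well. The obstruction is precisely an argument with two distinct attackers that can be separated into different sets; double reinstatement (Example \ref{eg:double_reinstatement}), where $b$ is attacked by both $c$ and $e$, supplies this. Taking $S_1=\set{a,c}$ and $S_2=\set{a,e}$, so that $S_1\cap S_2=\set{a}$ with $\set{a}^+=\es$, I would compute $n\pair{S_1\cap S_2}=n\pair{\set{a}}=A$, whereas $S_1^+=S_2^+=\set{b}$ gives $n(S_1)=n(S_2)=A-\set{b}$ and hence $n(S_1)\cup n(S_2)=A-\set{b}\subsetneq A$. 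The main obstacle is not the routine set algebra but recognising this mechanism and choosing sets whose intersection discards the shared target $b$; once the right example is in hand, everything else is bookkeeping with De Morgan and the antitonicity already recorded in Corollary \ref{cor:neutrality_antitone}.
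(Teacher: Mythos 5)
Your proposal is correct and follows essentially the same route as the paper's proof: both derive \eqref{eq:neut_of_cup} and \eqref{eq:neut_of_cap} by combining the forward-set facts $\pair{\bigcup_{i\in I}S_i}^+=\bigcup_{i\in I}S_i^+$ and $\pair{\bigcap_{i\in I}S_i}^+\subseteq\bigcap_{i\in I}S_i^+$ (Corollary \ref{cor:cup_cap_plus_minus}) with De Morgan's laws, you merely re-derive those facts inline where the paper cites them. Your counterexample (double reinstatement with $S_1=\set{a,c}$, $S_2=\set{a,e}$, giving $n\pair{S_1\cap S_2}=A\not\subseteq A-\set{b}=n\pair{S_1}\cup n\pair{S_2}$) is a different instance from the paper's (the digraph of Figure \ref{fig:cap_plus} with $S_1=\set{a,b}$, $S_2=\set{b,c}$) but exploits the identical mechanism --- one target attacked by two distinct attackers that the intersection discards --- and your computations check out.
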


\noindent Notice that for $I=\es$, the first equation reduces to Corollary \ref{cor:neutrality_es} and the second equation reduces to $n(A)\supseteq\es$, which is trivially true.

\begin{proof}
For the first result we apply Equation \ref{eq:plus_cup} (page \pageref{eq:plus_cap}) and De Morgan's laws.
\begin{align*}
n\pair{\bigcup_{i\in I}S_i}=A-\pair{\bigcup_{i\in I}S_i}^+=A-\bigcup_{i\in I}S_i^+=\bigcap_{i\in I}\pair{A-S_i^+}=\bigcap_{i\in I}n\pair{S_i}.
\end{align*}
For the second result we apply Equation \ref{eq:plus_cap} (page \pageref{eq:plus_cap}) and De Morgan's laws.
\begin{align*}
n\pair{\bigcap_{i\in I}S_i}=A-\pair{\bigcap_{i\in I}S_i}^+\supseteq A-\bigcap_{i\in I}S_i^+=\bigcup_{i\in I}\pair{A-S_i^+}=\bigcup_{i\in I}n\pair{S_i}.
\end{align*}
Now consider the AF whose underlying digraph is the same as that of Corollary \ref{cor:cup_cap_plus_minus}, depicted in Figure \ref{fig:cap_plus} (page \pageref{fig:cap_plus}). We have $n\pair{S_1\cap S_2}=n\pair{\set{b}}=\set{a,b,c,x}$. However, $n(S_1)=n(S_2)=\set{a,b,c}$ and hence $n\pair{S_1}\cup n\pair{S_2}=\set{a,b,c}$. Clearly, $n\pair{S_1\cap S_2}=\set{a,b,c,x}\not\subseteq n\pair{S_1}\cup n\pair{S_2}=\set{a,b,c}$. Therefore, the converse of the second result does not hold in general.
\end{proof}

\noindent Note that Equation \ref{eq:neut_of_cup} means that $n$ is a \textit{join antimorphism} \cite[Definition 3.3.21]{subrahmanyam2018elementary} on the complete lattice $\ang{\pow\pair{A},\cap,\cup}$ (see Appendix \ref{app:order}, page \pageref{app:order}).


\subsection{Conflict-Free Sets}\label{sec:CF}

We now use the neutrality function to define what it means for a set of arguments in an AF to be collectively consistent.

\subsubsection{Definition}

\begin{theorem}\label{thm:cf_char}
For $S\subseteq A$, TFAE:\footnote{\textit{TFAE} stands for ``the following (statements) are (logically) equivalent''.}
\begin{enumerate}
\item $S\subseteq n(S)$, i.e. $S$ is a postfixed point of $n$ (Definition \ref{def:prefix_postfix_fp}, page \pageref{def:prefix_postfix_fp}),
\item $S\cap S^+=\es$ and
\item $S^2\cap R=\es$ and
\item $S\cap S^-=\es$.
\end{enumerate}
\end{theorem}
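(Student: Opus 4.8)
The plan is to treat statement 3, namely $S^2\cap R=\es$ (``no attack has both endpoints in $S$''), as the central condition and show that each of the other three statements is equivalent to it by unfolding definitions. Concretely I would establish the chain $1\Leftrightarrow 2\Leftrightarrow 3\Leftrightarrow 4$; since everything reduces to the same symmetric condition, one could equally well prove $2\Leftrightarrow 3$ and $4\Leftrightarrow 3$ independently.

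For $1\Leftrightarrow 2$ I would use the definition $n(S)=A-S^+$ together with the elementary set-theoretic fact that for $X,Y\subseteq A$ one has $X\subseteq A-Y$ iff $X\cap Y=\es$. Taking $X=S$ and $Y=S^+$ gives $S\subseteq n(S)$ iff $S\cap S^+=\es$ immediately, which also makes explicit why condition 1 is the ``postfixed point'' reformulation.

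For $2\Leftrightarrow 3$ I would unfold the membership criterion for the forward set: $b\in S^+$ iff there exists $a\in S$ with $R(a,b)$. Hence $b\in S\cap S^+$ iff $b\in S$ and some $a\in S$ attacks $b$, which is exactly the assertion that some pair $(a,b)\in S^2$ lies in $R$; thus $S\cap S^+=\es$ iff $S^2\cap R=\es$. For $3\Leftrightarrow 4$ I would argue symmetrically via the backward set: $a\in S^-$ iff there exists $b\in S$ with $R(a,b)$, so $a\in S\cap S^-$ iff $a\in S$ attacks some $b\in S$, again precisely the existence of a pair in $S^2\cap R$; hence $S\cap S^-=\es$ iff $S^2\cap R=\es$.

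I do not anticipate a genuine obstacle, since all four statements are restatements of the single combinatorial fact that no edge of $R$ has both endpoints in $S$. The only points requiring care are (i) keeping the existential quantifier over the attacking argument $a$ (respectively the attacked argument $b$) correctly scoped when passing between the $S^{\pm}$ formulation and the $S^2\cap R$ formulation, and (ii) invoking the complementation fact $X\subseteq A-Y\Leftrightarrow X\cap Y=\es$ explicitly, as this is what links the fixpoint-theoretic statement 1 to the combinatorial statements 2--4. The membership criteria for $S^+$ and $S^-$ used throughout are exactly those recorded for the forward and backward sets in Definition \ref{def:S_pm}.
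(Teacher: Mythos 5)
Your proposal is correct and follows essentially the same route as the paper: both arguments are pure definitional unfoldings, with your step $1\Leftrightarrow 2$ identical to the paper's (the complementation fact $S\subseteq A-S^+\Leftrightarrow S\cap S^+=\es$) and your steps $2\Leftrightarrow 3$ and $3\Leftrightarrow 4$ matching the paper's $1\Leftrightarrow 3$ and $3\Leftrightarrow 4$ up to the trivial choice of which equivalence to chain through and the paper's use of the contrapositive form. No gap; the quantifier bookkeeping you flag is exactly the only point of care in the paper's proof as well.
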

\begin{proof}
(1) and (2) are equivalent because
\begin{align*}
S\cap S^+=\es\Leftrightarrow S\subseteq A-S^+\Leftrightarrow S\subseteq n(S).
\end{align*}

\noindent (1) and (3) are equivalent because
\begin{align*}
S\not\subseteq n(S)\Leftrightarrow& \pair{\exists a\in S}a\notin n(S)\Leftrightarrow \pair{\exists a\in S}a\in S^+\Leftrightarrow \pair{\exists a\in S}\pair{\exists b\in S}R(b,a)\\
\Leftrightarrow&\pair{\exists a,b\in S}R(b,a)\Leftrightarrow \pair{\exists (b,a)\in S^2} R(b,a)\Leftrightarrow S^2\cap R\neq\es.
\end{align*}

\noindent (3) and (4) are equivalent because

\begin{align*}
a\in S\cap S^-\Leftrightarrow \pair{\exists a,b\in S}R(a,b)\Leftrightarrow S^2\cap R\neq\es.
\end{align*}

\noindent This shows the result.
\end{proof}

\begin{definition}\label{definition:cf}
\cite[Definition 5]{Dung:95} A set $S\subseteq A$ is \textbf{conflict-free} (cf) iff $S$ satisfies any of the four equivalent conditions in Theorem \ref{thm:cf_char}.
\end{definition}

\noindent Intuitively, a cf set of arguments consists of arguments that do not disagree with (i.e. attack) each other. This denotes that the arguments are mutually consistent. Graph-theoretically, cf sets correspond to independent sets of the AF as a digraph.

\begin{example}
(Example \ref{eg:simple_neutrality} continued) From Table \ref{tab:simple_neutrality} (page \pageref{tab:simple_neutrality}), we can see that $\es$, $\set{a}$, $\set{b}$, $\set{c}$ and $\set{a,c}$ are all cf sets.
\end{example}

\begin{example}
(Example \ref{eg:fri} continued, page \pageref{eg:fri}) For floating reinstatement, we have
\begin{align}
CF = \set{\set{a}, \set{b}, \set{c}, \set{e}, \set{a,e}, \set{b,e}}.
\end{align}
\end{example}

\begin{example}
(Example \ref{eg:bi_inf} continued, page \pageref{eg:bi_inf} continued) For this AF, the cf sets are all subsets of $A$ that do not have $a_i$ and $b_i$ together for $i\in\integ$, because $R(b_i,a_i)$, and also all subsets that do not have $a_{i+1}$ and $b_i$ together, because $R(a_{i+1},b_i)$. This would include $\es$, all singleton sets (as no argument is self-attacking), all sets of two non-adjacent arguments, e.g. $\set{a_1,a_2}$ or $\set{a_5,b_6}$... etc.
\end{example}

\subsubsection{Existence}

\begin{definition}
Given an underlying AF, let $CF\subseteq\pow\pair{A}$ denote its set of cf sets.
\end{definition}

\noindent If the underlying AF $\mathcal{A}:=\ang{A,R}$ needs to be explicitly specified, then we write $CF\pair{\mathcal{A}}$, or $CF\pair{\ang{A,R}}$.

\begin{corollary}\label{cor:es_cf}
$\es\in CF$.
\end{corollary}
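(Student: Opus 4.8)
The statement to prove is $\es \in CF$, i.e., the empty set is conflict-free. This should be almost immediate from the characterisation theorem.

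The plan is to verify one of the four equivalent conditions from Theorem \ref{thm:cf_char} for the set $S = \es$. The most direct route is condition (2), namely $S \cap S^+ = \es$. First I would invoke Corollary \ref{cor:neutrality_es} (or more fundamentally Corollary \ref{cor:es_pm}), which tells us that $\es^+ = \es$. Then $\es \cap \es^+ = \es \cap \es = \es$, so condition (2) is satisfied, and hence $\es \in CF$ by Definition \ref{definition:cf}.

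Alternatively, and perhaps even more cleanly, I would use condition (1): $S \subseteq n(S)$. Since $\es \subseteq X$ for every set $X$, we trivially have $\es \subseteq n(\es)$ (indeed $n(\es) = A$ by Corollary \ref{cor:neutrality_es}). This makes $\es$ a postfixed point of $n$ vacuously, so $\es$ is conflict-free. This version has the appeal of not needing any fact about $\es^+$ at all, relying only on the universal property that the empty set is a subset of everything.

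There is essentially no obstacle here: the result is a degenerate base case whose only subtlety is remembering that the empty set trivially satisfies any of the conflict-freeness conditions. If anything, the "hardest" part is merely a matter of taste in choosing which of the four equivalent formulations gives the shortest justification; I would favour condition (1) for its vacuous one-line argument via $\es \subseteq n(\es)$, citing Theorem \ref{thm:cf_char} for the equivalence.
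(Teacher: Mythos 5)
Your proposal is correct and your preferred route via condition (1) is exactly the paper's proof: the paper argues trivially that $\es\subseteq n\pair{\es}=A$, citing Corollary \ref{cor:neutrality_es}. The alternative via condition (2) and $\es^+=\es$ is also fine, just a superfluous second path to the same one-line fact.
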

\begin{proof}
Trivially, $\es\subseteq n\pair{\es}=A$ by Corollary \ref{cor:neutrality_es} (page \pageref{cor:neutrality_es}).
\end{proof}

\noindent Therefore, for any AF, cf sets always exist; as $\es$ is cf, so $CF\neq\es$. Further:

\begin{corollary}\label{cor:U_cf}
$U\in CF$.
\end{corollary}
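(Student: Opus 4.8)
The goal is to show $U \in CF$, that is, the set of unattacked arguments is conflict-free.

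The plan is to use the characterisation of conflict-freeness from Theorem \ref{thm:cf_char}. The most direct route is via condition (4), namely $U \cap U^- = \es$, since this connects most naturally to the defining property of $U$. First I would recall that by Definition \ref{def:unattacked}, every $a \in U$ satisfies $a^- = \es$, i.e. no argument attacks any member of $U$. The key observation is then that $U^-$, the set of all arguments attacking some element of $U$, must be empty: if some $x \in U^-$, then $x$ attacks some $a \in U$, contradicting $a^- = \es$.

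More precisely, I would argue that $U^- = \es$. Suppose for contradiction that $x \in U^-$. By the definition of the predecessor set of $U$, there exists some $a \in U$ with $R(x,a)$, so $x \in a^-$. But $a \in U$ means $a^- = \es$, a contradiction. Hence $U^- = \es$, and therefore $U \cap U^- \subseteq U \cap \es = \es$. By the equivalence of condition (4) in Theorem \ref{thm:cf_char} with conflict-freeness (Definition \ref{definition:cf}), we conclude $U \in CF$.

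Alternatively, one could phrase the same idea through condition (2), $U \cap U^+ = \es$: since no member of $U$ is attacked by anything at all, in particular no member of $U$ is attacked by another member of $U$. Either formulation works, but the $U^-$ route is cleanest because it exploits the definition of $U$ verbatim.

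There is essentially no obstacle here; the statement is almost immediate from the definitions once the right characterisation of conflict-freeness is invoked. The only point requiring a little care is to distinguish between $a^-$ (which is empty for each individual $a \in U$) and $U^-$ (the predecessor set of the whole set $U$), and to see that the former emptiness forces the latter. This is straightforward and needs only unfolding Definition \ref{def:S_pm} of $S^-$.
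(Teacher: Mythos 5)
Your proof is correct and follows the paper's own route exactly: the paper likewise observes $U^-=\es$ and invokes condition (4) of Theorem \ref{thm:cf_char} to conclude $U\cap U^-=\es$, hence $U\in CF$. Your only addition is to spell out, via Definition \ref{def:S_pm}, why $a^-=\es$ for each $a\in U$ forces $U^-=\es$, a detail the paper leaves implicit.
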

\begin{proof}
As $U^-=\es$, we have $U\cap U^-=\es$ and hence $U\in CF$ by Theorem \ref{thm:cf_char}.
\end{proof}

\subsubsection{Lattice-Theoretic Structure}

\begin{corollary}\label{cor:CF_down_closed}
$CF$ is $\subseteq$-downward closed.
\end{corollary}
\begin{proof}
Assume $S\in CF$ and $T\subseteq S$. As $S\subseteq n(S)$, then $T\subseteq S\subseteq n(S)\subseteq n(T)$ by Corollary \ref{cor:neutrality_antitone}. Therefore, $T\subseteq n(T)$ and hence $T\in CF$.
\end{proof}

\begin{corollary}
If $S\subseteq U$ then $S\in CF$. The converse is not true.
\end{corollary}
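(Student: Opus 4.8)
The plan is to prove the two parts of the statement separately. For the forward direction, suppose $S \subseteq U$. Since $U \in CF$ by Corollary \ref{cor:U_cf}, and $CF$ is $\subseteq$-downward closed by Corollary \ref{cor:CF_down_closed}, any subset of $U$ is also conflict-free. Thus $S \subseteq U$ immediately gives $S \in CF$. This is the clean, economical route: it reuses the two corollaries immediately preceding the statement, so there is essentially no computation to do.

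\medskip

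\noindent For the converse, I need to exhibit a counterexample: a conflict-free set that is \emph{not} a subset of $U$. The idea is that conflict-freeness only forbids internal attacks, whereas membership in $U$ forbids \emph{any} attacker at all, including attackers from outside the set. So I would look for an AF containing an argument that is attacked (hence not in $U$) but which can sit in a conflict-free singleton by itself. The simplest instance is simple reinstatement (Example \ref{eg:simple_reinstatement}): there $R(b,a)$ so $a^- = \set{b} \neq \es$, meaning $a \notin U$, yet $\set{a}$ is conflict-free since $a$ does not attack itself. Hence $\set{a} \in CF$ but $\set{a} \not\subseteq U$, which defeats the converse.

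\medskip

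\noindent I do not anticipate a genuine obstacle here, since both directions are short. The only thing to be careful about is the phrasing of the converse and the choice of counterexample. I want an example that is already in the paper so I can cite it rather than construct a new AF from scratch; simple reinstatement is ideal because the set $U$ for it (namely $\set{c}$) is easy to read off and clearly fails to contain the conflict-free set $\set{a}$. An even more minimal option would be the Nixon diamond (Example \ref{eg:nixon}), where $U = \es$ but $\set{a} \in CF$; either works, and I would pick whichever has its conflict-free sets already tabulated to keep the argument self-contained.
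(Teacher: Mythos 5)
Your proof is correct and takes essentially the same route as the paper: the forward direction cites exactly Corollaries \ref{cor:U_cf} and \ref{cor:CF_down_closed}, and the converse is refuted via simple reinstatement (Example \ref{eg:simple_reinstatement}), just as the paper does. The only difference is your choice of witness, $\set{a}$, where the paper uses $\set{a,c}$; both sets are conflict-free and fail to be contained in $U=\set{c}$, so this is immaterial.
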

\begin{proof}
This follows from Corollaries \ref{cor:U_cf} and \ref{cor:CF_down_closed}.

The converse is not true, e.g. Example \ref{eg:simple_reinstatement} (page \pageref{eg:simple_reinstatement}) where $U=\set{c}$ and $\set{a,c}\in CF$, $U=\set{c}$ and $\set{a,c}\not\subseteq\set{c}$.
\end{proof}

\begin{corollary}\label{cor:non_CF_up_closed}
If $S\notin CF$ and $S\subseteq T$, then $T\notin CF$.
\end{corollary}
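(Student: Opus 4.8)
The plan is to recognise this statement as nothing more than the contrapositive of Corollary \ref{cor:CF_down_closed}, which asserts that $CF$ is $\subseteq$-downward closed. In the form established there, that corollary says: if the larger set is conflict-free and the smaller set is contained in it, then the smaller set is conflict-free. Taking the contrapositive and relabelling gives exactly the present claim: if $S\notin CF$ and $S\subseteq T$, then $T\notin CF$. So the cleanest route is a one-line proof citing Corollary \ref{cor:CF_down_closed}.

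In keeping with the note's stated aim of making every step explicit, I would also supply a direct argument via the characterisation in Theorem \ref{thm:cf_char}, so the reader need not perform the contrapositive in their head. Suppose $S\notin CF$ and $S\subseteq T$. By the negation of condition (3) of Theorem \ref{thm:cf_char}, we have $S^2\cap R\neq\es$. Since $S\subseteq T$ gives $S^2\subseteq T^2$, it follows that $S^2\cap R\subseteq T^2\cap R$, and hence $T^2\cap R\neq\es$. Applying condition (3) of Theorem \ref{thm:cf_char} once more, this time to read off non-conflict-freeness, we conclude $T\notin CF$.

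I do not anticipate any genuine obstacle here, as the result is an immediate consequence of downward closure. The only point deserving attention is the bookkeeping of which set is the subset of which: in Corollary \ref{cor:CF_down_closed} the conflict-free set plays the role of the \emph{containing} set, whereas here $S$ is the \emph{contained} set, so the variables are swapped relative to that earlier statement and one must take care not to misapply the inclusion.
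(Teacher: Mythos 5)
Your proposal is correct and matches the paper's proof exactly: the paper likewise disposes of this corollary by taking the contrapositive of Corollary \ref{cor:CF_down_closed}. Your supplementary direct argument via condition (3) of Theorem \ref{thm:cf_char} is also sound, but it is an optional elaboration rather than a different route.
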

\begin{proof}
Immediate by taking the contrapositive of Corollary \ref{cor:CF_down_closed}.
\end{proof}

\begin{corollary}\label{cor:cf_cap_closed}
$CF$ is closed under arbitrary non-empty intersections.
\end{corollary}
\begin{proof}
Let $I\neq\es$ be an index set. Let $\set{S_i}_{i\in I}$ be a family of cf sets, then for all $i\in I$, $S_i\subseteq n\pair{S_i}$. Applying Corollaries \ref{cor:neutrality_antitone} and \ref{cor:neutrality_cup_cap} (page \pageref{cor:neutrality_cup_cap}), starting from $\pair{\forall i\in I}S_i\subseteq n\pair{S_i}$,
\begin{align}
\bigcap_{i\in I}S_i\subseteq\bigcap_{i\in I}n\pair{S_i}=n\pair{\bigcup_{i\in I}S_i}\subseteq n\pair{\bigcap_{i\in I}S_i}.
\end{align}
This shows that $\bigcap_{i\in I}S_i$ is also a cf set. The result follows.
\end{proof}

\noindent Note that if the intersection is over the empty family of cf sets, we get $A$, which is not in general cf unless the AF is trivial.

\begin{corollary}\label{cor:CF_no_cup}
$CF$ is not in general closed under unions.
\end{corollary}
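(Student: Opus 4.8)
The plan is to prove this non-closure statement by exhibiting a single concrete counterexample: an AF together with two conflict-free sets whose union fails to be conflict-free. The most economical choice is the Nixon diamond from Example \ref{eg:nixon}, where $A=\set{a,b}$ and $R=\set{(a,b),(b,a)}$.

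First I would observe that both singletons $\set{a}$ and $\set{b}$ are conflict-free. This is immediate from the neutrality values recorded in Table \ref{tab:nixon_neutrality}: since $n\pair{\set{a}}=\set{a}$ and $n\pair{\set{b}}=\set{b}$, we have $\set{a}\subseteq n\pair{\set{a}}$ and $\set{b}\subseteq n\pair{\set{b}}$, so condition (1) of Theorem \ref{thm:cf_char} holds in each case. Equivalently, neither $a$ nor $b$ is self-attacking, so every singleton is trivially cf.

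Next I would show that the union $\set{a}\cup\set{b}=\set{a,b}=A$ is not conflict-free. Since $R(a,b)$, the pair $(a,b)$ lies in $A^2\cap R$, so $A^2\cap R\neq\es$ and condition (3) of Theorem \ref{thm:cf_char} fails. Equivalently, $n(A)=\es$ by the last column of Table \ref{tab:nixon_neutrality}, and $A\not\subseteq\es$, so condition (1) fails as well. Either way $A\notin CF$. This exhibits two cf sets whose union is not cf, establishing the claim.

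There is essentially no obstacle here: the statement is a negative (non-closure) result, so a single well-chosen counterexample suffices, and the Nixon diamond reduces the verification to a one-line check. The only point requiring care is that the union must genuinely create a conflict spanning the two sets—which is exactly what the mutual attack of the $2$-cycle provides.
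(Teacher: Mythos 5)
Your proof is correct and follows essentially the same approach as the paper: exhibit two conflict-free sets whose union is not conflict-free. The paper uses the asymmetric AF $\ang{\set{a,b},\set{\pair{a,b}}}$ while you use the Nixon diamond, but both counterexamples verify the same conditions of Theorem \ref{thm:cf_char} in the same way, so the difference is immaterial.
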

\begin{proof}
Consider the AF $\ang{\set{a,b},\set{\pair{a,b}}}$. We depict this in Figure \ref{fig:dyad}.

\begin{figure}[H]
\begin{center}
\begin{tikzpicture}[>=stealth',shorten >=1pt,node distance=2cm,on grid,initial/.style    ={}]
\tikzset{mystyle/.style={->,relative=false,in=0,out=0}};
\node[state] (a) at (0,0) {$ a $};
\node[state] (b) at (2,0) {$ b $};
\draw [->] (a) to (b);
\end{tikzpicture}
\caption{The AF from Corollary \ref{cor:CF_no_cup}.}\label{fig:dyad}
\end{center}
\end{figure}
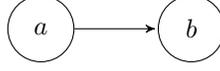

\noindent Clearly $n\pair{\set{a}}=\set{a}$, $n\pair{\set{b}}=\set{a,b}$ and $n\pair{\set{a,b}}=\set{a}$. Therefore, $\set{a}$ and $\set{b}$ are cf sets, but $\set{a}\cup\set{b}=\set{a,b}\not\subseteq n\pair{\set{a,b}}=\set{a}$ is not a cf set.
\end{proof}

\noindent We now give increasingly stronger completeness results for $CF$. Refer to Appendix \ref{app:order} (page \pageref{app:order}) for the definitions.

\begin{theorem}\label{thm:cf_omega_complete}
$\ang{CF,\subseteq}$ is $\omega$-complete.
\end{theorem}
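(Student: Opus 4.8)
The goal is to show that $\langle CF, \subseteq\rangle$ is $\omega$-complete, which means every countable $\subseteq$-ascending chain of cf sets has a least upper bound in $CF$. Since $CF$ is ordered by $\subseteq$, the natural candidate for the least upper bound of a chain $S_0 \subseteq S_1 \subseteq S_2 \subseteq \cdots$ is simply its union $\bigcup_{i\in\nat} S_i$. So the plan is to show two things: first, that this union is itself conflict-free (so it lives in $CF$), and second, that it is indeed the least upper bound of the chain in $\langle CF, \subseteq\rangle$.

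The least-upper-bound part is essentially free: in any powerset ordered by inclusion, $\bigcup_i S_i$ is the join, being an upper bound (each $S_i \subseteq \bigcup_i S_i$) and below any other upper bound; the only subtlety is confirming this join actually lands in $CF$ rather than merely in $\pow(A)$, which is exactly the content of the first part. So the heart of the proof is showing that the union of an ascending chain of cf sets is conflict-free. I would prove this using characterisation (3) from Theorem \ref{thm:cf_char}, namely $S^2 \cap R = \es$. Suppose for contradiction that $\bigl(\bigcup_{i\in\nat} S_i\bigr)^2 \cap R \neq \es$; then there exist $a, b \in \bigcup_{i\in\nat} S_i$ with $R(a,b)$. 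Since $a$ lies in some $S_m$ and $b$ in some $S_n$, and the chain is ascending, both $a$ and $b$ lie in $S_{\max(m,n)}$. But then $\{a,b\} \subseteq S_{\max(m,n)}$ gives $R(a,b) \in S_{\max(m,n)}^2 \cap R$, contradicting that $S_{\max(m,n)}$ is conflict-free.

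The one genuine subtlety to be careful about — and what I expect to be the main point requiring attention rather than a deep obstacle — is that the finiteness of the ``witness'' $\{a,b\}$ (just two arguments) is exactly what makes the ascending-chain hypothesis usable: any conflict in the union is witnessed by finitely many arguments, all of which must already appear together in a single member of the chain because the chain is directed (here, totally ordered via ascending inclusion). This is why $\omega$-completeness works for $CF$ even though Corollary \ref{cor:CF_no_cup} shows $CF$ is not closed under arbitrary unions: an arbitrary union may collect $a$ from one cf set and $b$ from another incomparable cf set, with no single cf set containing both, whereas an ascending chain guarantees such containment. I would make this contrast explicit to motivate why the monotonicity hypothesis is essential.

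As an alternative to the direct $S^2 \cap R = \es$ argument, one could phrase the proof via characterisation (1): each $S_i \subseteq n(S_i)$, and since $S_i \subseteq \bigcup_j S_j$ with $n$ being $\subseteq$-antitone (Corollary \ref{cor:neutrality_antitone}), one gets $n\bigl(\bigcup_j S_j\bigr) \subseteq n(S_i)$. However, going the other way to recover $\bigcup_i S_i \subseteq n\bigl(\bigcup_i S_i\bigr)$ is less transparent because antitonicity pushes inclusions the wrong direction, so I expect the elementwise argument using condition (3) to be the cleanest route and the one I would present.
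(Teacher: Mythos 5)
Your proposal is correct and follows essentially the same route as the paper's proof: form the union of the ascending chain, assume a conflict $R(a,b)$ for contradiction, locate $a$ and $b$ in chain members $S_m$ and $S_n$, and use the ascending property to place both in $S_{\max(m,n)}$, contradicting its conflict-freeness via the $S^2\cap R=\es$ characterisation. Your explicit check that the union is the least upper bound, and your remarks contrasting chains with arbitrary unions (Corollary \ref{cor:CF_no_cup}), are sound additions but do not change the argument.
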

\begin{proof}
Let $\set{S_i}_{i\in\nat}$ be an ascending $\omega$-chain in $CF$. Let $S:=\bigcup_{i\in\nat} S_i$. Assume for contradiction that $S^2\cap R\neq\es$. Then there are $a,b\in S$, $R(a,b)$. Therefore, $a\in S_i$ and $b\in S_j$ by definition of $S$, for some $i,j\in\nat$. As $\set{S_i}_{i\in\nat}$ is an ascending chain, WLOG assume $j\geq i$ hence $a,b\in S_j$ and hence $S_j^2\cap R\neq\es$, so $S_j\notin CF$ -- contradiction as $\set{S_i}_{i\in\nat}$ is an ascending chain in $CF$. Therefore, $\bigcup_{i\in\nat}S_i\in CF$ and hence $\ang{CF,\subseteq}$ is $\omega$-complete.
\end{proof}

\begin{corollary}\label{cor:cf_chain_complete}
$\ang{CF,\subseteq}$ is chain complete.
\end{corollary}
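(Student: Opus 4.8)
The plan is to observe that the union argument used to prove Theorem \ref{thm:cf_omega_complete} never genuinely relied on the chain being countable, and then to re-run it for an arbitrary chain. Since $\omega$-completeness does not by itself imply chain completeness for a general poset, I would not cite the previous theorem as a black box; instead I would lift its proof, the only change being how comparability of two chain elements is justified.

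First I would let $C\subseteq CF$ be an arbitrary chain under $\subseteq$ and set $S:=\bigcup_{T\in C}T$, claiming that $S=\sup C$ in $\ang{CF,\subseteq}$. The boundary case is the empty chain: here $S=\bigcup\es=\es$, which lies in $CF$ by Corollary \ref{cor:es_cf} and is the $\subseteq$-least element of $CF$, hence vacuously its least upper bound.

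For a non-empty chain the crux is showing $S\in CF$ via condition (3) of Theorem \ref{thm:cf_char}, i.e. $S^2\cap R=\es$. I would argue by contradiction: suppose $(a,b)\in S^2\cap R$. By definition of the union there are $T_1,T_2\in C$ with $a\in T_1$ and $b\in T_2$. Because $C$ is a chain, $T_1$ and $T_2$ are $\subseteq$-comparable, so WLOG $T_1\subseteq T_2$, giving $a,b\in T_2$ and hence $T_2^2\cap R\neq\es$, contradicting $T_2\in CF$. This is precisely the step in Theorem \ref{thm:cf_omega_complete}; the sole difference is that comparability is now supplied by the totality of $C$ rather than by the ordering of $\nat$, which is why the restriction to countable chains was inessential.

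Finally I would check that $S$ is the least upper bound: it is an upper bound since $T\subseteq S$ for every $T\in C$, and any upper bound $X\in CF$ of $C$ satisfies $T\subseteq X$ for all $T\in C$, whence $S=\bigcup_{T\in C}T\subseteq X$. Thus $S=\sup C\in CF$ and $\ang{CF,\subseteq}$ is chain complete. I expect no real technical obstacle; the one point worth flagging is the conceptual one, namely that $\omega$-completeness is strictly weaker than chain completeness in general, so the corollary genuinely requires carrying the union argument out for chains of arbitrary cardinality rather than deducing it formally from Theorem \ref{thm:cf_omega_complete}.
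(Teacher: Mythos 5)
Your proposal is correct and follows essentially the same route as the paper's own proof: take the union of the chain and show it is conflict-free by deriving a contradiction from $R(a,b)$ using the $\subseteq$-comparability of the two chain members containing $a$ and $b$. Your extra touches --- treating the empty chain, verifying that the union is in fact the least upper bound, and noting that $\omega$-completeness does not formally entail chain completeness (which is exactly why the paper reruns the argument rather than citing Theorem \ref{thm:cf_omega_complete}) --- are all sound but do not change the substance of the argument.
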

\begin{proof}
Let $\mathcal{C}$ be an ascending chain in $CF$ of arbitrary cardinality. Let $C:=\bigcup\mathcal{C}$. Assume for contradiction that there are $a,b\in C$ such that $R(a,b)$. There exists $A,B\in\mathcal{C}$ such that $a\in A$ and $b\in B$. As $\mathcal{C}$ is a chain, WLOG let $A\subseteq B$ so $a,b\in B$. Therefore, $B^2\cap R\neq\es$, meaning that $B\notin CF$ -- contradiction. Therefore, $\bigcup\mathcal{C}\in CF$ for all chains $\mathcal{C}$. Therefore $CF$ is chain complete.
\end{proof}

\begin{corollary}\label{cor:CF_di_comp}
$\ang{CF,\subseteq}$ is directed complete.
\end{corollary}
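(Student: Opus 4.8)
The plan is to prove directed completeness by reducing it to chain completeness, which was just established in Corollary \ref{cor:cf_chain_complete}. Recall that a poset is directed complete iff every directed subset has a supremum. Since $\ang{CF,\subseteq}$ has a least element $\es$ (by Corollary \ref{cor:es_cf}), and every ascending chain already has its supremum in $CF$, I want to show the same holds for directed sets.

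First I would let $D\subseteq CF$ be a directed set, meaning every pair of elements of $D$ has an upper bound in $D$. The natural candidate for the supremum is $S:=\bigcup D$, exactly as in the chain-complete case. So the real content is to show $S\in CF$, i.e. $S^2\cap R=\es$. I would argue by contradiction: suppose $a,b\in S$ with $R(a,b)$. By definition of the union, $a\in S_1$ and $b\in S_2$ for some $S_1,S_2\in D$. Because $D$ is directed, there is some $S_3\in D$ with $S_1\subseteq S_3$ and $S_2\subseteq S_3$, so $a,b\in S_3$. Then $S_3^2\cap R\neq\es$, contradicting $S_3\in CF$ via Theorem \ref{thm:cf_char}. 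Hence $\bigcup D\in CF$, and since the union is plainly the least upper bound of $D$ in $\pow(A)$ and lies in $CF$, it is the supremum of $D$ in $\ang{CF,\subseteq}$.

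The argument is essentially identical to Corollary \ref{cor:cf_chain_complete}, with the single modification that chains give an upper bound for any two elements trivially (one contains the other), whereas directed sets supply the joint upper bound $S_3$ through the directedness axiom. In fact I would note that every chain is directed, so directed completeness formally implies chain completeness; the reason for proving chain completeness first is pedagogical, and the work done there transfers almost verbatim. I do not expect any genuine obstacle here. The only point requiring the slightest care is remembering that $D$ may be empty or may lack a top element, but directedness already forbids $D=\es$ in the usual convention, or if $D=\es$ is permitted, its supremum is the least element $\es\in CF$ guaranteed by Corollary \ref{cor:es_cf}.

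Alternatively, and more slickly, I could cite the standard order-theoretic fact that a poset with a least element is directed complete iff it is chain complete (this equivalence is presumably recorded in Appendix \ref{app:order}); then the corollary follows immediately from Corollary \ref{cor:cf_chain_complete} together with $\es\in CF$ from Corollary \ref{cor:es_cf}. I would likely present the self-contained contradiction argument for clarity, but mention this shortcut as the conceptual reason the result holds.
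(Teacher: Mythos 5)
Your proof is correct and takes essentially the same route as the paper's: form $\bigcup\mathcal{D}$, assume $a,b$ in the union with $R(a,b)$, and use directedness to place both in a single member of $\mathcal{D}$, contradicting conflict-freeness via Theorem \ref{thm:cf_char}. The extra remarks (handling $\mathcal{D}=\es$ via $\es\in CF$, and the observation that the argument parallels Corollary \ref{cor:cf_chain_complete}) are sound but not needed beyond what the paper does.
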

\begin{proof}
Let $\mathcal{D}$ be a directed set in $CF$. Let $D:=\bigcup\mathcal{D}$. Assume for contradiction that there are $a,b\in D$ such that $R(a,b)$. There exists $A,B\in\mathcal{D}$ such that $a\in A$ and $b\in B$. As $\mathcal{D}$ is a directed set, WLOG let $A,B\subseteq C$ for some $C\in\mathcal{D}$, so $a,b\in C$. Therefore, $C^2\cap R\neq\es$, meaning that $C\notin CF$ -- contradiction. Therefore, $\bigcup\mathcal{D}\in CF$ for all directed sets $\mathcal{D}$. Therefore $CF$ is directed complete.
\end{proof}

\begin{theorem}
$\ang{CF,\subseteq}$ is a complete semilattice (Definition \ref{def:complete_SL}, page \pageref{def:complete_SL}).
\end{theorem}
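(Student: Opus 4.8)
The plan is to verify the two defining conditions of a complete semilattice directly from the closure properties already established, so that this theorem becomes an assembly of the completeness results proved just above. Following Definition \ref{def:complete_SL}, I expect that I must show (i) every non-empty subset of $CF$ has a greatest lower bound in $\ang{CF,\subseteq}$, and (ii) every directed subset of $CF$ (in particular every ascending $\omega$-chain) has a least upper bound in $\ang{CF,\subseteq}$.

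First I would handle the greatest lower bounds. The key observation is that in the ambient complete lattice $\ang{\pow\pair{A},\subseteq}$ the greatest lower bound of any family is simply its intersection. By Corollary \ref{cor:cf_cap_closed}, $CF$ is closed under arbitrary non-empty intersections, so for any non-empty family $\set{S_i}_{i\in I}\subseteq CF$ the set $\bigcap_{i\in I}S_i$ is itself conflict-free. It is a lower bound of the family, and any cf set contained in every $S_i$ is necessarily contained in $\bigcap_{i\in I}S_i$; hence this intersection is in fact the greatest lower bound, and it lives in $CF$. This gives condition (i).

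Next I would handle least upper bounds of directed sets, by the dual observation that in $\ang{\pow\pair{A},\subseteq}$ the least upper bound of a family is its union. By Corollary \ref{cor:CF_di_comp}, the union of any directed subset of $CF$ is again conflict-free; since this union is the least upper bound in $\pow\pair{A}$ and it lies in $CF$, it is also the least upper bound computed within $\ang{CF,\subseteq}$. If the definition only demands lubs of ascending $\omega$-chains, Theorem \ref{thm:cf_omega_complete} already suffices; the strengthenings to chain completeness (Corollary \ref{cor:cf_chain_complete}) and directed completeness (Corollary \ref{cor:CF_di_comp}) supply more than enough. This gives condition (ii), and combining (i) and (ii) yields that $\ang{CF,\subseteq}$ is a complete semilattice.

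I do not expect a genuine obstacle here, since all the substantive work has been done in the preceding corollaries. The only point requiring a line of care is verifying that the glb (respectively the directed lub) computed in the ambient powerset lattice actually coincides with the glb (respectively lub) taken inside $CF$ — which is immediate precisely because the relevant closure property (Corollary \ref{cor:cf_cap_closed} for meets, Corollary \ref{cor:CF_di_comp} for directed joins) guarantees the candidate stays in $CF$, so no separate argument that the bound is attained within the subposet is needed.
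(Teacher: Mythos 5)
Your proof is correct and takes essentially the same route as the paper, which likewise just assembles the preceding results: non-empty infima come from closure under arbitrary non-empty intersections (Corollary \ref{cor:cf_cap_closed}), and the completeness half comes from chain completeness (Corollary \ref{cor:cf_chain_complete}), exactly matching Definition \ref{def:complete_SL}. Your use of directed completeness (Corollary \ref{cor:CF_di_comp}) for the join half proves slightly more than the definition requires but is harmless, and your explicit check that the glb computed in $\ang{\pow\pair{A},\subseteq}$ remains the glb inside $CF$ is the same point the paper leaves tacit.
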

\begin{proof}
Every non-empty subset of $\ang{CF,\subseteq}$ has an infimum by Corollary \ref{cor:cf_cap_closed}, which is calculated by set-theoretic intersection. Further, $\ang{CF,\subseteq}$ is chain complete by Corollary \ref{cor:cf_chain_complete}.
\end{proof}

The neutrality function is not closed on $CF$.

\begin{corollary}\label{cor:n_not_closed_on_CF}
If $S\in CF$ then it is not generally true that $n(S)\in CF$.
\end{corollary}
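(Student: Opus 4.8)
The final statement to prove is Corollary~\ref{cor:n_not_closed_on_CF}: that $n(S)\in CF$ does not follow from $S\in CF$ in general. Since this is a negative/non-universal claim, the plan is simply to exhibit a single counterexample: an AF together with a conflict-free set $S$ whose image $n(S)$ is not conflict-free. The obstacle here is not difficulty but economy --- I want the smallest, clearest AF that works.

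The plan is to reuse an AF already in hand rather than introduce a new figure. A natural candidate is simple reinstatement (Example~\ref{eg:simple_reinstatement}), whose neutrality values are tabulated in Table~\ref{tab:simple_neutrality}. First I would scan that table for a set $S$ that is conflict-free but whose $n(S)$ contains an attacking pair. The empty set is cf and $n(\es)=A=\set{a,b,c}$, which contains both $b$ and $a$ with $R(b,a)$; hence $n(\es)=A$ is \emph{not} cf since $A^2\cap R\neq\es$. So $S=\es$ already witnesses the failure: $\es\in CF$ by Corollary~\ref{cor:es_cf}, yet $n(\es)=A\notin CF$.

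The key steps, in order, are: (i) pick the AF of simple reinstatement with $A=\set{a,b,c}$, $R=\set{(b,a),(c,b)}$; (ii) take $S=\es$, which is conflict-free by Corollary~\ref{cor:es_cf}; (iii) compute $n(\es)=A$ via Corollary~\ref{cor:neutrality_es}; (iv) observe $(b,a)\in A^2\cap R$, so $A^2\cap R\neq\es$, whence $A\notin CF$ by the characterisation in Theorem~\ref{thm:cf_char} (condition~3). This shows $n(S)\notin CF$ even though $S\in CF$, establishing the corollary.

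The only thing to watch is whether the author prefers a \emph{non-trivial} witness, i.e. a nonempty $S$; if so I would instead take $S=\set{b}$ from Table~\ref{tab:simple_neutrality}, which is cf with $n(\set{b})=\set{b,c}$, and note that $\set{b,c}$ is cf too --- so that choice fails and I would fall back to the $\es$ witness or to the dyad AF of Figure~\ref{fig:dyad}, where $\es\in CF$ but $n(\es)=\set{a,b}$ is not cf because $R(a,b)$. The main (and essentially only) subtlety is selecting a set whose image genuinely reintroduces an attack; everything else is direct appeal to the already-established lemmas.
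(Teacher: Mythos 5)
Your counterexample is correct and coincides with the paper's own reasoning: while the paper's main proof uses the AF $\ang{\set{a,b,c,e},\set{\pair{c,e}}}$ with the nonempty witness $S=\set{a,b}$ (so $n(S)=A\notin CF$), its footnote explicitly records your exact alternative, namely that $\es\in CF$ yet $n\pair{\es}=A\notin CF$ for any non-trivial AF. Your anticipated caveat about preferring a nonempty witness is apt --- that is what the paper's main example supplies --- but your $\es$-based argument in simple reinstatement is fully valid as stated.
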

\begin{proof}
Consider the AF $A=\set{a,b,c,e}$ with $R=\set{(c,e)}$ only, and $S=\set{a,b}$. This AF is depicted in Figure \ref{fig:n_not_closed_on_CF}.

\begin{figure}[H]
\begin{center}
\begin{tikzpicture}[>=stealth',shorten >=1pt,node distance=2cm,on grid,initial/.style    ={}]
\tikzset{mystyle/.style={->,relative=false,in=0,out=0}};
\node[state] (a) at (0,0) {$ a $};
\node[state] (b) at (2,0) {$ b $};
\node[state] (c) at (4,0) {$ c $};
\node[state] (e) at (6,0) {$ e $};
\draw [->] (c) to (e);
\end{tikzpicture}
\end{center}
\caption{The AF from Corollary \ref{cor:n_not_closed_on_CF}.}\label{fig:n_not_closed_on_CF}
\end{figure}
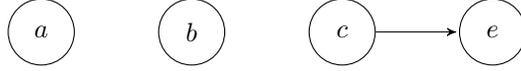

\noindent Clearly, $n(S)=A$ and is not cf.\footnote{Another example would be $\es\in CF$ and $n\pair{\es}=A$. Unless the underlying AF is trivial, $A\notin CF$.}
\end{proof}

\subsection{Naive Extensions}\label{sec:NAI}

We now begin to consider what it means for a set of arguments to be justified or winning. Let $\ang{A,R}$ denote an AF. One natural choice of justified arguments would be the $\subseteq$-maximal $CF$ subsets of this AF. This is a natural choice because as $R$ denotes inconsistency, consistent sets are by analogy cf sets, and $\subseteq$-maximal such sets are akin to maximal consistent subsets, which in logic is one way of drawing sensible inferences from an inconsistent set of formulae (e.g. \cite{Croitoru:13}).


\subsubsection{Definition}

\begin{definition}
The set $S\subseteq A$ is a \textbf{naive extension} iff $S\in\max_{\subseteq}CF$.
\end{definition}

\begin{example}
(Example \ref{eg:fri} continued, page \pageref{eg:fri}) As $$CF=\set{\set{a},\set{b},\set{c},\set{e},\set{a,e},\set{b,e}},$$ the naive extensions are  $\set{c}$, $\set{a,e}$ and $\set{b,e}$.
\end{example}

\begin{example}
(Example \ref{eg:bi_inf} continued, page \pageref{eg:bi_inf}) Among the $\subseteq$-maximal cf sets are $S_a:=\set{a_i}_{i\in\integ}$ and $S_b:=\set{b_i}_{i\in\integ}$.
\end{example}

\subsubsection{Existence and Lattice-Theoretic Structure}

\begin{definition}
We denote the set of all naive extensions of an AF $\ang{A,R}$ as $NAI\subseteq\pow\pair{A}$, or $NAI(\ang{A,R})$ if we need to make the underlying AF explicit.
\end{definition}

\noindent Clearly, $NAI=\max_{\subseteq}CF$. We show that every AF has naive extensions.

\begin{theorem}\label{thm:existence_of_naive}
$NAI\neq\es$.
\end{theorem}
\begin{proof}
By Corollary \ref{cor:cf_chain_complete} (page \pageref{cor:cf_chain_complete}), every chain $\mathcal{C}$ in $CF$ has an upper bound $\bigcup\mathcal{C}\in CF$. Therefore, by Zorn's lemma, $CF$ has at least one $\subseteq$-maximal element. The result follows.
\end{proof}

\noindent Notice in the proof of Theorem \ref{thm:existence_of_naive}, we have used Zorn's lemma, which is equivalent to the axiom of choice. This means that the axiom of choice is \textit{sufficient} to demonstrate that naive extensions exist for all AFs. It is also reasonable to ask whether it is \textit{necessary}. We address this in Appendix \ref{app:AC} (page \pageref{app:AC}).

Clearly, $NAI$ is not a singleton set, i.e. the naive extension does not have to be unique.

\begin{example}\label{eg:simp_re_nai}
(Example \ref{eg:simple_reinstatement} continued, page \pageref{eg:simple_reinstatement}) Clearly, $NAI=\set{\set{a,c},\set{b}}$.
\end{example}

\begin{example}\label{eg:nixon_nai}
(Example \ref{eg:nixon} continued, page \pageref{eg:nixon}) As $CF=\set{\es,\set{a},\set{b}}$, the naive extensions are $\set{a}$ and $\set{b}$.
\end{example}


\noindent Unlike $CF$, the lattice-theoretic structure of $NAI$ is trivial.

\begin{corollary}
$\ang{NAI,\subseteq}\subseteq\ang{\pow\pair{A},\subseteq}$ is a $\subseteq$-antichain.
\end{corollary}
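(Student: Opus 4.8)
The plan is to unwind the definition of a $\subseteq$-antichain and feed it into the observation, recorded just above the statement, that $NAI=\max_{\subseteq}CF$. Recall that a subset of a poset is an antichain precisely when no two distinct elements are $\subseteq$-comparable; equivalently, whenever two of its elements satisfy $S\subseteq T$, they must in fact be equal. So the goal reduces to showing: if $S,T\in NAI$ and $S\subseteq T$, then $S=T$.

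First I would fix arbitrary $S,T\in NAI$ with $S\subseteq T$. Since $T\in NAI=\max_{\subseteq}CF$, in particular $T\in CF$. Now $S$ is a $\subseteq$-maximal element of $CF$, and we have a cf set $T$ with $S\subseteq T$. By the very meaning of $\subseteq$-maximality of $S$ within $CF$, there can be no cf set strictly containing $S$; hence the containment $S\subseteq T$ with $T\in CF$ forces $S=T$. This establishes that any two comparable naive extensions coincide, which is exactly the antichain condition for $\ang{NAI,\subseteq}$.

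There is essentially no hard part in this argument: the entire content is the elementary fact that two distinct maximal elements of any poset are incomparable. The only point requiring care is to avoid conflating \emph{maximal} with \emph{maximum} --- I do not (and need not) claim that $NAI$ has a greatest element, only that its members are pairwise incomparable. One could equivalently phrase the step as a contrapositive: a strict containment $S\subsetneq T$ between two naive extensions would immediately contradict the $\subseteq$-maximality of $S$ in $CF$, since $T$ would be a strictly larger cf set. The direct version above is the cleanest, so that is the one I would present.
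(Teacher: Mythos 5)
Your proof is correct and follows essentially the same route as the paper's: both arguments reduce the antichain claim to the elementary fact that $\subseteq$-maximal elements of $CF$ cannot be strictly comparable, yours phrased as ``comparable implies equal'' and the paper's as the contrapositive ``distinct implies incomparable.'' Your invocation of the definition of maximality (for $S\in\max_{\subseteq}CF$ and $T\in CF$ with $S\subseteq T$, conclude $S=T$) is exactly the content of the paper's one-line justification, so there is nothing to add.
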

\begin{proof}
If $NAI$ is singleton, then it is trivially an antichain. Else, as $NAI\subseteq CF$, let $S_1,S_2\in NAI$ be distinct. Then $S_1\not\subseteq S_2$ and $S_2\not\subseteq S_1$, by maximality of $NAI$ in $CF$. Therefore, $NAI$ is a $\subseteq$-antichain.
\end{proof}

\subsubsection{Criticism of Naive Extensions}\label{sec:nai_bad}

The naive extensions provide one suggestion of what a set of winning arguments should be by using the analogy from logic of drawing inferences from maximal consistent subsets of an otherwise inconsistent set of propositions. Graph-theoretically, these are the $\subseteq$-maximal independent sets. However, this does not seem like a sensible suggestion for the sets of winning arguments. While in examples such as Example \ref{eg:nixon_nai}, $NAI$ seems sensible in giving $\set{a}$ and $\set{b}$ as sets of winning arguments, examples such as Example \ref{eg:simp_re_nai} gives $\set{a,c}$ and $\set{b}$ as naive extensions. In this case, although $\set{a,c}$ seems sensible as a set of winning arguments because $c$ is unattacked, so it defeats $b$, which means that $a$ is no longer defeated by $b$ and hence $a$ should be justified, the naive semantics also suggest that $\set{b}$ should be winning, which does not seem to make sense as $b$ is defeated by $c$, which is undefeated. It is counter-intuitive examples such as this that discourage people from using the naive semantics as a way of defining winning arguments.

Despite this, the naive semantics are simple to understand, and motivate questions such as existence, uniqueness and lattice-theoretic structure that we will consider when investigating other notions of winning arguments. Further, the naive extensions serve as a useful intermediate concept when investing other extensions, which we will use in Sections \ref{sec:naive_and_preferred} (page \pageref{sec:naive_and_preferred}) and \ref{sec:stab_nai_pref} (page \pageref{sec:stab_nai_pref}).

\subsection{Summary}

\begin{itemize}
\item Given an AF, its neutrality function is $n:\pow\pair{A}\to\pow\pair{A}$, $n(S)=A-S^+$.
\item $n$ satisfies: $n\pair{\es}=A$, $n\pair{A}=U$, $n$ is $\subseteq$-antitone, and for any index set $I$ and family of subsets $\set{S_i}_{i\in I}$ of $A$,
\begin{align}
n\pair{\bigcup_{i\in I}S_i}=\bigcap_{i\in I}n\pair{S_i}\text{ and }n\pair{\bigcap_{i\in I}S_i}\supseteq \bigcup_{i\in I}n\pair{S_i}.
\end{align}
\item We say $S\subseteq A$ is conflict-free iff $S\subseteq n\pair{S}$. The set of all conflict-free sets of a given AF $\mathcal{A}$ is $CF(\mathcal{A})$ or just $CF$.
\item For all AFs, $\es, U\in CF$. Furthermore, $\ang{CF,\subseteq}$ is $\subseteq$-downward closed, is not closed under unions, and is a complete semilattice that is also directed complete. Furthermore, the neutrality function $n$ is not closed on $CF$.
\item We may consider the naive extensions, where the set of all naive extensions is $NAI=\max_{\subseteq}CF$, which is a $\subseteq$-antichain. Further, $NAI\neq\es$, and generally not singleton. Unfortunately, $NAI$ should not define when arguments win.
\end{itemize}

\newpage

\section{Defence}

\subsection{The Defence Function}\label{sec:defence_function}

\subsubsection{Definition}

The defence function formalises how a set of arguments can defend another argument.

\begin{definition}\label{def:acceptable}
\cite[Definition 6(1)]{Dung:95} Given $S\subseteq A$ and $a\in A$, $a^-\subseteq S^+$ iff
\begin{itemize}
\item $a$ is \textbf{acceptable} w.r.t. $S$,
\item $S$ \textbf{defends} $a$,
\item $S$ \textbf{reinstates} $a$ \cite{Caminada:08}.
\end{itemize}
All three terms are equivalent.
\end{definition}

\begin{example}
(Example \ref{eg:simple_reinstatement}, page \pageref{eg:simple_reinstatement} continued) In simple reinstatement, we have $\set{c}$ reinstating $a$.
\end{example}

\begin{example}
(Example \ref{eg:double_reinstatement}, page \pageref{eg:double_reinstatement} continued) In double reinstatement, we have $\set{c}$, $\set{e}$ and $\set{c,e}$ reinstating $a$.
\end{example}

\begin{example}\label{eg:Caminada_ex2}
\cite[Exercise 2]{Caminada:08} Consider the following AF \cite[Figure 4]{Caminada:08}, depicted in Figure \ref{fig:Caminada_ex2}.

\begin{figure}[H]
\begin{center}
\begin{tikzpicture}[>=stealth',shorten >=1pt,node distance=2cm,on grid,initial/.style    ={}]
\tikzset{mystyle/.style={->,relative=false,in=0,out=0}};
\node[state] (a) at (0,0) {$ c $};
\node[state] (b) at (-2,0) {$ b $};
\node[state] (c) at (-4,0) {$ a $};
\node[state] (d) at (2,0) {$ e $};
\draw [->] (a) to [out = 225, in = -45] (b);
\draw [->] (b) to [out = 45, in = 135] (a);
\draw [->] (c) to (b);
\draw [->] (a) to (d);
\end{tikzpicture}
\caption{The AF depicting \cite[Figure 4]{Caminada:08}, from Example \ref{eg:Caminada_ex2}.}\label{fig:Caminada_ex2}
\end{center}
\end{figure}
\begin{enumerate}
\item Does $\set{a}$ defend $c$? Yes, because $c^-=\set{b}$ and $\set{a}^+=a^+=\set{b}$, therefore $c^-\subseteq\set{a}^+$.
\item Does $\set{c}$ defend $c$? Yes, because $c^-=\set{b}$ and $\set{c}^+=c^+=\set{b,e}$, therefore $c^-\subseteq\set{c}^+$.
\item Does $\set{b}$ defend $c$? No, because $c^-=\set{b}$ and $\set{b}^+=b^+=\set{c}$, and $c^-\not\subseteq\set{b}^+$.
\end{enumerate}
\end{example}

In order for a set of arguments $S$ to defend $a$, $S$ attacks all of the attackers of $a$. This motivates the following function:

\begin{definition}\label{def:defence_function}
\cite[Definition 16]{Dung:95} Given an AF, its \textbf{defence function}\footnote{This is called the \textbf{characteristic function} in \cite{Dung:95} and denoted $F$.} is
\begin{align}
d:\pow\pair{A}\to&\pow\pair{A}\nonumber\\
S\mapsto& d(S):=\set{a\in A\:\vline\:a^-\subseteq S^+}.
\end{align}
\end{definition}

\noindent If the underlying AF $\mathcal{A}=\ang{A,R}$ needs to be explicitly specified, then we can write $d_{\mathcal{A}}$ \cite[Remark 17]{Dung:95}. From now, we will reserve the letter $d$ to denote the defence function only.

\begin{corollary}
$d:\pow\pair{A}\to\pow\pair{A}$ is a well-defined function.
\end{corollary}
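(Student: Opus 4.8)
The statement to prove is that $d:\pow\pair{A}\to\pow\pair{A}$ is a well-defined function. The plan is to verify the two standard requirements for well-definedness of a set-valued function: first, that $d(S)$ is genuinely an element of the codomain $\pow\pair{A}$ for every input $S\in\pow\pair{A}$; and second, that $d$ respects equality of inputs, i.e. if $S=T$ then $d(S)=d(T)$.

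For the first requirement, I would observe that for any $S\subseteq A$, the definition gives $d(S)=\set{a\in A\:\vline\:a^-\subseteq S^+}$, which is specified by a predicate ranging over elements of $A$, so $d(S)\subseteq A$ and hence $d(S)\in\pow\pair{A}$. This is the routine membership check. For the second requirement, I would appeal to the fact that the construction of $S^+$ depends only on the set $S$ (not on any particular representation of it): if $S=T$ then $S^+=T^+$, presumably by the earlier result that the $(\cdot)^+$ operation is a well-defined function on $\pow\pair{A}$ (the analogue of Corollary \ref{cor:pm_function} invoked for the neutrality function). Consequently the defining predicate $a^-\subseteq S^+$ coincides with $a^-\subseteq T^+$ for every $a\in A$, so the two sets $d(S)$ and $d(T)$ have exactly the same elements and are therefore equal.

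I do not anticipate any genuine obstacle here; this corollary is a bookkeeping verification entirely parallel to the earlier proof that the neutrality function $n$ is well-defined, and it simply leans on the fact that $(\cdot)^+$ is already known to be well-defined on the power set. If pressed to identify the least trivial point, it would be the appeal to the well-definedness of the forward-set operation $S\mapsto S^+$ as a function of the set $S$ alone; everything else is immediate from the set-builder notation. Thus the proof will be a two-line argument: $d(S)\subseteq A$ by construction, and $S=T$ forces $S^+=T^+$ and hence $d(S)=d(T)$.
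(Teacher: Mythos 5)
Your proof is correct and takes essentially the same route as the paper: both verify that $d(S)=\set{a\in A\:\vline\:a^-\subseteq S^+}$ is a well-defined subset of $A$, hence an element of $\pow\pair{A}$, and then invoke Corollary \ref{cor:pm_function} (if $S=T$ then $S^{\pm}=T^{\pm}$) to conclude that $S=T$ implies $d(S)=d(T)$. Nothing further is needed.
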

\begin{proof}
The set $d(S):=\set{a\in A\:\vline\:a^-\subseteq S^+}$ is well-defined. Further, for $S=T$, $d(S)=d(T)$ by Corollary \ref{cor:pm_function} (page \pageref{cor:pm_function}). Therefore, $d$ is a well-defined function.
\end{proof}

\begin{corollary}
$a\in A$ is acceptable w.r.t. $S\subseteq A$ iff $a\in d(S)$.
\end{corollary}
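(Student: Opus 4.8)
The plan is to prove the statement by unfolding both sides of the claimed biconditional directly from the relevant definitions and observing that they are literally the same condition. The statement to prove is that $a \in A$ is acceptable w.r.t. $S \subseteq A$ if and only if $a \in d(S)$.

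First I would recall the two definitions in play. By Definition \ref{def:acceptable}, $a$ is acceptable w.r.t. $S$ iff $a^- \subseteq S^+$. By Definition \ref{def:defence_function}, the defence function is given by $d(S) := \set{a \in A \:\vline\: a^- \subseteq S^+}$. So the proof reduces to the following chain of equivalences, for a fixed $a \in A$ and $S \subseteq A$:
\begin{align*}
a \text{ is acceptable w.r.t. } S &\Leftrightarrow a^- \subseteq S^+ &&\text{(Definition \ref{def:acceptable})}\\
&\Leftrightarrow a \in \set{a' \in A \:\vline\: (a')^- \subseteq S^+} &&\text{(set-builder membership)}\\
&\Leftrightarrow a \in d(S) &&\text{(Definition \ref{def:defence_function}).}
\end{align*}
The middle step uses only the definition of membership in a set defined by a predicate: since $a \in A$ by hypothesis, $a$ belongs to $d(S)$ precisely when it satisfies the defining condition $a^- \subseteq S^+$.

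There is essentially no obstacle here; this corollary is purely definitional and serves to record that the two pieces of notation introduced in Definitions \ref{def:acceptable} and \ref{def:defence_function} cohere, namely that the unary predicate ``acceptable w.r.t. $S$'' is exactly the predicate carving out the image set $d(S)$. The only point worth stating explicitly is that $a \in A$ must be assumed so that membership in $d(S) \subseteq A$ is meaningful, but this is guaranteed by the hypothesis $a \in A$. No case analysis, induction, or appeal to the lattice-theoretic machinery is required; the result is immediate once both definitions are juxtaposed.
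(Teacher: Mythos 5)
Your proof is correct and takes exactly the same route as the paper, which simply cites Definitions \ref{def:acceptable} and \ref{def:defence_function} as an immediate consequence; you have merely spelled out the definitional unfolding in full. Nothing is missing.
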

\begin{proof}
This follows immediately from Definitions \ref{def:acceptable} and \ref{def:defence_function}.
\end{proof}

\begin{example}\label{eg:nixon_defence}
(Example \ref{eg:nixon} continued, page \pageref{eg:nixon}) For the Nixon diamond, the values of $d$ are depicted in Table \ref{tab:nixon_defence}. Recall that in this case $A:=\set{a,b}$ and $R=\set{(a,b),(b,a)}$.

\begin{table}[H]
\begin{center}
\begin{tabular}{ | c | c | c | c | c | }
\hline
$S$ & $\es$ & $\set{a}$ & $\set{b}$ & $A$\\\hline
$d(S)$ & $\es$ & $\set{a}$ & $\set{b}$ & $A$\\\hline
\end{tabular}
\caption{The values of the neutrality function $n$, for Example \ref{eg:nixon}.}\label{tab:nixon_defence}
\end{center}
\end{table}
\end{example}

\begin{example}\label{eg:simple_defence}
(Example \ref{eg:simple_reinstatement} continued, page \pageref{eg:simple_reinstatement}) For simple reinstatement, the values of $d$ are depicted in Table \ref{tab:simple_defence}. Recall that in this case $A:=\set{a,b,c}$.

\begin{table}[H]
\begin{center}
\begin{tabular}{ | c | c | c | c | c | c | c | c | c | }
\hline
$S$ & $\es$ & $\set{a}$ & $\set{b}$ & $\set{c}$ & $\set{a,b}$ & $\set{b,c}$ & $\set{a,c}$ & $A$\\\hline
$d(S)$ & $\set{c}$ & $\set{c}$ & $\set{c}$ & $\set{a,c}$ & $\set{c}$ & $\set{a,c}$ & $\set{a,c}$ & $\set{a,c}$\\\hline
\end{tabular}
\caption{The values of the defence function $d$ for Example \ref{eg:simple_defence}.}\label{tab:simple_defence}
\end{center}
\end{table}
\end{example}

\begin{example}\label{eg:loop_defence}
Consider the AF $A=\set{a,b}$ and $R=\set{(a,a),(a,b),(b,a)}$. Notice that this AF is symmetric (Definition \ref{def:symmetric}, page \pageref{def:symmetric}). We depict this AF in Figure \ref{fig:loop_defence}.

\begin{figure}[H]
\begin{center}
\begin{tikzpicture}[>=stealth',shorten >=1pt,node distance=2cm,on grid,initial/.style    ={}]
\tikzset{mystyle/.style={->,relative=false,in=0,out=0}};
\node[state] (a) at (0,0) {$ a $};
\node[state] (b) at (2,0) {$ b $};
\draw [->] (a) to [out = 45, in = 135] (b);
\draw [->] (b) to [out = 225, in = -45] (a);
\draw [->] (a) to [out=135,in=180] ($(a) + (0,1)$) to [out = 0, in = 45] (a);
\end{tikzpicture}
\end{center}
\caption{The AF from Example \ref{eg:loop_defence}.}\label{fig:loop_defence}
\end{figure}

\noindent The values of $d$ are depicted in Table \ref{tab:loop_defence}. Recall that in this case $A:=\set{a,b}$.

\begin{table} [H]
\begin{center}
\begin{tabular}{ | c | c | c | c | c | }
\hline
$S$ & $\es$ & $\set{a}$ & $\set{b}$ & $\set{a,b}$\\\hline
$d(S)$ & $\es$ & $\set{a,b}$ & $\set{b}$ & $\set{a,b}$\\\hline
\end{tabular}
\caption{The values of the defence function $d$ for Example \ref{eg:loop_defence}.}\label{tab:loop_defence}
\end{center}
\end{table}
\end{example}

\begin{example}\label{eg:non-finitary_AF2}
(Example \ref{eg:non-finitary_AF}, page \pageref{eg:non-finitary_AF}) Consider the non-finitary AF $A=\set{a}\cup\set{b_i}_{i\in\nat}$ and $R=\set{\pair{b_i,a}}_{i\in\nat}$. Let $S\subseteq A$. If $S=\es$ or $S=\set{a}$, then in both cases $S^+=\es$, hence $d(S)=\set{x\in A\:\vline\: x^-\subseteq\es}=\set{b_i}_{i\in\nat}=U$, the set of unattacked arguments.

If $S$ is neither empty nor only $\set{a}$, then $S^+=\set{a}$. Therefore, $d(S)=\set{x\in A\:\vline\:x^-\subseteq \set{a}}=\set{x\in A\:\vline\:x^-=\es\text{ or }x^-=\set{a}}$. However, for all $x\in A$, $x^-\neq\set{a}$. Therefore, $d(S)=U$ as well.

In summary, for the AF depicted in Example \ref{eg:non-finitary_AF}, $d(S)$ is a constant function, equal to $U$, the set of all unattacked arguments.




\end{example}

\begin{example}\label{eg:Caminada_ex3}
\cite[Exercise 3]{Caminada:08} Consider the following AF \cite[Figure 7]{Caminada:08}, depicted in Figure \ref{fig:Caminada_ex3}.

\begin{figure}[H]
\begin{center}
\begin{tikzpicture}[>=stealth',shorten >=1pt,node distance=2cm,on grid,initial/.style    ={}]
\tikzset{mystyle/.style={->,relative=false,in=0,out=0}};
\node[state] (b) at (0,0) {$ b $};
\node[state] (a) at (-2,0) {$ a $};
\node[state] (c) at (2,0) {$ c $};
\node[state] (e) at (4,1.5) {$ e $};
\node[state] (f) at (4,-1.5) {$ f $};
\draw [->] (b) to [out = 225, in = -45] (a);
\draw [->] (a) to [out = 45, in = 135] (b);
\draw [->] (b) to (c);
\draw [->] (c) to (e);
\draw [->] (e) to (f);
\draw [->] (f) to (c);
\end{tikzpicture}
\caption{The AF depicting \cite[Figure 7]{Caminada:08}, from Example \ref{eg:Caminada_ex3}.}\label{fig:Caminada_ex3}
\end{center}
\end{figure}
\noindent We can calculate:
\begin{enumerate}
\item $d(\set{a})=\set{x\in A\:\vline\:x^-\subseteq\set{a}^+}=\set{x\in A\:\vline\:x^-\subseteq\set{b}}=\set{a}$, because only $a^-=\set{b}$, while $c^-=\set{b,f}\not\subseteq\set{b}$. Therefore, $d\pair{\set{a}}=\set{a}$.
\item $d\pair{\set{b}}=\set{x\in A\:\vline\:x^-\subseteq\set{b}^+}=\set{x\in A\:\vline\:x^-\subseteq\set{a,c}}=\set{b,e}$. Therefore, $d\pair{\set{b}}=\set{b,e}$.
\item $d\pair{\set{b,e}}=\set{x\in A\:\vline\:x^-\subseteq\set{b,e}^+}=\set{x\in A\:\vline\:x^-\subseteq\set{a,c,f}}=\set{b,e}$, as $c^-=\set{b,f}\not\subseteq\set{a,c,f}$. Therefore, $d\pair{\set{b,e}}=\set{b,e}$.
\end{enumerate}
\end{example}

\subsubsection{Properties}

\begin{corollary}\label{cor:d_monotone}
\cite[Lemma 19]{Dung:95} The defence function is $\subseteq$-monotone
\end{corollary}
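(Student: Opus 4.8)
The plan is to prove that $d$ is $\subseteq$-monotone directly from its definition, showing that $S\subseteq T$ implies $d(S)\subseteq d(T)$. The key observation is that the defence function is built from the forward-set operator $(\cdot)^+$, which is already known to be $\subseteq$-monotone by Corollary \ref{cor:pm_monotonicity} (cited earlier in the neutrality section). So the whole argument should reduce to chaining that monotonicity through the subset condition defining $d$.

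Concretely, I would proceed as follows. Assume $S\subseteq T\subseteq A$. First I would invoke Corollary \ref{cor:pm_monotonicity} to conclude $S^+\subseteq T^+$. Then I would take an arbitrary $a\in d(S)$ and unpack the membership condition: by Definition \ref{def:defence_function}, $a\in d(S)$ means $a^-\subseteq S^+$. Combining this with $S^+\subseteq T^+$ gives $a^-\subseteq S^+\subseteq T^+$, hence $a^-\subseteq T^+$, which is exactly the condition for $a\in d(T)$. Since $a$ was arbitrary, $d(S)\subseteq d(T)$, establishing monotonicity.

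I expect no real obstacle here, since this is essentially a one-line transitivity-of-$\subseteq$ argument once the monotonicity of $(\cdot)^+$ is in hand; the only thing to be careful about is citing the correct earlier result for the monotonicity of the forward set. If that result were not available, the fallback would be to prove $S^+\subseteq T^+$ from scratch: given $a\in S^+$, there is $b\in S$ with $R(b,a)$, and since $S\subseteq T$ we have $b\in T$, so $a\in T^+$. This keeps the proof self-contained if needed.

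It is worth contrasting this with the neutrality function $n$, which is $\subseteq$-antitone (Corollary \ref{cor:neutrality_antitone}) precisely because it takes the complement of $S^+$; the defence function, by contrast, keeps the attacked set on the \emph{right} of an inclusion ($a^-\subseteq S^+$), so enlarging $S$ only makes the defining condition \emph{easier} to satisfy, which is why $d$ comes out monotone rather than antitone. This intuition is what makes the direction of the inclusion in the final step natural, and it confirms that the proof should be short and direct.
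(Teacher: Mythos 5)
Your proof is correct and follows essentially the same route as the paper: both assume $S\subseteq T$, invoke Corollary \ref{cor:pm_monotonicity} to obtain $S^+\subseteq T^+$, and chain this with the defining condition $a^-\subseteq S^+$ of Definition \ref{def:defence_function} to conclude $a\in d(T)$ for arbitrary $a\in d(S)$. The self-contained fallback and the contrast with the antitone $n$ are fine additions but the core argument is identical to the paper's.
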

\begin{proof}
Assume $S\subseteq T\subseteq A$. If $a\in d(S)$, then $a^-\subseteq S^+\subseteq T^+$ by Definition \ref{def:defence_function} and Corollary \ref{cor:pm_monotonicity} (page \pageref{cor:pm_monotonicity}), and hence $a^-\subseteq T^+$. Therefore, $a\in d(T)$. As $a$ is arbitrary, $d(S)\subseteq d(T)$.
\end{proof}

\begin{definition}\label{def:d_fp}
Let $F_d:=\set{S\subseteq A\:\vline\:d(S)=S}$ be \textbf{the set of fixed points of $d$}.
\end{definition}

\begin{corollary}
$\ang{F_d,\subseteq}$ is a complete lattice.
\end{corollary}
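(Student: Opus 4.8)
The plan is to recognise this as the Knaster--Tarski fixed point theorem in action. The ambient poset $\ang{\pow\pair{A},\subseteq}$ is a complete lattice, with arbitrary meets and joins given by intersection and union, and $d$ is a $\subseteq$-monotone self-map of it by Corollary \ref{cor:d_monotone}. Knaster--Tarski says exactly that the fixed points of a monotone self-map of a complete lattice form a complete lattice under the inherited order, so if that theorem is available from Appendix \ref{app:order} the conclusion is immediate; the remainder of the plan sketches how I would prove it directly.

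First I would pin down the two extreme fixed points. Write $\mathrm{Pre}:=\set{S\subseteq A\:\vline\:d(S)\subseteq S}$ for the prefixed points and $\mathrm{Post}:=\set{S\subseteq A\:\vline\:S\subseteq d(S)}$ for the postfixed points, and set $L:=\bigcap\mathrm{Pre}$. For every $S\in\mathrm{Pre}$ we have $L\subseteq S$, so $d(L)\subseteq d(S)\subseteq S$ by monotonicity; intersecting over $S$ gives $d(L)\subseteq L$, i.e. $L\in\mathrm{Pre}$, so $L$ is the minimum of $\mathrm{Pre}$. Applying $d$ once more yields $d(d(L))\subseteq d(L)$, so $d(L)\in\mathrm{Pre}$ and hence $L\subseteq d(L)$; the two inclusions give $d(L)=L$, and since every fixed point lies in $\mathrm{Pre}$ this $L$ is the least fixed point. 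Dually, $\bigcup\mathrm{Post}$ is the greatest fixed point.

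The genuinely delicate step, and the main obstacle, is upgrading this to full completeness: given an arbitrary family $\mathcal{Y}\subseteq F_d$ I must produce its join \emph{inside} $F_d$, and the difficulty is that this join need not be the set-union $\bigcup\mathcal{Y}$, which may itself fail to be a fixed point. My remedy is the interval trick. Put $u:=\bigcup\mathcal{Y}$; monotonicity gives $Y=d(Y)\subseteq d(u)$ for each $Y\in\mathcal{Y}$, so $u\subseteq d(u)$, and one checks that $d$ restricts to a $\subseteq$-monotone self-map of the principal filter $[u,A]:=\set{S\:\vline\:u\subseteq S\subseteq A}$, itself a complete lattice. Running the least-fixed-point construction of the previous paragraph inside $[u,A]$ produces the least fixed point above $u$, which is exactly $\sup_{F_d}\mathcal{Y}$. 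Since a poset in which every subset has a supremum is automatically a complete lattice, this establishes the claim.
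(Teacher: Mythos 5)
Your proposal is correct and takes essentially the same route as the paper: the paper's proof is exactly your first paragraph, namely observing that $\ang{\pow\pair{A},\subseteq}$ is a complete lattice, that $d$ is $\subseteq$-monotone by Corollary \ref{cor:d_monotone}, and invoking the Knaster--Tarski theorem (Theorem \ref{thm:KT}). Your direct sketch is also sound, but it merely re-derives Knaster--Tarski itself, matching the appendix's proof (Lemma \ref{lem:KT} and the interval trick of Theorem \ref{thm:KT}, with your prefixed-point construction being the order dual of the paper's postfixed-point one).
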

\begin{proof}
Clearly, $\ang{\pow\pair{A},\subseteq}$ is a complete lattice and $d:\pow\pair{A}\to\pow\pair{A}$ is a $\subseteq$-monotone by Corollary \ref{cor:d_monotone}. By the Knaster-Tarski theorem (Theorem \ref{thm:KT}, page \pageref{thm:KT}), the result follows.
\end{proof}

\begin{corollary}
There exists a fixed point of $d$.
\end{corollary}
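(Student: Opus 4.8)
The plan is to derive this statement as an immediate consequence of the preceding corollary, which asserts that $\ang{F_d,\subseteq}$ is a complete lattice. The key observation is that a complete lattice is never empty: every subset of $F_d$ has both a supremum and an infimum lying in $F_d$, and applying this to the empty subset $\es\subseteq F_d$ yields a greatest element $\bigvee\es$ (equivalently, taking the infimum of all of $F_d$ yields a least element). Hence $F_d\neq\es$, which is precisely the assertion that $d$ has a fixed point. So first I would cite the previous result to obtain completeness of $\ang{F_d,\subseteq}$, then note that completeness forces non-emptiness, and conclude.

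An equally direct route, bypassing the explicit lattice structure, is to invoke the Knaster--Tarski theorem once more: since $\ang{\pow\pair{A},\subseteq}$ is a complete lattice and $d$ is $\subseteq$-monotone by Corollary \ref{cor:d_monotone}, Knaster--Tarski guarantees a least fixed point $\bigcap\set{S\subseteq A\:\vline\:d(S)\subseteq S}$ (and dually a greatest fixed point $\bigcup\set{S\subseteq A\:\vline\:S\subseteq d(S)}$), either of which witnesses existence. There is essentially no obstacle here — the substantive work, namely establishing monotonicity of $d$ and applying Knaster--Tarski, was already carried out in the previous corollary, so this statement is a one-line consequence of it. If a fully self-contained argument were desired, one could instead verify directly that the union $\bigcup\set{S\subseteq A\:\vline\:S\subseteq d(S)}$ is a fixed point of $d$ using monotonicity, but this would merely reprove a fragment of Knaster--Tarski and is unnecessary given the results already in hand.
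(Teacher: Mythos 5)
Your proposal is correct and takes essentially the same route as the paper, whose entire proof is that $F_d$ is a complete lattice (by the preceding corollary) and complete lattices are non-empty (Corollary \ref{cor:CL_not_empty}), hence $F_d\neq\es$. One harmless slip: under the paper's conventions $\bigvee\es$ is the \emph{least} element $\bot$ and $\bigwedge\es$ the greatest, so your parenthetical has the labels swapped, but non-emptiness follows either way.
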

\begin{proof}
Immediate as $F_d$ is a complete lattice, so $F_d\neq\es$. 
\end{proof}

\noindent Further, as complete lattices are bounded, $d$ will have a least fixed point and a greatest fixed point.

\begin{corollary}\label{cor:source_not_def}
For all $a\in A$, $a^-=\es$ iff $a\in d\pair{\es}$.
\end{corollary}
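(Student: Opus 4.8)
The plan is to unfold the definition of $d$ at the empty set and reduce the membership condition to the defining property of an unattacked argument, so that the biconditional becomes a short chain of logical equivalences rather than a two-directional argument. The whole statement is a definitional identity in disguise, so I would aim to prove both directions simultaneously by equivalence preservation.

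First I would apply Corollary \ref{cor:es_pm} (page \pageref{cor:es_pm}) to record that $\es^+=\es$. Substituting this into Definition \ref{def:defence_function} gives
\begin{align*}
d\pair{\es}=\set{a\in A\:\vline\:a^-\subseteq\es^+}=\set{a\in A\:\vline\:a^-\subseteq\es}.
\end{align*}
The key observation is then that the only subset of $\es$ is $\es$ itself, so for any $a\in A$ the condition $a^-\subseteq\es$ holds precisely when $a^-=\es$.

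From here I would simply chain the equivalences: fix an arbitrary $a\in A$, and note that $a\in d\pair{\es}$ iff $a^-\subseteq\es^+=\es$ iff $a^-=\es$, which is exactly the defining condition for $a$ to be unattacked. Since $a$ was arbitrary, this establishes the claimed equivalence for all $a\in A$, and in fact shows $d\pair{\es}=U$.

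I do not expect any genuine obstacle here, as the argument is a direct definition chase; the only point requiring care is not to gloss over \emph{why} $a^-\subseteq\es$ forces $a^-=\es$, namely that $\es$ has no proper nonempty subsets. It is worth flagging that this result is the defence-function analogue of the already-established fact that $n\pair{A}=U$, and that it pins down the value of $d$ on the bottom element of $\ang{\pow\pair{A},\subseteq}$.
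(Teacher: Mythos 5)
Your proposal is correct and matches the paper's own proof essentially verbatim: both invoke Corollary \ref{cor:es_pm} to reduce $a\in d\pair{\es}$ to $a^-\subseteq\es^+=\es$ and then to $a^-=\es$ via the same chain of equivalences. Your extra remark that this identifies $d\pair{\es}=U$ is exactly the content of the paper's subsequent Corollary \ref{cor:unattacked_d_es}, so nothing is missing.
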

\begin{proof}
Recalling Corollary \ref{cor:es_pm} (page \pageref{cor:es_pm}), we have that $a\in d\pair{\es}\Leftrightarrow a^-\subseteq\es^+\Leftrightarrow a^-\subseteq\es\Leftrightarrow a^-=\es$.
\end{proof}

\begin{corollary}\label{cor:unattacked_d_es}
$U=d\pair{\es}$.
\end{corollary}
\begin{proof}
Immediate from the Corollary \ref{cor:source_not_def} and Definition \ref{def:unattacked} (page \pageref{def:unattacked}).
\end{proof}

\noindent This means that the unattacked arguments do not have to be defended by anything.

\begin{corollary}\label{cor:iterate_d_get_U}
For any $S\subseteq A$, $U\subseteq d\pair{S}$.
\end{corollary}
\begin{proof}
Clearly $\es\subseteq S$ and hence $d\pair{\es}=U\subseteq d\pair{S}$ by Corollary \ref{cor:d_monotone}.
\end{proof}

\noindent Intuitively, the unattacked arguments are amongst all defended arguments, because they do not need to be defended by anything.

The set of arguments defended by the set of all arguments $A$ are exactly those arguments who are not attacked by an undefeated argument.

\begin{corollary}
We have that $d(A)=\set{a\in A\:\vline\:a^-\cap U=\es}$.
\end{corollary}
\begin{proof}
We have that $a\in d(A)$ iff $a^-\subseteq A^+$ iff $\pair{\forall b\in a^-}b\in A^+$ iff $\pair{\forall b\in a^-}b\notin U$ (by Corollary \ref{cor:unattacked_characterisation}, page \pageref{cor:unattacked_characterisation}), iff $a^-\cap U=\es$.
\end{proof}

\begin{corollary}\label{cor:d_cup_cap}
We have that
\begin{align}
\bigcup_{i\in I}d\pair{S_i}\subseteq d\pair{\bigcup_{i\in I}S_i}\text{ and }\bigcap_{i\in I}d\pair{S_i}\supseteq d\pair{\bigcap_{i\in I}S_i},
\end{align}
where in both cases the reverse containment may not be true.
\end{corollary}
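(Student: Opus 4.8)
The plan is to obtain both inclusions directly from the $\subseteq$-monotonicity of $d$ (Corollary \ref{cor:d_monotone}), and then to exhibit two small AFs witnessing that neither converse holds. The inclusions are essentially immediate; the real content is in the counterexamples.

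For the first inclusion, I would fix an arbitrary $j\in I$ and note $S_j\subseteq\bigcup_{i\in I}S_i$, so monotonicity gives $d\pair{S_j}\subseteq d\pair{\bigcup_{i\in I}S_i}$. Since $j$ was arbitrary, the right-hand side is an upper bound for every $d\pair{S_j}$ and hence for their union, yielding $\bigcup_{i\in I}d\pair{S_i}\subseteq d\pair{\bigcup_{i\in I}S_i}$. Symmetrically, for the second inclusion I would use $\bigcap_{i\in I}S_i\subseteq S_j$ for each $j$, so $d\pair{\bigcap_{i\in I}S_i}\subseteq d\pair{S_j}$; as this holds for all $j$, the left-hand side lies below the intersection of the family $\set{d\pair{S_j}}_{j\in I}$, giving $d\pair{\bigcap_{i\in I}S_i}\subseteq\bigcap_{i\in I}d\pair{S_i}$. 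For the degenerate case $I=\es$ the two assertions reduce to $\es\subseteq d\pair{\es}=U$ and $d(A)\subseteq A$, both trivially true, so I would tacitly take $I\neq\es$.

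The substantive part, which I expect to be the main obstacle, is the strictness of the converses, since this requires designing witnessing AFs rather than manipulating inclusions. For the converse of the first result I want an argument defended by a union but by no single piece: take $A=\set{x,p,q,p',q'}$ with $R=\set{\pair{p,x},\pair{q,x},\pair{p',p},\pair{q',q}}$, and set $S_1=\set{p'}$, $S_2=\set{q'}$. Then $x^-=\set{p,q}$, while $S_1^+=\set{p}$ and $S_2^+=\set{q}$, so neither $S_1$ nor $S_2$ defends $x$ and $x\notin d\pair{S_1}\cup d\pair{S_2}$; but $\pair{S_1\cup S_2}^+=\set{p,q}\supseteq x^-$, so $x\in d\pair{S_1\cup S_2}$, and $d\pair{S_1\cup S_2}\not\subseteq d\pair{S_1}\cup d\pair{S_2}$. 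For the converse of the second result I want redundant defenders with empty overlap: take $A=\set{a,p,p_1,p_2}$ with $R=\set{\pair{p,a},\pair{p_1,p},\pair{p_2,p}}$ and $S_1=\set{p_1}$, $S_2=\set{p_2}$. Here $a^-=\set{p}$ is attacked by each of $S_1,S_2$, so $a\in d\pair{S_1}\cap d\pair{S_2}$, whereas $S_1\cap S_2=\es$ gives $d\pair{\es}=U$ (Corollary \ref{cor:unattacked_d_es}) and $a\notin U$ because $a^-\neq\es$; hence $\bigcap_i d\pair{S_i}\not\subseteq d\pair{\bigcap_i S_i}$.

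The only care I anticipate needing is in the two counterexamples: arranging the attack patterns so that the defending sets are genuinely \emph{complementary} (each killing only part of the attacker set, for the union) and genuinely \emph{redundant with empty intersection} (each alone sufficient, for the intersection), together with routine verification of the $S^{\pm}$ computations. Everything else follows mechanically from monotonicity.
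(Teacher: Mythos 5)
Your proposal is correct, and the substantive content coincides with the paper's: your two witnessing AFs are, up to renaming, exactly the ones the paper uses. For the union case the paper takes $\ang{\set{a,b,c,e,f},\set{\pair{f,b},\pair{e,c},\pair{b,a},\pair{c,a}}}$ with $S_1=\set{f}$, $S_2=\set{e}$, which is your $\set{x,p,q,p',q'}$ configuration under the dictionary $a\mapsto x$, $b\mapsto p$, $c\mapsto q$, $f\mapsto p'$, $e\mapsto q'$; for the intersection case it reuses double reinstatement (Example \ref{eg:double_reinstatement}) with $S_1=\set{c}$, $S_2=\set{e}$, which is your $\set{a,p,p_1,p_2}$ example under $b\mapsto p$, $c\mapsto p_1$, $e\mapsto p_2$. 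The one genuine difference lies in the easy half. The paper proves both inclusions elementwise, unwinding $a\in d\pair{\cdot}$ to $a^-\subseteq\pair{\cdot}^+$ and invoking $\pair{\bigcup_{i\in I}S_i}^+=\bigcup_{i\in I}S_i^+$ and $\pair{\bigcap_{i\in I}S_i}^+\subseteq\bigcap_{i\in I}S_i^+$ (Corollary \ref{cor:cup_cap_plus_minus}), so the specific structure of the defence function does some work; you instead derive both inclusions from $\subseteq$-monotonicity of $d$ alone (Corollary \ref{cor:d_monotone}), which is the general order-theoretic fact that any monotone self-map $f$ of $\pow\pair{A}$ satisfies $\bigcup_{i\in I}f\pair{S_i}\subseteq f\pair{\bigcup_{i\in I}S_i}$ and $f\pair{\bigcap_{i\in I}S_i}\subseteq\bigcap_{i\in I}f\pair{S_i}$. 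Your route is shorter and more general (it would apply verbatim to any monotone operator, e.g. $n^2$), at the cost of not exhibiting where the one-directional implication enters, which the paper's elementwise computation makes explicit; your handling of $I=\es$ agrees with the paper's remark that the statements degenerate to $\es\subseteq d\pair{\es}$ and $d\pair{A}\subseteq A$.
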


\noindent Notice if $I=\es$, Equation \ref{cor:d_cup_cap} reduces to $\es\subseteq d\pair{\es}$ and $A\supseteq d\pair{A}$, both of which are trivially true.

\begin{proof}
For the first result:
\begin{align*}
a\in \bigcup_{i\in I} d\pair{S_i}\Leftrightarrow\pair{\exists i\in I}a^-\subseteq S_i^+\Rightarrow a^-\subseteq\bigcup_{i\in I}S_i^+\Leftrightarrow a\in d\pair{\bigcup_{i\in I}S_i}.
\end{align*}
The converse to the first result does not hold in general. Consider the argument framework $\ang{\set{a,b,c,e,f},\set{\pair{f,b},\pair{e,c},\pair{b,a},\pair{c,a}}}$. This AF is depicted in Figure \ref{fig:d_cup_cap1}.

\begin{figure}[H]
\begin{center}
\begin{tikzpicture}[>=stealth',shorten >=1pt,node distance=2cm,on grid,initial/.style    ={}]
\tikzset{mystyle/.style={->,relative=false,in=0,out=0}};
\node[state] (a) at (0,0) {$ a $};
\node[state] (b) at (2,1) {$ b $};
\node[state] (c) at (2,-1) {$ c $};
\node[state] (e) at (4,-1) {$ e $};
\node[state] (f) at (4,1) {$ f $};
\draw [->] (b) to (a);
\draw [->] (c) to (a);
\draw [->] (f) to (b);
\draw [->] (e) to (c);
\end{tikzpicture}
\end{center}
\caption{An AF that shows the converse to the first result of Corollary \ref{cor:d_cup_cap} is false.}\label{fig:d_cup_cap1}
\end{figure}
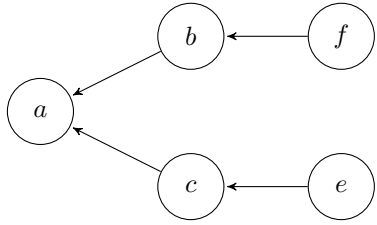

Let $S_1=\set{f}$ and $S_2=\set{e}$. We have $d\pair{S_1\cup S_2}=d\pair{\set{e,f}}=\set{f,e,a}$. However, $d\pair{S_1}=\set{e,f}$ and $d\pair{S_2}=\set{e,f}$, as $a^-=\set{b,c}$. Therefore, $d\pair{S_1}\cup d\pair{S_2}=\set{e,f}$ while $d\pair{S_1\cup S_2}=\set{f,e,a}$, so $d\pair{S_1\cup S_2}\not\subseteq d\pair{S_1}\cup d\pair{S_2}$.

For the second result we apply Equation \ref{eq:plus_cap} (page \pageref{eq:plus_cap}):
\begin{align*}
&a\in d\pair{\bigcap_{i\in I}S_i}\Leftrightarrow a^-\subseteq\pair{\bigcap_{i\in I}S_i}^+\subseteq\bigcap_{i\in I}S_i^+\Rightarrow\pair{\forall i\in I}a^-\subseteq S^+_i\\
\Leftrightarrow&\pair{\forall i\in I}a\in d\pair{S_i}\Leftrightarrow a\in\bigcap_{i\in I}d\pair{S_i}.
\end{align*}
The converse to the second result does not hold in general. Recall the AF from Example \ref{eg:double_reinstatement} (page \pageref{eg:double_reinstatement}), depicted in Figure \ref{fig:double_reinstatement}. Let $S_1=\set{c}$ and $S_2=\set{e}$. We have that $d\pair{S_1}=d\pair{S_2}=\set{a,c,e}$ hence $d\pair{S_1}\cap d\pair{S_2}=\set{a,c,e}$. However, $d\pair{S_1\cap S_2}=d\pair{\es}=\set{c,e}$. Clearly, $\set{c,e,a}\not\subseteq\set{c,e}$ and hence $d\pair{S_1}\cap d\pair{S_2}\not\subseteq d\pair{S_1\cap S_2}$.
\end{proof}

\begin{theorem}
\cite[Lemma 28]{Dung:95} If $\ang{A,R}$ is finitary, then $d$ is $\omega$-continuous (Definition \ref{def:omega_cts}, page \pageref{def:omega_cts}). Else, $d$ may or may not be $\omega$-continuous.
\end{theorem}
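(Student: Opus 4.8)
The plan is to unpack $\omega$-continuity as the preservation of suprema of ascending $\omega$-chains. Since $\ang{\pow\pair{A},\subseteq}$ is a complete lattice in which suprema are computed by union, $d$ is $\omega$-continuous iff for every ascending chain $S_0\subseteq S_1\subseteq\cdots$ in $\pow\pair{A}$ we have $d\pair{\bigcup_{i\in\nat}S_i}=\bigcup_{i\in\nat}d\pair{S_i}$. One inclusion is free: the first inequality of Corollary \ref{cor:d_cup_cap} gives $\bigcup_{i\in\nat}d\pair{S_i}\subseteq d\pair{\bigcup_{i\in\nat}S_i}$ for any family whatsoever, so the entire content of the theorem lies in the reverse inclusion $d\pair{\bigcup_{i\in\nat}S_i}\subseteq\bigcup_{i\in\nat}d\pair{S_i}$.

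First I would establish this reverse inclusion under the finitary hypothesis. Writing $S:=\bigcup_{i\in\nat}S_i$, take any $a\in d(S)$, so that $a^-\subseteq S^+$. Using the distribution of $(\cdot)^+$ over unions (Equation \ref{eq:plus_cup}) we have $S^+=\bigcup_{i\in\nat}S_i^+$, hence every attacker of $a$ lies in some $S_i^+$. Because the AF is finitary, $a^-$ is finite, say $a^-=\set{b_1,\ldots,b_k}$, so we may choose indices $i_1,\ldots,i_k$ with $b_j\in S_{i_j}^+$ and set $N:=\max\set{i_1,\ldots,i_k}$. Since the chain is ascending, Corollary \ref{cor:pm_monotonicity} gives $S_{i_j}^+\subseteq S_N^+$ for each $j$, whence $a^-\subseteq S_N^+$, i.e. $a\in d(S_N)\subseteq\bigcup_{i\in\nat}d\pair{S_i}$. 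This is exactly the desired inclusion.

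The main obstacle is precisely the step invoking finiteness of $a^-$: it is what lets us replace the infinitely many indices witnessing $a^-\subseteq\bigcup_i S_i^+$ by a single index $N$. Without finitary, $a^-$ may be infinite and no single $S_N$ need cover it, so the reverse inclusion can fail. I would illustrate both possibilities of the ``else'' clause with concrete non-finitary AFs. For an AF where $d$ nonetheless remains $\omega$-continuous, Example \ref{eg:non-finitary_AF} (page \pageref{eg:non-finitary_AF}) works: there $d$ is the constant function equal to $U$ (Example \ref{eg:non-finitary_AF2}), and a constant function trivially preserves all suprema. For an AF where $\omega$-continuity fails, I would use Example \ref{eg:non_finitary2} (page \pageref{eg:non_finitary2}), where $a^-=\set{b_i}_{i\in\nat}$ and $b_i^-=\set{c_i}$. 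Taking the ascending chain $S_i:=\set{c_0,\ldots,c_i}$, each $S_i^+=\set{b_0,\ldots,b_i}$ is finite, so $a\notin d(S_i)$ and thus $\bigcup_{i\in\nat}d\pair{S_i}=U$; but $\pair{\bigcup_{i\in\nat}S_i}^+=\set{b_i}_{i\in\nat}\supseteq a^-$, so $a\in d\pair{\bigcup_{i\in\nat}S_i}$. Hence $d\pair{\bigcup_i S_i}\neq\bigcup_i d\pair{S_i}$, and $d$ is not $\omega$-continuous, completing the ``else'' clause.
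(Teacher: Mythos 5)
Your proposal is correct and follows essentially the same route as the paper: the finitary direction uses the finiteness of $a^-$ together with the chain property to collapse the witnessing indices to a single $N$, and the ``else'' clause is settled by exactly the same two non-finitary examples (Example \ref{eg:non_finitary2} with the chain $S_i=\set{c_0,\ldots,c_i}$ for failure, and Example \ref{eg:non-finitary_AF} for persistence). Your justification of the second example is in fact slightly cleaner than the paper's, which restricts attention to chains of unattacked arguments, whereas you correctly observe that since $d$ is the constant function $U$ there, it preserves all suprema outright.
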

\begin{proof}
Let $\set{S_i}_{i\in\nat}$ be an $\omega$-chain in $\pow\pair{A}$ with limit $S:=\bigcup_{i\in\nat} S_i$, where $i<j$ implies $S_i\subseteq S_j$. Assume $\ang{A,R}$ is finitary. Let $a\in d(S)$, then as $a^-$ is finite, let $a^-:=\set{b_1,b_2,\ldots,b_m}$. For each such $b_j\in a^-$, we have $b_j\in S^+$, which means for each $1\leq j\leq m$ there is some $b_j\in S^+_{i_j}$. Let $k:=\max\set{i_1,i_2,\ldots,i_m}$, so $a^-\subseteq S^+_k$, because the $S_i$'s form a chain. Therefore, $\pair{\exists k\in\nat}a\in d\pair{S_k}$ and hence $a\in\bigcup_{k\in\nat}d\pair{S_k}$. As $a$ is arbitrary,
\begin{align}
d(S)=d\pair{\bigcup_{i\in\nat}S_i}\subseteq\bigcup_{i\in\nat}d\pair{S_i},
\end{align}

\noindent where we have applied Corollary \ref{cor:d_cup_cap} by choosing $I=\nat$. By instantiating Definition \ref{def:omega_cts} (page \pageref{def:omega_cts}), we conclude that $d$ is $\omega$-continuous.

The following example is a non-finitary AF where $d$ is not $\omega$-continuous. Consider Example \ref{eg:non_finitary2} (page \pageref{eg:non_finitary2}). This is not a finitary AF because $a^-$ is a countably infinite set. For $i\in\nat$, let $S_i:=\set{c_j}_{j=0}^i=\set{c_0,c_1,\ldots,c_i}$. Clearly $\set{S_i}_{i\in\nat}$ is an $\omega$-chain in $\pow\pair{A}$, with limit $U=\bigcup_{i\in\nat}S_i=\set{c_0,c_1,c_2,\ldots}$. By Corollary \ref{cor:iterate_d_get_U}, we can see that $d\pair{S_i}=U$ for all $i\in\nat$, hence $\bigcup_{i\in\nat}d\pair{S_i}=\bigcup_{i\in\nat}U=U$. However, $d(U)=U\cup\set{a}$, so $d\pair{\bigcup_{i\in\nat} S_i}\not\subseteq\bigcup_{i\in\nat}d\pair{S_i}$. Therefore, $d$ for this non-finitary AF is not $\omega$-continuous.

The following example is a non-finitary AF where $d$ is $\omega$-continuous. (Example \ref{eg:non-finitary_AF2}, page \pageref{eg:non-finitary_AF2} continued) We have that $d\equiv\set{b_i}_{i\in\nat}=U$ for this non-finitary AF; this can now be seen from the definition of $d$ and Corollary \ref{cor:iterate_d_get_U}. Let $\set{S_i}_{i\in\nat}$ be any $\omega$-chain of \textit{unattacked} arguments with limit $U\subseteq A-\set{a}$. Clearly, $\bigcup_{i\in\nat}d\pair{S_i}= A - \set{a} = U$, and $d\pair{\bigcup_{i\in\nat}S_i}= U \subseteq U$, which is true. Therefore, $d$ is $\omega$-continuous.
\end{proof}


\subsection{Self-Defending Sets}\label{sec:SD}

Self-defending sets formalise the idea that a set of arguments can respond to all counterattacks.

\subsubsection{Definition}

\begin{theorem}\label{thm:sd_equiv}
Let $S\subseteq A$. $S\subseteq d(S)$ iff $S^-\subseteq S^+$.
\end{theorem}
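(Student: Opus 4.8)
The plan is to reduce both sides of the biconditional to the single statement that $a^-\subseteq S^+$ holds for every $a\in S$; once each condition has been shown equivalent to this, the result follows immediately. This is a pure unfolding argument, so I would not introduce any auxiliary construction.

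First I would unfold the left-hand side. By definition of $\subseteq$, the inclusion $S\subseteq d(S)$ holds iff every $a\in S$ satisfies $a\in d(S)$, which by Definition \ref{def:defence_function} means exactly $a^-\subseteq S^+$. Hence $S\subseteq d(S)$ is equivalent to $\pair{\forall a\in S}a^-\subseteq S^+$.

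Next I would unfold the right-hand side using the basic identity $S^-=\bigcup_{a\in S}a^-$, which follows directly from Definition \ref{def:S_pm}: an argument $b$ lies in $S^-$ iff it attacks some $a\in S$, i.e. iff $b\in a^-$ for some $a\in S$. Since a union of sets is contained in a fixed set iff each set in the union is, we get $S^-\subseteq S^+$ iff $\bigcup_{a\in S}a^-\subseteq S^+$ iff $\pair{\forall a\in S}a^-\subseteq S^+$.

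Both sides have thus been reduced to the same universally quantified condition, which establishes the equivalence. There is no real obstacle here; the only step requiring any care is the identity $S^-=\bigcup_{a\in S}a^-$, which converts the set-level predecessor operator into a statement about individual arguments, together with the elementary observation that a union is a subset of a fixed set precisely when each of its members is.
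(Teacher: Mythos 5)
Your proof is correct and takes essentially the same route as the paper's: the identity $S^-=\bigcup_{a\in S}a^-$ (Corollary \ref{cor:set_pm}) together with definitional unfolding of $d$ is exactly the content of the paper's forward direction. Your only deviation is organizational -- by noting every step is a biconditional you reduce both sides to $\pair{\forall a\in S}a^-\subseteq S^+$ at once, which neatly dispenses with the paper's separate contrapositive witness argument for the $\Leftarrow$ direction.
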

\begin{proof}
Let $S\subseteq A$. ($\Rightarrow$) If $a\in S\subseteq d(S)$, then $a\in d(S)$, iff $a^-\subseteq S^+$, which by taking the union of both sides over all $a\in S$ gives $\bigcup_{a\in S}a^-\subseteq S^+$, iff $S^-\subseteq S^+$.

($\Leftarrow$, contrapositive) If $S\not\subseteq d(S)$, then there is some $a\in S$ such that $a^-\not\subseteq S^+$, so given this $a\in S$ there is some $b\in a^-$ such that $b\notin S^+$. As $a\in S$ this implies that $a^-\subseteq S^-$, hence there is a $b\in S^-$ such that $b\notin S^+$, therefore $S^-\not\subseteq S^+$. This means that $S^-\subseteq S^+$ implies $S\subseteq d(S)$.
\end{proof}

\begin{definition}
We say $S\subseteq A$ is \textbf{self-defending} iff it satisfies any one of the two equivalent properties in Theorem \ref{thm:sd_equiv}.
\end{definition}

\noindent Intuitively, a self-defending set of arguments attacks all of its counterarguments. Formally, these sets are postfixed points of $d$ (Definition \ref{def:prefix_postfix_fp}, page \pageref{def:prefix_postfix_fp}).

\subsubsection{Existence}

\begin{definition}
Given an underlying AF, let $SD\subseteq\pow\pair{A}$ denote the set of self-defending sets.
\end{definition}

\begin{example}
(Example \ref{eg:3cycle} continued, page \pageref{eg:3cycle}) For this AF, we have
\begin{align}
SD=\set{\es,\set{b,c,e},\set{a,b,c,e}}.
\end{align}
\end{example}

\noindent If we need to make the underlying AF $\mathcal{A}=\ang{A,R}$, explicit, then we may write $SD\pair{\mathcal{A}}$ or $SD\pair{\ang{A,R}}$.

\begin{corollary}\label{cor:es_sd}
$\es\in SD$.
\end{corollary}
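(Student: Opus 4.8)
The plan is to invoke Theorem \ref{thm:sd_equiv}, which tells us that membership in $SD$ can be certified by verifying either $S\subseteq d(S)$ or $S^-\subseteq S^+$. Since the statement concerns $S=\es$, I expect the entire argument to collapse to a triviality, so the only real decision is which of the two equivalent characterisations gives the cleanest one-line justification.

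First I would take the characterisation $S\subseteq d(S)$. For $S=\es$, the empty set is a subset of every subset of $A$, so in particular $\es\subseteq d(\es)$ holds immediately, with no computation of $d(\es)$ required. By the definition of self-defending sets, this already yields $\es\in SD$. If one prefers a justification that makes the attack structure explicit, I would instead use the characterisation $S^-\subseteq S^+$: by Corollary \ref{cor:es_pm} (page \pageref{cor:es_pm}) we have $\es^-=\es=\es^+$, whence $\es^-\subseteq\es^+$ holds trivially, again giving $\es\in SD$.

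There is no genuine obstacle here; the result is an immediate consequence of the fact that the empty set vacuously satisfies both inclusions, mirroring the earlier observation (Corollary \ref{cor:es_cf}) that $\es\in CF$. The only thing to be careful about is to cite the equivalence of the two conditions (Theorem \ref{thm:sd_equiv}) so that the chosen inclusion is formally recognised as the defining condition for $SD$, and to reference Corollary \ref{cor:es_pm} if the second route is taken.
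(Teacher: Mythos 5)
Your proposal is correct and matches the paper's proof in essence: the paper likewise verifies $\es\subseteq d\pair{\es}$ trivially (noting in passing that $d\pair{\es}=U$ via Corollary \ref{cor:unattacked_d_es}, though as you observe no computation of $d\pair{\es}$ is actually needed). Your alternative route via $\es^-=\es=\es^+$ and Theorem \ref{thm:sd_equiv} is equally valid but unnecessary here.
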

\begin{proof}
We have that $\es\subseteq d\pair{\es}=U$ (Corollary \ref{cor:unattacked_d_es}, page \pageref{cor:unattacked_d_es}), trivially.
\end{proof}

\noindent Therefore, for any AF, self-defending sets always exist as $\es$ is (vacuously) self-defending. So $SD\neq\es$.

\begin{corollary}\label{cor:U_sd}
$U\in SD$.
\end{corollary}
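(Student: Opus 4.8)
The statement to prove is $U \in SD$, i.e. the set of unattacked arguments is self-defending.

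The plan is to verify the defining condition for self-defending sets, either $U \subseteq d(U)$ or equivalently $U^- \subseteq U^+$ (Theorem \ref{thm:sd_equiv}).

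The cleanest approach uses the first characterisation. First I would recall from Corollary \ref{cor:unattacked_d_es} that $d(\es) = U$. Since $\es \subseteq U$, I would then apply the $\subseteq$-monotonicity of $d$ (Corollary \ref{cor:d_monotone}) to conclude $U = d(\es) \subseteq d(U)$. This gives exactly $U \subseteq d(U)$, which is one of the two equivalent defining conditions for $U$ being self-defending, so $U \in SD$.

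Alternatively, and perhaps even more directly, I would observe that every $a \in U$ satisfies $a^- = \es$ by Definition \ref{def:unattacked}, and $\es \subseteq U^+$ trivially, so $a^- \subseteq U^+$ for every $a \in U$; taking the union over $a \in U$ gives $U^- \subseteq U^+$, which is the second characterisation in Theorem \ref{thm:sd_equiv}. Either route is a one-line argument. I do not anticipate any real obstacle here: the result is an immediate consequence of the already-established facts $d(\es) = U$ and the monotonicity of $d$, together with the trivial observation that unattacked arguments have empty attacker sets. The only thing to be careful about is citing the correct equivalent condition from Theorem \ref{thm:sd_equiv} rather than re-deriving it.
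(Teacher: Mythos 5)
Your first route is exactly the paper's proof: both apply the $\subseteq$-monotonicity of $d$ (Corollary \ref{cor:d_monotone}) to the trivial inclusion $\es\subseteq d\pair{\es}$ and then use $d\pair{\es}=U$ (Corollary \ref{cor:unattacked_d_es}) to read off $U\subseteq d\pair{U}$. Your alternative direct verification via $U^-=\es\subseteq U^+$ is also correct (it mirrors the paper's own proof of $U\in CF$ in Corollary \ref{cor:U_cf}), but it is not needed.
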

\begin{proof}
As $\es\in SD$, $\es\subseteq d\pair{\es}$ and hence by Corollary \ref{cor:d_monotone} (page \pageref{cor:d_monotone}), $d\pair{\es}\subseteq d^2\pair{\es}$. By Corollary \ref{cor:unattacked_d_es} (page \pageref{cor:unattacked_d_es}), it follows that $U\subseteq d\pair{U}$ and hence $U\in SD$.
\end{proof}

\begin{corollary}
If $S\subseteq U$, then $S\in SD$. The converse is false.
\end{corollary}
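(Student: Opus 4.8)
The plan is to prove the forward implication directly from the characterisation $S\in SD\Leftrightarrow S^-\subseteq S^+$ supplied by Theorem \ref{thm:sd_equiv}, and to refute the converse with a small concrete AF.

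For the forward direction, suppose $S\subseteq U$. Then every $a\in S$ is unattacked, so $a^-=\es$ by Definition \ref{def:unattacked}. Hence $S^-=\bigcup_{a\in S}a^-=\es$, and trivially $\es=S^-\subseteq S^+$, so $S\in SD$ by Theorem \ref{thm:sd_equiv}. An equally short alternative argues monotonically: since $\es\subseteq S$, Corollary \ref{cor:d_monotone} together with Corollary \ref{cor:unattacked_d_es} gives $U=d\pair{\es}\subseteq d(S)$, so $S\subseteq U\subseteq d(S)$, which is precisely $S\in SD$. I would present the first version, as it is the most direct.

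For the converse, I would reuse simple reinstatement (Example \ref{eg:simple_reinstatement}), where $A=\set{a,b,c}$, $R=\set{(b,a),(c,b)}$, and $U=\set{c}$. Here $\set{a,c}^-=a^-\cup c^-=\set{b}$ and $\set{a,c}^+=a^+\cup c^+=\set{b}$, so $\set{a,c}^-\subseteq\set{a,c}^+$ and thus $\set{a,c}\in SD$; yet $\set{a,c}\not\subseteq\set{c}=U$. This exhibits a self-defending set not contained in $U$, refuting the converse. This example deliberately parallels the analogous $CF$ result proved just after Corollary \ref{cor:CF_down_closed}.

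I do not expect any genuine obstacle: both halves are immediate once one works with the $S^-\subseteq S^+$ form. The only point requiring care is to verify the correct inclusion in the counterexample, namely $\set{a,c}^-\subseteq\set{a,c}^+$ (not the reverse), i.e.\ that every attacker of a member of $\set{a,c}$ is itself attacked by $\set{a,c}$.
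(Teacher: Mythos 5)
Your proof is correct and takes essentially the same approach as the paper: the paper establishes the forward implication by contrapositive (unwinding $S\not\subseteq d(S)$ to exhibit an element of $S$ with a non-empty attacker set, hence outside $U$), which is just the logical mirror of your direct argument that $S\subseteq U$ forces $S^-=\es\subseteq S^+$, and your second, monotonicity-based variant ($S\subseteq U=d\pair{\es}\subseteq d(S)$) is likewise the same machinery. The counterexample to the converse is identical to the paper's: simple reinstatement with $S=\set{a,c}\in SD$ but $\set{a,c}\not\subseteq\set{c}=U$, and your verification of $\set{a,c}^-=\set{b}\subseteq\set{b}=\set{a,c}^+$ is accurate.
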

\begin{proof}
We prove the contrapositive. If $S\notin SD$, then $S\not\subseteq d(S)$. This means there is some $a\in S$ such that $a\notin d(S)$. For this $a$, it means that $a^-\not\subseteq S^+$, i.e. there is some $b\in a^-$ such that $b\notin S^+$. But if there is some $b\in a^-$, then $a^-\neq\es$ and hence $a\notin U$. Therefore, there exists an $a\in S$ such that $a\notin U$. Therefore, $S\notin U$.

For the converse, consider Example \ref{eg:simple_reinstatement} (page \pageref{eg:simple_reinstatement}) for $S=\set{a,c}\in SD$ but $\set{a,c}\not\subseteq\set{c}=U$.
\end{proof}

\begin{corollary}\label{cor:d_closed_SD}
$SD$ is closed under $d$.
\end{corollary}
\begin{proof}
Let $S\in SD$, then $S\subseteq d(S)$. As $d$ is $\subseteq$-monotonic, then $d(S)\subseteq d\sqbra{d\pair{S}}$. Therefore, $d(S)\in SD$. As $S\in SD$ is arbitrary, $d(S)\in SD$ and hence $d:SD\to SD$.
\end{proof}

\begin{corollary}\label{cor:n_closed_SD_not}
$SD$ is not closed under $n$.
\end{corollary}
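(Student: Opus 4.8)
The goal is to show that the set $SD$ of self-defending sets is not closed under the neutrality function $n$. This is a negative result, so the plan is to exhibit a single counterexample: a specific AF and a specific set $S \in SD$ for which $n(S) \notin SD$.

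First I would look for the smallest AF that does the job, reusing one of the worked examples where both $n$ and $d$ have already been tabulated. The natural candidate is the Nixon diamond (Example \ref{eg:nixon}), where $A = \set{a,b}$, $R = \set{(a,b),(b,a)}$. From Table \ref{tab:nixon_neutrality} we have $n(\set{a}) = \set{a}$ and $n(\es) = A$, and from Table \ref{tab:nixon_defence} we have $d(\es) = \es$. The key observation I would exploit is that $\es \in SD$ always (Corollary \ref{cor:es_sd}), so taking $S = \es$ is a legitimate member of $SD$, yet $n(\es) = A$ by Corollary \ref{cor:neutrality_es}. So the whole question reduces to: is $A$ self-defending in the Nixon diamond? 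Since $d(A) = A$ (Table \ref{tab:nixon_defence}) we would actually find $A \subseteq d(A)$, meaning $A \in SD$ there — so the Nixon diamond with $S=\es$ does \emph{not} work, and I would need an AF where $A \notin SD$.

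The fix is to pick an AF containing an unattacked argument that nonetheless attacks something, so that $A$ fails to be self-defending. Consider $A = \set{a,b}$ with $R = \set{(a,b)}$, the AF already drawn in Figure \ref{fig:dyad} (Corollary \ref{cor:CF_no_cup}). Here I would take $S = \es \in SD$ (by Corollary \ref{cor:es_sd}), and compute $n(\es) = A = \set{a,b}$ via Corollary \ref{cor:neutrality_es}. To check whether $\set{a,b}$ is self-defending I apply Theorem \ref{thm:sd_equiv}: compute $A^- = \set{b}$ (since $a$ attacks $b$) and $A^+ = \set{b}$, which gives $A^- \subseteq A^+$, so unfortunately $A \in SD$ again. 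The subtlety I would flag as the main obstacle is exactly this: $n(S)$ equals $A$ for many simple choices of $S$, and $A$ is often self-defending, so a careless counterexample fails.

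The cleanest reliable strategy is therefore to choose a \emph{nonempty} self-defending $S$ whose neutrality image is a proper subset that visibly fails $S^- \subseteq S^+$. I would take simple reinstatement (Example \ref{eg:simple_reinstatement}), $A = \set{a,b,c}$, $R = \set{(b,a),(c,b)}$, and set $S = \set{c}$. From Table \ref{tab:simple_defence}, $d(\set{c}) = \set{a,c} \supseteq \set{c}$, so $\set{c} \in SD$. From Table \ref{tab:simple_neutrality}, $n(\set{c}) = \set{a,c}$. Now I check $T := \set{a,c}$ against Theorem \ref{thm:sd_equiv}: $T^- = a^- \cup c^- = \set{b} \cup \es = \set{b}$ while $T^+ = \set{b} \cup \es = \set{b}$, giving $T^- \subseteq T^+$ — so this also lands in $SD$. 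To force failure I want $n(S)$ to contain an argument whose attacker is not attacked back; the set $S = \set{a,b}$ works: by Theorem \ref{thm:sd_equiv} it is not even self-defending, so I instead use $S = \set{b}$, with $d(\set{b}) = \set{c} \not\supseteq \set{b}$, so $\set{b}\notin SD$ — no good either. The genuinely safe move is to state the counterexample explicitly with a fresh AF: let $A = \set{a,b,c}$, $R = \set{(a,b)}$, take $S = \set{c}$, which is unattacked hence $S \in SD$ by Corollary \ref{cor:U_sd} applied to $S \subseteq U$; then $S^+ = \es$, so $n(S) = A - \es = A = \set{a,b,c}$, and $A^- = \set{b} \not\subseteq \set{b} = A^+$ fails only if $b$ attacks nothing, which it does here, so $A^- = \set{b}$, $A^+ = \set{b}$ — equal again. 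I would resolve this once and for all by adding a pendant attack: the real obstacle, and the point the full proof must nail down, is selecting $R$ so that some argument in $n(S)$ has an attacker lying outside $n(S)^+$, and I would verify the chosen instance directly through Theorem \ref{thm:sd_equiv}.
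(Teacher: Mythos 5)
There is a genuine gap: the corollary is an existential claim, so a proof must exhibit a concrete AF and a concrete $S\in SD$ with $n(S)\notin SD$, and your proposal never does this. Every candidate you actually verify is discarded, and the final sentence (``I would resolve this once and for all by adding a pendant attack \ldots and I would verify the chosen instance'') is a promissory note, not a witness. The paper, by contrast, closes the deal with an explicit example: $A=\set{a,b,c,e}$, $R=\set{(a,b),(b,a),(e,c)}$, $S=\set{a}\in SD$, where $n\pair{\set{a}}$ contains $c$, whose attacker $e$ is unattacked and hence can never be counter-attacked, so $n\pair{\set{a}}\notin SD$.

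Ironically, the reason you end empty-handed is a computational error, not a flawed strategy: your attempt with the dyad $\ang{\set{a,b},\set{\pair{a,b}}}$ (Figure \ref{fig:dyad}) and $S=\es$ \emph{already works}. You wrote $A^-=\set{b}$, but by Definition \ref{def:S_pm} the backward set $A^-$ consists of the \emph{attackers} of members of $A$: since $a$ attacks $b$ and nothing attacks $a$, we have $A^-=\set{a}$ while $A^+=\set{b}$, so $A^-\not\subseteq A^+$ and $A\notin SD$ by Theorem \ref{thm:sd_equiv}. Since $\es\in SD$ (Corollary \ref{cor:es_sd}) and $n\pair{\es}=A$ (Corollary \ref{cor:neutrality_es}), this two-argument AF proves the corollary — and is in fact smaller than the paper's example. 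The same mis-evaluation of $A^-$ (you swapped attacker and attacked) also killed your fifth attempt with $A=\set{a,b,c}$, $R=\set{(a,b)}$, $S=\set{c}$, which likewise succeeds: there $A^-=\set{a}\not\subseteq\set{b}=A^+$. So the approach was sound, but as submitted the proof is incomplete, and the error that caused this needs to be fixed before either of your discarded examples can be reinstated.
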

\begin{proof}
Consider the following AF: $A=\set{a,b,c,e}$ and $R=\set{(a,b),(b,a),(e,c)}$, depicted in Figure \ref{fig:n_closed_SD_not}.

\begin{figure}[H]
\begin{center}
\begin{tikzpicture}[>=stealth',shorten >=1pt,node distance=2cm,on grid,initial/.style    ={}]
\tikzset{mystyle/.style={->,relative=false,in=0,out=0}};
\node[state] (a) at (0,0) {$ a $};
\node[state] (b) at (2,0) {$ b $};
\draw [->] (a) to [out = 45, in = 135] (b);
\draw [->] (b) to [out = 225, in = -45] (a);

\node[state] (c) at (6,0) {$ c $};
\node[state] (e) at (4,0) {$ e $};
\draw [->] (e) to (c);
\end{tikzpicture}
\caption{The AF from Corollary \ref{cor:n_closed_SD_not}.}\label{fig:n_closed_SD_not}
\end{center}
\end{figure}
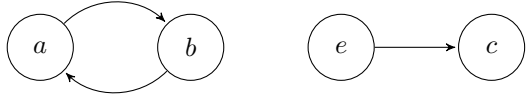

\noindent Clearly, $\set{a}\in SD$, because $d\pair{\set{a}}=\set{x\in A\:\vline\:x^-\subseteq\set{b}}=\set{a,e}\supseteq\set{a}$. Consider $n\pair{\set{a}}=\set{c,e}$. Is $\set{c,e}\in SD$? No, because $\set{c,e}^+=\set{e}$ and that $d\pair{\set{c,e}}=\set{x\in A\:\vline\:x^-\subseteq\set{e}}=\set{e}\not\supseteq\set{c,e}$. Therefore, $\set{a}\in SD$ but $n\pair{\set{a}}\notin SD$.
\end{proof}

\subsubsection{Lattice-Theoretic Structure}


\begin{corollary}
$SD$ is not in general $\subseteq$-upward closed.
\end{corollary}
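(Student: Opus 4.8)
The plan is to refute upward closure by exhibiting a single AF together with two sets $S\subseteq T$ for which $S\in SD$ but $T\notin SD$; since the claim is only that $SD$ is not \emph{in general} upward closed, one well-chosen counterexample suffices. The most economical witness exploits the fact, already established in Corollary \ref{cor:es_sd}, that $\es\in SD$ for every AF: I then only need to locate some argument whose singleton fails to be self-defending, and $\es$ automatically sits below it.

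Concretely, I would reuse simple reinstatement (Example \ref{eg:simple_reinstatement}), where $A=\set{a,b,c}$ and $R=\set{(b,a),(c,b)}$, so as to avoid introducing a new figure. First I would record that $\es\subseteq\set{a}$ and that $\es\in SD$ by Corollary \ref{cor:es_sd}. Then I would compute the relevant attack sets for the candidate superset: $\set{a}^-=\set{b}$ because $b$ attacks $a$, while $\set{a}^+=\es$ because $a$ attacks nothing. Hence $\set{a}^-=\set{b}\not\subseteq\es=\set{a}^+$, so $\set{a}\notin SD$ by the characterisation $S^-\subseteq S^+$ in Theorem \ref{thm:sd_equiv}. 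This exhibits $\es\subseteq\set{a}$ with $\es\in SD$ and $\set{a}\notin SD$, which is exactly the failure of upward closure.

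There is no real obstacle here beyond selecting a clean example; the only thing to check is that the chosen superset genuinely has an attacker it does not attack back. If one prefers a counterexample with a nonempty self-defending base, the same AF also works with $S=U=\set{c}\in SD$ (by Corollary \ref{cor:U_sd}) and $T=\set{b,c}$, since $\set{b,c}^-=\set{c}\not\subseteq\set{a,b}=\set{b,c}^+$ shows $\set{b,c}\notin SD$ while $\set{c}\subseteq\set{b,c}$. Either choice establishes the statement.
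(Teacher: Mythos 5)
Your proof is correct and matches the paper's own argument almost exactly: the paper also uses simple reinstatement with $\es\in SD$ and exhibits a non-self-defending superset, merely choosing $\set{b}$ where you choose $\set{a}$ (both fail the $S^-\subseteq S^+$ test of Theorem \ref{thm:sd_equiv} for the same reason). Your alternative witness $\set{c}\subseteq\set{b,c}$ is also verified correctly, but it is an optional embellishment on the same idea.
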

\begin{proof}
From Example \ref{eg:simple_reinstatement} (page \pageref{eg:simple_reinstatement}), clearly that $\es\in SD$, and $\es\subseteq\set{b}$, but $\set{b}\notin SD$.
\end{proof}

\begin{corollary}
$A\in SD$ iff $A$ is the $\subseteq$-largest fixed point of $d$.
\end{corollary}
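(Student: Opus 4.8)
The plan is to observe that since $d$ maps into $\pow\pair{A}$, we always have $d(A)\subseteq A$; that is, $A$ is automatically a prefixed point of $d$. The claim then essentially reduces to showing that $A\in SD$ (i.e. $A$ is a postfixed point of $d$) is equivalent to $A$ being a fixed point of $d$, together with the remark that once $A$ is a fixed point it is automatically the $\subseteq$-largest one, because $A$ is the top element of the lattice $\ang{\pow\pair{A},\subseteq}$.

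For the forward direction, I would suppose $A\in SD$, so that $A\subseteq d(A)$ by the definition of self-defending sets. Combining this with the trivial inclusion $d(A)\subseteq A$ yields $d(A)=A$, so $A\in F_d$ (Definition \ref{def:d_fp}). Since every $S\in\pow\pair{A}$ satisfies $S\subseteq A$, in particular every fixed point of $d$ is contained in $A$, and hence $A$ is the $\subseteq$-largest fixed point of $d$.

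For the reverse direction, if $A$ is the $\subseteq$-largest fixed point of $d$, then in particular $d(A)=A$, whence $A\subseteq d(A)$, so $A\in SD$ by definition.

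The only subtle point, and it is a mild one, is recognising that the ``largest'' qualifier carries no extra content in this instance: because $A$ is the maximum of the ambient powerset lattice, any fixed point is automatically $\subseteq A$, so for $A$ the conditions ``is a fixed point'' and ``is the largest fixed point'' coincide. In particular, no appeal to the Knaster--Tarski theorem is needed here; the whole argument rests on the interplay between the trivial inclusion $d(A)\subseteq A$ and the postfixed-point condition $A\subseteq d(A)$.
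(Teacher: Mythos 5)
Your proof is correct and matches the paper's argument essentially verbatim: both directions rest on the trivial inclusion $d(A)\subseteq A$ combined with the postfixed-point condition $A\subseteq d(A)$, and the maximality claim follows because every $S\in\pow\pair{A}$ satisfies $S\subseteq A$. Your closing observation that the ``largest'' qualifier is automatic and that no fixed-point machinery is needed is also exactly the spirit of the paper's proof.
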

\begin{proof}
($\Rightarrow$) For any set $S\subseteq A$, $d(S)\subseteq A$ hence $d(A)\subseteq A$. As $A\in SD$, then $A\subseteq d(A)$. It follows that $A$ is a fixed point of $d$. For any set $S\in\pow\pair{A}$, we have $S\subseteq A$ hence $A$ is the $\subseteq$-largest fixed point of $d$.

($\Leftarrow$) If $d(A)=A$ then trivially $A\subseteq d(A)$ and hence $A\in SD$.
\end{proof}

\begin{corollary}\label{cor:SD_arb_union}
$SD$ is closed under arbitrary unions.
\end{corollary}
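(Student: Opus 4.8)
The plan is to show that if $\{S_i\}_{i\in I}$ is an arbitrary family of self-defending sets, then $\bigcup_{i\in I}S_i$ is self-defending. I would use the characterisation $S\in SD$ iff $S\subseteq d(S)$ (the postfixed-point definition). The key tool will be the first inclusion of Corollary \ref{cor:d_cup_cap}, namely $\bigcup_{i\in I}d(S_i)\subseteq d\pair{\bigcup_{i\in I}S_i}$.

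First I would let $S:=\bigcup_{i\in I}S_i$. For each $i\in I$ we have $S_i\subseteq d(S_i)$ by hypothesis, so taking the union over all $i\in I$ gives $\bigcup_{i\in I}S_i\subseteq\bigcup_{i\in I}d(S_i)$. Next I would chain this with Corollary \ref{cor:d_cup_cap} to obtain
\begin{align*}
S=\bigcup_{i\in I}S_i\subseteq\bigcup_{i\in I}d(S_i)\subseteq d\pair{\bigcup_{i\in I}S_i}=d(S).
\end{align*}
This yields $S\subseteq d(S)$, so $S\in SD$, as required.

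I expect no serious obstacle here: the result is an immediate consequence of monotonicity-type behaviour already packaged in Corollary \ref{cor:d_cup_cap}. The only point to be careful about is the edge case $I=\es$, where $S=\es$; but $\es\in SD$ already holds by Corollary \ref{cor:es_sd}, and indeed the displayed inclusion degenerates gracefully since both sides reduce to $\es\subseteq d(\es)$. An alternative, more hands-on route would avoid citing Corollary \ref{cor:d_cup_cap} and argue directly via Theorem \ref{thm:sd_equiv}: one would show $S^-\subseteq S^+$ by noting $S^-=\bigcup_{i\in I}S_i^-$ and $S_i^-\subseteq S_i^+\subseteq S^+$ for each $i$; but using the already-proved corollary is cleaner, so that is the route I would take.
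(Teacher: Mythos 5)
Your proof is correct and follows exactly the paper's argument: chaining $S_i\subseteq d\pair{S_i}$ through the first inclusion of Corollary \ref{cor:d_cup_cap} to get $\bigcup_{i\in I}S_i\subseteq\bigcup_{i\in I}d\pair{S_i}\subseteq d\pair{\bigcup_{i\in I}S_i}$. Your remark on the empty-union edge case matches the note the paper makes immediately after its proof, so nothing is missing.
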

\begin{proof}
Let $\set{S_i}_{i\in I}$ be a family of self-defending subsets of $A$. For each $i\in I$, $S_i\subseteq d\pair{S_i}$. We have by Corollary \ref{cor:d_cup_cap} (page \pageref{cor:d_cup_cap}),
\begin{align*}
\bigcup_{i\in I}S_i\subseteq\bigcup_{i\in I}d\pair{S_i}\subseteq d\pair{\bigcup_{i\in I}S_i}.
\end{align*}
Therefore, $\bigcup_{i\in I}S_i$ is also self-defending.
\end{proof}

\noindent Notice if we take the empty union in Corollary \ref{cor:SD_arb_union}, then $\es\in SD$ which is also true. It follows that $SD$ is $\omega$-complete, chain complete and directed complete when instantiating this arbitrary family of self-defending sets into any $\omega$-chain, chain or directed set, respectively.



\begin{corollary}
$SD$ is not in general closed under intersections.
\end{corollary}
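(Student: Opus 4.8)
The claim is that $SD$, the set of self-defending sets, is not in general closed under intersections. Since every earlier closure failure in this paper is established by exhibiting a single small counterexample AF, the plan is to do the same here: produce a concrete finite AF together with two self-defending sets whose intersection fails to be self-defending.

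The plan is to look for two self-defending sets $S_1,S_2$ such that each separately attacks all of its own attackers, but their common part $S_1\cap S_2$ loses the attackers that were being handled by the elements dropped in the intersection. The cleanest way to engineer this is to take two distinct ``defence routes'' for a single argument $a$: let $a$ be attacked by some $b$, and arrange two different arguments, say $c_1$ and $c_2$, each of which attacks $b$. Then $\{a,c_1\}$ and $\{a,c_2\}$ can both be made self-defending (each contains an attacker of $b=a^-$), while $\{a,c_1\}\cap\{a,c_2\}=\{a\}$ is not self-defending because $a^-=\{b\}\not\subseteq\{a\}^+$. Concretely I would take something like $A=\set{a,b,c_1,c_2}$ with $R=\set{(b,a),(c_1,b),(c_2,b)}$, relabelling $c_1,c_2$ to permitted letters (the paper avoids $d$, so I would use e.g. $c$ and $e$), and check that $c$ and $e$ are unattacked so that each of $\set{a,c}$ and $\set{a,e}$ defends every one of its members.

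First I would state the candidate AF and draw its digraph with a \texttt{tikzpicture} inside a \texttt{figure[H]} environment, matching the style of the surrounding counterexample proofs. Next I would verify, using Theorem~\ref{thm:sd_equiv} (the criterion $S^-\subseteq S^+$) or equivalently the defence-function characterisation, that both $S_1:=\set{a,c}$ and $S_2:=\set{a,e}$ lie in $SD$: for $S_1$ one computes $S_1^-=\set{b}$ and $S_1^+=\set{b}$, so $S_1^-\subseteq S_1^+$, and symmetrically for $S_2$. Finally I would compute $S_1\cap S_2=\set{a}$ and note $\set{a}^-=\set{b}$ while $\set{a}^+=\es$, so $\set{a}^-\not\subseteq\set{a}^+$ and hence $\set{a}\notin SD$, completing the argument.

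There is essentially no mathematical obstacle here — the result is a non-closure statement and only requires one witnessing example — so the ``hard part'' is really just bookkeeping: choosing an AF small enough to be transparent yet genuine, making sure the two chosen sets are actually self-defending (it is easy to pick a candidate where one of them fails $S^-\subseteq S^+$ because of an overlooked attacker), and confirming that their intersection truly drops the needed defender. I would double-check the $\pm$ computations against Definition~\ref{def:defence_function} before finalising, and keep the example minimal so the figure and the three short set computations suffice.

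\begin{proof}
Consider the AF $A=\set{a,b,c,e}$ with $R=\set{(b,a),(c,b),(e,b)}$, depicted in Figure \ref{fig:SD_not_cap}.

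\begin{figure}[H]

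\begin{center}
\begin{tikzpicture}[>=stealth',shorten >=1pt,node distance=2cm,on grid,initial/.style    ={}]
\tikzset{mystyle/.style={->,relative=false,in=0,out=0}};
\node[state] (a) at (0,0) {$ a $};
\node[state] (b) at (2,0) {$ b $};
\node[state] (c) at (4,1) {$ c $};
\node[state] (e) at (4,-1) {$ e $};
\draw [->] (b) to (a);
\draw [->] (c) to (b);
\draw [->] (e) to (b);
\end{tikzpicture}
\end{center}
\caption{The AF from this corollary.}\label{fig:SD_not_cap}
\end{figure}

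\noindent Let $S_1:=\set{a,c}$ and $S_2:=\set{a,e}$. For $S_1$ we have $S_1^-=\set{b}$ and $S_1^+=\set{a,b}$, so $S_1^-\subseteq S_1^+$ and hence $S_1\in SD$ by Theorem \ref{thm:sd_equiv}. Symmetrically, $S_2^-=\set{b}$ and $S_2^+=\set{a,b}$, so $S_2\in SD$. However, $S_1\cap S_2=\set{a}$, and $\set{a}^-=\set{b}$ while $\set{a}^+=\es$, so $\set{a}^-\not\subseteq\set{a}^+$ and hence $\set{a}\notin SD$. Therefore, $SD$ is not in general closed under intersections.
\end{proof}
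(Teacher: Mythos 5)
Your proof is correct and takes essentially the same approach as the paper: the paper's own proof uses exactly this AF (it is the double reinstatement framework of Example \ref{eg:double_reinstatement}) with the same sets $S_1=\set{a,c}$, $S_2=\set{a,e}$ and the same intersection $\set{a}$, verified there via $d$ rather than via $S^-\subseteq S^+$. One small computational slip in your final writeup: $S_1^+=S_2^+=\set{b}$, not $\set{a,b}$ (no member of $S_1$ or $S_2$ attacks $a$), but since $S_i^-=\set{b}\subseteq\set{b}$ the containment $S_i^-\subseteq S_i^+$ still holds, exactly as in your planning paragraph, and the conclusion is unaffected.
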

\begin{proof}
Consider Example \ref{eg:double_reinstatement} (page \pageref{eg:double_reinstatement}) where $\ang{\set{a,b,c,e},\set{\pair{c,b},\pair{e,b},\pair{b,a}}}$. Let $S_1=\set{c,a}$ and $S_2=\set{e,a}$. Clearly $d\pair{S_1}=\set{c,a}$ and $d\pair{S_2}=\set{e,a}$, therefore both $S_1$ and $S_2$ are self-defending. However, $S_1\cap S_2=\set{a}$. Further, $d\pair{\set{a}}=\set{c,e}$, and $\set{a}\not\subseteq d\pair{\set{a}}$, hence $S_1\cap S_2$ is not self-defending.
\end{proof}


\subsection{On the Interaction Between the Neutrality and Defence Functions}

Assume a fixed underlying $AF$ with neutrality and defence functions $n$ and $d$ respectively.

\subsubsection{Composing \texorpdfstring{$d$}{d} and \texorpdfstring{$n$}{n}}

The first result shows that $d$ is the square of $n$, or that $n$ is the ``square root'' of $d$.

\begin{theorem}\label{thm:d_n_squared}
\cite[Lemma 45]{Dung:95} For all $S\subseteq A$, we have that $d(S)=n^2(S)$.
\end{theorem}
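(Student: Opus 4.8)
The plan is to unfold both sides purely by definition and reduce the claim to a single membership equivalence. First I would expand the right-hand side using the definition of the neutrality function twice: since $n(S)=A-S^+$, we have $n^2(S)=n\pair{n(S)}=A-\pair{n(S)}^+$. The goal thus becomes showing $d(S)=A-\pair{n(S)}^+$, which I would do by a chain of equivalences establishing $a\in d(S)\Leftrightarrow a\in A-\pair{n(S)}^+$ for arbitrary $a\in A$.

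Next I would characterise membership in $A-\pair{n(S)}^+$. By definition of the forward set, $\pair{n(S)}^+$ is the set of arguments attacked by some element of $n(S)$, so $a\notin\pair{n(S)}^+$ means no argument in $n(S)$ attacks $a$. Recalling that $b\in a^-$ iff $R(b,a)$, this says precisely that $a^-\cap n(S)=\es$. I would then substitute $n(S)=A-S^+$ to get the condition $a^-\cap\pair{A-S^+}=\es$.

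The final step is the key reduction: because $a^-\subseteq A$ always holds, we have $a^-\cap\pair{A-S^+}=a^- - S^+$, so the condition $a^-\cap\pair{A-S^+}=\es$ is equivalent to $a^-\subseteq S^+$. This last condition is exactly the defining condition for $a\in d(S)$ from Definition \ref{def:defence_function}. Chaining these equivalences gives $a\in d(S)\Leftrightarrow a\in n^2(S)$, and since $a$ is arbitrary the two sets coincide.

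I do not expect any genuine obstacle here; the proof is bookkeeping rather than insight. The only point requiring care is translating ``$a$ is not attacked by the set $n(S)$'' correctly into $a^-\cap n(S)=\es$ (keeping straight that $\pair{n(S)}^+$ refers to arguments attacked \emph{by} $n(S)$, matched against the attackers $a^-$ \emph{of} $a$), and then using that we intersect with a complement taken inside $A$ to pass from $a^-\cap\pair{A-S^+}=\es$ to $a^-\subseteq S^+$. The underlying reason the identity holds is the duality between the $S^+$ appearing in the definition of $n$ and the $a^-$ appearing in the definition of $d$: applying $n$ twice converts ``what $S$ attacks'' into ``whose attackers are all attacked by $S$'', which is exactly defence.
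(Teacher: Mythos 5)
Your proof is correct and follows essentially the same route as the paper: both unfold $a\in n^2(S)$ to the statement that no element of $n(S)=A-S^+$ attacks $a$, and then reduce this to $a^-\subseteq S^+$, i.e. $a\in d(S)$. The only cosmetic difference is that the paper carries out the final reduction with bounded quantifiers and a contrapositive, whereas you phrase the identical manipulation in set algebra via $a^-\cap\pair{A-S^+}=\es\Leftrightarrow a^-\subseteq S^+$.
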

\begin{proof}
Let $S\subseteq A$ be arbitrary. We have that
\begin{align*}
a\in n^2(S)\Leftrightarrow& a\notin\pair{A-S^+}^+\text{ by Definition \ref{def:neutrality_function} (page \pageref{def:neutrality_function}),}\\
\Leftrightarrow&\pair{\forall b\in A-S^+}a\notin b^+\text{ by Definition \ref{def:S_pm} (page \pageref{def:S_pm}),}\\
\Leftrightarrow&\pair{\forall b\in A-S^+}b\notin a^-\text{ by Corollary \ref{cor:plus_minus}, page \pageref{cor:plus_minus},}\\
\Leftrightarrow&\pair{\forall b\in A}\sqbra{b\notin S^+\Rightarrow b\notin a^-}\text{ by bounded quantifiers,}\\
\Leftrightarrow&\pair{\forall b\in A}\sqbra{b\in a^-\Rightarrow b\in S^+}\text{ by the contrapositive,}\\
\Leftrightarrow& a^-\subseteq S^+.
\end{align*}
This shows the result.
\end{proof}

The second result shows that $d$ and $n$ commute when composed.

\begin{theorem}\label{thm:n_d_commute}
For all $S\subseteq A$, we have $d\pair{n\pair{S}}=n\pair{d\pair{S}}$.
\end{theorem}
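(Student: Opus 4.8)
The plan is to leverage the immediately preceding result, Theorem \ref{thm:d_n_squared}, which establishes that $d = n^2$ as functions on $\pow\pair{A}$. Since both sides of the desired identity are compositions of $d$ and $n$, and $d$ itself unfolds into two applications of $n$, I expect both $d\pair{n\pair{S}}$ and $n\pair{d\pair{S}}$ to collapse to the single expression $n^3(S):=n\pair{n\pair{n\pair{S}}}$, making the equality immediate.

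Concretely, I would first fix an arbitrary $S\subseteq A$ and rewrite the left-hand side: applying Theorem \ref{thm:d_n_squared} with $n\pair{S}$ in place of $S$ gives $d\pair{n\pair{S}}=n^2\pair{n\pair{S}}=n^3\pair{S}$. Then I would rewrite the right-hand side by applying Theorem \ref{thm:d_n_squared} to $S$ directly, obtaining $n\pair{d\pair{S}}=n\pair{n^2\pair{S}}=n^3\pair{S}$. The two computations yield the same value, so $d\pair{n\pair{S}}=n\pair{d\pair{S}}$, and since $S$ was arbitrary the identity holds for all $S\subseteq A$.

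This proof is essentially a one-line substitution, so I do not anticipate any genuine obstacle; the only thing to be careful about is the bookkeeping of how many times $n$ is applied on each side, ensuring both reduce to exactly three applications rather than, say, mismatching at $n^2$ versus $n^3$. An alternative (and perhaps more transparent) framing would be to note that the claim is simply the associativity of function composition applied to $n\circ n\circ n$, since $d\circ n=(n\circ n)\circ n$ and $n\circ d=n\circ(n\circ n)$; either presentation suffices, and I would likely favour the explicit $n^3(S)$ computation for clarity to a first-time reader.
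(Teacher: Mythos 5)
Your proposal is correct and matches the paper's proof in substance: both rest on Theorem \ref{thm:d_n_squared} ($d=n^2$) and reduce the identity to $n\circ n\circ n$, the paper phrasing it via associativity of composition ($n\circ d=n\circ(n\circ n)=(n\circ n)\circ n=d\circ n$), which is precisely the alternative framing you mention at the end. Your explicit $n^3(S)$ computation is just the pointwise version of the same one-line argument.
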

\begin{proof}
From Theorem \ref{thm:d_n_squared}, $d=n^2$ and hence $n\circ d = n\circ\pair{n\circ n}=\pair{n\circ n}\circ n$ by associativity of composition, hence $n\circ d=d\circ n$.
\end{proof}

\subsubsection{Preservation of \texorpdfstring{$CF$}{CF} by \texorpdfstring{$d$}{d}}

Theorem \ref{thm:n_d_commute} has many consequences. Firstly, unlike $n$ (Corollary \ref{cor:n_not_closed_on_CF}, page \pageref{cor:n_not_closed_on_CF}), $d$ preserves cf sets.

\begin{corollary}\label{cor:d_preserves_cf}
For any AF, if $S\in CF$ then $d(S)\in CF$. The converse is false.
\end{corollary}
\begin{proof}
The result follows from the $\subseteq$-monotonicity of $d$ (Corollary \ref{cor:d_monotone}, page \pageref{cor:d_monotone}) and Theorem \ref{thm:n_d_commute}. If $S\in CF$, then $S\subseteq n(S)$, then $d(S)\subseteq d\pair{n\pair{S}}=n\pair{d\pair{S}}$, therefore $d(S)\in CF$.

The converse is false. The counter-example is as follows: consider $\ang{A,R}$ where $A=\set{a,b,c}$ and $R=\set{(a,b),(b,a),(a,c),(c,b)}$; this is depicted in Figure \ref{fig:d_preserves_cf_counter}.

\begin{figure}[H]
\begin{center}
\begin{tikzpicture}[>=stealth',shorten >=1pt,node distance=2cm,on grid,initial/.style    ={}]
\tikzset{mystyle/.style={->,relative=false,in=0,out=0}};
\node[state] (a) at (0,0) {$ a $};
\node[state] (b) at (2,0) {$ b $};
\draw [->] (a) to [out = 45, in = 135] (b);
\draw [->] (b) to [out = 225, in = -45] (a);
\node[state] (c) at (1,-2) {$ c $};
\draw [->] (a) to (c);
\draw [->] (c) to (b);
\end{tikzpicture}
\caption{The AF that is a counter-example to the converse of Corollary \ref{cor:d_preserves_cf}.}\label{fig:d_preserves_cf_counter}
\end{center}
\end{figure}
Consider the subset of arguments $S=\set{a,c}$. We see that $d(S)=\set{a}$. We can see that $d(S)\in CF$ and $S\notin CF$. Therefore, the converse of the result is not true.
\end{proof}


\noindent By induction, any finite iteration of $d$ also preserves cf sets.

\begin{corollary}\label{cor:d_sequence_CF}
For $S\in CF$, the $\pow\pair{A}$-sequence $\set{S_i}_{i\in\nat}$ where $S_0:=S$ and $S_{i+1}:=d\pair{S_i}$ is a $CF$-sequence.
\end{corollary}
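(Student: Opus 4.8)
The plan is to proceed by induction on $i\in\nat$, since the sequence is defined recursively and the single-step preservation property has already been established. The entire content of this corollary is that conflict-freeness is maintained under arbitrary finite iteration of $d$, and the preceding result (Corollary \ref{cor:d_preserves_cf}) is precisely the inductive step packaged as a standalone lemma; the text even flags this by remarking that ``any finite iteration of $d$ also preserves cf sets.'' So the proof is essentially a bookkeeping argument, and I do not expect a genuine obstacle.

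Concretely, first I would handle the base case: by hypothesis $S_0:=S\in CF$. Then for the inductive step, I would assume $S_i\in CF$ for some $i\in\nat$ and show $S_{i+1}\in CF$. Since $S_{i+1}:=d\pair{S_i}$ by definition of the sequence, and $S_i\in CF$ by the inductive hypothesis, Corollary \ref{cor:d_preserves_cf} immediately gives $d\pair{S_i}\in CF$, i.e.\ $S_{i+1}\in CF$. By the principle of induction, $S_i\in CF$ for every $i\in\nat$, which is exactly the assertion that $\set{S_i}_{i\in\nat}$ is a $CF$-sequence.

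The only point requiring any care is making sure the recursive definition of the sequence is invoked correctly at the inductive step — that is, recognising that $S_{i+1}$ is literally $d$ applied to the previous term $S_i$, so the hypothesis of Corollary \ref{cor:d_preserves_cf} is met with $S_i$ in the role of $S$. There is no circularity and no need to track the full orbit simultaneously; each step only depends on its immediate predecessor. In short, the result is a direct corollary of Corollary \ref{cor:d_preserves_cf} together with induction, and I would present it in those two short steps without further computation.
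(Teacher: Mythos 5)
Your proof is correct and is essentially identical to the paper's: both proceed by induction on $i$, with the base case $S_0=S\in CF$ by hypothesis and the inductive step supplied directly by Corollary \ref{cor:d_preserves_cf}. Nothing is missing.
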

\begin{proof}
We show that $\pair{\forall i\in\nat} S_i\in CF$ by induction on $i$.

\begin{enumerate}
\item (Base) By assumption $S_0\in CF$.
\item (Inductive) If $S_i\in CF$ then $d\pair{S_i}\in CF$ by Corollary \ref{cor:d_preserves_cf}.
\end{enumerate}
Therefore, by induction, the result follows.
\end{proof}

Not only that finite iterations of $d$ on $S\in CF$ is also in $CF$, but that the limit of the ascending\footnote{This is an ascending chain as $d$ is $\subseteq$-monotone by Corollary \ref{cor:d_monotone} (page \pageref{cor:d_monotone}).} $\omega$-chain $\set{d^{k}\pair{S}}_{k\in\nat}$ is also cf.

\begin{corollary}\label{cor:iterate_d_limit_CF}
Let $S\in CF$. The limit of the chain $\set{d^i\pair{S}}_{i\in\nat}$, $\bigcup_{i\in\nat}d^i\pair{S}$, is also cf.
\end{corollary}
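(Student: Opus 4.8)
The plan is to present $\bigcup_{i\in\nat}d^i(S)$ as the supremum of an ascending $\omega$-chain lying entirely within $CF$, and then to invoke the $\omega$-completeness of $\ang{CF,\subseteq}$ (Theorem \ref{thm:cf_omega_complete}). Writing $S_i:=d^i(S)$, this splits into three tasks: showing each $S_i$ is conflict-free, showing $\set{S_i}_{i\in\nat}$ is $\subseteq$-ascending, and reading off the conclusion.

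Two of these are routine. That every $S_i\in CF$ is exactly Corollary \ref{cor:d_sequence_CF}, since $\set{S_i}_{i\in\nat}$ is the $CF$-sequence generated by $S_0=S$ and $S_{i+1}=d(S_i)$. And once $\set{S_i}_{i\in\nat}$ is known to be an ascending $\omega$-chain in $CF$, Theorem \ref{thm:cf_omega_complete} immediately yields $\bigcup_{i\in\nat}S_i\in CF$, which is the claim. So the entire content is concentrated in showing the chain ascends.

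This is where I expect the real difficulty. Monotonicity of $d$ (Corollary \ref{cor:d_monotone}) makes the inductive step free: $d^{i}(S)\subseteq d^{i+1}(S)$ follows from $d^{i-1}(S)\subseteq d^{i}(S)$. Everything therefore reduces to the base inclusion $S\subseteq d(S)$ — that is, to $S$ being self-defending (Theorem \ref{thm:sd_equiv}) — and this does \emph{not} follow from $S\in CF$ alone. For instance, on the path $w\to s\to y\to x$ (so $A=\set{w,s,y,x}$ and $R=\set{\pair{w,s},\pair{s,y},\pair{y,x}}$) the set $S=\set{s}$ is conflict-free, yet $d(\set{s})=\set{w,x}\not\supseteq\set{s}$; iterating gives $\set{s},\set{w,x},\set{w,y},\set{w,y},\dots$, whose union is all of $A$ and hence not conflict-free, since $R(s,y)$. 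Thus the ascending-chain route genuinely requires the stronger hypothesis $S\in SD$ (equivalently, together with $S\in CF$, that $S$ is admissible); under that hypothesis the argument above goes through cleanly. If one insists on only $S\in CF$, the statement as written fails, and I would expect the intended hypothesis to be self-defendingness rather than mere conflict-freeness.
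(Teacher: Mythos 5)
Your reconstruction is exactly the paper's intended one-line proof --- the paper derives this corollary by citing Corollary \ref{cor:d_sequence_CF} together with Theorem \ref{thm:cf_omega_complete}, which are precisely the two ingredients you isolate --- but your diagnosis of the missing step is correct, and the gap is in the paper, not in your argument. The footnote immediately preceding the corollary asserts that $\set{d^k\pair{S}}_{k\in\nat}$ is an ascending chain ``as $d$ is $\subseteq$-monotone (Corollary \ref{cor:d_monotone})'', but monotonicity only propagates the inclusion $d^i\pair{S}\subseteq d^{i+1}\pair{S}$ upward from the base case $S\subseteq d\pair{S}$, i.e.\ from $S\in SD$; it is not a consequence of $S\in CF$. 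Your path counterexample is valid, and in fact the paper's own data already contains one: in simple reinstatement (Example \ref{eg:simple_reinstatement}), $S=\set{b}\in CF$, and Table \ref{tab:simple_defence} gives the iterates $\set{b},\set{c},\set{a,c},\set{a,c},\ldots$, whose union is $\set{a,b,c}=A\notin CF$ since $R(c,b)$. So the corollary is false as stated: the sequence $\set{d^i\pair{S}}_{i\in\nat}$ need not be a chain, Theorem \ref{thm:cf_omega_complete} does not apply, and the conclusion genuinely fails.

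With the hypothesis strengthened as you propose, to $S\in SD\cap CF=ADM$ (conflict-freeness still feeds Corollary \ref{cor:d_sequence_CF}; self-defence supplies the base of the ascent), your three-step argument is precisely the repaired version of the paper's proof. The repair is harmless downstream: the corollary is invoked only for admissible $S$, namely in the corollary following Lemma \ref{lem:union_cf_adm_adm} showing that the limit of $\set{d^k\pair{S}}_{k\in\nat}$ stays in $ADM$. Note also that the same base-case defect propagates further in the paper: Lemma \ref{lem:ordinal_iteration_d_monotone} is false for arbitrary $S\subseteq A$ (with $S=\set{b}$ above, $d^0\pair{S}=\set{b}\not\subseteq\set{c}=d^1\pair{S}$), and with it the limit stage of Theorem \ref{thm:d_presv_CF}, whose successor step likewise presupposes $S\subseteq d\pair{S}$; both statements are rescued by the same added hypothesis $S\in SD$.
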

\begin{proof}
Immediate from Corollary \ref{cor:d_sequence_CF} (page \pageref{cor:d_sequence_CF}) and Theorem \ref{thm:cf_omega_complete} (page \pageref{thm:cf_omega_complete}).
\end{proof}

Further, if $d$ is $\omega$-continuous then the supremum of iterating $d$ on a self-defending set is also a fixed point of $d$.

\begin{corollary}
If $d$ is $\omega$-continuous (Definition \ref{def:omega_cts}, page \pageref{def:omega_cts}), then for $S\in SD$, the limit of the chain $\set{d^i\pair{S}}_{i\in\nat}$, $\bigcup_{i\in\nat}d^i\pair{S}$, is a fixed point of $d$.
\end{corollary}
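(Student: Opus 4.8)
The plan is to exploit the hypothesis $S\in SD$ to turn $\set{d^i\pair{S}}_{i\in\nat}$ into a genuine ascending $\omega$-chain, push $d$ through its union using $\omega$-continuity, and then observe that the resulting union is unchanged, so that the limit is fixed by $d$. First I would record that $S\in SD$ means $S\subseteq d\pair{S}$ by Theorem \ref{thm:sd_equiv}. Since $d$ is $\subseteq$-monotone (Corollary \ref{cor:d_monotone}), a routine induction on $i$ shows $d^i\pair{S}\subseteq d^{i+1}\pair{S}$ for every $i\in\nat$: the base case is exactly $S\subseteq d\pair{S}$, and the inductive step applies $d$ to both sides of $d^i\pair{S}\subseteq d^{i+1}\pair{S}$. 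Hence $\set{d^i\pair{S}}_{i\in\nat}$ is an ascending $\omega$-chain, and I would write $L:=\bigcup_{i\in\nat}d^i\pair{S}$ for its supremum.

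Next I would apply $\omega$-continuity of $d$ (Definition \ref{def:omega_cts}) to this chain. By definition, $d$ commutes with the supremum of an ascending $\omega$-chain, so $d\pair{L}=d\pair{\bigcup_{i\in\nat}d^i\pair{S}}=\bigcup_{i\in\nat}d\pair{d^i\pair{S}}=\bigcup_{i\in\nat}d^{i+1}\pair{S}$. The right-hand side is the union of the chain taken from its second term onward, i.e. $\bigcup_{i\geq 1}d^i\pair{S}$.

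Finally I would absorb the missing base term. Since the chain is ascending, $d^0\pair{S}=S\subseteq d\pair{S}=d^1\pair{S}$, so the $i=0$ term is already contained in $\bigcup_{i\geq 1}d^i\pair{S}$; adjoining it does not enlarge the union. Therefore $L=\bigcup_{i\geq 0}d^i\pair{S}=\bigcup_{i\geq 1}d^i\pair{S}=d\pair{L}$, which is precisely the claim that $L$ is a fixed point of $d$.

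The step that carries the real content is the very first one: the hypothesis $S\in SD$ is exactly what guarantees the iterates form an \emph{ascending} chain, which is a prerequisite both for invoking $\omega$-continuity (defined only for such chains) and for the index-shift identity $\bigcup_{i\geq 0}d^i\pair{S}=\bigcup_{i\geq 1}d^i\pair{S}$ used at the end. For an arbitrary $S$ neither would be available, and the supremum of the $d$-iterates need not be fixed by $d$; so although each individual manipulation is routine, self-defendingness does essential work throughout.
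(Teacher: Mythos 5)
Your proof is correct and follows essentially the same route as the paper's: apply $\omega$-continuity to obtain $d\pair{L}=\bigcup_{i\in\nat}d^{i+1}\pair{S}$, then absorb the missing $i=0$ term using $S\subseteq d\pair{S}$. The only difference is cosmetic: you spell out the induction showing $\set{d^i\pair{S}}_{i\in\nat}$ is ascending, which the paper relegates to a footnote, and this extra care is welcome since monotonicity of $d$ alone would not suffice without the base case $S\subseteq d\pair{S}$ supplied by $S\in SD$.
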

\begin{proof}
By $\omega$-continuity of $d$, we have that
\begin{align}
d\pair{\bigcup_{i\in\nat}d^i\pair{S}} = \bigcup_{i\in\nat}d^{i+1}\pair{S}=\bigcup_{i\in\nat^+}d^i\pair{S}=S\cup\bigcup_{i\in\nat}d^{i+1}\pair{S},
\end{align}
where the last equality is because $S=d^0\pair{S}\subseteq d\pair{S}$, hence $d(S) = S\cup d(S)$. This means
\begin{align}
S\cup\bigcup_{i\in\nat}d^{i+1}\pair{S}=\bigcup_{i\in\nat}d^i\pair{S}.
\end{align}
Therefore, the limit is a fixed point of $d$.
\end{proof}

We can further strengthen this result via transfinite induction on ordinal-valued iterations of $d$. This is necessary as we do not assume the AFs we are dealing with are finite. For finite $\abs{A}$ it is sufficient to have ordinary induction over $\nat$ (as $\omega$). But if $\abs{A}$ is any cardinal number then we need to perform induction over a sufficiently large ordinal number.

\begin{lemma}\label{lem:ordinal_iteration_d_monotone}
Let $\alpha$ and $\beta$ be ordinal numbers and $S\subseteq A$. We have
\begin{align}
\alpha<\beta\Rightarrow d^\alpha(S)\subseteq d^\beta(S).
\end{align}
\end{lemma}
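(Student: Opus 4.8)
The plan is to prove this by transfinite induction on $\beta$, recalling the transfinite iteration $d^0(S)=S$, $d^{\gamma+1}(S)=d\pair{d^\gamma(S)}$ at successors, and $d^\lambda(S)=\bigcup_{\gamma<\lambda}d^\gamma(S)$ at limit ordinals $\lambda$. Fixing $S$, I would prove the statement $\Phi(\beta):\equiv\pair{\forall\alpha<\beta}\ d^\alpha(S)\subseteq d^\beta(S)$ for every ordinal $\beta$. The base case $\beta=0$ is vacuous since no ordinal precedes $0$.

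The limit case is the easy one: if $\beta=\lambda$ is a limit ordinal then for any $\alpha<\lambda$ the set $d^\alpha(S)$ is literally one of the terms of the union $\bigcup_{\gamma<\lambda}d^\gamma(S)=d^\lambda(S)$, so $d^\alpha(S)\subseteq d^\lambda(S)$ with no appeal to the inductive hypothesis. The successor case $\beta=\gamma+1$ carries the real content. For $\alpha<\gamma+1$ we have either $\alpha=\gamma$ or $\alpha<\gamma$; in the latter case the inductive hypothesis gives $d^\alpha(S)\subseteq d^\gamma(S)$, so by transitivity of $\subseteq$ everything reduces to the single containment $d^\gamma(S)\subseteq d^{\gamma+1}(S)=d\pair{d^\gamma(S)}$, i.e.\ to showing that $d^\gamma(S)$ is a postfixed point of $d$ (a self-defending set).

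The main obstacle is therefore establishing that every iterate $d^\gamma(S)$ is self-defending, and this is exactly where the starting set must itself be self-defending: for a general $S$ one already has $d^0(S)=S\not\subseteq d(S)=d^1(S)$ whenever $S\notin SD$, so the statement is only correct read with $S\in SD$. I would handle this by strengthening the induction to carry the invariant $d^\gamma(S)\in SD$ alongside $\Phi$. This invariant propagates cleanly: $d^0(S)=S\in SD$ by hypothesis; at successors $d^{\gamma+1}(S)=d\pair{d^\gamma(S)}\in SD$ because $SD$ is closed under $d$ (Corollary \ref{cor:d_closed_SD}); and at limits $d^\lambda(S)=\bigcup_{\gamma<\lambda}d^\gamma(S)\in SD$ because $SD$ is closed under arbitrary unions (Corollary \ref{cor:SD_arb_union}). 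Once $d^\gamma(S)\in SD$ is available, so that $d^\gamma(S)\subseteq d^{\gamma+1}(S)$, the successor step of $\Phi$ closes immediately, and the induction goes through for all ordinals $\beta$ — with Corollary \ref{cor:d_monotone} ($\subseteq$-monotonicity of $d$) being the engine behind the $SD$-closure used above.
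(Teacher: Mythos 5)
Your proposal is correct as a proof of the repaired statement, and your diagnosis of the unrepaired one is accurate: as stated, for arbitrary $S\subseteq A$, the lemma is false. Taking $\alpha=0$, $\beta=1$ it asserts $S\subseteq d(S)$, i.e. $S\in SD$; and the failure is not confined to the first step --- in the $3$-cycle $A=\set{x,y,z}$, $R=\set{(x,y),(y,z),(z,x)}$ with $S=\set{x}$, the iterates cycle as $d(S)=\set{z}$, $d^2(S)=\set{y}$, $d^3(S)=\set{x}$, so no tail of the sequence is $\subseteq$-increasing. The paper's own proof treats only two cases, $\beta=\alpha+1$ (``by $\subseteq$-monotonicity of $d$'') and $\beta$ a limit ordinal (where $d^\alpha(S)$ is a term of the defining union --- your easy case), and the successor case is exactly where it glosses over the gap you found: monotonicity gives $d(X)\subseteq d(Y)$ from $X\subseteq Y$, but it yields $d^\alpha(S)\subseteq d^{\alpha+1}(S)$ only once the chain is anchored by $S\subseteq d(S)$ at the base. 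Your transfinite induction carrying the invariant $d^\gamma(S)\in SD$ --- propagated at successors by Corollary \ref{cor:d_closed_SD} and at limits by Corollary \ref{cor:SD_arb_union}, with Corollary \ref{cor:d_monotone} underneath --- is the correct formalisation, and it also covers successors $\beta>\alpha+1$, which the paper's two-case sketch handles only implicitly (one must chain through the intermediate stages, i.e. prove your $\Phi(\beta)$ anyway). Your restricted hypothesis is precisely what the sound downstream use requires: Theorem \ref{thm:TFI_iterate_d_ADM} applies the lemma to $S\in ADM\subseteq SD$. (The other use, in the limit case of Theorem \ref{thm:d_presv_CF}, takes $S$ merely conflict-free and so inherits the defect: in the $3$-cycle above, $S=\set{x}\in CF$ but $d^{\omega}(S)=\set{x,y,z}\notin CF$.)
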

\begin{proof}
Let $\alpha$ be a given ordinal. If $\beta=\alpha+1$, then the result follows by the $\subseteq$-monotonicity of $d$. If $\beta$ is a limit ordinal larger than $\alpha$, then
\begin{align}
d^{\beta}\pair{S}=\bigcup_{\gamma<\beta}d^{\gamma}\pair{S}.
\end{align}
However, one of those terms in the union is $\gamma=\alpha$ and hence $d^{\alpha}\pair{S}\subseteq d^{\beta}\pair{S}$. Therefore, the result follows.
\end{proof}

\begin{theorem}\label{thm:d_presv_CF}
Let $\beta$ be an ordinal number. If $S\in CF$ then $\pair{\forall\alpha<\beta}d^\alpha(S)\in CF$.
\end{theorem}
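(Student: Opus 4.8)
The plan is to prove the slightly stronger and cleaner statement that $d^\alpha(S)\in CF$ for \emph{every} ordinal $\alpha$, from which the bounded version $\pair{\forall\alpha<\beta}d^\alpha(S)\in CF$ is immediate. I would argue by transfinite induction on $\alpha$, splitting into the zero, successor, and limit cases according to the recursive definition of ordinal iteration $d^\alpha$.

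For the base case $\alpha=0$ we have $d^0(S)=S\in CF$ by hypothesis. For the successor case $\alpha=\gamma+1$, the induction hypothesis supplies $d^\gamma(S)\in CF$, and then $d^{\gamma+1}(S)=d\pair{d^\gamma(S)}\in CF$ by Corollary \ref{cor:d_preserves_cf}, which is exactly the statement that $d$ preserves conflict-freeness.

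The only case requiring care is the limit case, where $\alpha$ is a limit ordinal and $d^\alpha(S)=\bigcup_{\gamma<\alpha}d^\gamma(S)$. Here the induction hypothesis gives $d^\gamma(S)\in CF$ for each $\gamma<\alpha$, but to conclude that the union is cf I would invoke chain completeness of $\ang{CF,\subseteq}$ (Corollary \ref{cor:cf_chain_complete}), which applies only to ascending chains. The key supporting observation is therefore that $\set{d^\gamma(S)}_{\gamma<\alpha}$ genuinely \emph{is} a $\subseteq$-chain: this is precisely the content of Lemma \ref{lem:ordinal_iteration_d_monotone}, since for any $\gamma_1<\gamma_2<\alpha$ we have $d^{\gamma_1}(S)\subseteq d^{\gamma_2}(S)$. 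With the chain established, Corollary \ref{cor:cf_chain_complete} yields $\bigcup_{\gamma<\alpha}d^\gamma(S)\in CF$, closing the induction.

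I do not expect a genuine obstacle; the one subtlety is to cite Lemma \ref{lem:ordinal_iteration_d_monotone} explicitly to license the use of chain completeness, rather than assuming the family of iterates is a chain for free. It is worth noting that mere $\omega$-completeness (Theorem \ref{thm:cf_omega_complete}) or directed completeness would not suffice, since $\alpha$ may be a limit ordinal of uncountable cofinality when $\abs{A}$ is an arbitrary cardinal; it is the full chain completeness of $\ang{CF,\subseteq}$ that covers all such limit stages uniformly.
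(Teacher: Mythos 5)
Your proof is correct and follows the same skeleton as the paper's: transfinite induction with identical base and successor steps via Corollary \ref{cor:d_preserves_cf}. The only divergence is at limit stages. The paper argues inline by contradiction: if $a,b\in\bigcup_{\alpha<\gamma}d^{\alpha}(S)$ with $R(a,b)$, then $a\in d^{\alpha_1}(S)$ and $b\in d^{\alpha_2}(S)$ for some $\alpha_1,\alpha_2<\gamma$, and Lemma \ref{lem:ordinal_iteration_d_monotone} places both in $d^{\delta}(S)$ with $\delta:=\max\pair{\alpha_1,\alpha_2}<\gamma$, contradicting the inductive hypothesis. You instead use Lemma \ref{lem:ordinal_iteration_d_monotone} to establish that $\set{d^{\gamma}(S)}_{\gamma<\alpha}$ is a $\subseteq$-chain in $CF$ and then discharge the limit case by chain completeness of $\ang{CF,\subseteq}$ (Corollary \ref{cor:cf_chain_complete}). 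These are the same argument factored differently: the paper's inline contradiction essentially reproduces the proof of Corollary \ref{cor:cf_chain_complete} with $\max\pair{\alpha_1,\alpha_2}$ playing the role of the WLOG comparison, so your version is the more economical one, avoiding that duplication, and you correctly identify that citing Lemma \ref{lem:ordinal_iteration_d_monotone} is the one non-trivial obligation. One side remark in your last paragraph needs correcting: directed completeness (Corollary \ref{cor:CF_di_comp}) \emph{would} suffice, since every chain is a directed set, so the union over the limit stage is the supremum of a directed family; what genuinely fails is $\omega$-completeness alone, for exactly the uncountable-cofinality reason you give.
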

\begin{proof}
We apply transfinite induction on $\alpha$.
\begin{enumerate}
\item (Base) If $\alpha=0$ then $d^0(S)=S\in CF$.
\item (Successor) If $d^\alpha(S)\in CF$ then $d\pair{d^\alpha\pair{S}}=d^{\alpha+1}\pair{S}\in CF$, by Corollary \ref{cor:d_preserves_cf}.
\item (Limit) Let $\gamma<\beta$ be a limit ordinal. Assume that $\pair{\forall\alpha<\gamma}d^\alpha(S)\in CF$, then
\begin{align}
T:=d^\gamma(S)=\bigcup_{\alpha<\gamma}d^\alpha(S).
\end{align}
Assume for contradiction that $T\notin CF$, then there are some $a,b\in T$ such that $R(a,b)$. Therefore, there are ordinal numbers $\alpha_1,\alpha_2<\gamma$ such that $a\in d^{\alpha_1}\pair{S}$ and $b\in d^{\alpha_2}\pair{S}$. By Lemma \ref{lem:ordinal_iteration_d_monotone}, we let $\delta:=\max\pair{\alpha_1,\alpha_2}<\gamma$ and hence $a,b\in d^{\delta}\pair{S}$. As $R(a,b)$, this means $d^{\delta}\pair{S}\notin CF$ -- contradiction, as we had assumed for all $\gamma<\beta$, $d^{\gamma}(S)\in CF$. Therefore,
\begin{align}
d^\gamma(S)=\bigcup_{\alpha<\gamma}d^\alpha(S)\in CF.
\end{align}
\end{enumerate}
By transfinite induction, this shows the result.
\end{proof}

\noindent Therefore, $d$ is closed on $CF$. This means $d:CF\to CF$ is well-defined.

\subsubsection{Interaction of Fixed Points of \texorpdfstring{$n$}{n} and \texorpdfstring{$d$}{d}}

The following result shows that $d$ is closed on the set of fixed points of $n$.

\begin{corollary}
If $S\subseteq A$ is a fixed point of $n$, then $d(S)$ is also a fixed point of $n$. The converse is false.
\end{corollary}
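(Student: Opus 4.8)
The plan is to prove the forward direction using the two structural identities already established, namely that $d=n^2$ (Theorem \ref{thm:d_n_squared}) and that $n$ and $d$ commute (Theorem \ref{thm:n_d_commute}), and then to refute the converse with an explicit finite counterexample drawn from an AF already tabulated in the paper.

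For the forward implication, I would start from the hypothesis $n(S)=S$ and compute $n\pair{d(S)}=d\pair{n(S)}=d(S)$, where the first equality is Theorem \ref{thm:n_d_commute} and the second substitutes the hypothesis. This immediately exhibits $d(S)$ as a fixed point of $n$. As a sharper observation worth recording, one can note that a fixed point of $n$ is in fact already a fixed point of $d$: since $d(S)=n^2(S)=n\pair{n(S)}=n(S)=S$ by Theorem \ref{thm:d_n_squared} and the hypothesis, we actually have $d(S)=S$, so the conclusion that $d(S)$ is a fixed point of $n$ is trivial. Either route is short; the commutation route is cleaner as a statement of principle, while the $d=n^2$ route explains \emph{why} the conclusion holds so effortlessly.

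For the converse I would use simple reinstatement (Example \ref{eg:simple_reinstatement}), whose values of $n$ and $d$ are recorded in Tables \ref{tab:simple_neutrality} and \ref{tab:simple_defence}. Taking $S:=\set{c}$, Table \ref{tab:simple_neutrality} gives $n\pair{\set{c}}=\set{a,c}\neq\set{c}$, so $S$ is \emph{not} a fixed point of $n$; yet Table \ref{tab:simple_defence} gives $d\pair{\set{c}}=\set{a,c}$, and $n\pair{\set{a,c}}=\set{a,c}$, so $d(S)$ \emph{is} a fixed point of $n$. This witnesses that the converse fails.

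There is essentially no hard step here: the forward direction is an immediate consequence of the already-proved algebraic identities for $n$ and $d$. The only point requiring any care is the converse, where the task is to locate a set $S$ with $d(S)$ fixed by $n$ but $S$ itself not fixed by $n$; the existing tables for simple reinstatement make this a matter of inspection, so I would simply verify the two table lookups and the single computation $n\pair{\set{a,c}}=\set{a,c}$ rather than search more broadly.
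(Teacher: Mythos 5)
Your proposal is correct and takes essentially the same route as the paper: the forward direction is the identical one-line application of Theorem \ref{thm:n_d_commute} (commute, then substitute $n(S)=S$), and the converse is refuted by the very same counterexample $S=\set{c}$ in simple reinstatement (Example \ref{eg:simple_reinstatement}), with $d\pair{\set{c}}=\set{a,c}$ fixed by $n$ while $\set{c}$ is not. Your sharper aside via $d=n^2$, that a fixed point of $n$ is already a fixed point of $d$ so that $d(S)=S$, is also in the paper as Corollary \ref{cor:fpn_fpd}.
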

\begin{proof}
If $S=n(S)$ then $d(S)=d\pair{n(S)}=n\pair{d\pair{S}}$ by Theorem \ref{thm:n_d_commute}. The converse is false: consider $S=\set{c}$ for Example \ref{eg:simple_reinstatement} (page \pageref{eg:simple_reinstatement}). We have $d(S)=\set{a,c}$ which is a fixed point of $n$. However, $S$ is not a fixed point of $n$.
\end{proof}

\noindent Further, any fixed point of $n$ is also a fixed point of $d$.

\begin{corollary}\label{cor:fpn_fpd}
If $S\subseteq A$ is a fixed point of $n$, then it is also a fixed point of $d$. The converse is false.
\end{corollary}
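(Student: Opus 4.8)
The plan is to exploit Theorem \ref{thm:d_n_squared}, which identifies $d$ with $n^2$ as functions on $\pow\pair{A}$; this collapses the forward direction to a one-line computation and reduces the whole problem to supplying a counterexample for the converse.

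For the forward implication, suppose $S$ is a fixed point of $n$, so that $n(S)=S$. Applying $n$ once more and invoking $d=n^2$, we get $d(S)=n^2(S)=n\pair{n(S)}=n(S)=S$, so $S$ is a fixed point of $d$. The entire content is that a fixed point of $n$ is automatically a fixed point of $n^2=d$; no separate use of antitonicity or of the lattice structure is required beyond the identity $d=n^2$ already established.

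For the converse, I would exhibit a single AF together with a set $S$ that is fixed by $d$ but not by $n$. The most economical source of such a set is any non-empty AF in which $U=\es$: by Corollary \ref{cor:unattacked_d_es} we have $d(\es)=U=\es$, so $\es$ is a fixed point of $d$, whereas Corollary \ref{cor:neutrality_es} gives $n(\es)=A\neq\es$, so $\es$ is not a fixed point of $n$. The Nixon diamond (Example \ref{eg:nixon}), where $A=\set{a,b}$ and every argument is attacked so that $U=\es$, is the smallest concrete witness; one reads off $d(\es)=\es$ and $n(\es)=A$ directly from Tables \ref{tab:nixon_defence} and \ref{tab:nixon_neutrality}. (Dually, $A$ itself works there, since $d(A)=A$ while $n(A)=U=\es$.)

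There is no genuine obstacle in this argument: the forward direction is immediate from $d=n^2$, and the converse only requires noticing that whenever $U$ is empty the set $\es$ separates the two fixed-point sets. The one point demanding care is selecting a witness in which the failure is unambiguous rather than merely an artifact of a particular table entry, and the Nixon diamond supplies exactly that.
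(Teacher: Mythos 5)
Your proof is correct and follows essentially the same route as the paper: the forward direction via the identity $d=n^2$ (Theorem \ref{thm:d_n_squared}) applied to $n(S)=S$, and the converse via the same counterexample, namely $S=\es$ in the Nixon diamond, where $d\pair{\es}=U=\es$ but $n\pair{\es}=A\neq\es$. Your additional observation that $A$ itself also separates the two fixed-point sets in that example is accurate but not needed.
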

\begin{proof}
If $S=n(S)$, then $n^2(S)=n(S)=S$, and hence $d(S)=S$. The converse is false: consider $S=\es$ for Example \ref{eg:nixon} (page \pageref{eg:nixon}), then $d\pair{\es}=\es$ yet $n(S)=A\neq\es$. Therefore, $S$ is a fixed point of $d$ but not of $n$.
\end{proof}

If $S$ is a fixed point for $d$, then so is $n(S)$.

\begin{corollary}
If $S\subseteq A$ is a fixed point of $d$, then $n(S)$ is also a fixed point of $d$. The converse is false.
\end{corollary}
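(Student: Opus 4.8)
The plan is to get the forward direction immediately from the commutativity of $n$ and $d$ established in Theorem \ref{thm:n_d_commute}, and then to refute the converse by an explicit counterexample drawn from simple reinstatement.

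First I would dispose of the forward implication. Suppose $S\subseteq A$ is a fixed point of $d$, i.e. $d(S)=S$. Theorem \ref{thm:n_d_commute} gives $d\pair{n\pair{S}}=n\pair{d\pair{S}}$; substituting $d(S)=S$ into the right-hand side yields $d\pair{n\pair{S}}=n\pair{S}$. Hence $n(S)$ is a fixed point of $d$, and this direction is a one-line consequence of the fact that $n$ and $d$ commute.

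For the converse I would work in simple reinstatement (Example \ref{eg:simple_reinstatement}), where $A=\set{a,b,c}$ and $R=\set{(c,b),(b,a)}$, and take $S=\set{c}$. Table \ref{tab:simple_neutrality} gives $n\pair{\set{c}}=\set{a,c}$, and Table \ref{tab:simple_defence} gives $d\pair{\set{a,c}}=\set{a,c}$, so $n(S)$ is indeed a fixed point of $d$. However Table \ref{tab:simple_defence} also gives $d\pair{\set{c}}=\set{a,c}\neq\set{c}$, so $S$ itself is \emph{not} a fixed point of $d$. This exhibits a set whose neutrality image is a fixed point of $d$ while the set is not, refuting the converse.

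The only genuine decision is the choice of witnessing AF: the neutrality function must carry some non-fixed-point of $d$ onto a fixed point of $d$. The Nixon diamond is useless here, since by Table \ref{tab:nixon_defence} every subset is already a fixed point of $d$; I therefore want a framework like simple reinstatement whose fixed points of $d$ are scarce (here the only nontrivial one is $\set{a,c}$), so that a set such as $\set{c}$ can fail to be fixed while mapping onto $\set{a,c}$. Once that example is in hand, both halves of the argument are purely mechanical table lookups.
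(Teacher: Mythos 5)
Your proof is correct and matches the paper's argument exactly: the forward direction via Theorem \ref{thm:n_d_commute} applied to $S=d(S)$, and the converse refuted in simple reinstatement with $S=\set{c}$, where $n\pair{\set{c}}=\set{a,c}$ is a fixed point of $d$ while $d\pair{\set{c}}=\set{a,c}\neq\set{c}$. Nothing to add.
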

\begin{proof}
We have that $S=d(S)$ means $n(S)=n\pair{d\pair{S}}=d\pair{n\pair{S}}$. For the converse: consider Example \ref{eg:simple_reinstatement} (page \pageref{eg:simple_reinstatement}) again, where $S=\set{c}$ and $n\pair{S}=\set{a,c}$ is a fixed point of $d$, but $\set{c}$ is not a fixed point of $d$.
\end{proof}

\subsection{Summary}

\begin{itemize}
\item Given an AF, its defence function $d:\pow\pair{A}\to\pow\pair{A}$ is defined as $a\in d(S)\Leftrightarrow a^-\subseteq S^+$.
\item $d$ is $\subseteq$-monotone. Therefore, let $F_d$ denote the set of all fixed points of $d$, then $\ang{F_d,\subseteq}$ is a complete lattice, so $d$ has a fixed point.
\item $d$ satisfies the following properties: $U=d\pair{\es}$, for any $S\subseteq A$, $U\subseteq d(S)$, and
\begin{align}
\bigcup_{i\in I}d\pair{S_i}\subseteq d\pair{\bigcup_{i\in I}S_i}\text{ and }\bigcap_{i\in I}d\pair{S_i}\supseteq d\pair{\bigcap_{i\in I}S_i},
\end{align}
\item If the underlying AF is finitary, then $d$ is $\omega$-continuous, else $d$ may or may not be $\omega$-continuous.
\item A set $S\subseteq A$ is self-defending iff $S\subseteq d\pair{S}$. The set of all self-defending sets for an AF $\mathcal{A}$ is $SD(\mathcal{A})$ or just $SD$.
\item For any AF, $SD$ satisfies $\es,\:U\in SD$, and the restriction of the defence function $d:SD\to SD$ is well-defined.
\item $SD$ is not $\subseteq$-upward closed, not closed under intersections, but closed under arbitrary unions.
\item $A\in SD$ iff $A$ is the $\subseteq$-largest fixed point of $d$.
\item Given an AF with neutrality function $n$ and defence function $d$, both from $\pow\pair{A}\to\pow\pair{A}$, we have $d\circ n=n\circ d$ and $n^2=d$.
\item $d:CF\to CF$ is well-defined. Further, for any $S\in CF$, one can generate a conflict-free-sequence $\set{S_i}_{i\in\nat}$ where $S_0=S$ and $S_{i+1}=d\pair{S_i}$, whose limit is also conflict-free.
\item If $d$ is $\omega$-continuous, then for $S\in SD$, the limit of the ascending chain of iterates on $S$ under $d$ is a fixed point of $d$.
\item If $S$ is a fixed point of $n$, then $d(S)$ is also a fixed point of $n$.
\item If $S$ is a fixed point of $n$, then it is also a fixed point of $d$.
\item If $S$ is a fixed point of $d$, then $n(S)$ is also a fixed point of $d$.
\end{itemize}

\newpage

\section{Admissible Sets}\label{sec:ADM}

Recall that conflict-freeness formalises a self-consistent set of arguments (Section \ref{sec:CF}, page \pageref{sec:CF}), and self-defence formalises the idea that a set of arguments replies adequately to all external criticisms (Section \ref{sec:SD}, page \pageref{sec:SD}). We now combine both these ideas to define when is it that an argument is ``winning''.

\subsection{Definition}

\begin{definition}
\cite[Definition 6(2), Lemma 18]{Dung:95} The set $S\subseteq A$ is an \textbf{admissible set} iff $S\subseteq n(S)$ and $S\subseteq d(S)$.
\end{definition}



\noindent Intuitively, admissible sets serve as the starting point for ``winning'' set of arguments, because these are the arguments that are mutually consistent and attacks all counterarguments. This addresses the main criticism against naive semantics (Section \ref{sec:nai_bad}, page \pageref{sec:nai_bad}) by now requiring that conflict-free sets of arguments also defend themselves.

\begin{definition}\label{def:admissible}
Given an underlying AF, let $ADM\subseteq\pow\pair{A}$ denote the set of admissible sets.
\end{definition}

\noindent If the underlying AF $\mathcal{A}=\ang{A,R}$ needs to be explicitly specified, we can write $ADM\pair{\mathcal{A}}$ or $ADM\pair{\ang{A,R}}$.

\begin{example}
(Example \ref{eg:fri}, page \pageref{eg:fri}) For floating reinstatement, we have
\begin{align}
ADM = \set{\es,\set{a},\set{b},\set{a,e},\set{b,e}}.
\end{align}
\end{example}

\begin{example}\label{eg:BH_eg221}
\cite[Example 2.2.1]{EoA} Consider the AF $A=\set{a,b,c,f,e}$ and \[R=\set{(a,b),(c,b),(c,f),(f,c),(f,e),(e,e)}.\]This is depicted in Figure \ref{fig:BH_eg221}.

\begin{figure}[H]
\begin{center}
\begin{tikzpicture}[>=stealth',shorten >=1pt,node distance=2cm,on grid,initial/.style    ={}]
\tikzset{mystyle/.style={->,relative=false,in=0,out=0}};
\node[state] (a) at (0,0) {$ a $};
\node[state] (b) at (2,0) {$ b $};
\node[state] (c) at (4,0) {$ c $};
\node[state] (f) at (6,0) {$ f $};
\node[state] (e) at (8,0) {$ e $};
\draw [->] (a) to (b);
\draw [->] (c) to (b);
\draw [->] (f) to [out = 225, in = -45] (c);
\draw [->] (c) to [out = 45, in = 135] (f);
\draw [->] (f) to (e);
\draw [->] (e) to [out=135,in=180] ($(e) + (0,1)$) to [out = 0, in = 45] (e);
\end{tikzpicture}
\caption{The AF from Example \ref{eg:BH_eg221}.}\label{fig:BH_eg221}
\end{center}
\end{figure}
\noindent We have $ADM = \set{\es,\set{a},\set{c},\set{a,c},\set{f},\set{a,f}}$.
\end{example}

\begin{example}
(Example \ref{eg:bi_inf} continued, page \pageref{eg:bi_inf}) We claim for this AF the non-empty admissible sets take the form $S\pair{a_i}:=\set{a_j\in A\:\vline\:j\geq i}$ and $S\pair{b_i}:=\set{b_j\in A\:\vline\:j\geq i}$, for $i\in\integ$.

Recall the AF as illustrated in Figure \ref{fig:bi_inf}, which we repeat here for convenience:

\begin{figure}[H]
\begin{center}
\begin{tikzpicture}[>=stealth',shorten >=1pt,node distance=2cm,on grid,initial/.style    ={}]
\tikzset{mystyle/.style={->,relative=false,in=0,out=0}};
\node (dotsn) at (-2,0) {\Huge $ \cdots $};
\node[state] (a-1) at (0,0) {$ a_{-1} $};
\node[state] (b-1) at (2,0) {$ b_{-1} $};
\node[state] (a0) at (4,0) {$ a_0 $};
\node[state] (b0) at (6,0) {$ b_0 $};
\node (dotsp) at (8,0) {\Huge $ \cdots $};
\draw [->] (a-1) to (dotsn);
\draw [->] (b-1) to (a-1);
\draw [->] (a0) to (b-1);
\draw [->] (b0) to (a0);
\draw [->] (dotsp) to (b0);
\end{tikzpicture}
\end{center}
\end{figure}

\noindent For example, we have $S\pair{a_{-1}}=\set{a_{-1}, a_0, a_1, a_2,\ldots}$. Notice that $S\pair{a_{-1}}^+=\set{b_j\in A\:\vline\:j\geq -2}$. Further, $S\pair{a_{-1}}^-=\set{b_j\in A\:\vline\:j\geq -1}\subseteq S\pair{a_{-1}}^+$.

More generally, it is easy to see that all sets of this form are cf. Given $S\pair{a_i}$ as above, by inspecting the AF, we can see that $S\pair{a_i}^-=\set{b_j\in A\:\vline\:j\geq i}$, because $b_i$ attacks $a_i$, $b_{i+1}$ attacks $a_{i+1}\in S\pair{a_i}$... etc. Further, $S\pair{a_i}^+=\set{b_j\in A\:\vline\:j \geq i-1}$, because $a_{i}$ attacks $b_{i-1}$, $a_{i+1}$ attacks $b_{i}$... etc. Therefore, we have that $S\pair{a_i}^-\subseteq S\pair{a_i}^+$ for all $i\in\integ$ and hence $S\pair{a_i}\in SD$ by Theorem \ref{thm:sd_equiv} (page \pageref{thm:sd_equiv}). By a similar argument, $S\pair{b_i}\in SD$ for all $i\in\integ$. Therefore, all sets of this form are admissible.

To show that these are the only non-empty, admissible sets, we can see that for any other conflict-free set, we need to defend against all attacks. If a cf set $S$ is finite, then it cannot be admissible as we choose the argument $a_i$ or $b_i\in A$ such that $i$ is the largest index and one of $a_i$ or $b_i$ has an attacker not defended by the set $S$, so no finite set of arguments is admissible. Further, no admissible set can have both $a_i$ and $b_j$ arguments for a given $i,j\in\integ$, because the attackers of $a_i$ or $b_i$ can only be defended against by, respectively, an $b_i$ or $a_i$ argument such that if you include all defenders, the resulting set cannot be cf. Satisfying self-defence would mean that the set includes all defender arguments with indices larger indices $i$. Therefore,
\begin{align}
ADM=&\set{\set{a_j\in A\:\vline\:j\geq i},\:\set{b_j\in A\:\vline\: j\geq i}\:\vline\:i\in\integ}\cup\set{\es}.
\end{align}
\end{example}

\begin{example}
\cite[Exercise 6(a)]{Caminada:08} Consider the AF in Figure \ref{fig:Caminada_ex2} (page \pageref{fig:Caminada_ex2}). Is $\set{a}\in ADM$? Yes, because $\set{a}\in CF$ and $a$ has no attackers so it is vacuously self defending.
\end{example}

\begin{example}\label{eg:Caminada_ex6b}
\cite[Exercise 6(b)]{Caminada:08} Consider the following AF \cite[Figure 5]{Caminada:08}, depicted in Figure \ref{fig:Caminada_ex6b}.

\begin{figure}[H]
\begin{center}
\begin{tikzpicture}[>=stealth',shorten >=1pt,node distance=2cm,on grid,initial/.style    ={}]
\tikzset{mystyle/.style={->,relative=false,in=0,out=0}};
\node[state] (e) at (-6,0) {$ a $};
\node[state] (a) at (0,0) {$ e $};
\node[state] (b) at (-2,0) {$ c $};
\node[state] (c) at (-4,0) {$ b $};
\node[state] (d) at (2,0) {$ f $};
\draw [->] (a) to [out = 225, in = -45] (b);
\draw [->] (b) to [out = 45, in = 135] (a);
\draw [->] (c) to (b);
\draw [->] (a) to (d);
\draw [->] (e) to (c);
\end{tikzpicture}
\caption{The AF depicting \cite[Figure 5]{Caminada:08}, from Example \ref{eg:Caminada_ex6b}.}\label{fig:Caminada_ex6b}
\end{center}
\end{figure}
\noindent Is $\set{c}\in ADM$? No, because $b$ attacks $c$, and $c$ does not attack $b$ back in turn. Therefore, $\set{c}\notin SD$.
\end{example}

\begin{example}\label{eg:Caminada_ex6c}
\cite[Exercise 6(c)]{Caminada:08} Consider the following AF \cite[Figure 6]{Caminada:08}, depicted in Figure \ref{fig:Caminada_ex6c}.

\begin{figure}[H]
\begin{center}
\begin{tikzpicture}[>=stealth',shorten >=1pt,node distance=2cm,on grid,initial/.style    ={}]
\tikzset{mystyle/.style={->,relative=false,in=0,out=0}};

\node[state] (a) at (0,0) {$ a $};
\node[state] (c) at (2,0) {$ c $};
\draw [->] (a) to [out=135,in=180] ($(a) + (0,1)$) to [out = 0, in = 45] (a);
\draw [->] (a) to (c);

\node[state] (b) at (4,1) {$ b $};
\node[state] (e) at (4,-1) {$ e $};
\draw [->] (b) to (c);
\draw [->] (c) to (e);

\end{tikzpicture}
\caption{The AF depicting \cite[Figure 6]{Caminada:08}, from Example \ref{eg:Caminada_ex6c}.}\label{fig:Caminada_ex6c}
\end{center}
\end{figure}
\noindent Is $\set{a}\in ADM$? No, because $\set{a}\notin CF$ as it is self-attacking.
\end{example}

\begin{example}
\cite[Exercise 6(d)]{Caminada:08} Consider the AF in Figure \ref{fig:Caminada_ex3} (page \pageref{fig:Caminada_ex3}). Is $\set{a,c,e}\in ADM$? No, because $c$ attacks $e$ and hence $\set{a,c,e}\notin CF$.
\end{example}

\subsection{Existence}

\begin{corollary}\label{cor:adm_cf_sd}
$ADM=CF\cap SD$.
\end{corollary}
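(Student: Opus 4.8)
The plan is to prove this by directly unfolding the three definitions involved, since the equality is essentially a restatement of how admissibility was defined. By the definition of admissible set, $S \in ADM$ if and only if $S \subseteq n(S)$ and $S \subseteq d(S)$ hold simultaneously. I would then recall that each of these two conjuncts is precisely the defining condition of one of the two classes on the right-hand side: the condition $S \subseteq n(S)$ is exactly condition (1) of Theorem \ref{thm:cf_char}, which by Definition \ref{definition:cf} characterises membership in $CF$; and the condition $S \subseteq d(S)$ is exactly the first of the two equivalent properties in Theorem \ref{thm:sd_equiv}, which defines membership in $SD$.

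Concretely, I would write the argument as a chain of biconditionals. First, $S \in ADM \Leftrightarrow \pair{S \subseteq n(S) \text{ and } S \subseteq d(S)}$ by the definition of admissible set. Next, $S \subseteq n(S) \Leftrightarrow S \in CF$ by Theorem \ref{thm:cf_char} and Definition \ref{definition:cf}, and $S \subseteq d(S) \Leftrightarrow S \in SD$ by Theorem \ref{thm:sd_equiv} and the definition of $SD$. Finally, $\pair{S \in CF \text{ and } S \in SD} \Leftrightarrow S \in CF \cap SD$ by the definition of set intersection. Composing these biconditionals yields $S \in ADM \Leftrightarrow S \in CF \cap SD$, and since $S \subseteq A$ was arbitrary, the two sets coincide.

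There is no genuine obstacle here: the statement is true essentially by construction, because admissibility was defined as the conjunction of conflict-freeness and self-defence expressed through the postfixed-point conditions $S \subseteq n(S)$ and $S \subseteq d(S)$. The only point requiring care is to cite the correct characterisations — using condition (1) of Theorem \ref{thm:cf_char} for conflict-freeness and the postfixed-point form $S \subseteq d(S)$ from Theorem \ref{thm:sd_equiv} for self-defence — rather than any of the equivalent alternative conditions, so that the match with the definition of admissibility is literal and no further argument (such as invoking $S^- \subseteq S^+$ or $S \cap S^+ = \es$) is needed.
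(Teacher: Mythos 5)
Your proof is correct and matches the paper's approach exactly: the paper's proof is simply ``Immediate from Definition \ref{def:admissible}'', and your chain of biconditionals just spells out that same definitional unfolding in full detail. Nothing is missing and no further comment is needed.
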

\begin{proof}
Immediate from Definition \ref{def:admissible} (page \pageref{def:admissible}).
\end{proof}

\begin{corollary}\label{cor:empty_is_admissible}
$\es\in ADM$.
\end{corollary}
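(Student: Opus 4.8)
The plan is to invoke the characterisation just established, namely Corollary \ref{cor:adm_cf_sd}, which states that $ADM = CF \cap SD$. Under this decomposition, showing $\es \in ADM$ reduces to showing membership of $\es$ in each of the two factors separately, both of which have already been recorded earlier in the development.

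First I would note that $\es \in CF$ by Corollary \ref{cor:es_cf}, which holds because $\es \subseteq n(\es) = A$ trivially (the empty set is a subset of every set). Second, I would note that $\es \in SD$ by Corollary \ref{cor:es_sd}, which holds because $\es \subseteq d(\es) = U$ is again vacuously true. Combining these two facts with the intersection characterisation gives $\es \in CF \cap SD = ADM$, completing the argument.

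Alternatively, one could bypass Corollary \ref{cor:adm_cf_sd} and argue straight from the definition of an admissible set: since $\es$ is a subset of every subset of $A$, both defining conditions $\es \subseteq n(\es)$ and $\es \subseteq d(\es)$ hold automatically, so $\es$ is admissible. Either route is essentially a one-line deduction.

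I expect no genuine obstacle here. This is a routine bookkeeping corollary whose entire content is the observation that the empty set satisfies every ``$\subseteq$'' containment vacuously; the only thing to be careful about is citing the correct prior results (Corollaries \ref{cor:adm_cf_sd}, \ref{cor:es_cf}, and \ref{cor:es_sd}) rather than reproving conflict-freeness and self-defence of $\es$ from scratch.

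\begin{proof}
By Corollary \ref{cor:es_cf}, $\es \in CF$, and by Corollary \ref{cor:es_sd}, $\es \in SD$. Therefore, by Corollary \ref{cor:adm_cf_sd}, $\es \in CF \cap SD = ADM$.
\end{proof}
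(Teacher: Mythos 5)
Your proof is correct and matches the paper's own argument exactly: both cite Corollary \ref{cor:adm_cf_sd} together with $\es\in CF$ (Corollary \ref{cor:es_cf}) and $\es\in SD$ (Corollary \ref{cor:es_sd}). Nothing further is needed.
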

\begin{proof}
By Corollary \ref{cor:adm_cf_sd} (page \pageref{cor:adm_cf_sd}) and that $\es\in CF$ (Corollary \ref{cor:es_cf}) and $\es\in SD$ (Corollary \ref{cor:es_sd}, page \pageref{cor:es_sd}).
\end{proof}

\noindent Therefore, for any AF, admissible sets exist ($ADM\neq\es$), with $\es$ being admissible.

The following result shows why the set of unattacked arguments can always be seen as winning.

\begin{corollary}
$U\in ADM$.
\end{corollary}
\begin{proof}
Immediate from Corollaries \ref{cor:U_cf} and \ref{cor:U_sd} (pages \pageref{cor:U_cf} and \pageref{cor:U_sd}, respectively).
\end{proof}

\begin{corollary}\label{cor:pow_U_in_ADM}
If $S\subseteq U$ then $S\in ADM$. The converse is not true.
\end{corollary}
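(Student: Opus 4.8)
The plan is to reduce everything to the characterisation $ADM = CF \cap SD$ from Corollary \ref{cor:adm_cf_sd}, so that the forward implication becomes the conjunction of two results already established for $CF$ and $SD$ separately, and the converse becomes a matter of exhibiting a single witness.

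First I would handle the forward direction. Assuming $S \subseteq U$, I would invoke the earlier corollary stating that $S \subseteq U$ implies $S \in CF$ (itself a consequence of $U \in CF$, Corollary \ref{cor:U_cf}, and the $\subseteq$-downward closure of $CF$, Corollary \ref{cor:CF_down_closed}), together with the earlier corollary stating that $S \subseteq U$ implies $S \in SD$. Combining these two facts with Corollary \ref{cor:adm_cf_sd} yields $S \in CF \cap SD = ADM$ immediately. No computation is required here; the content is purely the citation of the two component results and the intersection characterisation.

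For the converse, I would exhibit a concrete admissible set not contained in $U$. The natural candidate is simple reinstatement (Example \ref{eg:simple_reinstatement}), where $U = \set{c}$ but $\set{a,c} \in ADM$: one checks that $\set{a,c}$ is conflict-free and that $\set{a,c}^- = \set{b} \subseteq \set{b} = \set{a,c}^+$, so it is self-defending and hence admissible; yet $\set{a,c} \not\subseteq \set{c} = U$. This is the same witness already used to refute the converses of the corresponding $CF$ and $SD$ statements, which is unsurprising given how the forward direction decomposes.

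There is essentially no obstacle: the result is a direct corollary of material already in place, and the only genuine choice is selecting a counterexample for the converse, which the simple reinstatement framework supplies.
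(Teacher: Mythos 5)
Your proposal is correct and follows essentially the same route as the paper: both decompose admissibility via $ADM = CF \cap SD$, establish each half for $S \subseteq U$, and refute the converse with the witness $\set{a,c}$ from Example \ref{eg:simple_reinstatement}. The only cosmetic difference is that the paper inlines the self-defence half by invoking Corollary \ref{cor:iterate_d_get_U} to get $S \subseteq U \subseteq d(S)$, whereas you cite the previously established corollary that $S \subseteq U$ implies $S \in SD$ — both are already in place, so either citation path is fine.
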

\begin{proof}
As $U\in CF$, $S\in CF$. We apply Corollary \ref{cor:iterate_d_get_U} (page \pageref{cor:iterate_d_get_U}) to $S$, so $U\subseteq d(S)$. Therefore, $S\subseteq U\subseteq d(S)$ and hence $S\in SD$. Therefore, $S\in ADM$.

The converse is not true, e.g. Example \ref{eg:simple_reinstatement}, $\set{a,c}\in ADM$ but $\set{a,c}\not\subseteq U=\set{c}$.
\end{proof}

\noindent Corollary \ref{cor:pow_U_in_ADM} can be written succinctly as $\pow\pair{U}\subset ADM$.

\begin{corollary}\label{cor:ADM_in_CF}
$ADM\subseteq CF$, and the converse is generally not true.
\end{corollary}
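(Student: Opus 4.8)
The plan is to obtain the forward inclusion immediately from the characterisation of $ADM$ as an intersection (Corollary \ref{cor:adm_cf_sd}), and then to refute the converse with a small, hand-checkable counterexample.

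First I would invoke Corollary \ref{cor:adm_cf_sd} to write $ADM=CF\cap SD$. Since an intersection of two sets is contained in each of them, we conclude $ADM=CF\cap SD\subseteq CF$ with no further argument; this disposes of the inclusion entirely.

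For the converse, the idea is to exhibit a conflict-free set that is not admissible, which by Corollary \ref{cor:adm_cf_sd} forces it to fail self-defence. The cleanest witness is simple reinstatement (Example \ref{eg:simple_reinstatement}), where $A=\set{a,b,c}$ and $R=\set{(b,a),(c,b)}$. Here $\set{b}\in CF$ because $b$ is not self-attacking, yet $\set{b}^-=\set{c}\not\subseteq\set{a}=\set{b}^+$, so $\set{b}\notin SD$ by Theorem \ref{thm:sd_equiv}, and therefore $\set{b}\notin ADM$. This shows $CF\not\subseteq ADM$ in general.

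There is no genuine obstacle in this corollary; the only points requiring care are choosing a counterexample small enough to verify directly and confirming that the chosen set is simultaneously conflict-free and a genuine violator of self-defence. The set $\set{b}$ above already appears as a conflict-free non-self-defending set in the earlier discussion of Example \ref{eg:simple_reinstatement}, so it can be cited with minimal recomputation.
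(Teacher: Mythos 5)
Your proof is correct and follows essentially the same route as the paper: the inclusion is read off directly from Corollary \ref{cor:adm_cf_sd}, and the converse is refuted by a conflict-free singleton that fails self-defence. The only difference is cosmetic --- the paper uses the two-argument AF of Corollary \ref{cor:CF_no_cup} and computes $d\pair{\set{b}}$ directly, while you use $\set{b}$ in Example \ref{eg:simple_reinstatement} via Theorem \ref{thm:sd_equiv}; both witnesses check out.
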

\begin{proof}
$ADM\subseteq CF$ is immediate from Corollary \ref{cor:adm_cf_sd}.

For the converse, consider the AF from Corollary \ref{cor:CF_no_cup} (page \pageref{cor:CF_no_cup}). Clearly $\set{b}\in CF$ but $d\pair{\set{b}}=\set{a}$ and $\set{b}\not\subseteq\set{a}$. Therefore, $\set{b}\notin ADM$.
\end{proof}

\begin{corollary}\label{cor:d_presv_ADM}
If $S\in ADM$ then $d(S)\in ADM$. The converse is not true.
\end{corollary}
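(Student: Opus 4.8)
The plan is to exploit the decomposition $ADM = CF \cap SD$ supplied by Corollary \ref{cor:adm_cf_sd}, so that proving $d(S)\in ADM$ reduces to showing $d(S)\in CF$ and $d(S)\in SD$ separately. Each of these two facts is already available from the earlier development, so the forward statement should follow by a short combination rather than any fresh argument.

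Concretely, I would first assume $S\in ADM$, which by Corollary \ref{cor:adm_cf_sd} means $S\in CF$ and $S\in SD$. From $S\in CF$, Corollary \ref{cor:d_preserves_cf} gives $d(S)\in CF$. From $S\in SD$, Corollary \ref{cor:d_closed_SD} (closure of $SD$ under $d$) gives $d(S)\in SD$. Intersecting these two conclusions and invoking Corollary \ref{cor:adm_cf_sd} once more yields $d(S)\in CF\cap SD = ADM$. Since both ingredients are already established, there is no genuine obstacle in the forward direction; the only care required is to cite the correct closure results rather than re-deriving them.

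For the converse I would exhibit a concrete AF containing a set $S\notin ADM$ with $d(S)\in ADM$. Simple reinstatement (Example \ref{eg:simple_reinstatement}) is convenient: taking $S=\set{b}$, one checks from Table \ref{tab:simple_defence} that $d\pair{\set{b}}=\set{c}$, which is admissible since $c$ is unattacked, whereas $\set{b}$ is conflict-free but fails self-defence because $\set{b}^-=\set{c}\not\subseteq\set{a}=\set{b}^+$, so $\set{b}\notin SD$ and hence $\set{b}\notin ADM$. This witnesses that $d(S)\in ADM$ does not force $S\in ADM$. The mildly delicate part of the whole proof is simply locating such a witness, i.e. a non-self-defending (but conflict-free) set whose image under $d$ lands back in $ADM$; once the table of values of $d$ is in hand this is immediate.
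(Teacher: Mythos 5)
Your forward direction coincides exactly with the paper's proof: decompose $ADM=CF\cap SD$ via Corollary \ref{cor:adm_cf_sd}, apply Corollary \ref{cor:d_preserves_cf} to get $d(S)\in CF$ and Corollary \ref{cor:d_closed_SD} to get $d(S)\in SD$, and intersect. For the converse, however, you have done genuinely better than the source. The paper's stated justification --- ``from Example \ref{eg:BH_eg221}, $\set{a,f}\in ADM$ but $\set{a,f}\not\subseteq U=\set{a}$'' --- does not address the converse of this corollary at all; it is the counterexample belonging to Corollary \ref{cor:pow_U_in_ADM} (that $S\in ADM$ does not force $S\subseteq U$), apparently duplicated here by mistake, since it exhibits an \emph{admissible} set rather than a non-admissible $S$ with $d(S)\in ADM$. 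Your witness in simple reinstatement is correct and complete: $S=\set{b}$ satisfies $\set{b}\in CF$ but $\set{b}^-=\set{c}\not\subseteq\set{a}=\set{b}^+$, so $\set{b}\notin SD$ and hence $\set{b}\notin ADM$, while $d\pair{\set{b}}=\set{c}$ (as in Table \ref{tab:simple_defence}) is admissible because $c$ is unattacked. So your proposal not only matches the paper where the paper is sound, but supplies the argument the paper's converse actually needed.
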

\begin{proof}
If $S\in ADM$ then $S\in CF$ and $S\in SD$. By Corollary \ref{cor:d_preserves_cf} (page \pageref{cor:d_preserves_cf}), $d:CF\to CF$ and hence $d(S)\in CF$. By Corollary \ref{cor:d_closed_SD} (page \pageref{cor:d_closed_SD}), $d:SD\to SD$ and hence $d(S)\in SD$. Therefore, $d(S)\in CF\cap SD=ADM$.

For the converse, we can see from Example \ref{eg:BH_eg221} that $\set{a,f}\in ADM$ but $\set{a,f}\not\subseteq U=\set{a}$.
\end{proof}

\noindent Therefore, $d:ADM\to ADM$ is well-defined.

\begin{corollary}
If $S\in ADM$ then $n(S)\notin ADM$ in general.
\end{corollary}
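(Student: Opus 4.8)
The statement is a ``not in general'' claim, so it suffices to exhibit a single AF together with an admissible set $S$ whose image $n(S)$ fails to be admissible. The plan is to recycle the AF from Corollary \ref{cor:n_closed_SD_not} (page \pageref{cor:n_closed_SD_not}), namely $A=\set{a,b,c,e}$ with $R=\set{(a,b),(b,a),(e,c)}$, where it was already established that $\set{a}\in SD$ while $n\pair{\set{a}}\notin SD$. Reusing this example keeps the argument short and mirrors the companion result for self-defence, which is exactly the point being made: unlike $d$ (Corollary \ref{cor:d_presv_ADM}), the neutrality function does not preserve admissibility.

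First I would check that $\set{a}\in ADM$. Since $a$ is not self-attacking and its only neighbour $b$ lies outside $\set{a}$, we have $\set{a}^2\cap R=\es$, so $\set{a}\in CF$ by Theorem \ref{thm:cf_char}. Combined with $\set{a}\in SD$ from Corollary \ref{cor:n_closed_SD_not}, Corollary \ref{cor:adm_cf_sd} ($ADM=CF\cap SD$) then yields $\set{a}\in ADM$. Next, since $ADM=CF\cap SD\subseteq SD$ and Corollary \ref{cor:n_closed_SD_not} already shows $n\pair{\set{a}}\notin SD$, I conclude $n\pair{\set{a}}\notin ADM$. This is the desired counterexample, and the non-admissibility of $n(S)$ is inherited directly from the earlier failure-of-self-defence computation rather than recomputed from scratch.

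For completeness I would also record an even more elementary counterexample valid in \emph{every} non-trivial AF: take $S=\es$, which is admissible by Corollary \ref{cor:empty_is_admissible}. Then $n\pair{\es}=A$ by Corollary \ref{cor:neutrality_es}, and $A\in CF$ holds iff $A^+=\es$ iff $R=\es$; hence whenever the AF is non-trivial we have $A\notin CF$ and therefore $A\notin ADM$ by Corollary \ref{cor:ADM_in_CF}. There is no real obstacle in this proof: the only step needing care is to confirm that $n(S)$ violates at least one of the two defining conditions of admissibility, and on both routes this is secured by earlier results (failure of self-defence in the first route, failure of conflict-freeness in the second).
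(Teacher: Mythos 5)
Your proposal is correct, but your primary route differs from the paper's. The paper proves the statement in one line by taking $S=\es$ in the Nixon diamond (Example \ref{eg:nixon}): $\es\in ADM$ while $n\pair{\es}=\set{a,b}=A\notin CF$, so non-admissibility is witnessed through failure of \emph{conflict-freeness}. Your main route instead recycles the AF of Corollary \ref{cor:n_closed_SD_not} and witnesses non-admissibility through failure of \emph{self-defence}: $\set{a}\in ADM$ (correctly checked via $\set{a}^2\cap R=\es$ and the $SD$ membership already established), and $n\pair{\set{a}}\notin SD$ forces $n\pair{\set{a}}\notin ADM$ since $ADM=CF\cap SD$ (Corollary \ref{cor:adm_cf_sd}). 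This is a sound inference, and it has the merit of exhibiting a non-empty admissible $S$ and of running parallel to the $SD$-version of the result; one caveat is that you inherit the computation of $n\pair{\set{a}}$ from Corollary \ref{cor:n_closed_SD_not}, which contains a slip --- by Definition \ref{def:neutrality_function}, $n\pair{\set{a}}=A-\set{a}^+=\set{a,c,e}$ rather than $\set{c,e}$ --- though the conclusion survives, since $\set{a,c,e}^-=\set{b,e}\not\subseteq\set{a,c,e}^+=\set{b,c}$, so $n\pair{\set{a}}\notin SD$ either way (indeed $(e,c)\in R$ means it fails $CF$ as well). Your secondary route ($S=\es$, $n\pair{\es}=A$ by Corollary \ref{cor:neutrality_es}, and $A\notin CF$ whenever $R\neq\es$, hence $A\notin ADM$ by Corollary \ref{cor:ADM_in_CF}) is exactly the paper's argument in generalised form: the paper instantiates it on the Nixon diamond, whereas you observe it works in every non-trivial AF, which is a slightly stronger and cleaner observation.
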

\begin{proof}
In Example \ref{eg:nixon} (page \pageref{eg:nixon}), we have $n(\es)=\set{a,b}$, where $\es\in ADM$ but $\set{a,b}\notin CF$ and hence $\set{a,b}\notin ADM$.
\end{proof}

\subsection{Lattice Theoretic Properties}

\begin{lemma}
$ADM$ is in general not closed under intersections.
\end{lemma}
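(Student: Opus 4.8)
The plan is to exhibit a counterexample. Because $CF$ is closed under non-empty intersections (Corollary \ref{cor:cf_cap_closed}) while $SD$ is not, and $ADM=CF\cap SD$ (Corollary \ref{cor:adm_cf_sd}), any failure of closure under intersection for $ADM$ must be driven entirely by the self-defending component. The natural candidate is therefore the framework used to show that $SD$ is not closed under intersections, namely double reinstatement (Example \ref{eg:double_reinstatement}), $\ang{\set{a,b,c,e},\set{\pair{c,b},\pair{e,b},\pair{b,a}}}$, with $S_1:=\set{a,c}$ and $S_2:=\set{a,e}$.

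First I would verify that $S_1,S_2\in ADM$. Conflict-freeness of each is immediate by inspection, since no attack in $R$ joins two of their members. For self-defence I would use the characterization $S\subseteq d(S)\Leftrightarrow S^-\subseteq S^+$ (Theorem \ref{thm:sd_equiv}): one computes $S_1^-=\set{b}=S_1^+$ and $S_2^-=\set{b}=S_2^+$, so both sets are self-defending. Hence $S_1,S_2\in CF\cap SD=ADM$.

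Next I would show that the intersection fails to be admissible. We have $S_1\cap S_2=\set{a}$, which is trivially conflict-free; but $\set{a}^-=\set{b}$ while $\set{a}^+=a^+=\es$ (as $a$ attacks nothing), so $\set{a}^-\not\subseteq\set{a}^+$ and therefore $\set{a}\notin SD$ by Theorem \ref{thm:sd_equiv}. Consequently $\set{a}\notin ADM$, exhibiting two admissible sets whose intersection is not admissible, as required.

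I do not anticipate a genuine obstacle here. The only real insight is locating the obstruction in self-defence rather than conflict-freeness, after which the existing $SD$ witness transfers almost verbatim once one additionally checks that $S_1$ and $S_2$ are conflict-free (which is immediate). A minor pitfall to avoid is reproducing an exact value for $d(S_1)$; working through the $S^-\subseteq S^+$ criterion sidesteps this and keeps the verification clean.
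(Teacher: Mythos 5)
Your proposal is correct and takes essentially the same approach as the paper: exhibiting two admissible sets whose intersection is conflict-free but fails self-defence. The paper's witness is floating reinstatement (Example \ref{eg:fri}, with $\set{a,e}\cap\set{b,e}=\set{e}\notin ADM$) while you reuse the double-reinstatement witness from the $SD$ non-closure result, but the argument is identical in structure and your verification via the $S^-\subseteq S^+$ criterion of Theorem \ref{thm:sd_equiv} is sound.
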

\begin{proof}
Example \ref{eg:fri} (page \pageref{eg:fri}), we have $\set{a,d},\set{b,d}\in ADM$ but $\set{a,d}\cap\set{b,d}=\set{d}\notin ADM$.
\end{proof}

\begin{lemma}
$ADM$ is in general not closed under unions.
\end{lemma}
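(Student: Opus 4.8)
The plan is to exhibit a counterexample, since this is a negative result. The key observation that organises the search is that self-defence is preserved under arbitrary unions by Corollary~\ref{cor:SD_arb_union}; hence if a union of two admissible sets fails to be admissible, it can only be because conflict-freeness fails (recall $ADM=CF\cap SD$ by Corollary~\ref{cor:adm_cf_sd}). So I would look for two admissible sets $S_1$ and $S_2$ whose union contains a mutual attack, making it non-conflict-free.

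The simplest such witness is the Nixon diamond (Example~\ref{eg:nixon}), where $A=\set{a,b}$ and $R=\set{\pair{a,b},\pair{b,a}}$. First I would check that $\set{a}\in ADM$: it is conflict-free since $a$ is not self-attacking, and it is self-defending because $a^-=\set{b}=a^+$, so $a^-\subseteq\set{a}^+$. By the symmetry of the framework, $\set{b}\in ADM$ as well.

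Then I would observe that $\set{a}\cup\set{b}=\set{a,b}$ is not conflict-free, since $R(a,b)$ gives $\set{a,b}^2\cap R\neq\es$; hence $\set{a,b}\notin CF$ and so $\set{a,b}\notin ADM$ by Corollary~\ref{cor:adm_cf_sd}. This establishes that $ADM$ is not in general closed under unions.

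I expect no real obstacle here, as this is merely a counterexample to a universal closure property. The only conceptual point worth highlighting is the contrast with Corollary~\ref{cor:SD_arb_union}: the obstruction to closure under unions comes entirely from conflict-freeness, not from self-defence. One could equally use floating reinstatement (Example~\ref{eg:fri}) with the admissible sets $\set{a}$ and $\set{b}$, whose union $\set{a,b}$ is likewise not conflict-free; I would prefer the Nixon diamond for its minimality.
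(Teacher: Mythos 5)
Your proof is correct and takes essentially the same approach as the paper: both exhibit two admissible sets whose union fails conflict-freeness, the paper using floating reinstatement (Example~\ref{eg:fri}) with $\set{a,e}$ and $\set{b,e}$ (its written ``$\set{a,d},\set{b,d}$'' is evidently a typo for $e$), while you use the minimal Nixon diamond. Your verification that $\set{a},\set{b}\in ADM$ while $\set{a}\cup\set{b}=\set{a,b}\notin CF$ is accurate, and your observation that the obstruction must come from conflict-freeness (by Corollary~\ref{cor:SD_arb_union}) matches the paper's surrounding discussion.
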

\begin{proof}
Example \ref{eg:fri} (page \pageref{eg:fri}), we have $\set{a,d},\set{b,d}\in ADM$ but $\set{a,d}\cup\set{b,d}=\set{a,b,d}\notin ADM$.
\end{proof}

\noindent But if the union of a family of admissible sets is cf, then that the union of that family is also admissible. The following result generalises \cite[Lemma 1]{Caminada_IE:07} to accommodate possibly infinite AFs and generalised unions.

\begin{lemma}\label{lem:union_cf_adm_adm}
\cite[Lemma 1]{Caminada_IE:07} Let $I$ be an index set and $\set{S_i}_{i\in I}\subseteq ADM$. $\bigcup_{i\in I}S_i\in CF$ iff $\bigcup_{i\in I}S_i\in ADM$. 
\end{lemma}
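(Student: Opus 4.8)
The plan is to unpack the biconditional using the characterisation $ADM=CF\cap SD$ from Corollary \ref{cor:adm_cf_sd}, and to observe that the two directions have very different difficulty. Write $S:=\bigcup_{i\in I}S_i$ for brevity. The reverse direction ($\Leftarrow$) is immediate: if $S\in ADM$ then $S\in CF$ directly by Corollary \ref{cor:ADM_in_CF} ($ADM\subseteq CF$), so there is nothing to prove there.

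For the forward direction ($\Rightarrow$), the key realisation is that conflict-freeness is the \emph{only} missing ingredient, and it is supplied by hypothesis. First I would note that since each $S_i\in ADM$ and $ADM=CF\cap SD\subseteq SD$, the whole family $\set{S_i}_{i\in I}$ consists of self-defending sets. Then I would invoke Corollary \ref{cor:SD_arb_union}, which states that $SD$ is closed under arbitrary unions, to conclude $S=\bigcup_{i\in I}S_i\in SD$. Combining this with the hypothesis $S\in CF$ gives $S\in CF\cap SD=ADM$ by Corollary \ref{cor:adm_cf_sd}, completing the proof.

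I do not expect a genuine obstacle here, since the argument is a direct composition of closure facts already established. The conceptual content — and the reason the lemma is stated as a conditional rather than an unconditional closure result — is precisely that $CF$ is \emph{not} closed under unions (Corollary \ref{cor:CF_no_cup}), whereas $SD$ \emph{is} (Corollary \ref{cor:SD_arb_union}); so self-defence propagates freely to the union but conflict-freeness must be assumed. The only point requiring mild care is making explicit that one may take an arbitrary (possibly infinite) index set $I$, which is exactly the generalisation over \cite[Lemma 1]{Caminada_IE:07} being claimed; this is harmless because Corollary \ref{cor:SD_arb_union} is itself stated for arbitrary families, so no finiteness assumption is needed anywhere.
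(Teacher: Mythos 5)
Your proposal is correct and takes essentially the same route as the paper: the paper's forward direction simply inlines the computation $\bigcup_{i\in I}S_i\subseteq\bigcup_{i\in I}d\pair{S_i}\subseteq d\pair{\bigcup_{i\in I}S_i}$, which is precisely the proof of Corollary \ref{cor:SD_arb_union} that you cite instead, and both treat the backward direction as trivial via $ADM\subseteq CF$. Your factoring through Corollary \ref{cor:SD_arb_union} also silently absorbs the $I=\es$ edge case that the paper handles separately (the empty union $\es$ lies in both $SD$ and $CF$), so nothing is missing.
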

\begin{proof}
($\Rightarrow$) If $I=\es$, then $\set{S_i}_{i\in I}=\es$ and the result follows by Corollary \ref{cor:empty_is_admissible}, as $\es\in ADM$. Otherwise, we know that $\pair{\forall i\in I}S_i\subseteq d\pair{S_i}$. Therefore, $\bigcup_{i\in I}S_i\subseteq\bigcup_{i\in I}d\pair{S_i}$. By Corollary \ref{cor:d_cup_cap} (page \pageref{cor:d_cup_cap}), $\bigcup_{i\in I}d\pair{S_i}\subseteq d\pair{\bigcup_{i\in I}S_i}$. Therefore, $\bigcup_{i\in I}S_i\in SD$ and hence $\bigcup_{i\in I}S_i\in ADM$.

($\Leftarrow$) Trivial, as $ADM\subseteq CF$.
\end{proof}

\begin{corollary}
If $S\in ADM$, then the limit of the $\omega$-chain $\set{d^k\pair{S}}_{k\in\nat}$ is also in $ADM$.
\end{corollary}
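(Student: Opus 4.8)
The plan is to exploit the decomposition $ADM=CF\cap SD$ (Corollary \ref{cor:adm_cf_sd}) and establish that the limit $L:=\bigcup_{k\in\nat}d^k\pair{S}$ lies in each of $CF$ and $SD$ separately. Before doing either, I would first record that the sequence $\set{d^k\pair{S}}_{k\in\nat}$ is genuinely an ascending $\omega$-chain: since $S\in ADM$ we have $S\in SD$, so $S\subseteq d\pair{S}$, and applying the $\subseteq$-monotonicity of $d$ (Corollary \ref{cor:d_monotone}) repeatedly yields $d^k\pair{S}\subseteq d^{k+1}\pair{S}$ for every $k$. This ascending property is what makes the subsequent appeals to the earlier completeness and closure results applicable.

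For conflict-freeness, the work has essentially already been done. Since $S\in ADM\subseteq CF$, Corollary \ref{cor:iterate_d_limit_CF} applies directly to give that the limit of the chain $\set{d^i\pair{S}}_{i\in\nat}$ is cf, i.e. $L\in CF$. (Under the hood this is Corollary \ref{cor:d_sequence_CF} feeding into the $\omega$-completeness of $\ang{CF,\subseteq}$ from Theorem \ref{thm:cf_omega_complete}, but I would simply cite Corollary \ref{cor:iterate_d_limit_CF}.)

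For self-defence, I would argue that every member of the chain is self-defending and then invoke closure of $SD$ under unions. Concretely, $d^0\pair{S}=S\in SD$ by hypothesis, and by Corollary \ref{cor:d_closed_SD} the map $d:SD\to SD$ is well-defined, so a trivial induction gives $d^k\pair{S}\in SD$ for all $k\in\nat$. Then, because $SD$ is closed under arbitrary unions (Corollary \ref{cor:SD_arb_union}), the union $L=\bigcup_{k\in\nat}d^k\pair{S}$ is self-defending, so $L\in SD$. Combining the two parts, $L\in CF\cap SD=ADM$, which is the claim.

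I do not expect a serious obstacle here; the result is really an assembly of previously established facts, and the main point is to route the two halves of admissibility through the correct earlier lemmas. The one thing to be careful about is that the self-defence half should be handled via the union-closure of $SD$ (Corollary \ref{cor:SD_arb_union}) rather than via any continuity of $d$: unlike the adjacent corollary about fixed points, this statement requires no $\omega$-continuity hypothesis, since closure of $SD$ under \emph{arbitrary} unions already delivers $L\in SD$ unconditionally. Noting that the chain is ascending (using $S\in SD$) is the only subtlety needed to legitimately invoke Corollary \ref{cor:iterate_d_limit_CF} for the conflict-free half.
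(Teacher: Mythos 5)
Your proof is correct and essentially matches the paper's: the paper runs the chain through $d:ADM\to ADM$, gets the limit into $CF$ via Corollary \ref{cor:iterate_d_limit_CF}, and concludes with Lemma \ref{lem:union_cf_adm_adm}, while you route the self-defence half through $d:SD\to SD$ and the union-closure of $SD$ (Corollary \ref{cor:SD_arb_union}) instead — the same ingredients in slightly different packaging, since Lemma \ref{lem:union_cf_adm_adm} itself rests on that union-closure. Your explicit verification that the chain is ascending (from $S\subseteq d\pair{S}$ and monotonicity) is a welcome precision that the paper leaves implicit when invoking Corollary \ref{cor:iterate_d_limit_CF}.
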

\begin{proof}
As $d:ADM\to ADM$, this chain is in $ADM$ by induction on $k$. By Corollary \ref{cor:iterate_d_limit_CF} (page \pageref{cor:iterate_d_limit_CF}), the limit of this chain is in $CF$. By Lemma \ref{lem:union_cf_adm_adm}, the result follows.
\end{proof}

After iterating $d$ a transfinite number of times on any $S\in ADM$, the result is still in $ADM$.

\begin{theorem}\label{thm:TFI_iterate_d_ADM}
Let $S\in ADM$. Let $\beta$ be an ordinal number. We have that
\begin{align}
\pair{\forall\alpha<\beta}d^{\alpha}\pair{S}\in ADM.
\end{align}
\end{theorem}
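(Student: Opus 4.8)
The plan is to exploit the decomposition $ADM = CF\cap SD$ (Corollary \ref{cor:adm_cf_sd}) and to handle the two halves separately, since each half already carries exactly the closure property needed to push a transfinite induction through. From $S\in ADM$ we immediately read off $S\in CF$ and $S\in SD$, so it suffices to establish $\pair{\forall\alpha<\beta}d^\alpha(S)\in CF$ and $\pair{\forall\alpha<\beta}d^\alpha(S)\in SD$ and then intersect.

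First I would dispatch the conflict-freeness half for free: Theorem \ref{thm:d_presv_CF}, applied to the hypothesis $S\in CF$, already yields $\pair{\forall\alpha<\beta}d^\alpha(S)\in CF$, so no new argument is needed there. Next I would prove the parallel self-defence statement $\pair{\forall\alpha<\beta}d^\alpha(S)\in SD$ by transfinite induction on $\alpha$. The base case is $d^0(S)=S\in SD$ by assumption. For a successor $\alpha+1<\beta$, if $d^\alpha(S)\in SD$ then $d^{\alpha+1}(S)=d\pair{d^\alpha(S)}\in SD$, because $d$ is closed on $SD$ (Corollary \ref{cor:d_closed_SD}). For a limit ordinal $\gamma<\beta$, the iterate is defined as $d^\gamma(S)=\bigcup_{\alpha<\gamma}d^\alpha(S)$; assuming $d^\alpha(S)\in SD$ for every $\alpha<\gamma$, this union lies in $SD$ because $SD$ is closed under arbitrary unions (Corollary \ref{cor:SD_arb_union}). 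This closes the induction.

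Finally I would combine the two halves: for each $\alpha<\beta$ we have simultaneously $d^\alpha(S)\in CF$ and $d^\alpha(S)\in SD$, whence $d^\alpha(S)\in CF\cap SD=ADM$ by Corollary \ref{cor:adm_cf_sd}, which is the claim.

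The only delicate step, and the one I expect to require the most care, is the limit-ordinal case for $SD$: one must invoke the definition of the transfinite iterate as the union of all earlier iterates and then appeal to closure of $SD$ under arbitrary unions. It is worth noting that this limit step is in fact cleaner than the corresponding one for $CF$ inside Theorem \ref{thm:d_presv_CF}, which had to proceed by contradiction because $CF$ is closed only under chain and directed unions (Corollaries \ref{cor:cf_chain_complete} and \ref{cor:CF_di_comp}) rather than under arbitrary unions; here the stronger closure of $SD$ lets the limit case go through directly.
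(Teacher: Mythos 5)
Your proof is correct, and it takes a genuinely different (more modular) route than the paper's. The paper runs a single transfinite induction with the inductive statement $d^{\alpha}\pair{S}\in ADM$: successors are handled by Corollary \ref{cor:d_presv_ADM}, and at a limit ordinal $\gamma$ it imports conflict-freeness of $T=\bigcup_{\alpha<\gamma}d^{\alpha}\pair{S}$ from the limit case of Theorem \ref{thm:d_presv_CF} and then verifies self-defence by hand with an element-chasing argument (given $a\in T$ and $b\in a^-$, pick $\alpha<\gamma$ with $a\in d^{\alpha}\pair{S}$ and use admissibility of that iterate to get $b\in T^+$). You instead split via $ADM=CF\cap SD$ (Corollary \ref{cor:adm_cf_sd}), delegate the $CF$ half wholesale to Theorem \ref{thm:d_presv_CF}, and run a separate transfinite induction for the $SD$ half in which every case is a citation: base by hypothesis, successors by closure of $SD$ under $d$ (Corollary \ref{cor:d_closed_SD}), and limits by closure of $SD$ under \emph{arbitrary} unions (Corollary \ref{cor:SD_arb_union}). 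Your closing observation is exactly the right diagnosis of why this works: the chain structure of the iterates (Lemma \ref{lem:ordinal_iteration_d_monotone}) is genuinely needed only on the $CF$ side, where closure holds merely for chains and directed families, whereas $SD$'s stronger union-closure makes its limit case immediate. What each approach buys: yours is shorter, avoids any element-chasing at limits, and in fact proves a slightly stronger statement along the way (all transfinite iterates of any $S\in SD$ remain in $SD$, with no conflict-freeness hypothesis); the paper's single induction is more self-contained and makes the defence mechanism at limit stages explicit, which it then reuses nearly verbatim in Theorem \ref{thm:iterate_d_adm_to_comp} to show the stabilised limit is complete.
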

\begin{proof}
We apply transfinite induction on $\alpha$.
\begin{enumerate}
\item (Base) If $\alpha=0$ then $d^0\pair{S}=S\in ADM$.
\item (Successor) If $d^{\alpha}\pair{S}\in ADM$ then $d\pair{d^{\alpha}\pair{S}}\in ADM$, by Corollary \ref{cor:d_presv_ADM}.
\item (Limit) Let $\gamma<\beta$ be a limit ordinal. Assume that $\pair{\forall\alpha<\gamma}d^{\alpha}\pair{S}\in ADM$. Then
\begin{align}
T:=d^{\gamma}\pair{S}=\bigcup_{\alpha<\gamma}d^{\alpha}\pair{S}.
\end{align}
From the limit case of Theorem \ref{thm:d_presv_CF}, $T\in CF$. Now assume for some $a\in T$, $b\in a^-$. There is some $\alpha<\gamma$ such that $a\in d^{\alpha}\pair{S}$. As $d^{\alpha}\pair{S}\in ADM$ by assumption, we must have $b\in\pair{d^{\alpha}\pair{S}}^+$. Therefore, $b\in T^+$ and hence $T\in ADM$.
\end{enumerate}
By transfinite induction, this shows the result.
\end{proof}

\begin{theorem}
$\ang{ADM,\subseteq}$ is $\omega$-complete.
\end{theorem}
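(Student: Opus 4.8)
The plan is to exploit the decomposition $ADM=CF\cap SD$ from Corollary \ref{cor:adm_cf_sd}, so that $\omega$-completeness of $ADM$ reduces to facts already established about $CF$ and $SD$. Recall that to show $\ang{ADM,\subseteq}$ is $\omega$-complete I must take an arbitrary ascending $\omega$-chain $\set{S_i}_{i\in\nat}$ in $ADM$ (so $i\leq j$ implies $S_i\subseteq S_j$) and verify that its supremum exists in $ADM$. The natural candidate supremum is the union $S:=\bigcup_{i\in\nat}S_i$, since in the ambient complete lattice $\ang{\pow\pair{A},\subseteq}$ unions compute least upper bounds; the only thing needing proof is that this union actually lands in $ADM$.

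To place $S$ in $ADM$ I would argue membership in each of the two defining families separately. First, since $ADM\subseteq CF$ (Corollary \ref{cor:ADM_in_CF}), the chain $\set{S_i}_{i\in\nat}$ is in particular an ascending $\omega$-chain in $CF$, so Theorem \ref{thm:cf_omega_complete} gives $S=\bigcup_{i\in\nat}S_i\in CF$. Second, since each $S_i\in ADM\subseteq SD$ and $SD$ is closed under \emph{arbitrary} unions by Corollary \ref{cor:SD_arb_union}, we immediately get $S\in SD$ (the $\omega$-chain structure is not even needed here). Combining the two, $S\in CF\cap SD=ADM$ by Corollary \ref{cor:adm_cf_sd}. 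As an alternative final step I could instead invoke Lemma \ref{lem:union_cf_adm_adm}: having shown $S\in CF$, that lemma directly upgrades the union of the admissible family $\set{S_i}_{i\in\nat}$ to an admissible set.

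It then remains to confirm $S$ is the supremum \emph{within} $\ang{ADM,\subseteq}$. Each $S_i\subseteq S$ so $S$ is an upper bound, and if $T\in ADM$ is any upper bound of the chain then $S_i\subseteq T$ for all $i$ forces $S=\bigcup_{i\in\nat}S_i\subseteq T$; hence $S$ is the least upper bound, and it lies in $ADM$ as just shown. I do not anticipate a genuine obstacle here: the substantive work has already been front-loaded into the $\omega$-completeness of $CF$ and the union-closure of $SD$, so this statement is essentially a synthesis of those two results through the identity $ADM=CF\cap SD$. The one point to state carefully is that the supremum is taken as a union and that union-closure of $SD$ together with $\omega$-completeness of $CF$ suffice precisely because the chain is ascending, so no appeal to directedness or chain-completeness beyond the $\omega$ case is required.
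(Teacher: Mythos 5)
Your proposal is correct and follows essentially the same route as the paper: the paper's proof likewise takes the union $S=\bigcup_{i\in\nat}S_i$ of the ascending chain, obtains $S\in CF$ from Theorem \ref{thm:cf_omega_complete}, and establishes $S\in SD$ (the paper inlines the monotonicity argument for $d$, which is exactly the content of Corollary \ref{cor:SD_arb_union} that you cite directly). Your additional verification that the union is the least upper bound within $\ang{ADM,\subseteq}$ is a harmless, slightly more careful touch, but the substance is identical.
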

\begin{proof}
This follows from Theorem \ref{thm:cf_omega_complete} (page \pageref{thm:cf_omega_complete}) and Corollary \ref{cor:SD_arb_union} (page \pageref{cor:SD_arb_union}). Let $\set{S_i}_{i\in\nat}$ be an ascending chain in $ADM$. Let $S=\bigcup_{i\in\nat}S_i$. Clearly $S\in CF$. Further, if $a\in S$, then $\pair{\exists i\in\nat} a\in S_i\in ADM$ hence $\pair{\exists i\in\nat} a\in d\pair{S_i}$. As $S_i\subseteq S$, we have $d\pair{S_i}\subseteq d\pair{S}$ and hence $a\in d\pair{S}$. Therefore, $S\subseteq d(S)$. Therefore, $S\in ADM$.
\end{proof}

\begin{theorem}
$\ang{ADM,\subseteq}$ is chain complete.
\end{theorem}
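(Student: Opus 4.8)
The plan is to mirror the proofs of chain completeness of $\ang{CF,\subseteq}$ (Corollary \ref{cor:cf_chain_complete}) and of $\omega$-completeness of $\ang{ADM,\subseteq}$, reducing everything to results already in hand. First I would fix an arbitrary ascending chain $\mathcal{C}\subseteq ADM$ and propose its union $\bigcup\mathcal{C}$ as the candidate supremum, which is the natural choice because in the subset order on $\pow\pair{A}$ the least upper bound of any family is precisely its union. The boundary case $\mathcal{C}=\es$ is immediate, since $\bigcup\es=\es\in ADM$ by Corollary \ref{cor:empty_is_admissible}, so I would assume $\mathcal{C}\neq\es$ thereafter.

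The key steps are then as follows. First, observe that $ADM\subseteq CF$ (Corollary \ref{cor:ADM_in_CF}), so $\mathcal{C}$ is in particular an ascending chain in $CF$. Second, invoke chain completeness of $\ang{CF,\subseteq}$ (Corollary \ref{cor:cf_chain_complete}) to conclude $\bigcup\mathcal{C}\in CF$. Third, apply Lemma \ref{lem:union_cf_adm_adm} to the family $\mathcal{C}$: that lemma states precisely that a union of admissible sets which happens to be conflict-free is itself admissible, so from $\bigcup\mathcal{C}\in CF$ and $\mathcal{C}\subseteq ADM$ we obtain $\bigcup\mathcal{C}\in ADM$. Finally, I would confirm that $\bigcup\mathcal{C}$ is genuinely the supremum within $ADM$: it is an upper bound of $\mathcal{C}$, and any admissible upper bound is in particular an upper bound in $\pow\pair{A}$, where the union is the least such; hence $\bigcup\mathcal{C}$ is the least upper bound in $ADM$ as well.

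I do not expect any real obstacle, as all the genuine work has already been done in establishing Lemma \ref{lem:union_cf_adm_adm} and the chain completeness of $CF$. The only point deserving slight care is that chain completeness requires chains of \emph{arbitrary} cardinality, not merely $\omega$-chains, so one cannot simply reuse the $\omega$-complete proof; but since both Corollary \ref{cor:cf_chain_complete} and Lemma \ref{lem:union_cf_adm_adm} are stated for arbitrary chains and arbitrary index sets respectively, this generality is available and the argument goes through unchanged.
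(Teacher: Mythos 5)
Your proof is correct and matches the paper's argument essentially step for step: reduce to $Ch\subseteq CF$ via Corollary \ref{cor:ADM_in_CF}, conclude $\bigcup Ch\in CF$ from the completeness of $CF$, and then apply Lemma \ref{lem:union_cf_adm_adm} to get $\bigcup Ch\in ADM$. The only cosmetic difference is that the paper cites directed completeness of $CF$ (Corollary \ref{cor:CF_di_comp}) where you cite chain completeness (Corollary \ref{cor:cf_chain_complete}), and your explicit handling of the empty chain and of the supremum property are harmless additions the paper leaves implicit.
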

\begin{proof}
Let $Ch\subseteq ADM$ be an arbitrary $\subseteq$-chain. By Theorem \ref{cor:ADM_in_CF}, $Ch\subseteq CF$ and by Theorem \ref{cor:CF_di_comp} (page \pageref{cor:CF_di_comp}), $\bigcup Ch\in CF$. By Lemma \ref{lem:union_cf_adm_adm}, it follows that $\bigcup Ch\in ADM$. As $Ch$ is any chain in $ADM$, the result follows.
\end{proof}

\begin{corollary}\label{cor:ADM_has_max}
$\max_{\subseteq}ADM\neq\es$.
\end{corollary}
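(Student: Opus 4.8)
The plan is to invoke Zorn's lemma on the poset $\ang{ADM,\subseteq}$. Zorn's lemma requires two ingredients: that $ADM$ is non-empty, and that every $\subseteq$-chain in $ADM$ has an upper bound lying in $ADM$. Both have essentially been secured already by the preceding results, so the argument is short.

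First I would record that $ADM\neq\es$, since $\es\in ADM$ by Corollary \ref{cor:empty_is_admissible}. Next, given an arbitrary $\subseteq$-chain $Ch\subseteq ADM$, the theorem that $\ang{ADM,\subseteq}$ is chain complete shows $\bigcup Ch\in ADM$; since $\bigcup Ch$ is in particular an upper bound for $Ch$, every chain in $ADM$ has an upper bound that itself lies in $ADM$. Applying Zorn's lemma then produces a $\subseteq$-maximal element of $ADM$, whence $\max_{\subseteq}ADM\neq\es$.

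This deliberately mirrors the earlier existence proof for naive extensions ($NAI\neq\es$), which applied Zorn's lemma using the chain completeness of $CF$. There is no real obstacle at this stage: all the difficulty was absorbed into establishing chain completeness of $ADM$, which in turn rested on Lemma \ref{lem:union_cf_adm_adm} together with the directed completeness of $CF$. The only points worth a moment's care are that chain completeness delivers suprema of chains, which are in particular the upper bounds Zorn's lemma demands, and that the empty chain causes no trouble precisely because $ADM$ is already known to be non-empty.
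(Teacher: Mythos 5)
Your proof is correct and is essentially identical to the paper's: both invoke Zorn's lemma on $\ang{ADM,\subseteq}$, with upper bounds for chains supplied by the chain completeness theorem immediately preceding the corollary. Your extra remark that non-emptiness of $ADM$ (via $\es\in ADM$, Corollary \ref{cor:empty_is_admissible}) handles the empty chain is a small but sound clarification the paper leaves implicit.
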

\begin{proof}
Every chain in $ADM$ has a least upper bound in $ADM$ because the poset $\ang{ADM,\subseteq}$ is chain complete from the preceding theorem. By Zorn's lemma, $\ang{ADM,\subseteq}$ has at least one $\subseteq$-maximal element.
\end{proof}

\noindent Just as in Theorem \ref{thm:existence_of_naive} (page \pageref{thm:existence_of_naive}), we have used Zorn's lemma to show that there exist maximal admissible sets. Therefore, Zorn's lemma is sufficient for showing Corollary \ref{cor:ADM_has_max}. However, we may ask whether Zorn's lemma is \textit{necessary} for Corollary \ref{cor:ADM_has_max}. We answer this in Appendix \ref{app:AC} (page \pageref{app:AC}).

\begin{corollary}\label{cor:adm_dir_set}
Let $\mathcal{D}\subseteq ADM$ be a directed subset under $\subseteq$. Its supremum $\bigcup\mathcal{D}\in ADM$.
\end{corollary}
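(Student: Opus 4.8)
The plan is to mirror the chain-completeness argument established immediately above, replacing chain completeness of $CF$ with its directed completeness. The two ingredients I would use are that $\ang{CF,\subseteq}$ is directed complete (Corollary \ref{cor:CF_di_comp}) and that a union of admissible sets which happens to be conflict-free is automatically admissible (Lemma \ref{lem:union_cf_adm_adm}). Together these reduce the claim to a two-line invocation.

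First I would observe that $\mathcal{D}\subseteq ADM\subseteq CF$ by Corollary \ref{cor:ADM_in_CF}, so $\mathcal{D}$ is in particular a directed subset of $CF$. Then, by directed completeness of $CF$ (Corollary \ref{cor:CF_di_comp}), the union $\bigcup\mathcal{D}$ is conflict-free. Finally, since $\mathcal{D}\subseteq ADM$ and we have just shown $\bigcup\mathcal{D}\in CF$, Lemma \ref{lem:union_cf_adm_adm} yields $\bigcup\mathcal{D}\in ADM$. Because the least upper bound of any family in the powerset lattice $\ang{\pow\pair{A},\subseteq}$ is its union, and this union lies in $ADM$, it is indeed the supremum of $\mathcal{D}$ within $\ang{ADM,\subseteq}$.

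The only place where directedness is genuinely needed is in securing conflict-freeness, and this is exactly what Corollary \ref{cor:CF_di_comp} packages: if $R(a,b)$ for some $a,b\in\bigcup\mathcal{D}$, then $a\in S$ and $b\in T$ for some $S,T\in\mathcal{D}$, and directedness supplies a common $C\in\mathcal{D}$ with $S,T\subseteq C$, forcing $a,b\in C$ and contradicting $C\in CF$. The self-defence half of admissibility, by contrast, requires no directedness at all: each $a\in\bigcup\mathcal{D}$ already lies in some admissible $S\in\mathcal{D}$, so $a^-\subseteq S^+\subseteq\pair{\bigcup\mathcal{D}}^+$. This asymmetry — directedness pays only for conflict-freeness, never for self-defence — is precisely the content of Lemma \ref{lem:union_cf_adm_adm}, so I expect no real obstacle and no new calculation beyond chaining the two cited results.
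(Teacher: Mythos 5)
Your proof is correct and follows essentially the same route as the paper's: both pass from $\mathcal{D}\subseteq ADM\subseteq CF$ to conflict-freeness of $\bigcup\mathcal{D}$ via the directed completeness of $CF$ (Corollary \ref{cor:CF_di_comp}), and then conclude admissibility by Lemma \ref{lem:union_cf_adm_adm}. Your closing remarks on where directedness is actually used and why the union is the supremum in $\ang{ADM,\subseteq}$ are accurate elaborations of the same argument, not a different one.
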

\begin{proof}
We have that $\mathcal{D}\subseteq ADM\subseteq CF$ where $\subseteq$ is the underlying partial order, so $\mathcal{D}$ is also a directed set in $CF$ and hence $\bigcup\mathcal{D}\in CF$. By choosing $\set{S_i}_{i\in I}$ from Lemma \ref{lem:union_cf_adm_adm} (page \pageref{lem:union_cf_adm_adm}) to $\mathcal{D}$, it follows that $\bigcup\mathcal{D}\in ADM$.
\end{proof}

\begin{corollary}\label{cor:ADM_cpo1}
\cite[Theorem 11(1)]{Dung:95} The set $\ang{ADM,\subseteq}$ is a pointed directed complete partial order (dcpo, Definition \ref{def:dcpo}, page \pageref{def:dcpo}).
\end{corollary}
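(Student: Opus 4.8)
The plan is to assemble this corollary from the three defining properties of a pointed dcpo: that $\ang{ADM,\subseteq}$ is (i) a partial order, (ii) directed complete, and (iii) pointed (i.e. has a $\subseteq$-least element). Since all the substantive work has already been done in the preceding corollaries, I expect this to be a short verification that collects those pieces rather than a proof requiring new ideas.

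First I would observe that $\subseteq$ is a partial order on $\pow\pair{A}$ — it is reflexive, antisymmetric and transitive — and that restricting a partial order to any subcollection, here $ADM\subseteq\pow\pair{A}$, yields a partial order on that subcollection. Hence $\ang{ADM,\subseteq}$ is a poset with no further argument needed. Next I would establish directed completeness: given any directed subset $\mathcal{D}\subseteq ADM$, Corollary \ref{cor:adm_dir_set} tells us that $\bigcup\mathcal{D}\in ADM$. Since $\bigcup\mathcal{D}$ is the least upper bound of $\mathcal{D}$ in the ambient complete lattice $\ang{\pow\pair{A},\subseteq}$, and it lies in $ADM$, it is \emph{a fortiori} the supremum of $\mathcal{D}$ computed within $\ang{ADM,\subseteq}$. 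Thus every directed subset of $ADM$ has a supremum in $ADM$.

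Finally I would verify pointedness by exhibiting $\es$ as the $\subseteq$-least element: $\es\in ADM$ by Corollary \ref{cor:empty_is_admissible}, and $\es\subseteq S$ for every $S\in ADM$ since $\es$ is contained in every set. Therefore $\es=\min_{\subseteq}ADM$, which makes the poset pointed. Combining (i)–(iii) gives that $\ang{ADM,\subseteq}$ is a pointed dcpo, as required.

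There is essentially no obstacle here, precisely because the content resides in Corollary \ref{cor:adm_dir_set} (whose own proof rested on the directed completeness of $CF$ via Corollary \ref{cor:CF_di_comp} together with Lemma \ref{lem:union_cf_adm_adm}). The only point demanding a moment's care is the reconciliation of suprema: one should note that the least upper bound taken in the full powerset and the least upper bound taken inside $ADM$ coincide, which holds exactly because $\bigcup\mathcal{D}$ is shown to remain inside $ADM$. Once that is remarked, the corollary follows immediately by instantiating Definition \ref{def:dcpo}.
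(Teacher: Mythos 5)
Your proof is correct and follows exactly the paper's route: the paper also derives this corollary immediately from Corollary \ref{cor:adm_dir_set} (directed completeness) and Corollary \ref{cor:empty_is_admissible} (pointedness via $\es$). Your additional remarks --- that $\subseteq$ restricts to a partial order on $ADM$ and that the supremum computed in $\pow\pair{A}$ coincides with the one in $ADM$ precisely because $\bigcup\mathcal{D}\in ADM$ --- are sound elaborations of steps the paper leaves implicit.
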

\begin{proof}
This is immediate from Corollaries \ref{cor:adm_dir_set} and \ref{cor:empty_is_admissible} (page \pageref{cor:empty_is_admissible}).
\end{proof}

\begin{corollary}\label{cor:non-empty_adm_glb}
Every non-empty set of admissible sets has a $\subseteq$-glb.
\end{corollary}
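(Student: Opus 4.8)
The plan is to construct the greatest lower bound explicitly. The naive candidate, the set-theoretic intersection $\bigcap\mathcal{S}$, will not serve directly, since $ADM$ is not closed under intersections (proved earlier as a lemma); this is exactly the obstacle I expect, and I would sidestep it by building the glb \emph{from below} as a union of admissible lower bounds rather than as an intersection.

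Let $\es\neq\mathcal{S}\subseteq ADM$. First I would collect all admissible lower bounds of $\mathcal{S}$ into
\[
\mathcal{L}:=\set{T\in ADM\:\vline\:\pair{\forall S\in\mathcal{S}}T\subseteq S},
\]
observing that $\mathcal{L}\neq\es$, since $\es\in ADM$ (Corollary \ref{cor:empty_is_admissible}) is trivially a lower bound of every member of $\mathcal{S}$. My candidate for the glb is then $L:=\bigcup\mathcal{L}$, and the two things to check are that $L\in ADM$ and that $L$ is the $\subseteq$-greatest lower bound.

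Next I would verify $L\in ADM$. Because $\mathcal{S}$ is non-empty and $\mathcal{S}\subseteq ADM\subseteq CF$ (Corollary \ref{cor:ADM_in_CF}), Corollary \ref{cor:cf_cap_closed} gives $\bigcap\mathcal{S}\in CF$. Every $T\in\mathcal{L}$ satisfies $T\subseteq S$ for all $S\in\mathcal{S}$, hence $T\subseteq\bigcap\mathcal{S}$, so $L=\bigcup\mathcal{L}\subseteq\bigcap\mathcal{S}$. By downward closure of $CF$ (Corollary \ref{cor:CF_down_closed}), $L\in CF$. Since $L$ is a union of the admissible family $\mathcal{L}$ and is itself conflict-free, Lemma \ref{lem:union_cf_adm_adm} yields $L\in ADM$. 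This admissibility step is the crux of the argument: without Lemma \ref{lem:union_cf_adm_adm} a union of admissible sets need not be admissible, and the whole construction would collapse.

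Finally I would confirm that $L$ is the glb. As $L\subseteq\bigcap\mathcal{S}$, we have $L\subseteq S$ for every $S\in\mathcal{S}$, so $L$ is a lower bound of $\mathcal{S}$ in $\ang{ADM,\subseteq}$; and for any admissible lower bound $T$ we have $T\in\mathcal{L}$ by definition, whence $T\subseteq\bigcup\mathcal{L}=L$. Thus $L$ is the greatest lower bound, completing the proof.
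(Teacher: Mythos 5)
Your proof is correct and takes essentially the same route as the paper's: both construct the glb from below as the union of all admissible lower bounds of $\mathcal{S}$, verify that this union is conflict-free, and invoke Lemma \ref{lem:union_cf_adm_adm} to obtain admissibility, with the greatest-lower-bound property following from the union's maximality among members of the lower-bound family. The only cosmetic difference is that you route conflict-freeness through $L\subseteq\bigcap\mathcal{S}$ using Corollaries \ref{cor:cf_cap_closed} and \ref{cor:CF_down_closed}, where the paper more economically observes $L\subseteq S'$ for a single $S'\in\mathcal{S}$ and applies downward closure directly.
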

\begin{proof}
Let $\es\neq\mathcal{S}\subseteq ADM$ be a non-empty set of admissible sets. We show that $\mathcal{S}$ has a $\subseteq$-glb $S\in ADM$.

Let $LB:=\set{T\in ADM\:\vline\:\pair{\forall S'\in\mathcal{S}}T\subseteq S'}$ be the set of admissible $\subseteq$-lower bounds of $\mathcal{S}$. As $\es\in LB$ by Corollary \ref{cor:empty_is_admissible}, we have $LB\neq\es$. Let $S:=\bigcup LB$.\footnote{Note that $LB=\set{\es}\neq\es$ iff $S=\es$. In such a case, $\es\in ADM$ is the $\subseteq$-glb of $\mathcal{S}$.} To show that $S\in ADM$, take any $S'\in\mathcal{S}$, which is cf. As for all $T\in LB$, $T\subseteq S'$, it follows that $S\subseteq S'$. Therefore, $S\in CF$. By Lemma \ref{lem:union_cf_adm_adm}, $S\in ADM$. Clearly, $S$ is a lower bound of $\mathcal{S}$. For any admissible $\subseteq$-lower bound of $\mathcal{S}$, say $S''$, we have $S''\in ADM$ and hence $S''\in LB$, thus $S''\subseteq S$. Therefore, $S\in ADM$ is the greatest $\subseteq$-lower bound of $\mathcal{S}$.
\end{proof}

\begin{corollary}
The poset $\ang{ADM,\subseteq}$ is a complete semilattice.
\end{corollary}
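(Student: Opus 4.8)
The plan is to read off the result directly from the definition of a complete semilattice (Definition~\ref{def:complete_SL}), which demands exactly two things: that every non-empty subset of the poset possess a greatest lower bound, and that the poset be chain complete (i.e. every $\subseteq$-chain has a least upper bound). Both of these have already been established for $\ang{ADM,\subseteq}$ in the immediately preceding results, so the proof reduces to assembling them. This mirrors precisely the structure of the earlier proof that $\ang{CF,\subseteq}$ is a complete semilattice, which combined closure under non-empty intersections with chain completeness.

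First I would discharge the glb requirement by citing Corollary~\ref{cor:non-empty_adm_glb}, which states that every non-empty set of admissible sets has a $\subseteq$-glb. Second, I would discharge the lub requirement by citing the theorem proved just above, namely that $\ang{ADM,\subseteq}$ is chain complete. Together these two facts are the definitional content of being a complete semilattice, so the conclusion follows at once.

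The honest assessment is that there is no real obstacle remaining at this point: all the substantive work was done earlier. The one subtlety worth flagging in the write-up is that, unlike the case of $CF$ (where the infimum of a family is simply its set-theoretic intersection, by Corollary~\ref{cor:cf_cap_closed}), the glb in $\ang{ADM,\subseteq}$ is \emph{not} in general computed by intersection, since $ADM$ is not closed under intersections. This is exactly why Corollary~\ref{cor:non-empty_adm_glb} needed the more careful construction of taking the union of all admissible lower bounds and invoking Lemma~\ref{lem:union_cf_adm_adm}. I would keep the final proof a single short paragraph that points to these two prior results, rather than re-deriving either property.

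\begin{proof}
By Definition~\ref{def:complete_SL}, it suffices to show that every non-empty subset of $\ang{ADM,\subseteq}$ has a greatest lower bound and that $\ang{ADM,\subseteq}$ is chain complete. The former is exactly Corollary~\ref{cor:non-empty_adm_glb}, and the latter is the preceding theorem establishing that $\ang{ADM,\subseteq}$ is chain complete. Therefore $\ang{ADM,\subseteq}$ is a complete semilattice.
\end{proof}
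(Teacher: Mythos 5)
Your proof is correct and follows essentially the same route as the paper, which likewise combines Corollary~\ref{cor:non-empty_adm_glb} with chain completeness (the paper obtains the latter via Corollary~\ref{cor:ADM_cpo1}, directed completeness, rather than citing the chain-completeness theorem directly, but this is immaterial). Your remark that the glb in $\ang{ADM,\subseteq}$ is not computed by set-theoretic intersection, unlike for $CF$, is an accurate and worthwhile observation.
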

\begin{proof}
By Corollary \ref{cor:ADM_cpo1}, $\ang{ADM,\subseteq}$ is directed complete and hence chain complete. By Corollary \ref{cor:non-empty_adm_glb}, every non-empty subset of $\ang{ADM,\subseteq}$  has a $\subseteq$-glb. The result follows.
\end{proof}

\noindent In summary, $\ang{ADM,\subseteq}$ is a complete semilattice that is also directed complete. This generalises \cite[Theorem 25(3)]{Dung:95} from complete extensions to admissible sets.

\subsection{Dung's Fundamental Lemma}

The intuition behind \textit{Dung's fundamental lemma} is that whatever one can defend, one can also incorporate into one's own knowledge in a consistent manner. This is formalised as follows:

\begin{lemma}\label{lem:dfl}
(\textbf{Dung's fundamental lemma} \cite[Lemma 10]{Dung:95}) Let $S\in ADM$ and $a,b\in d(S)$, then
\begin{enumerate}
\item $S\cup\set{a}\in ADM$ and
\item $b\in d\pair{S\cup\set{a}}$.
\end{enumerate}
\end{lemma}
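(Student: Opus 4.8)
The plan is to prove the two parts separately, noting that by Corollary \ref{cor:adm_cf_sd} the assertion $S\cup\set{a}\in ADM$ unpacks into $S\cup\set{a}\in CF$ and $S\cup\set{a}\in SD$. I expect the self-defence half of part (1) and the whole of part (2) to fall out immediately from the $\subseteq$-monotonicity of $d$ (Corollary \ref{cor:d_monotone}), so the real work will lie in verifying conflict-freeness.

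For the easy parts: since $S\subseteq S\cup\set{a}$, monotonicity of $d$ gives $d\pair{S}\subseteq d\pair{S\cup\set{a}}$. Because $S\in SD$ we have $S\subseteq d\pair{S}\subseteq d\pair{S\cup\set{a}}$, and because $a\in d\pair{S}\subseteq d\pair{S\cup\set{a}}$ the new element is also defended; together these yield $S\cup\set{a}\subseteq d\pair{S\cup\set{a}}$, i.e. $S\cup\set{a}\in SD$. The same inclusion settles part (2) at once, since $b\in d\pair{S}\subseteq d\pair{S\cup\set{a}}$.

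The heart of the argument, and the step I expect to be the main obstacle, is showing $S\cup\set{a}\in CF$. As $S$ is already conflict-free, by Theorem \ref{thm:cf_char} it suffices to rule out every attack involving the new argument $a$; concretely I would establish the three facts $a\notin S^+$ (nothing in $S$ attacks $a$), $a\notin S^-$ ($a$ attacks nothing in $S$), and $a\notin a^+$ ($a$ is not self-attacking). The first is the crux: if some $s\in S$ attacked $a$ then $s\in a^-$, and since $a\in d\pair{S}$ means $a^-\subseteq S^+$, this forces $s\in S\cap S^+$, contradicting $S\in CF$. Granting $a\notin S^+$, the other two follow quickly: if $a\in S^-$ then, using $S^-\subseteq S^+$ from $S\in SD$ (Theorem \ref{thm:sd_equiv}), we get $a\in S^+$, a contradiction; and if $a$ is self-attacking then $a\in a^-\subseteq S^+$, again contradicting $a\notin S^+$. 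With these three exclusions there is no attack inside $S\cup\set{a}$, so $S\cup\set{a}\in CF$, and combined with the self-defence already shown we conclude $S\cup\set{a}\in ADM$.

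The conceptual point worth flagging is that all the hypotheses are used essentially once: conflict-freeness of $S$ together with the defence condition $a\in d\pair{S}$ deliver $a\notin S^+$, while it is the self-defence of $S$ that blocks $a$ from attacking back into $S$. This is why the lemma genuinely needs $S$ to be admissible rather than merely conflict-free.
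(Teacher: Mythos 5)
Your proof is correct and takes essentially the same route as the paper: the self-defence half of part (1) and all of part (2) via $\subseteq$-monotonicity of $d$, and conflict-freeness of $S\cup\set{a}$ via a case analysis on attacks involving $a$, driven by $a^-\subseteq S^+$ together with $S\in CF$ and $S^-\subseteq S^+$. Your packaging --- establishing $a\notin S^+$ first and then deriving $a\notin S^-$ and $a\notin a^+$ from it --- is a slightly tidier arrangement of the paper's four-case contradiction argument, but the substance is identical.
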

\begin{proof}
In turn:
\begin{enumerate}
\item We have to show that $S\cup\set{a}\in CF\cap SD$. Note that $S\cup\set{a}\in SD$ is true because $a\in d(S)\Leftrightarrow\set{a}\subseteq d(S)$ and $d(S)\subseteq d\pair{S\cup\set{a}}$. As $S\in ADM$ and hence $S\in SD$, so $S\subseteq d(S)\subseteq d\pair{S\cup\set{a}}$, it follows that $S\cup\set{a}\subseteq d\pair{S\cup\set{a}}$. Now we need to show $d\pair{S\cup\set{a}}\in CF$. Assume for contradiction that $d\pair{S\cup\set{a}}\notin CF$. There exists $x,y\in d\pair{S\cup\set{a}}$ such that $R(x,y)$. There are four cases:
\begin{enumerate}
\item $x,y\in S$ -- this is impossible because $S\in ADM$ means $S\in CF$.
\item $x\in S$ and $y=a$ -- it follows that $a\in S^+$ so $S\cap a^-\neq\es$, but as $a\in d(S)$, $a^-\subseteq S^+$ and hence $S\cap S^+\neq\es$ -- contradiction because $S\in ADM\subseteq CF$.
\item $x=a$ and $y\in S$ -- as $S\in ADM$, $a\in S^+$, which is impossible for the same reasons as the preceding case..
\item $x=y=a$ -- if $a$ is self-defeating, then as $a\in d(S)$, $a\in S^+$, which is impossible for the same reasons as the previous two cases.
\end{enumerate}
Since all four cases lead to contradiction, it follows that $d\pair{S\cup\set{a}}\in CF$. This means $d\pair{S\cup\set{a}}\in ADM$.
\item $b\in d\pair{S}\subseteq d\pair{S\cup\set{a}}$ by Corollary \ref{cor:d_monotone} (page \pageref{cor:d_monotone}).
\end{enumerate}
This shows the result.
\end{proof}

We can generalise Dung's fundamental lemma as follows.

\begin{lemma}\label{lem:gen_dfl}
(Generalised fundamental lemma) If $S\in ADM$ and $W,V\subseteq d(S)$, then
\begin{enumerate}
\item $S':=S\cup W\in ADM$ and
\item $V\subseteq d\pair{S'}$.
\end{enumerate}
\end{lemma}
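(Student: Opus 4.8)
The plan is to prove both parts directly from the observation that $S'=S\cup W$ is sandwiched inside $d(S)$, which lets me sidestep any induction on the (possibly infinite) elements of $W$ that a naive iteration of Dung's fundamental lemma (Lemma \ref{lem:dfl}) would require. The single crucial inclusion is $S\cup W\subseteq d(S)$: this holds because $S\in ADM\subseteq SD$ gives $S\subseteq d(S)$, while $W\subseteq d(S)$ is a hypothesis, so the union of the two is contained in $d(S)$.

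For part 1, I would establish conflict-freeness and self-defence separately. For conflict-freeness, I note that $S\in ADM\subseteq CF$, so $d(S)\in CF$ by Corollary \ref{cor:d_preserves_cf}; since $CF$ is $\subseteq$-downward closed by Corollary \ref{cor:CF_down_closed} and $S'\subseteq d(S)$, I conclude $S'\in CF$. For self-defence, I use $\subseteq$-monotonicity of $d$ (Corollary \ref{cor:d_monotone}): from $S\subseteq S'$ I get $d(S)\subseteq d(S')$, and chaining with $S'\subseteq d(S)$ yields $S'\subseteq d(S)\subseteq d(S')$, i.e. $S'\in SD$. Combining the two via $ADM=CF\cap SD$ (Corollary \ref{cor:adm_cf_sd}) gives $S'\in ADM$.

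Part 2 then falls out of the same monotonicity step I already used: since $V\subseteq d(S)$ and $d(S)\subseteq d(S')$, I immediately obtain $V\subseteq d(S')$.

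I do not expect a genuine obstacle here, since the whole argument reduces to downward-closure of $CF$ plus monotonicity of $d$; the only point worth flagging is the temptation to prove part 1 by repeatedly applying Lemma \ref{lem:dfl}, adding one element of $W$ at a time, which would force a transfinite induction with a non-trivial limit case when $W$ is infinite. The ``sandwich'' argument I have outlined avoids this complication entirely, so the main care is simply to record the inclusion $S\cup W\subseteq d(S)$ explicitly before invoking the earlier closure and monotonicity results.
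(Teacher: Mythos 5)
Your proof is correct and follows essentially the same route as the paper's: both hinge on the inclusion $S\cup W\subseteq d(S)$, obtain conflict-freeness from $d(S)\in CF$ (Corollary \ref{cor:d_preserves_cf}) together with downward closure of $CF$ (Corollary \ref{cor:CF_down_closed}), obtain self-defence from the chain $S'\subseteq d(S)\subseteq d(S')$ via monotonicity of $d$ (Corollary \ref{cor:d_monotone}), and dispatch part 2 by the same monotonicity step. Your remark that this ``sandwich'' argument avoids the transfinite induction an elementwise application of Lemma \ref{lem:dfl} would require is a correct and worthwhile observation, but it does not change the substance: the argument matches the paper's.
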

\begin{proof}
In turn:
\begin{enumerate}
\item As $S\in ADM$, then $S\subseteq d(S)$. Similarly, $W\subseteq d(S)$ and hence $S':=S\cup W\subseteq d(S)\subseteq d\pair{S\cup W}$. Therefore, $S'\in SD$. Similarly, as $S\subseteq d(S)$, $S\cup d(S)=d(S)\in CF$ by Corollary \ref{cor:d_preserves_cf} (page \pageref{cor:d_preserves_cf}). As $W\subseteq d(S)$, we have that $S'=S\cup W\subseteq d(S)$ and hence $S'\in CF$ by Corollary \ref{cor:CF_down_closed} (page \pageref{cor:CF_down_closed}). Therefore, $S'\in ADM$.
\item As $S\subseteq S\cup W=S'$, we have $d(S)\subseteq d\pair{S'}$ but as $V\subseteq d(S)$, we have $V\subseteq d\pair{S'}$.
\end{enumerate}
This shows the result.
\end{proof}

\noindent The fundamental lemma is thus recovered from the generalised fundamental lemma by choosing $W$ and $V$ to be singleton sets. Therefore, Lemma \ref{lem:dfl} and Lemma \ref{lem:gen_dfl} are logically equivalent.

\subsection{Symmetric Argumentation Frameworks}

\begin{theorem}\label{thm:sym_AF_CF_ADM_equal}
An non-empty, non-trivial AF without self-attacking arguments is symmetric iff $CF = ADM$.
\end{theorem}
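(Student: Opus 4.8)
The plan is to reduce the biconditional to showing $CF \subseteq ADM$, since $ADM \subseteq CF$ holds for every AF (Corollary \ref{cor:ADM_in_CF}). The engine driving both directions is the elementary observation that in a symmetric AF one has $S^+ = S^-$ for every $S \subseteq A$: indeed $a \in S^+$ iff $R(b,a)$ for some $b \in S$, which by symmetry of $R$ is equivalent to $R(a,b)$ for some $b \in S$, i.e. $a \in S^-$. I would record this as a short preliminary step, since it is exactly where the symmetry hypothesis gets consumed.

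For the forward direction ($\Rightarrow$), I would assume the AF is symmetric and take any $S \in CF$. By the preliminary observation $S^- = S^+$, so in particular $S^- \subseteq S^+$, which by Theorem \ref{thm:sd_equiv} means $S \in SD$. Hence $S \in CF \cap SD = ADM$ by Corollary \ref{cor:adm_cf_sd}, giving $CF \subseteq ADM$ and therefore $CF = ADM$. Note that this direction does not actually invoke the no-self-attack, non-empty or non-trivial hypotheses; they are only standing assumptions.

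For the converse ($\Leftarrow$), I would argue by contraposition: assuming the AF is not symmetric, I produce a conflict-free set that fails to be admissible. Since the AF is non-trivial, $R \neq \es$, so failure of symmetry (Definition \ref{def:symmetric}) yields a genuinely asymmetric pair, i.e. arguments with $R(a,b)$ but $\neg R(b,a)$; note $a \neq b$ automatically. Consider the singleton $\set{b}$. Because the AF has no self-attacking arguments, $(b,b) \notin R$, so $\set{b}^2 \cap R = \es$ and $\set{b} \in CF$ by Theorem \ref{thm:cf_char}. However $R(a,b)$ gives $a \in \set{b}^- = b^-$, while $\neg R(b,a)$ gives $a \notin \set{b}^+ = b^+$, so $\set{b}^- \not\subseteq \set{b}^+$ and $\set{b} \notin SD$ by Theorem \ref{thm:sd_equiv}; hence $\set{b} \in CF$ but $\set{b} \notin ADM$, so $CF \neq ADM$.

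I expect the only real subtlety — rather than a genuine obstacle — to be bookkeeping about the direction of attacks and the precise role of each hypothesis: the no-self-attack assumption is exactly what guarantees the witnessing singleton $\set{b}$ is conflict-free, and non-triviality is what lets ``not symmetric'' supply an asymmetric attacking pair (non-emptiness is then redundant, being implied by $R \neq \es$). Everything else is a direct unwinding of the characterisations in Theorems \ref{thm:cf_char} and \ref{thm:sd_equiv}.
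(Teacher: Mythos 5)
Your proposal is correct and follows essentially the same route as the paper's proof: the forward direction uses symmetry of $R$ to get $S^-\subseteq S^+$ and hence $S\in SD$ via Theorem \ref{thm:sd_equiv}, and the converse argues by contraposition with the conflict-free but non-self-defending singleton $\set{b}$ drawn from an asymmetric attack $R(a,b)$, $\neg R(b,a)$. Your version merely makes explicit some details the paper leaves implicit (the verification that $\set{b}\in CF$ requires no self-attacks, and the observation that non-emptiness follows from $R\neq\es$), which is a harmless refinement rather than a different approach.
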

\begin{proof}
($\Rightarrow$) (From \cite[Proposition 4]{Coste:05}) From Corollary \ref{cor:adm_cf_sd}, we have $ADM\subseteq CF$ for all AFs. Therefore, it is sufficient to show $CF\subseteq ADM$ for symmetric AFs.

Let $S\in CF$. We need to show $S\in SD$, which by Theorem \ref{thm:sd_equiv} (page \pageref{thm:sd_equiv}) is equivalent to $S^-\subseteq S^+$. Then $S\in SD$ follows, because
\begin{align}
a\in S^-\Leftrightarrow\pair{\exists b\in S}R(a,b)\Rightarrow\pair{\exists b\in S}R(b,a)\Leftrightarrow a\in S^+,
\end{align}
by symmetry of $R$. Therefore, $S\in SD$ and hence $S\in ADM$. The result follows as $S$ is arbitrary.

($\Leftarrow$, contrapositive) Assume our AF $\ang{A,R}$ is not symmetric. As we have assumed our AF is neither empty nor trivial and has no self-attacking arguments, then $R$ is a non-empty non-symmetric relation. There exists $a,b\in A$ such that $R(a,b)$ and $\neg R(b,a)$. Clearly, $\set{b}\in CF$ and $\set{b}\notin ADM$, as it is not self-defending. Therefore, $CF\neq ADM$.
\end{proof}

\subsection{Summary}

\begin{itemize}
\item $S\subseteq A$ is admissible iff it is both conflict-free and self-defending. We denote the set of all admissible sets of an AF $\mathcal{A}$ $ADM(\mathcal{A})$ or just $ADM$.
\item For any AF, we have $\es,\:U\in ADM$, and $ADM\subset CF$. Further, $d:ADM\to ADM$ is well-defined.
\item $ADM$ is not closed under unions or intersections.
\item For any family of sets $\mathcal{S}\subseteq ADM$, if $\bigcup\mathcal{S}\in CF$ then $\bigcup\mathcal{S}\in ADM$.
\item If $S\in ADM$ then the limit of the chain $\set{d^k\pair{S}}_{k\in\nat}$ is also in $ADM$.
\item $\ang{ADM,\subseteq}$ is a complete semilattice that is also directed complete, and $\max_{\subseteq}ADM\neq\es$.
\item Every non-empty family of admissible sets has a $\subseteq$-glb that is also admissible.
\item If $S\in ADM$ and $W,V\subseteq d(S)$, then $S\cup W\in ADM$ and $V\subseteq d\pair{S\cup W}$.
\item An AF is symmetric iff $CF=ADM$. This assumes the AF is non-empty, non-trivial and has no self-attacking arguments.
\end{itemize}




\newpage

\section{Complete Extensions}\label{sec:COMP}

\subsection{Definition}

%

\begin{definition}
\cite[Definition 23 and Lemma 24]{Dung:95} $S\subseteq A$ is a \textbf{complete extension} iff $S\in CF\cap F_d$.
\end{definition}

\noindent Intuitively, complete extensions are stronger than admissible sets because a complete extension demands that you believe everything that you can defend while still maintaining consistency.

\begin{definition}
Given an underlying AF, let $COMP$ denote the set of all complete extensions.
\end{definition}

\noindent If the underlying AF $\mathcal{A}$ needs to be explicitly specified, we can write $COMP\pair{\mathcal{A}}$.

\begin{example}
(Example \ref{eg:BH_eg221} continued, page \pageref{eg:BH_eg221}) Recall we have arguments $A=\set{a,b,c,f,e}$ and attacks $R=\set{(a,b),(c,b),(c,f),(f,c),(f,e),(e,e)}.$ We have\[COMP = \set{\set{a},\set{a,c},\set{a,f}}.\]
\end{example}

\begin{example}\label{eg:fri_COMP}
(Example \ref{eg:fri} continued, page \pageref{eg:fri}) It can be shown that $COMP=\set{\es,\set{a,e},\set{b,e}}$.
\end{example}

\begin{example}
(Example \ref{eg:bi_inf} continued, page \pageref{eg:bi_inf}). We show that $COMP=\set{\es,\set{a_i}_{i\in\integ},\set{b_i}_{i\in\integ}}$. Clearly, $\es\in COMP$ because there are no unattacked arguments. Also, $\set{a_i}_{i\in\integ}$ is a complete extension because it is in $ADM$ and every argument that is defended by $\set{a_i}_{i\in\integ}$ also belongs to $\set{a_i}_{i\in\integ}$. The same can be argued for $\set{b_i}_{i\in\integ}$.
\end{example}

\subsection{Existence}

\subsubsection{Existence from Fixed Point Theory}\label{sec:comp_exists_fp}

For each AF, complete extensions exist. To prove that $COMP\neq\es$, we first prove the following theorem.

\begin{theorem}\label{thm:grounded_def2}
\cite[Theorem 25(2)]{Dung:95} Let $\ang{A,R}$ be an arbitrary AF with defence function $d:\pow\pair{A}\to\pow\pair{A}$. The following two statements are equivalent.
\begin{enumerate}
\item $S$ is the $\subseteq$-least complete extension.
\item $S$ is the $\subseteq$-least element of $F_d$.
\end{enumerate}
\end{theorem}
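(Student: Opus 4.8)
The plan is to produce a single set $G\subseteq A$ that is simultaneously the $\subseteq$-least element of $F_d$ and the $\subseteq$-least complete extension; the equivalence then falls out of the uniqueness of least elements in a poset. First I would invoke that $\ang{F_d,\subseteq}$ is a complete lattice (established earlier via Knaster--Tarski), so it possesses a $\subseteq$-least element $G$, which is by definition the least fixed point of $d$. The whole theorem reduces to showing that this $G$ is conflict-free, for then $G\in CF\cap F_d=COMP$ is a bona fide complete extension.

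The crux is precisely the step $G\in CF$. Since $d$ is $\subseteq$-monotone and $\es$ is the bottom of $\pow\pair{A}$, the least fixed point of $d$ is reached by transfinite iteration from $\es$: the increasing chain $\es\subseteq d\pair{\es}\subseteq d^2\pair{\es}\subseteq\cdots$ stabilises at some ordinal $\lambda$ with $G=d^{\lambda}\pair{\es}$. That this iterate really is the least fixed point follows from monotonicity, since for any fixed point $F$ we have $\es\subseteq F$, whence $d^{\alpha}\pair{\es}\subseteq d^{\alpha}\pair{F}=F$ by (transfinite) induction, so $G\subseteq F$. Now, because $\es\in CF$, Theorem \ref{thm:d_presv_CF} applied with $\beta=\lambda+1$ yields $d^{\alpha}\pair{\es}\in CF$ for every $\alpha\le\lambda$, and in particular $G=d^{\lambda}\pair{\es}\in CF$. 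Hence $G$ is a complete extension.

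It then remains to check that $G$ is the $\subseteq$-least complete extension and to assemble the equivalence. Any complete extension $S'$ lies in $CF\cap F_d\subseteq F_d$, so $G\subseteq S'$ because $G$ is the least element of $F_d$; and $G$ is itself a complete extension, so it is the least one. Since least elements are unique when they exist, and $G$ witnesses both properties, the two statements coincide: if $S$ is the least complete extension then $S=G$ is the least element of $F_d$, and conversely if $S$ is the least element of $F_d$ then $S=G$ is the least complete extension. The main obstacle is the middle paragraph --- confirming that the least fixed point of $d$ is conflict-free --- which is exactly where the transfinite-iteration characterisation of the least fixed point and the $CF$-preservation result Theorem \ref{thm:d_presv_CF} carry the argument; the remaining steps are routine bookkeeping about least elements.
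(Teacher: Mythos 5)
Your proof is correct, but it takes a genuinely different route from the paper's. The paper proves the equivalence by a double contrapositive case analysis: assuming $S$ fails to be the least element of one collection, it walks through cases to show $S$ fails for the other. In the delicate case ($S\notin CF$ yet $S$ is the least fixed point of $d$), the paper derives $F_d\cap CF=\es$, i.e. $COMP=\es$, and dismisses this by remarking that $COMP\neq\es$ ``for arbitrary AFs'', citing only a single example --- which does not establish $F_d\cap CF\neq\es$ for the particular arbitrary AF under consideration, especially since the paper's own existence result $COMP\neq\es$ (Corollary \ref{cor:comp_exists}) is deduced \emph{from} this theorem. Your approach sidesteps this weakness entirely: by realising the least fixed point as $G=d^{\lambda}\pair{\es}$ via transfinite iteration and invoking Theorem \ref{thm:d_presv_CF} with the base $\es\in CF$, you prove directly that the least element of $F_d$ is conflict-free, hence a complete extension, and the equivalence then reduces to uniqueness of least elements. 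This buys two things the paper's argument does not: a non-circular proof, and the existence statement $COMP\neq\es$ as an immediate byproduct rather than a presupposition.

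One small point worth making explicit: Theorem \ref{thm:d_presv_CF} (via Lemma \ref{lem:ordinal_iteration_d_monotone}) tacitly requires the ordinal iterates of the base set to form an increasing chain, which fails for arbitrary $S$ (e.g. $S=\set{b}$ in simple reinstatement has $S\not\subseteq d(S)$). Your instantiation at $S=\es$ is safe, since $\es\subseteq d\pair{\es}$ holds trivially and monotonicity of $d$ then propagates $d^{\alpha}\pair{\es}\subseteq d^{\alpha+1}\pair{\es}$ through successors and limits; a one-line remark to that effect would make the appeal airtight. The stabilisation of the chain at some ordinal $\lambda$ (each strict increase consumes an element of $A$) and the leastness argument $d^{\alpha}\pair{\es}\subseteq d^{\alpha}\pair{F}=F$ for any fixed point $F$ are both sound as you state them.
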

\begin{proof}
($1\Rightarrow 2$, contrapositive) Assume that $S$ is \textit{not} the $\subseteq$-least element of $F_d$. Either $S\notin F_d$, or $S\in F_d$ and $S$ is not the $\subseteq$-least element of $F_d$. 
\begin{enumerate}
\item If $S\notin F_d$, then $d(S)\neq S$ and hence $S\notin COMP$. Therefore, $S$ cannot be the $\subseteq$-least complete extension.
\item If $S\in F_d$ and $S$ is not the $\subseteq$-least element of $F_d$, then $\pair{\exists T\in F_d}T\subset S$ by definition of $\subseteq$-least. Either $T\in CF$ or $T\notin CF$.
\begin{enumerate}
\item If $T\in CF$, then $T\in COMP$. As $T\subset S$, $S$ cannot be the $\subseteq$-least complete extension.
\item If $T\notin CF$, then as $T\subseteq S$, $S\notin CF$ (Corollary \ref{cor:non_CF_up_closed}, page \pageref{cor:non_CF_up_closed}) either so $S\notin COMP$. Therefore, $S$ cannot be the $\subseteq$-least complete extension.
\end{enumerate}
\end{enumerate}
\noindent In all cases, $S$ cannot be the $\subseteq$-least complete extension.

($2\Rightarrow 1$, contrapositive) Assume $S$ is \textit{not} the $\subseteq$-least complete extension, then either $S\notin COMP$, or $S\in COMP$ and $S$ is not $\subseteq$-least.
\begin{enumerate}
\item If $S\in COMP$ and $S$ is not $\subseteq$-least, then $\pair{\exists T\in COMP}T\subset S$, but as $S,T\in COMP$, we have $S,T\in F_d$ and hence $S$ is not the $\subseteq$-least element of $F_d$, because $T\subset S$.
\item If $S\notin COMP$, then either $S\notin CF$ or $S\notin F_d$.
\begin{enumerate}
\item If $S\notin F_d$, then $S$ cannot be the $\subseteq$-least element of $F_d$.
\item If $S\notin CF$, then assume for contradiction that $S$ \textit{is} the $\subseteq$-least fixed point of $d$. Therefore, $S\in F_d$ and $\pair{\forall T\in F_d}S\subseteq T$. It follows that $\pair{\forall T\in F_d}T\notin CF$, because any superset of a non-cf set cannot be cf (Corollary \ref{cor:non_CF_up_closed}, page \pageref{cor:non_CF_up_closed}). It follows that $F_d\cap CF=\es$, which means $COMP=\es$. However, we have assumed that the underlying AF is \textit{arbitrary}. It cannot be true that $COMP=\es$ for arbitrary AFs. For example, in Example \ref{eg:fri} (page \pageref{eg:fri}), we have an AF where $COMP\neq\es$. Therefore, $S$ is \textit{not} the $\subseteq$-least fixed point of $d$.
\end{enumerate}
\end{enumerate}
\noindent In all cases, $S$ is not the $\subseteq$-least fixed point of $d$.

The result follows.
\end{proof}

\begin{corollary}\label{cor:comp_exists}
For any $AF$, $COMP\neq\es$.
\end{corollary}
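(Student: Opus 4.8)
The plan is to derive $COMP\neq\es$ directly from Theorem \ref{thm:grounded_def2} by exhibiting one concrete complete extension, namely the $\subseteq$-least fixed point of $d$. Almost all of the work has already been done: Theorem \ref{thm:grounded_def2} tells us that being the $\subseteq$-least element of $F_d$ coincides with being the $\subseteq$-least complete extension, so all that remains is to guarantee that $F_d$ actually has a $\subseteq$-least element, and this is supplied by the fixed-point machinery set up earlier.

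First I would recall that $d:\pow\pair{A}\to\pow\pair{A}$ is $\subseteq$-monotone (Corollary \ref{cor:d_monotone}) and that $\ang{\pow\pair{A},\subseteq}$ is a complete lattice. By the Knaster--Tarski theorem (Theorem \ref{thm:KT}), $\ang{F_d,\subseteq}$ is itself a complete lattice; in particular it is non-empty and, being complete and hence bounded, possesses a $\subseteq$-least element, which I will call $S$.

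Next I would apply the implication $(2)\Rightarrow(1)$ of Theorem \ref{thm:grounded_def2} to this $S$: since $S$ is the $\subseteq$-least element of $F_d$, it is the $\subseteq$-least complete extension. In particular $S\in COMP$, whence $COMP\neq\es$, as required.

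The only point that needs care is a matter of phrasing rather than mathematics: the locution ``$\subseteq$-least complete extension'' tacitly presupposes that complete extensions exist, so one must read the conclusion of Theorem \ref{thm:grounded_def2} as asserting that $S$ is both a complete extension and the least such. Once this is understood there is no genuine obstacle — the existence of $S$ follows immediately from Knaster--Tarski, and what upgrades $S$ from a mere member of $F_d$ to a genuine element of $COMP$ (i.e. its conflict-freeness) is precisely what the equivalence of Theorem \ref{thm:grounded_def2} secures. As a sanity check, this $\subseteq$-least fixed point is exactly the grounded extension, and the argument confirms that it is always well-defined and conflict-free.
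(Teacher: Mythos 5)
Your proposal is correct and follows essentially the same route as the paper: the paper's proof likewise takes the $\subseteq$-least fixed point of $d$ (whose existence it has already secured via the Knaster--Tarski theorem, Theorem \ref{thm:KT}, applied to the $\subseteq$-monotone $d$ on the complete lattice $\ang{\pow\pair{A},\subseteq}$) and invokes Theorem \ref{thm:grounded_def2} to identify it as the $\subseteq$-least complete extension, hence an element of $COMP$. Your closing remark about the phrasing of Theorem \ref{thm:grounded_def2} is a fair observation, but it does not change the argument, which matches the paper's.
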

\begin{proof}
Given an AF, let $d$ be its defence function. The least fixed point of $d$ exists, call it $G$. By Theorem \ref{thm:grounded_def2}, $G$ is also the $\subseteq$-least complete extension. Therefore, $G\in COMP$, hence $COMP\neq\es$.
\end{proof}

\begin{definition}\label{def:pre_grounded}
For an AF $\ang{A,R}$, let $G\subseteq A$ denote the $\subseteq$-least complete extension.
\end{definition}

\begin{corollary}\label{cor:cap_comp_is_G}
We have that $G=\bigcap COMP$.
\end{corollary}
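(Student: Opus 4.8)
The plan is to prove the set equality $G=\bigcap COMP$ by double inclusion, relying only on the fact that $G$ is the $\subseteq$-least element of $COMP$, as established in Definition \ref{def:pre_grounded} together with Theorem \ref{thm:grounded_def2} and Corollary \ref{cor:comp_exists}. Before starting the inclusions I would record that $COMP\neq\es$ by Corollary \ref{cor:comp_exists}, so that $\bigcap COMP$ is a genuine intersection over a non-empty family; this is the one hypothesis that must not be overlooked, since the intersection over the empty family would conventionally be $A$ and the identity could then fail.

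For the inclusion $\bigcap COMP\subseteq G$, I would observe that $G$ is itself a complete extension, i.e. $G\in COMP$, so $G$ is one of the sets appearing in the intersection. Hence $\bigcap COMP\subseteq G$ follows immediately, because the intersection of a family of sets is contained in each member of that family.

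For the reverse inclusion $G\subseteq\bigcap COMP$, I would use that $G$ is the $\subseteq$-\emph{least} complete extension, which by definition means $G\subseteq S$ for every $S\in COMP$. Consequently any $x\in G$ lies in every $S\in COMP$, so $x\in\bigcap COMP$, giving $G\subseteq\bigcap COMP$. Combining the two inclusions yields $G=\bigcap COMP$.

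I do not anticipate a genuine obstacle: this is the standard observation that the least element of a non-empty family coincides with that family's intersection, and both inclusions are one-line consequences of the definition of ``$\subseteq$-least''. The only subtle point is the non-emptiness of $COMP$, which is precisely why Corollary \ref{cor:comp_exists} is invoked; all the real work has already been done in establishing that $G$ exists and is least.
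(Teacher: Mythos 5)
Your proof is correct and follows essentially the same route as the paper's: both directions are obtained by double inclusion, using that $G\in COMP$ for $\bigcap COMP\subseteq G$ and that $G$ is the $\subseteq$-least complete extension for $G\subseteq\bigcap COMP$. Your explicit remark that $COMP\neq\es$ (Corollary \ref{cor:comp_exists}) guards the intersection against the empty-family convention is a careful addition the paper leaves implicit, but it does not change the argument.
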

\begin{proof}
As $G$ is the $\subseteq$-least complete extension, we have $\pair{\forall C\in COMP}G\subseteq C$ and hence $G\subseteq\bigcap_{C\in COMP}C=\bigcap COMP$. Therefore, $G\subseteq\bigcap COMP$. Now let $a\in\bigcap COMP$, then for all $C\in COMP$, $a\in C$, in particular as $G\in COMP$, $a\in G$. Therefore, as $a$ is arbitrary, $\bigcap COMP\subseteq G$. The result follows.
\end{proof}

\begin{corollary}\label{cor:empty_grounded}
$U=\es$ iff $G=\es$.
\end{corollary}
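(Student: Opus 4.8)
The plan is to route the whole argument through the fixed-point characterisation of $G$. By Definition \ref{def:pre_grounded}, $G$ is the $\subseteq$-least complete extension, and by Theorem \ref{thm:grounded_def2} this coincides with the $\subseteq$-least element of $F_d$; so throughout I would treat $G$ simply as the $\subseteq$-least fixed point of $d$. The only other ingredient needed is Corollary \ref{cor:unattacked_d_es}, which gives $U = d\pair{\es}$. This identity is what ties $U$ to the behaviour of $d$ at the bottom of the lattice $\ang{\pow\pair{A},\subseteq}$, and it is really the crux of both directions.

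For the direction $G=\es \Rightarrow U=\es$, I would note that $G$ is by definition a fixed point of $d$, so $d(G)=G$; substituting $G=\es$ yields $d\pair{\es}=\es$, and by Corollary \ref{cor:unattacked_d_es} this says precisely $U=\es$. For the converse $U=\es \Rightarrow G=\es$, the key observation is that $U=\es$ means $d\pair{\es}=\es$ (again by Corollary \ref{cor:unattacked_d_es}), i.e. $\es$ is itself a fixed point of $d$. Since $\es$ is the $\subseteq$-least element of the entire powerset, it is \emph{a fortiori} the $\subseteq$-least element of $F_d$, and hence equals $G$ by Theorem \ref{thm:grounded_def2}.

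I do not expect a genuine obstacle here: once one recalls $U=d\pair{\es}$ and that $G$ is the $\subseteq$-least fixed point of $d$, the statement collapses to the trivial fact that the bottom element $\es$ is the least fixed point of $d$ exactly when it is a fixed point at all. The only step requiring a moment's care is invoking Theorem \ref{thm:grounded_def2} (via Definition \ref{def:pre_grounded}) to legitimately replace ``$\subseteq$-least complete extension'' by ``$\subseteq$-least fixed point of $d$'', rather than trying to argue directly from the definition of a complete extension as a conflict-free fixed point.
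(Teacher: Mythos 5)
Your proof is correct and takes essentially the same approach as the paper: both route through the identity $U=d\pair{\es}$ and the characterisation of $G$ as the $\subseteq$-least fixed point of $d$ (Theorem \ref{thm:grounded_def2}). The only cosmetic difference is in the direction $G=\es\Rightarrow U=\es$, where the paper invokes $U\subseteq G$ while you substitute $G=\es$ directly into the fixed-point equation $d(G)=G$ to get $U=d\pair{\es}=\es$ --- an equally valid, and if anything slightly more self-contained, step.
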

\begin{proof}
($\Rightarrow$) If $U=\es$ then as $U=d\pair{\es}$, $\es$ is the least fixed point of $d$ by definition hence $G=\es$.

($\Leftarrow$) If $G=\es$, then as $U\subseteq G=\es$, we must have $U=\es$.
\end{proof}

\begin{corollary}\label{cor:U_subset_COMP}
Let $U$ be the set of undefeated arguments. For any $C\in COMP$, we have $U\subseteq C$.
\end{corollary}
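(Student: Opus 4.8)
The plan is to observe that every complete extension is in particular a fixed point of the defence function, and then to invoke the fact that the unattacked arguments lie inside the image of $d$ applied to any set. These two facts combine immediately.

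Concretely, I would first let $C\in COMP$ be arbitrary. By definition a complete extension satisfies $C\in CF\cap F_d$, and in particular $C\in F_d$ means $d(C)=C$. Next I would apply Corollary \ref{cor:iterate_d_get_U} (page \pageref{cor:iterate_d_get_U}), which states that $U\subseteq d(S)$ for \emph{any} $S\subseteq A$; instantiating $S:=C$ gives $U\subseteq d(C)$. Combining these, $U\subseteq d(C)=C$, and since $C$ was arbitrary this establishes $U\subseteq C$ for all $C\in COMP$.

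An equally clean alternative route would go through the $\subseteq$-least complete extension $G$. Since $\es\subseteq G$ and $d$ is $\subseteq$-monotone (Corollary \ref{cor:d_monotone}), we get $U=d(\es)\subseteq d(G)=G$ (using $U=d(\es)$ from Corollary \ref{cor:unattacked_d_es}). Then by Corollary \ref{cor:cap_comp_is_G} we have $G=\bigcap COMP\subseteq C$ for every $C\in COMP$, so $U\subseteq G\subseteq C$. This is essentially the inclusion $U\subseteq G$ that was already used implicitly in the proof of Corollary \ref{cor:empty_grounded}.

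I expect there to be no real obstacle here: the statement is a one-line consequence of the definition of a complete extension together with the already-established monotonicity/image property of $d$. The only thing to be careful about is citing the correct prior result, namely that $U\subseteq d(S)$ holds unconditionally (Corollary \ref{cor:iterate_d_get_U}), rather than re-deriving it; the intuition, already flagged in the text, is that unattacked arguments need not be defended by anything and so are always among the defended (hence always retained by a complete extension).
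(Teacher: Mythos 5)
Your main argument is exactly the paper's proof: instantiate Corollary \ref{cor:iterate_d_get_U} with $S:=C$ to get $U\subseteq d(C)$, then use $d(C)=C$ from the definition of a complete extension. The alternative route via $U\subseteq G=\bigcap COMP$ is also sound (all cited results precede this corollary), but it is just a detour through the same two facts, so there is nothing to add.
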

\begin{proof}
Recall that for any $S\subseteq A$, $U\subseteq d(S)$. Let $C\in COMP$, then $d(C)=C$ and hence $U\subseteq C$.
\end{proof}

\noindent This result shows that every complete extension will have at least the unattacked arguments. The additional arguments are those that can be indirectly defended by the unattacked arguments.

Trivially, if there are unattacked arguments then the complete extensions cannot be empty.

\begin{corollary}
If $U\neq\es$, then $\es\notin COMP$.
\end{corollary}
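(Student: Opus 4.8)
The plan is to prove the contrapositive: I will assume $\es\in COMP$ and deduce that $U=\es$. Recall that $S$ is a complete extension iff $S\in CF\cap F_d$, so membership $\es\in COMP$ in particular forces $\es\in F_d$, i.e. $d\pair{\es}=\es$.

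The key observation is that $d\pair{\es}$ has already been computed: by Corollary \ref{cor:unattacked_d_es} (page \pageref{cor:unattacked_d_es}) we have $d\pair{\es}=U$. Combining this with $d\pair{\es}=\es$ gives $U=\es$ directly. Taking the contrapositive, if $U\neq\es$ then $\es\notin COMP$, which is exactly the claim.

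Alternatively, one could appeal to Corollary \ref{cor:U_subset_COMP} (page \pageref{cor:U_subset_COMP}), which states that $U\subseteq C$ for every $C\in COMP$: if $\es\in COMP$ then $U\subseteq\es$, hence $U=\es$. Either route is essentially a one-line deduction, so there is no genuine obstacle here; the only thing to be careful about is citing the correct prior result ($d\pair{\es}=U$ rather than $n\pair{\es}=A$) and noting that conflict-freeness of $\es$ plays no role in the argument. I would present the fixed-point version as the main proof since it is the most economical.

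\begin{proof}
(Contrapositive.) Suppose $\es\in COMP$. Then $\es\in F_d$, so $d\pair{\es}=\es$. By Corollary \ref{cor:unattacked_d_es} (page \pageref{cor:unattacked_d_es}), $d\pair{\es}=U$, whence $U=\es$. Therefore, if $U\neq\es$ then $\es\notin COMP$.
\end{proof}
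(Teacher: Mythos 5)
Your proof is correct, but your main argument takes a different (and slightly more economical) route than the paper. The paper argues directly: by Corollary \ref{cor:U_subset_COMP}, $U\subseteq C$ for every $C\in COMP$, so $U\neq\es$ forces every complete extension to be non-empty --- which is precisely your ``alternative'' route, so you anticipated the paper's proof exactly. Your primary argument instead works at the level of the fixed-point condition itself: $\es\in COMP$ forces $d\pair{\es}=\es$, and since $d\pair{\es}=U$ (Corollary \ref{cor:unattacked_d_es}), this immediately yields $U=\es$. This is marginally more elementary, since Corollary \ref{cor:U_subset_COMP} is itself proved from $U=d\pair{\es}$ plus the $\subseteq$-monotonicity of $d$ (to get $U\subseteq d(C)=C$), whereas your argument needs only the single equality $d\pair{\es}=U$ and no monotonicity. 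What the paper's route buys in exchange is generality: it isolates the reusable fact that \emph{every} complete extension contains $U$, of which the present corollary is an instant special case. Your observation that conflict-freeness of $\es$ plays no role is also accurate --- only membership in $F_d$ is used in either approach.
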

\begin{proof}
If $U\neq\es$, then as $U\subseteq C$ for all $C\in COMP$, $C\neq\es$ for all $C\in COMP$. Hence, $\es\notin COMP$.
\end{proof}

\begin{example}
\cite[Exercise 4(a)]{Caminada:08} From Figure \ref{fig:Caminada_ex2} (page \pageref{fig:Caminada_ex2}), we show that $COMP=\set{\set{a,c}}$. This is because $U=\set{a}$ and $c$ is reinstated by $a$. There are no other complete extensions.
\end{example}

\begin{example}
\cite[Exercise 4(b)]{Caminada:08} From Figure \ref{fig:Caminada_ex6b} (page \pageref{fig:Caminada_ex6b}) we can see that $COMP=\set{\set{a},\set{a,c,f},\set{a,e}}$. This is because $U=\set{a}$, and $d(\set{a})=\set{a}$ because $c$ is not reinstated by $a$ due to the attack on $c$ from $e$. As $a$ is not self-attacking, we conclude that $\set{a}\in COMP$. Further, $\set{a,e}$ is a complete extension because it is conflict-free and conatins all arguments it can defend by defending against the attack $c$ against $e$. Similarly, $\set{a,c,f}$ is a complete extension because it is conflict-free and it contains all arguments it defends by defending against the attacks $b$ against $c$ and $e$ against $f$.
\end{example}

\begin{example}
\cite[Exercise 4(c)]{Caminada:08} From Figure \ref{fig:Caminada_ex6c} (page \pageref{fig:Caminada_ex6c}) we can see that $COMP=\set{\set{b,e}}$, because $b$ is unattacked and $e$ is reinstated, and there are no further arguments defended. Note $a$ is self-attacking and cannot be in any complete extension. This is the only complete extension.
\end{example}

\begin{example}
\cite[Exercise 4(d)]{Caminada:08} From Figure \ref{fig:Caminada_ex3} (page \pageref{fig:Caminada_ex3}), we have that $COMP=\set{\es,\set{a},\set{b,e}}$. Firstly, $\es\in CF$ and there are no unattacked arguments, so $d(\es)=\es$ and hence $\es\in COMP$. The set $\set{a}\in CF$ and defends only itself as $c$ is not reinstated by $a$ due to the attack from $f$. Finally, $\set{b,e}\in CF$ and defends exactly itself by having $b$ attacking $a$ and $c$ and that $e$ attacks $f$.
\end{example}

\subsubsection{Existence from Admissible Sets}

\begin{theorem}\label{thm:iterate_d_adm_to_comp}
Let our AF be $\ang{A,R}$ and let $S\in ADM$. Let $\beta$ be a sufficiently large ordinal number such that the ordinal-indexed sequence $\set{d^{\alpha}\pair{S}}_{\alpha<\beta}$ has stabilised, for a given cardinal number $\abs{A}$. We have that
\begin{align}
\bigcup_{\alpha<\beta}d^{\alpha}\pair{S}\in COMP.
\end{align}
\end{theorem}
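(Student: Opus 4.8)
The plan is to show that the transfinite limit $T := \bigcup_{\alpha<\beta} d^{\alpha}(S)$ lies in $COMP = CF \cap F_d$; that is, I must check that $T$ is both conflict-free and a fixed point of $d$. The structural fact that makes everything work is that $S \in ADM \subseteq SD$, so $S$ is a postfixed point of $d$ (i.e. $S \subseteq d(S)$). By Lemma \ref{lem:ordinal_iteration_d_monotone} this guarantees the ordinal-indexed family $\set{d^{\alpha}(S)}_{\alpha<\beta}$ is a $\subseteq$-ascending chain, and this ascending behaviour together with the stabilisation hypothesis drives the whole argument.

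For conflict-freeness, I would invoke Theorem \ref{thm:d_presv_CF} directly. Since $S \in ADM \subseteq CF$, every iterate $d^{\alpha}(S)$ with $\alpha<\beta$ is conflict-free; moreover the limit-ordinal case of that theorem already establishes that a transfinite union of such iterates is itself conflict-free. Hence $T \in CF$ with essentially no further work, since $T$ is exactly such a union.

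For the fixed-point property I would unpack what ``stabilised'' means. Because $\set{d^{\alpha}(S)}$ is $\subseteq$-ascending and bounded above by $A$ in the complete lattice $\ang{\pow\pair{A},\subseteq}$, it cannot strictly increase for more steps than the cardinality of $\pow\pair{A}$ permits, so there is a least ordinal $\lambda$ with $d^{\lambda}(S) = d^{\lambda+1}(S)$; this is precisely the point of stabilisation, and the choice of $\beta$ being ``sufficiently large'' means $\beta > \lambda$. For every $\alpha$ with $\lambda \leq \alpha < \beta$ the iterate is constant and equal to $d^{\lambda}(S)$, so the ascending union collapses to $T = d^{\lambda}(S)$. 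Then $d(T) = d(d^{\lambda}(S)) = d^{\lambda+1}(S) = d^{\lambda}(S) = T$, so $T \in F_d$. Combining this with the previous paragraph gives $T \in CF \cap F_d = COMP$, as required.

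I expect the main obstacle to lie entirely in the transfinite bookkeeping rather than in any deep content: making the stabilisation step precise (the cardinality/boundedness argument guaranteeing that some $\lambda$ exists), verifying that the union over $\alpha<\beta$ genuinely coincides with the stabilised value $d^{\lambda}(S)$, and confirming that Lemma \ref{lem:ordinal_iteration_d_monotone} applies because $S$ is a postfixed point. Since all conflict-freeness concerns are already packaged into Theorem \ref{thm:d_presv_CF}, the substantive work reduces to this stabilisation argument.
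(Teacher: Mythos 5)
Your proof is correct, but its fixed-point half takes a genuinely different route from the paper's. The paper first applies Theorem \ref{thm:TFI_iterate_d_ADM} to conclude that the limit $L:=\bigcup_{\alpha<\beta}d^{\alpha}\pair{S}$ is admissible (giving $L\in CF$ and $L\subseteq d(L)$ in one stroke), and then establishes the converse inclusion $d(L)\subseteq L$ by an element-wise chase: $a\in d(L)$ unwinds to $a^-\subseteq\bigcup_{\alpha<\beta}\sqbra{d^{\alpha}\pair{S}}^+$, and the chain property is invoked (Footnote \ref{fn:chain_reasoning}) to replace the generalised union by a single index, whence $a\in d^{\alpha+1}\pair{S}\subseteq L$. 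That single-index selection is the delicate point of the paper's argument when $a^-$ is infinite, since the witnessing indices could a priori be cofinal in $\beta$; the stabilisation hypothesis is what keeps it harmless. You instead spend the stabilisation hypothesis up front: since $S\in SD$ makes the iterates an ascending chain (you are right to flag that Lemma \ref{lem:ordinal_iteration_d_monotone} silently needs $S$ to be a postfixed point of $d$, as its successor step fails for arbitrary $S$), a strictly ascending chain in $\pow\pair{A}$ must halt at some least $\lambda<\beta$ with $d^{\lambda}\pair{S}=d^{\lambda+1}\pair{S}$, the union collapses to the single iterate $d^{\lambda}\pair{S}$, and $d(T)=T$ becomes a one-line computation; conflict-freeness then comes from Theorem \ref{thm:d_presv_CF} alone, so you never need admissibility of the limit, only $T\in CF\cap F_d=COMP$. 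Your decomposition buys a cleaner argument that sidesteps the cofinality subtlety entirely and makes explicit the cardinality bound hiding in the theorem's phrase ``sufficiently large''; the paper's element-wise computation buys reusability, since the same manipulation of $d$ across a generalised union is deployed again for arbitrary chains in Theorem \ref{thm:COMP_chain_comp}, where no stabilisation ordinal is available to collapse the union to a single term.
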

\begin{proof}
By Theorem \ref{thm:TFI_iterate_d_ADM} (page \pageref{thm:TFI_iterate_d_ADM}), the limit $L:=\bigcup_{\alpha<\beta}d^{\alpha}\pair{S}\in ADM$. Therefore we need to show this limit is in $COMP$. It is sufficient to show $d(L)\subseteq L$. For $a\in A$,
\begin{align*}
a\in d(L)\Leftrightarrow& a^-\subseteq L^+\\
\Leftrightarrow& a^-\subseteq\pair{\bigcup_{\alpha<\beta}d^{\alpha}\pair{S}}^+\\
\Leftrightarrow& a^-\subseteq\bigcup_{\alpha<\beta}\sqbra{d^{\alpha}\pair{S}}^+\text{ by Corollary \ref{cor:cup_cap_plus_minus} (page \pageref{cor:cup_cap_plus_minus}),}\\
\Leftrightarrow&\pair{\exists\alpha<\beta}a^-\subseteq\sqbra{d^{\alpha}\pair{S}}^+\text{, ``$\Rightarrow$'' follows as $\set{d^{\alpha}\pair{S}}_{\alpha<\beta}$ is a chain,\footnotemark}\\
\Leftrightarrow&\pair{\exists\alpha<\beta}a\in d^{\pair{\alpha+1}}\pair{S}\text{ by definition of $d$,}\\
\Leftrightarrow&\pair{\exists\alpha<\beta}a\in d^{\alpha}\pair{S}\text{ by definition of $\exists$,}\\
\Leftrightarrow&a\in\bigcup_{\alpha<\beta}d^{\alpha}\pair{S}=L,
\end{align*} \footnotetext{The $\Leftarrow$ direction is trivial because $a^-\subseteq\sqbra{d^{\alpha}(S)}^+\subseteq\bigcup_{\alpha<\beta}\sqbra{d^\alpha(S)}^+$. The $\Rightarrow$ direction is valid by the chain property because if the elements of $a^-$ are spread out over different sets in the generalised union $\bigcup_{\alpha<\beta}\sqbra{d^\alpha(S)}^+$, we can choose the largest of these sets which exist by the chain property, such that there is some sufficiently large $\alpha$ strictly less than $\beta$ (because $d$ is $\subseteq$-monotone), which will contain the set $a^-$.\label{fn:chain_reasoning}}
and therefore $d(L)=L$, so $L\in COMP$. Therefore, any transfinite iteration of an admissible set by $d$ stabilises into a complete extension, for a suitably large ordinal.
\end{proof}

\begin{corollary}\label{cor:ADM_complete_closure}
Let $S\in ADM$, then there exists an $L\in COMP$ such that $S\subseteq L$.
\end{corollary}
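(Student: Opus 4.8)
The plan is to read off this corollary directly from Theorem \ref{thm:iterate_d_adm_to_comp}, which has already done all the work. Given $S\in ADM$, I would first invoke Theorem \ref{thm:iterate_d_adm_to_comp} to fix a sufficiently large ordinal $\beta$ (depending on $\abs{A}$) such that the ordinal-indexed sequence $\set{d^{\alpha}\pair{S}}_{\alpha<\beta}$ has stabilised, and set
\begin{align*}
L:=\bigcup_{\alpha<\beta}d^{\alpha}\pair{S}.
\end{align*}
That theorem guarantees $L\in COMP$, so the only remaining task is to verify the containment $S\subseteq L$.

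The containment is immediate: since $d^0\pair{S}=S$ and $0<\beta$, the set $S$ appears as the $\alpha=0$ term in the generalised union defining $L$, hence $S=d^0\pair{S}\subseteq\bigcup_{\alpha<\beta}d^{\alpha}\pair{S}=L$. (Equivalently, one may appeal to Lemma \ref{lem:ordinal_iteration_d_monotone}, which gives $d^0\pair{S}\subseteq d^{\alpha}\pair{S}$ for every $\alpha<\beta$.) Taking this $L$ witnesses the claim. There is no real obstacle to overcome here, as the substantive content — that the transfinite iteration stabilises inside $ADM$ and lands in $COMP$ — was established in Theorem \ref{thm:iterate_d_adm_to_comp} via Theorem \ref{thm:TFI_iterate_d_ADM} and the fixed-point computation; this corollary simply records the side observation that the starting admissible set is contained in the complete extension it generates.
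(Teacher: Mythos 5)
Your proposal is correct and matches the paper's own proof: both define $L$ as the transfinite iteration $\bigcup_{\alpha<\beta}d^{\alpha}\pair{S}$ for a sufficiently large ordinal $\beta$, invoke Theorem \ref{thm:iterate_d_adm_to_comp} to get $L\in COMP$, and observe that $S=d^0\pair{S}\subseteq L$. Your version merely makes the containment step slightly more explicit than the paper's ``Clearly, $S\subseteq L$'', which is a harmless refinement.
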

\begin{proof}
Given $S$, we iterate $d$ on $S$ a transfinite number of times until the sequence stabilises at some limit $L$, which is guaranteed because $d$ is $\subseteq$-monotonic and when the cardinality of the ordinal number denoting the iteration is at least $\abs{A}$. Clearly, $S\subseteq L$, and by Theorem \ref{thm:iterate_d_adm_to_comp}, $L\in COMP$.
\end{proof}

\subsubsection{Existence of Non-Empty Complete Extensions from Limited Controversy}

\begin{definition}
We say $a\in A$ \textbf{threatens} $S\subseteq A$ iff $a\in S^--S^+$.
\end{definition}

\noindent In other words, the set of all threats to $S$ are the set of arguments attacking $S$ that $S$ fails to defend against.

\begin{corollary}
$S\in SD$ iff no argument threatens $S$, i.e. $S^- -S^+=\es$.
\end{corollary}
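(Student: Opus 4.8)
The plan is to reduce the statement to the characterisation of self-defending sets already established in Theorem \ref{thm:sd_equiv}, which says that $S\in SD$ iff $S^-\subseteq S^+$. Once that equivalence is in hand, the corollary becomes the purely set-theoretic observation that the difference $S^--S^+$ is empty exactly when $S^-$ is contained in $S^+$.

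First I would unfold the right-hand condition. By the preceding definition, $a$ threatens $S$ iff $a\in S^--S^+$, so the assertion ``no argument threatens $S$'' is by definition the statement $S^--S^+=\es$. This step is just naming.

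Next I would invoke the elementary identity, valid for any two sets $X,Y$, that $X-Y=\es$ iff $X\subseteq Y$. Instantiating with $X=S^-$ and $Y=S^+$ gives $S^--S^+=\es\Leftrightarrow S^-\subseteq S^+$. Chaining this with Theorem \ref{thm:sd_equiv} yields
\begin{align*}
S\in SD\Leftrightarrow S^-\subseteq S^+\Leftrightarrow S^--S^+=\es,
\end{align*}
which is precisely the claimed equivalence.

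I do not expect any genuine obstacle here: the result is a direct unfolding of the definition of ``threatens'' combined with the already-proved criterion for self-defence, mediated only by the trivial fact that a set difference is empty iff the containment holds. The only thing to be careful about is to cite Theorem \ref{thm:sd_equiv} for the substantive step ($S\in SD\Leftrightarrow S^-\subseteq S^+$) rather than re-deriving it, so that the proof stays short and transparent.
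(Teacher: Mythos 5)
Your proposal is correct and follows exactly the paper's own argument: the paper likewise chains $S\in SD\Leftrightarrow S^-\subseteq S^+$ (Theorem \ref{thm:sd_equiv}) with the trivial equivalence $S^--S^+=\es\Leftrightarrow S^-\subseteq S^+$ and the definition of ``threatens''. No gaps; nothing further is needed.
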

\begin{proof}
We have $S\in SD$ iff $S^-\subseteq S^+$ (Theorem \ref{thm:sd_equiv}, page \pageref{thm:sd_equiv}) iff $S^- - S^+=\es$ iff no argument threatens $S$.
\end{proof}

\begin{definition}\label{def:defense_set}
We say $D\subseteq A$ is a \textbf{defense of $S\subseteq A$} iff $S^- -S^+\subseteq D^+$.
\end{definition}

\begin{corollary}
$S\in SD$ iff $\es$ is a defense of $S$.
\end{corollary}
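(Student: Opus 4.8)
The plan is to establish the equivalence via a short chain of biconditionals that connects Definition \ref{def:defense_set} back to the self-defending condition of Theorem \ref{thm:sd_equiv}. The key observation is that instantiating the defense $D$ to be $\es$ in Definition \ref{def:defense_set} reduces the defining inclusion $S^- - S^+\subseteq D^+$ to a statement about $\es^+$, which collapses to the empty set.

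First I would unfold the definition: by Definition \ref{def:defense_set}, $\es$ is a defense of $S$ iff $S^- - S^+\subseteq\es^+$. Next I would invoke Corollary \ref{cor:es_pm} (page \pageref{cor:es_pm}), which states $\es^+=\es$, to rewrite this as $S^- - S^+\subseteq\es$. Since the only subset of $\es$ is $\es$ itself, this is equivalent to $S^- - S^+=\es$, i.e.\ to the condition that no argument threatens $S$. At this point the preceding corollary (that $S\in SD$ iff $S^- - S^+=\es$) already closes the argument; alternatively, one finishes directly by noting $S^- - S^+=\es\Leftrightarrow S^-\subseteq S^+$, which is precisely the self-defending criterion of Theorem \ref{thm:sd_equiv} (page \pageref{thm:sd_equiv}).

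Concretely, I would present the whole thing as a single displayed chain of equivalences, something like
\begin{align*}
\es\text{ is a defense of }S\Leftrightarrow S^- - S^+\subseteq\es^+\Leftrightarrow S^- - S^+\subseteq\es\Leftrightarrow S^-\subseteq S^+\Leftrightarrow S\in SD,
\end{align*}
annotating the steps with Definition \ref{def:defense_set}, Corollary \ref{cor:es_pm}, and Theorem \ref{thm:sd_equiv} respectively. This keeps the proof self-contained and avoids splitting into separate ($\Rightarrow$) and ($\Leftarrow$) directions.

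I do not anticipate any genuine obstacle here; the statement is essentially a specialisation of the just-proved characterisation ``$S\in SD$ iff $S^- - S^+=\es$'' obtained by setting $D=\es$, and the only substantive ingredient is the fact $\es^+=\es$ from Corollary \ref{cor:es_pm}. The mild care needed is simply to make explicit that $S^- - S^+\subseteq\es$ forces equality with $\es$, so that the inclusion in Definition \ref{def:defense_set} really does coincide with the no-threat condition rather than being strictly weaker.
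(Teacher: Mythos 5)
Your proof is correct and is essentially identical to the paper's own: both unfold Definition \ref{def:defense_set} with $D=\es$, apply $\es^+=\es$ from Corollary \ref{cor:es_pm}, and reduce to the characterisation $S^-\subseteq S^+$ of Theorem \ref{thm:sd_equiv} via the same chain of biconditionals. No gaps; your added remark that $S^- - S^+\subseteq\es$ forces equality is exactly the step the paper's chain also makes explicit.
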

\begin{proof}
$\es$ is a defense of $S$ iff $S^- -S^+\subseteq\es^+$ iff $S^- - S^+\subseteq\es$ iff $S^- - S^+=\es$ iff $S^-\subseteq S^+$ iff $S\in SD$.
\end{proof}

\noindent In other words, self-defending sets do not need anything else as a defense.

\begin{corollary}\label{cor:defense_indirectly_defends}
If $D$ is a defense of $S$, then there exists an argument in $D$ that indirectly defends the arguments in $S$.
\end{corollary}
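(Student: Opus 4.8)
The plan is to exhibit explicitly an even-length (in fact length-$2$) path witnessing the indirect defence, so the result is really a short chase through the definitions. Assume $D$ is a defense of $S$, meaning $S^- - S^+\subseteq D^+$ by Definition \ref{def:defense_set}. The claim has content precisely when $S$ actually has a threat, i.e. when $S^- - S^+\neq\es$; if $S^- - S^+=\es$ then $S\in SD$ and there is nothing for $D$ to counterattack, so I would dispatch this as a degenerate case (or simply read the statement as pertaining to a threatened $S$).

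First I would fix a threat $c\in S^- - S^+$. Since $c\in S^-$, the meaning of the predecessor set (Definition \ref{def:set_attacks}) supplies some $s\in S$ with $R(c,s)$, so that $c$ attacks a genuine member of $S$. Since $c\in S^- - S^+\subseteq D^+$, the meaning of the successor set supplies some $e\in D$ with $R(e,c)$, so that $D$ counterattacks the threat $c$.

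Chaining these two attacks gives the path $e\to c\to s$ in $\ang{A,R}$, which has length $2$ and is therefore even. By Definition \ref{def:indirectly_defends}, $e$ indirectly defends $s$; since $e\in D$ and $s\in S$, this is exactly an argument of $D$ that indirectly defends an argument of $S$, and the proof is complete.

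The only genuine difficulty is interpretational rather than technical. The phrase ``indirectly defends the arguments in $S$'' must be read as ``indirectly defends some (threatened) argument of $S$'': no single element of $D$ need defend every member of $S$, and members of $S$ that are unattacked, or already attacked back by $S$ itself, need not be defended by $D$ at all. Once the threatened pair $(c,s)$ is chosen, however, the length-$2$ path makes the indirect-defence conclusion immediate, so no further estimates or constructions are required.
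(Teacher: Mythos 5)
Your proof is correct and takes essentially the same route as the paper: fix a threat $c\in S^--S^+$, use $c\in S^-$ to get $s\in S$ with $R(c,s)$ and $c\in D^+$ to get $e\in D$ with $R(e,c)$, and read off the even-length path $e\to c\to s$. Your remark about the degenerate case $S^--S^+=\es$ (where the statement is vacuous, e.g.\ $D=\es$ is a defense of any self-defending $S$) is a point the paper's proof silently glosses over, but the core argument is identical.
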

\begin{proof}
If $S^--S^+\subseteq D^+$, then for $a\in S^- -S^+$, there is some $b\in S$ such that $R(a,b)$. However, as $a\in D^+$, then there is some $c\in D$ such that $R(c,a)$. Therefore, there is an even-length path from $c\in D$ to $b\in S$ and hence there is an argument in $D$ that indirectly defends arguments in $S$.
\end{proof}

\begin{corollary}\label{cor:defense_S_fills_defence}
$D$ is a defense of $S$ iff $S\subseteq d\pair{S\cup D}$.
\end{corollary}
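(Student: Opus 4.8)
The plan is to reduce the statement to a short chain of equivalences, the key observation being that a membership of the form $S\subseteq d(T)$ can be rephrased purely in terms of predecessor and successor sets. First I would establish the auxiliary fact that for any $T\subseteq A$, we have $S\subseteq d(T)$ iff $S^-\subseteq T^+$. This mirrors the forward computation already carried out in the proof of Theorem \ref{thm:sd_equiv}: unfolding Definition \ref{def:defence_function}, the statement $S\subseteq d(T)$ says that $a^-\subseteq T^+$ for every $a\in S$, and taking the union over all $a\in S$ turns this into $\bigcup_{a\in S}a^-=S^-\subseteq T^+$; conversely, if $S^-\subseteq T^+$ then each $a^-\subseteq S^-\subseteq T^+$, so $a\in d(T)$ for every $a\in S$.

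Applying this auxiliary fact with $T:=S\cup D$ gives that $S\subseteq d\pair{S\cup D}$ iff $S^-\subseteq\pair{S\cup D}^+$. By Equation \ref{eq:plus_cup} (distribution of $(\cdot)^+$ over unions), $\pair{S\cup D}^+=S^+\cup D^+$, so the condition becomes $S^-\subseteq S^+\cup D^+$.

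The final step is the elementary set-theoretic identity $X\subseteq Y\cup Z\Leftrightarrow X-Y\subseteq Z$, instantiated with $X=S^-$, $Y=S^+$ and $Z=D^+$, which converts $S^-\subseteq S^+\cup D^+$ into $S^--S^+\subseteq D^+$. By Definition \ref{def:defense_set}, this last containment is precisely the assertion that $D$ is a defense of $S$, so reading the chain of equivalences in either direction yields the result.

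I do not expect a genuine obstacle here; the only mildly delicate point is the auxiliary equivalence $S\subseteq d(T)\Leftrightarrow S^-\subseteq T^+$, but this is just the union argument already used for Theorem \ref{thm:sd_equiv}, now performed for an arbitrary $T$ rather than for $T=S$. Everything else is routine set algebra combined with the behaviour of $(\cdot)^+$ on unions.
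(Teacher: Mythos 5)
Your proof is correct, and it is organised differently from the paper's. The paper proves the two directions separately: for ($\Rightarrow$) it fixes $a\in S$ and an arbitrary attacker $b\in a^-$ and runs a case analysis ($b\in S^+$, or $b\in S^--S^+\subseteq D^+$, so in either case $b\in\pair{S\cup D}^+$), while for ($\Leftarrow$) it uses exactly your union observation, passing from $S\subseteq d\pair{S\cup D}$ to $S^-\subseteq\pair{S\cup D}^+=S^+\cup D^+$ and then extracting $S^--S^+\subseteq D^+$. You instead isolate the biconditional $S\subseteq d(T)\Leftrightarrow S^-\subseteq T^+$ for an \emph{arbitrary} $T$ --- a genuine generalisation of Theorem \ref{thm:sd_equiv}, proved correctly in both directions via $S^-=\bigcup_{a\in S}a^-$ (Corollary \ref{cor:set_pm}) and via $a\in S\Rightarrow a^-\subseteq S^-$ --- and then the whole corollary collapses to the elementary identity $X\subseteq Y\cup Z\Leftrightarrow X-Y\subseteq Z$ applied to $S^-\subseteq S^+\cup D^+$. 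What your route buys is a single symmetric chain of equivalences, a reusable lemma, and incidentally a repair of a small slip in the paper's forward direction, which asserts that $b\in S^+$ ``would mean $a\in d(S)$'' --- an unwarranted inference from a single attacker lying in $S^+$; the correct (and sufficient) conclusion there is merely $b\in\pair{S\cup D}^+$ for every attacker $b$, which your set-level formulation delivers without element-chasing. What the paper's version buys is self-containment: it needs no auxiliary lemma beyond Corollary \ref{cor:cup_cap_plus_minus}, which both arguments share.
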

\begin{proof}
($\Rightarrow$) Let $a\in S$ and $b\in A$ be arbitrary such that $R(b,a)$, hence $b\in S^-$. Either $b\in S^+$ or $b\notin S^+$. In the former case, $b\in S^+$ would mean $a\in d(S)\subseteq d\pair{S\cup D}$. In the latter case, $b\in S^--S^+$ and hence $b\in D^+$, which means $b\in D^+\cup S^+=\pair{S\cup D}^+$ by Corollary \ref{cor:cup_cap_plus_minus} (page \pageref{cor:cup_cap_plus_minus}), $a\in d\pair{S\cup D}$. The result follows.

($\Leftarrow$) If $S\subseteq d\pair{S\cup D}$, then $S^-\subseteq\pair{S\cup D}^+=S^+\cup D^+$ by Corollary \ref{cor:cup_cap_plus_minus} (page \pageref{cor:cup_cap_plus_minus}). Now let $b\in S^-- S^+$ be arbitrary, then $b\in S^-$ and $b\notin S^+$. The first case implies that $b\in S^+\cup D^+$, but the second case means $b\in D^+- S^+\subseteq D^+$. Therefore, $S^-- S^+\subseteq D^+$, and hence $D$ is a defense of $S$. 
\end{proof}

\begin{theorem}\label{thm:LC_non_empty_comp}
\cite[Lemma 34]{Dung:95} If an AF is limited controversial then there exists a non-empty complete extension.
\end{theorem}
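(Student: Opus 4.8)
The plan is to reduce the statement to the existence of a single non-empty \emph{admissible} set, and then handle the one genuinely difficult case by contradiction. First, it suffices to produce any non-empty $S\in ADM$: by Corollary \ref{cor:ADM_complete_closure}, iterating $d$ transfinitely on $S$ yields a complete extension $L$ with $S\subseteq L$, and since $S\neq\es$ we get $L\neq\es$. (Here I would assume $A\neq\es$; the empty AF is vacuously limited controversial yet has no non-empty complete extension, so it must be excluded from the statement.) Next I would dispose of the easy case $U\neq\es$: by Corollary \ref{cor:empty_grounded} this forces $G\neq\es$, and $G$ is a complete extension by Corollary \ref{cor:comp_exists}, so we are finished. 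Hence all the content lies in the case $U=\es$, where the $\subseteq$-least complete extension is empty and limited controversy must do the work.

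For the case $U=\es$, I would argue by contradiction: suppose there is \emph{no} non-empty admissible set. The goal is to manufacture an infinite $A$-sequence $\set{a_i}_{i\in\nat}$ with each $a_{i+1}$ controversial w.r.t. $a_i$, which directly contradicts the definition of limited controversy. Throughout I may use that a limited controversial AF has no self-attacking arguments (Corollary \ref{cor:LC_no_self_attack}) and no odd cycles (Corollary \ref{cor:odd_cycle_not_lc}); in particular every singleton is conflict-free. The sequence would be built by iterating a single step: given any argument $a$, produce an argument $a'$ controversial w.r.t. $a$, then set $a_0$ arbitrary and $a_{i+1}:=(a_i)'$.

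The step is driven by the defense machinery (Definition \ref{def:defense_set}, Corollary \ref{cor:defense_indirectly_defends}). Since $\set{a}$ is conflict-free but, by hypothesis, not admissible, it is not self-defending (Theorem \ref{thm:sd_equiv}), so there is a threat $b\in a^- - a^+$. Because $U=\es$, the argument $b$ is itself attacked, say by some $c$, and then $c\to b\to a$ is an even-length path, so $c$ indirectly defends $a$ — exactly Corollary \ref{cor:defense_indirectly_defends} applied to a defense of $\set{a}$. I would then examine the adjacency of $c$ and $a$: if $R(c,a)$, then $c$ has both a length-$1$ (odd) and a length-$2$ (even) path to $a$, hence $c$ is controversial w.r.t. $a$ (this is the controversial triangle of Example \ref{eg:controversial_triangle}); if $R(a,c)$, then $a\to c\to b\to a$ is a $3$-cycle, contradicting the absence of odd cycles; and if $c$ and $a$ are non-adjacent, one passes to the still-conflict-free, still-non-admissible set $\set{a,c}$ and repeats, tracking how the newly forced threats and defenders generate paths of both parities back to a common target.

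The main obstacle is precisely this non-adjacent case: a single defender supplies only an \emph{even} path, so the difficulty is to show that the iterated failure of self-defense eventually forces some constructed argument to carry an \emph{odd} path as well to the argument it defends — i.e.\ a genuine controversy — rather than an endless accumulation of arguments that merely indirectly defend one another. Converting ``persistently non-self-defending'' into ``alternating-parity paths'' is where the no-odd-cycle consequence of limited controversy is used essentially, and arranging the bookkeeping so that consecutive terms of $\set{a_i}_{i\in\nat}$ genuinely stand in the controversy relation is the technical heart of the argument.
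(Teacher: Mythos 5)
Your framing and reductions are sound: it does suffice to exhibit one non-empty admissible set (Corollary \ref{cor:ADM_complete_closure} / Theorem \ref{thm:iterate_d_adm_to_comp} then give a non-empty complete extension), the case $U\neq\es$ is correctly dispatched via Corollary \ref{cor:empty_grounded}, and your side remark that $A\neq\es$ must be assumed is a legitimate observation (the empty AF is vacuously limited controversial yet has $COMP=\set{\es}$; the paper's proof tacitly assumes non-emptiness when it picks $a\in A$). But the proof has a genuine gap, and you have named it yourself: the step ``given any argument $a$, produce $a'$ controversial w.r.t.\ $a$'' is never established. A single failure of self-defence supplies only an \emph{even}-length path (a defender of $a$), never the odd-length path that controversy also requires; your adjacency analysis covers only paths of length at most $2$, and in the non-adjacent case ``pass to $\set{a,c}$ and repeat, tracking parities'' is a restatement of the problem, not an argument. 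Worse, the implication you would need — ``no non-empty admissible set $\Rightarrow$ every argument has something controversial w.r.t.\ it'' — is exactly the contrapositive of the theorem's hard content, so the proposal defers the entire difficulty to the step it leaves open.

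What closes the gap, and what the paper actually does, is a global invariant in place of local parity chasing — and, crucially, a different use of limited controversy. The paper uses the hypothesis \emph{positively}: since every sequence with $a_{i+1}$ controversial w.r.t.\ $a_i$ terminates, there exists an argument $a$ such that \emph{no} argument is controversial w.r.t.\ $a$. Starting from $E_0=\set{a}$ it accumulates $E_{i+1}:=E_i\cup D_i$ with $D_i$ a $\subseteq$-minimal defense of $E_i$ (Definition \ref{def:defense_set}, which exists because $U=\es$), maintaining by induction that every member of $E_i$ has an even-length path to $a$. Then any attack $R(b,c)$ inside some $E_{k+1}$ gives $b$ both an even path to $a$ (as a member) and an odd path to $a$ (one attack followed by $c$'s even path), i.e., $b$ is controversial w.r.t.\ $a$ — impossible by the choice of $a$. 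Hence each $E_i\in CF$, the limit $L:=\bigcup_{i\in\nat}E_i$ is conflict-free (Theorem \ref{thm:cf_omega_complete}) and self-defending (Corollary \ref{cor:defense_S_fills_defence}), so $L$ is a non-empty admissible set, and transfinite iteration of $d$ finishes. Your contradiction framing could in fact be repaired with the same invariant: run this construction from an \emph{arbitrary} $a$; either all $E_i$ stay conflict-free and $\bigcup_i E_i$ contradicts your hypothesis directly, or a conflict appears and the parity invariant hands you an argument controversial w.r.t.\ $a$, enabling your chain step. But that invariant is precisely the ``bookkeeping'' your sketch postpones, so as written the proof is incomplete at its technical heart.
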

\begin{proof}
Recall that $G$ is the $\subseteq$-least complete extension (Definition \ref{def:pre_grounded}, page \pageref{def:pre_grounded}). If $G\neq\es$, then the result follows as $G\in COMP$.

If $G=\es$, then $U=\es$ by Corollary \ref{cor:empty_grounded} (page \pageref{cor:empty_grounded}), and the AF is not well-founded by the contrapositive of Corollary \ref{cor:wf_unattacked} (page \pageref{cor:wf_unattacked}). As the AF is limited controversial, there are no infinite sequences of arguments $\set{a_i}_{i\in\nat}$ such that $a_{i+1}$ is controversial w.r.t. $a_i$. Therefore, all such sequences must terminate. Therefore, there is some argument $a\in A$ such that for all $b\in A$, $b$ is not controversial w.r.t. $a$.

Let $E_0=\set{a}$. For $i\in\nat$ Let $E_{i+1}:=E_i\cup D_i$, where $D_i\subseteq A$ is a $\subseteq$-minimal defense set of $E_i$ (Definition \ref{def:defense_set}). We prove by strong induction on $i$ that $E_i\in CF$ and each argument in $E_i$ indirectly defends $a$.
\begin{enumerate}
\item (Base) As our AF is limited controversial, then there are no self-attacking arguments by Corollary \ref{cor:LC_no_self_attack} (page \pageref{cor:LC_no_self_attack}). Hence, $E_0=\set{a}\in CF$. By Corollary \ref{cor:no_loop_indir_self_def} (page \pageref{cor:no_loop_indir_self_def}), $a$ indirectly defends itself.
\item (Inductive) Assume that $E_k\in CF$ and all arguments in $E_k$ defend $a$, for all $0\leq k\leq i$. As $U=\es$, every argument in $E_k$ is attacked by some other argument. We can construct a $\subseteq$-minimal defense set $D_k$ such that $E_k^--E_k^+\subseteq D_k^+$. By Corollary \ref{cor:defense_indirectly_defends}, there are arguments in $D_k$ that indirectly defend the arguments in $E_k$. As $D_k$ is $\subseteq$-minimal, we may assume that all arguments in $D_k$ indirectly defend the arguments in $E_k$. In particular, there is an even-length path from all arguments in $D_k$ to $a$. By the inductive hypothesis, all arguments in $E_k\cup D_k=:E_{k+1}$ indirectly defends $a$.

To show that $E_{k+1}\in CF$, assume for contradiction that $E_{k+1}\notin CF$. There are some $b,c,\in E_{k+1}$ such that $R(b,c)$. But as each argument in $E_{k+1}$ indirectly defends $a$, then $b$ also indirectly attacks $a$ and hence $b$ is controversial w.r.t. $a$ -- contradiction. Therefore, $E_{k+1}\in CF$.
\end{enumerate}

As $\set{E_i}_{i\in\nat}$ is a chain in $CF$, let $L:=\bigcup_{i\in\nat}E_i$ be its limit. As each $E_i\in CF$, by Theorem \ref{thm:cf_omega_complete} (page \pageref{thm:cf_omega_complete}), $L\in CF$. Let $b\in L$ and assume $R(c,b)$ for some $c\in A$. There is some $i\in\nat^+$ such that $b\in E_i=E_{i-1}\cup D_{i-1}\subseteq d\pair{E_i\cup D_i}$ by Corollary \ref{cor:defense_S_fills_defence}. Therefore, either $c$ is attacked by $E_{i}$, or $D_{i}$, which means $c$ is attacked by $L$. Therefore, $L$ is self-defending and hence $L\in ADM$. As $a\in L$, $L\neq\es$. By Theorem \ref{thm:iterate_d_adm_to_comp} (page \pageref{thm:iterate_d_adm_to_comp}), we iterate $d$ on $L$ a transfinite number of times to get our desired non-empty complete extension, $C$.
\end{proof}

\begin{theorem}\label{thm:lem35}
\cite[Lemma 35]{Dung:95} Let AF be uncontroversial. Let $a\notin G\cup G^+$. There exists $C_1,C_2\in COMP$ such that $a\in C_1\cap C_2^+$.
\end{theorem}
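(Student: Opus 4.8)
The plan is to reduce both halves of the statement to a single claim: \emph{in an uncontroversial AF, every argument $x\notin G\cup G^+$ lies in some complete extension.} Granting this claim, I obtain $C_1$ by applying it directly to $a$, which satisfies $a\notin G\cup G^+$ by hypothesis. For $C_2$, first note that $G=d(G)$ (as $G\in COMP\subseteq F_d$), so $a\notin G$ gives $a^-\not\subseteq G^+$; hence there is an attacker $b\in a^-$ with $b\notin G^+$, and moreover $b\notin G$, for otherwise $a\in G^+$, contradicting the hypothesis. Thus $b\notin G\cup G^+$, and applying the claim to $b$ yields a complete extension $C_2\ni b$; since $R(b,a)$ we get $a\in\set{b}^+\subseteq C_2^+$. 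Therefore $a\in C_1\cap C_2^+$, as required.

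To prove the claim I would pass to the induced sub-framework on the ``undecided'' arguments $A':=A-\pair{G\cup G^+}$, writing $R':=R\cap A'^2$, and establish two structural facts about $\ang{A',R'}$. It is uncontroversial, since any pair of odd/even paths lying inside $A'$ would also witness controversy in the original framework (uncontroversy is inherited by sub-frameworks, analogously to Corollary \ref{cor:LC_subAF}). It also has no unattacked arguments: if $y\in A'$ had no attacker in $A'$, then every attacker of $y$ would lie in $G\cup G^+$; an attacker in $G$ would force $y\in G^+$, while if all attackers lie in $G^+$ then $y^-\subseteq G^+$, i.e. $y\in d(G)=G$ --- either way contradicting $y\in A'$. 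With $U'=\es$ established, I would run the construction from the proof of Theorem \ref{thm:LC_non_empty_comp} verbatim inside $\ang{A',R'}$, starting from $E_0=\set{x}$: uncontroversy keeps every stage conflict-free and makes every added argument indirectly defend $x$, while $U'=\es$ guarantees that a minimal defence always exists. The limit $L:=\bigcup_i E_i$ is then admissible in $\ang{A',R'}$ and contains $x$.

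It remains to lift $L$ to the full framework, where I claim $L\cup G\in ADM\pair{\ang{A,R}}$. For conflict-freeness, $G$ does not attack $L$ because $L\subseteq A'$ is disjoint from $G^+$, and $L$ does not attack $G$ because any $c\in L$ attacking some $g\in G$ would lie in $G^-\subseteq G^+$ (as $G$ is self-defending), contradicting $c\in A'$; since $G$ and $L$ are each conflict-free, so is their union. For self-defence I would verify $\pair{L\cup G}^-\subseteq\pair{L\cup G}^+$ via Theorem \ref{thm:sd_equiv}: attackers of $G$ lie in $G^+$; attackers of $L$ cannot lie in $G$ (that would put $L$ in $G^+$), those lying in $A'$ are defeated by $L$ inside the sub-framework, and those lying in $G^+$ are defeated by $G$. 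Hence $L\cup G\in ADM$ with $x\in L\cup G$, and Corollary \ref{cor:ADM_complete_closure} extends it to a complete extension containing $x$, proving the claim.

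The main obstacle is the passage to $\ang{A',R'}$: the entire argument hinges on showing $U'=\es$, so that the defence-building machinery of Theorem \ref{thm:LC_non_empty_comp} never stalls on an un-attackable threat, together with the bookkeeping that $L\cup G$ is again admissible. This is precisely where the hypothesis $a\notin G^+$ is indispensable --- were $a$ attacked by $G$, then since $G$ is contained in every complete extension (Corollary \ref{cor:cap_comp_is_G}), no complete extension could contain $a$, and $C_1$ would fail to exist.
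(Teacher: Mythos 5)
Your proof is correct and follows essentially the same route as the paper: pass to the induced sub-framework on $A-\pair{G\cup G^+}$, rerun the construction of Theorem \ref{thm:LC_non_empty_comp} there starting from the target argument, and lift back to the full framework by adjoining $G$. If anything, your version is tighter than the paper's, which asserts without verification both that the lifted set is a complete extension and that $a$ has an attacker inside $A'$; you supply the missing checks --- the explicit proof that $U'=\es$, the derivation of the attacker $b\notin G\cup G^+$ from $a\notin d(G)$, and the certification of the lift via admissibility of $L\cup G$ together with Corollary \ref{cor:ADM_complete_closure} rather than a bare completeness claim.
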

\begin{proof}
Let $A':=A-\pair{G\cup G^+}\subseteq A$. By assumption, $a\in A'$ so $A'\neq\es$. Consider the induced sub-framework $\mathcal{A}'$ on $A'$, which is also uncontroversial (because $A$ does not have any controversial arguments). We repeat the proof of Theorem \ref{thm:LC_non_empty_comp} on the uncontroversial (and therefore limited controversial) argument $a$ in the uncontroversial sub-framework AF' to construct a non-empty complete extension $C$ of AF' containing $a$. Consider $C_1:=G\cup C$. Clearly, $a\in C_1$. We show that $C_1$ is a complete extension of AF.

Now as $a\notin G$ and $U\subseteq G$, $a\notin U$ and hence $a$ is attacked by some argument $b\in A'$. As AF' is uncontroversial we repeat the proof of Theorem \ref{thm:LC_non_empty_comp} on $b$ to construct a complete extension $C'$ of $AF'$ such that $b\in C'$ and $a\in C'^+$. Therefore, $C_2:=G\cup C'$ is our desired complete extension.
\end{proof}

\begin{theorem}\label{thm:comp_in_adm}
$COMP\subseteq ADM$. The converse is not true in general.
\end{theorem}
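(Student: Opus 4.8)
The plan is to prove the forward inclusion directly from the definitions, exploiting the characterisation $ADM = CF\cap SD$ from Corollary \ref{cor:adm_cf_sd}. I would take an arbitrary $S\in COMP$, so that by definition $S\in CF$ and $d(S)=S$ (i.e. $S\in F_d$). Conflict-freeness is immediately one of the two conditions required for admissibility. For the other, recall that $S\in SD$ means exactly $S\subseteq d(S)$; but the fixed-point equation $d(S)=S$ yields $S\subseteq d(S)$ trivially. Hence $S\in CF\cap SD=ADM$. The single conceptual point underlying this step is that a fixed point of $d$ is a fortiori a postfixed point of $d$, so every complete extension is automatically self-defending.

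For the converse, the plan is to exhibit an admissible set that defends strictly more than itself, and therefore fails to be a fixed point of $d$. The cleanest witness uses the empty set: whenever $U\neq\es$, we have $\es\in ADM$ by Corollary \ref{cor:empty_is_admissible}, yet $d\pair{\es}=U\neq\es$ by Corollary \ref{cor:unattacked_d_es}, so $\es\notin F_d$ and hence $\es\notin COMP$. Concretely I would instantiate this with simple reinstatement (Example \ref{eg:simple_reinstatement}), where $U=\set{c}\neq\es$, giving $\es\in ADM\setminus COMP$. Alternatively one can cite floating reinstatement, comparing $\set{a}\in ADM$ (from Example \ref{eg:fri}) with $COMP=\set{\es,\set{a,e},\set{b,e}}$ (Example \ref{eg:fri_COMP}): here $d\pair{\set{a}}=\set{a,e}\neq\set{a}$, so $\set{a}$ is admissible but not complete.

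There is no genuine obstacle in either direction; the argument is essentially bookkeeping with the two defining conditions. The only care required is in the converse: one must choose an example in which the gap $S\subsetneq d(S)$ is actually present, i.e. the admissible set can defend some argument it does not already contain. The cited examples already record the relevant values of $d$ and $U$, so verifying $S\in ADM$ but $d(S)\neq S$ is immediate, and no further computation is needed.
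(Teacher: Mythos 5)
Your proof is correct and follows essentially the same route as the paper: the forward inclusion is the observation that a fixed point of $d$ is a fortiori a postfixed point, and the converse is witnessed by $\es\in ADM$ failing to be a fixed point of $d$ whenever $U\neq\es$ (the paper uses Example \ref{eg:BH_eg221} for this, you use Example \ref{eg:simple_reinstatement}, but the witness and the underlying idea are identical). Your additional check that $\set{a}\in ADM\setminus COMP$ in floating reinstatement is also correct, though not needed.
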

\begin{proof}
If $S\in COMP$ then $S$ is cf and $S=d(S)$, which implies that $S\subseteq d(S)$ and hence $S\in ADM$.

The converse is not true in general: from Example \ref{eg:BH_eg221} (page \pageref{eg:BH_eg221}), $\es\in ADM$ but $\es\notin COMP$.
\end{proof}

\begin{corollary}
$d$ is closed on $COMP$.
\end{corollary}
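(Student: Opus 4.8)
The plan is to unwind the definition of a complete extension and observe that the result is essentially immediate. Recall that $S \in COMP$ means $S \in CF \cap F_d$, where $F_d$ is the set of fixed points of $d$ (Definition \ref{def:d_fp}). In particular, every complete extension satisfies $d(S) = S$.

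First I would fix an arbitrary $S \in COMP$. By definition $S \in F_d$, so $d(S) = S$. But then $d(S)$ is literally the same set as $S$, which we assumed lies in $COMP$. Hence $d(S) \in COMP$. Since $S$ was arbitrary, this establishes that $d$ maps $COMP$ into $COMP$, i.e. $d : COMP \to COMP$ is well-defined, which is exactly the assertion that $d$ is closed on $COMP$.

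There is essentially no obstacle here: the entire content of the corollary is that complete extensions are, by construction, fixed points of $d$, so applying $d$ to a complete extension returns it unchanged. The only point worth flagging is that this is a strictly stronger (and simpler) statement than the analogous closure results for $CF$, $SD$, and $ADM$ proved earlier (Corollaries \ref{cor:d_preserves_cf}, \ref{cor:d_closed_SD}, and \ref{cor:d_presv_ADM}), where the image $d(S)$ could differ from $S$; for $COMP$ the image coincides with $S$, so no additional argument about conflict-freeness or self-defence of $d(S)$ is required beyond what is already guaranteed by $S \in COMP$.
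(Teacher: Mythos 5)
Your proof is correct and matches the paper's own argument exactly: since every $S\in COMP=CF\cap F_d$ satisfies $d(S)=S$, the image under $d$ is the same set $S$, which is already in $COMP$. Your additional remark contrasting this with the genuine closure arguments needed for $CF$, $SD$ and $ADM$ is accurate but not part of the paper's proof.
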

\begin{proof}
Trivially, $S\in COMP=CF\cap F_d$ and hence $S\in F_d$ so $d(S)=S\in COMP$.
\end{proof}

\noindent Therefore, $d:COMP\to COMP$ is well-defined.

Now there are some situations where AFs can fail to infer anything.

\begin{corollary}\label{cor:empty_comp}
$ADM=\set{\es}$ iff $COMP=\set{\es}$.
\end{corollary}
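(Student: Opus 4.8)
The plan is to prove both implications by leaning on three facts already available: $COMP\subseteq ADM$ with $COMP\neq\es$, that $\es\in ADM$ for every AF, and the ``completion'' result that every admissible set is contained in some complete extension. Both directions turn out to be short once these are in hand, so the work is really just threading them together in the right order.

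For the forward direction ($\Rightarrow$), I would assume $ADM=\set{\es}$. By Theorem \ref{thm:comp_in_adm} we have $COMP\subseteq ADM=\set{\es}$, so $COMP\subseteq\set{\es}$. Since $COMP\neq\es$ by Corollary \ref{cor:comp_exists}, the only remaining possibility is $COMP=\set{\es}$. This is essentially immediate and requires no new ideas.

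For the backward direction ($\Leftarrow$), I would assume $COMP=\set{\es}$ and argue that $ADM$ collapses to $\set{\es}$ as well. We already know $\es\in ADM$ from Corollary \ref{cor:empty_is_admissible}, so it suffices to show that no non-empty admissible set exists. Take an arbitrary $S\in ADM$; by Corollary \ref{cor:ADM_complete_closure} there is some $L\in COMP$ with $S\subseteq L$. But $COMP=\set{\es}$ forces $L=\es$, whence $S\subseteq\es$ and therefore $S=\es$. Since $S$ was arbitrary, $ADM=\set{\es}$, completing this direction.

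The step I expect to carry the real weight is the backward direction's appeal to Corollary \ref{cor:ADM_complete_closure}: it is what prevents a non-empty admissible set from existing ``in isolation'' without ever extending to a non-empty complete extension, and it silently imports the transfinite-iteration machinery of Theorem \ref{thm:iterate_d_adm_to_comp}. The forward direction, by contrast, hinges only on the already-proven non-emptiness of $COMP$, so I anticipate no genuine obstacle there.
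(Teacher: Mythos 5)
Your proof is correct and follows essentially the same route as the paper: the forward direction combines $COMP\neq\es$ (Corollary \ref{cor:comp_exists}) with $COMP\subseteq ADM$ (Theorem \ref{thm:comp_in_adm}), and the backward direction rests on Corollary \ref{cor:ADM_complete_closure}, exactly as in the paper's proof. The only cosmetic difference is that you argue the backward direction directly (an arbitrary $S\in ADM$ embeds in some $L\in COMP=\set{\es}$, forcing $S=\es$) while the paper phrases the same argument contrapositively; these are logically identical.
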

\begin{proof}
($\Rightarrow$) As $\es\neq COMP\subseteq ADM=\set{\es}$ by Corollary \ref{cor:comp_exists} and Theorem \ref{thm:comp_in_adm}, the result follows.

($\Leftarrow$, contrapositive) If $ADM\neq\set{\es}$, then as $ADM\neq\es$, there is some $S\in ADM$ such that $S\neq\es$. By Corollary \ref{cor:ADM_complete_closure} (page \pageref{cor:ADM_complete_closure}), there is some $L\in COMP$ such that $S\subseteq L$. Clearly, $L\neq\es$ because $S\neq\es$. Therefore, $COMP\neq\set{\es}$.
\end{proof}

\begin{example}
Recall Example \ref{eg:ab_self_attack} (page \pageref{eg:ab_self_attack}). Clearly $CF=\set{\es,\set{b}}$ and $ADM=\set{\es}$ because $b$ cannot be self-defending. Therefore, $COMP=\set{\es}$ as well. This is consistent with that $U=\es$ hence $G=\es$ (Corollary \ref{cor:empty_grounded}, page \pageref{cor:empty_grounded}).
\end{example}

\subsection{Lattice Theoretic Properties}

\noindent We can generalise Theorem \ref{thm:iterate_d_adm_to_comp} (page \pageref{thm:iterate_d_adm_to_comp}) to arbitrary chains of complete extensions.

\begin{theorem}\label{thm:COMP_chain_comp}
$\ang{COMP,\subseteq}$ is chain complete.
\end{theorem}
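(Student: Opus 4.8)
The plan is to exploit the chain completeness of $\langle ADM,\subseteq\rangle$ together with the transfinite-iteration machinery of Theorem \ref{thm:iterate_d_adm_to_comp}. The first thing to notice is that, unlike the proofs that $\langle CF,\subseteq\rangle$ and $\langle ADM,\subseteq\rangle$ are chain complete, one cannot simply take the union of a chain $\mathcal{C}\subseteq COMP$ and declare it the supremum: the union of a chain of complete extensions is admissible but need not be a fixed point of $d$ (when attacker sets are infinite, $a^-$ can be scattered across the various $C^+$ without lying in any single $C^+$). So the supremum in $COMP$ will in general be strictly larger than $\bigcup\mathcal{C}$.

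First I would fix a chain $\mathcal{C}\subseteq COMP$ and set $S:=\bigcup\mathcal{C}$. Since $COMP\subseteq ADM$ by Theorem \ref{thm:comp_in_adm}, $\mathcal{C}$ is a chain in $ADM$, so by the chain completeness of $\langle ADM,\subseteq\rangle$ we get $S\in ADM$. I would then apply Theorem \ref{thm:iterate_d_adm_to_comp} to $S$: for a sufficiently large ordinal $\beta$ the iterates stabilise, and $L:=\bigcup_{\alpha<\beta}d^{\alpha}(S)\in COMP$. This $L$ is the candidate supremum.

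The crux is to verify that $L$ is the \emph{least} upper bound of $\mathcal{C}$ in $COMP$, not merely an upper bound. That $L$ is an upper bound is immediate, since every $c\in\mathcal{C}$ satisfies $c\subseteq S\subseteq L$ (as $S=d^{0}(S)\subseteq L$). For leastness, I would observe that the upper bounds of $\mathcal{C}$ in $COMP$ are exactly the complete extensions $C$ with $S\subseteq C$. Given any such $C$, I would show $d^{\alpha}(S)\subseteq C$ for every $\alpha<\beta$ by transfinite induction: the base case is $S\subseteq C$; the successor step uses $\subseteq$-monotonicity of $d$ (Corollary \ref{cor:d_monotone}) together with $C=d(C)$ being a fixed point, giving $d^{\alpha+1}(S)=d(d^{\alpha}(S))\subseteq d(C)=C$; and the limit step follows because $d^{\gamma}(S)=\bigcup_{\alpha<\gamma}d^{\alpha}(S)$ with each term already contained in $C$. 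Taking the union over all $\alpha<\beta$ yields $L\subseteq C$, so $L$ is the least upper bound. Since $\mathcal{C}$ was arbitrary, $\langle COMP,\subseteq\rangle$ is chain complete.

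The main obstacle I anticipate is precisely the one flagged above: resisting the temptation to use the set-theoretic union as the supremum. The whole point is that ``completing'' the union via transfinite iteration is necessary, and then the genuinely content-bearing step is the leastness argument, whose engine is that every complete upper bound, being a fixed point of the monotone $d$, absorbs the entire iteration sequence $\{d^{\alpha}(S)\}_{\alpha<\beta}$.
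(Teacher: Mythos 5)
Your proof is correct, but it takes a genuinely different route from the paper's own proof of Theorem \ref{thm:COMP_chain_comp} --- and the point on which you depart is precisely a point where the paper's argument is fragile. The paper declares the union $S:=\bigcup Ch$ itself to be the supremum: it shows $S\in ADM$ just as you do, but then argues $d(S)\subseteq S$ through the equivalence $a^-\subseteq\bigcup_{C\in Ch}C^+\Leftrightarrow\pair{\exists C\in Ch}a^-\subseteq C^+$, justifying the forward direction by the chain-absorption reasoning of Footnote \ref{fn:chain_reasoning}. That absorption is sound where the footnote originally lives (Theorem \ref{thm:iterate_d_adm_to_comp}, where the ordinal-indexed chain has stabilised and so contains a greatest element) and is sound for finitary AFs (where $a^-$ is finite, so finitely many members of the chain cover it and a finite subfamily of a chain has a maximum), but it fails for an arbitrary chain in a non-finitary AF, exactly as you anticipated: take $A=\set{a}\cup\set{b_i,c_i,e_i}_{i\in\nat}$ with $R\pair{b_i,a}$, $R\pair{c_i,b_i}$, $R\pair{c_i,e_i}$ and $R\pair{e_i,c_i}$; each $C_i:=\set{c_0,\ldots,c_i}$ is a complete extension and these form a chain, yet the union $S=\set{c_i}_{i\in\nat}$ satisfies $a\in d(S)-S$, so $S\notin COMP$ and no single $C_i$ covers $a^-$. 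Your construction --- pass to $ADM$ via Theorem \ref{thm:comp_in_adm} and chain completeness of admissible sets, close under transfinite iteration of $d$ (Theorem \ref{thm:iterate_d_adm_to_comp}) to obtain $L\in COMP$, then prove leastness by transfinite induction using $\subseteq$-monotonicity (Corollary \ref{cor:d_monotone}) and the fact that any complete upper bound is a fixed point of $d$ and so absorbs every iterate $d^{\alpha}(S)$ --- is immune to this failure and correctly allows the supremum in $COMP$ to be strictly larger than the union (in the example above, $L=S\cup\set{a}$). In short, the paper's route buys a shorter argument and the stronger union-is-supremum conclusion, but only legitimately in the finitary setting; your route buys chain completeness for arbitrary AFs at the cost of the transfinite machinery, and your leastness induction is the genuinely content-bearing step that the union-based argument never needs to confront.
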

\begin{proof}
Let $\ang{Ch,\subseteq}\subseteq\ang{COMP,\subseteq}$ be any chain. By Theorem \ref{thm:comp_in_adm}, $\ang{Ch,\subseteq}$ is also a chain (and hence a directed set) in $\ang{ADM,\subseteq}$. By Corollary \ref{cor:adm_dir_set} (page \pageref{cor:adm_dir_set}), $S:=\bigcup Ch\in ADM$. To show $S\in COMP$, we need to show $d(S)\subseteq S$. This is done as follows:
\begin{align*}
& a\in d(S)\\
\Leftrightarrow& a^-\subseteq S^+\\
\Leftrightarrow& a^-\subseteq\pair{\bigcup_{C\in Ch}C}^+\\
\Leftrightarrow& a^-\subseteq\bigcup_{C\in Ch}C^+\text{ by Corollary \ref{cor:cup_cap_plus_minus} (page \pageref{cor:cup_cap_plus_minus}),}\\
\Leftrightarrow&\pair{\exists C\in Ch}a^-\subseteq C^+\text{ as the ``$\Rightarrow$'' direction follows as $Ch$ is a chain,\footnotemark}\\
\Leftrightarrow&\pair{\exists C\in Ch}a\in d(C)\\
\Leftrightarrow&\pair{\exists C\in Ch}a\in C\\
\Leftrightarrow&a\in\bigcup Ch=S,
\end{align*}
\footnotetext{See Footnote \ref{fn:chain_reasoning} (page \pageref{fn:chain_reasoning}).}
and therefore, $d(S)\subseteq S$, so $S\in COMP$. Therefore, $\ang{COMP,\subseteq}$ is chain complete.
\end{proof}

\noindent It immediately follows that $\ang{COMP,\subseteq}$ is $\omega$-complete. We can generalise further:

\begin{theorem}
$\ang{COMP,\subseteq}$ is directed complete.
\end{theorem}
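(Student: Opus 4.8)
The plan is to mirror the proof of chain completeness (Theorem \ref{thm:COMP_chain_comp}), replacing the chain by a directed set throughout. First I would take an arbitrary directed set $\mathcal{D}\subseteq COMP$ under $\subseteq$ and put $S:=\bigcup\mathcal{D}$. Since $COMP\subseteq ADM$ (Theorem \ref{thm:comp_in_adm}), $\mathcal{D}$ is also a directed set in $ADM$, so Corollary \ref{cor:adm_dir_set} gives $S\in ADM$ at once; in particular $S\in CF$ and $S\subseteq d(S)$. It then remains only to prove the reverse inclusion $d(S)\subseteq S$, because $d(S)=S$ together with $S\in CF$ yields $S\in CF\cap F_d=COMP$, and $S=\bigcup\mathcal{D}$ is then visibly the $\subseteq$-least upper bound of $\mathcal{D}$ in $COMP$ (any upper bound contains the union), so the directed supremum exists.

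To establish $d(S)\subseteq S$ I would run the same chain of equivalences as in Theorem \ref{thm:COMP_chain_comp}: for $a\in A$,
\begin{align*}
a\in d(S)\Leftrightarrow a^-\subseteq S^+=\pair{\bigcup_{C\in\mathcal{D}}C}^+=\bigcup_{C\in\mathcal{D}}C^+,
\end{align*}
where the last equality is Corollary \ref{cor:cup_cap_plus_minus}. The crucial step is to pass from $a^-\subseteq\bigcup_{C\in\mathcal{D}}C^+$ to the existence of a \emph{single} $C\in\mathcal{D}$ with $a^-\subseteq C^+$; once this is available, $a\in d(C)=C\subseteq S$ follows because each $C\in COMP$ is a fixed point of $d$, and the equivalences close up to $a\in\bigcup\mathcal{D}=S$.

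The main obstacle is exactly this witnessing step, and it is where the directed hypothesis does the work previously done by the chain hypothesis. For each $b\in a^-$ choose $C_b\in\mathcal{D}$ with $b\in C_b^+$; when $a^-$ is finite, directedness supplies a single $C\in\mathcal{D}$ dominating the finitely many $C_b$, and then $a^-\subseteq C^+$ by $\subseteq$-monotonicity of the $\pair{\cdot}^+$ operator (Corollary \ref{cor:pm_monotonicity}). The genuinely delicate case is $\abs{a^-}\geq\aleph_0$, which I would dispatch exactly as in the reasoning accompanying Theorem \ref{thm:COMP_chain_comp} (Footnote \ref{fn:chain_reasoning}): the family $\set{C^+\:\vline\:C\in\mathcal{D}}$ is $\subseteq$-up-directed, so one selects a member large enough to absorb all of $a^-$. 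With this step secured we obtain $d(S)\subseteq S$, hence $S\in COMP$; since $\mathcal{D}$ was an arbitrary directed subset, $\ang{COMP,\subseteq}$ is directed complete. (Should the infinite case resist this direct argument, the fallback is to take instead the $d$-closure $\bigcup_{\alpha<\beta}d^\alpha(S)$ furnished by Theorem \ref{thm:iterate_d_adm_to_comp}, which lies in $COMP$, contains $S$, and is contained in every complete upper bound of $\mathcal{D}$ by transfinite induction on fixed points of $d$, and is therefore the supremum.)
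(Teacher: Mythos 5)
Your main route is essentially the paper's proof: reduce to $\bigcup\mathcal{D}\in ADM$ via Corollary \ref{cor:adm_dir_set}, then rerun the equivalences of Theorem \ref{thm:COMP_chain_comp} with directedness in place of chain-hood. But the witnessing step you yourself single out as delicate is a genuine gap, and it cannot be dispatched ``exactly as in Footnote \ref{fn:chain_reasoning}'': directedness (and indeed chain-hood) only supplies upper bounds for \emph{finite} subfamilies, so when $\abs{a^-}\geq\aleph_0$ there need be no single $C\in\mathcal{D}$ with $a^-\subseteq C^+$, and up-directedness of $\set{C^+\:\vline\:C\in\mathcal{D}}$ does not let you ``absorb all of $a^-$''. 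Concretely, take $A=\set{a}\cup\set{b_i,e_i,f_i}_{i\in\nat}$ with $R=\set{\pair{b_i,a},\pair{e_i,b_i},\pair{e_i,f_i},\pair{f_i,e_i}}_{i\in\nat}$. Each $C_k:=\set{e_0,\ldots,e_k}$ is a complete extension: it is conflict-free, each $e_j$ with $j\leq k$ is defended against its sole attacker $f_j$, no $f_j$ or $b_j$ is defended, and $a$ is not defended since $b_j\notin C_k^+$ for $j>k$. The family $\mathcal{D}:=\set{C_k}_{k\in\nat}$ is a chain, hence directed, yet for $S:=\bigcup\mathcal{D}=\set{e_i}_{i\in\nat}$ we have $a^-=\set{b_i}_{i\in\nat}\subseteq S^+$ while $a^-\not\subseteq C_k^+$ for every $k$; thus $a\in d(S)-S$ and $S\notin COMP$. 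This non-finitary example defeats your main argument, and note that it equally defeats the paper's own proof and the union-based claim of Theorem \ref{thm:COMP_chain_comp}: the step is sound only when $a^-$ is finite, which is exactly the case your finite-case argument via Corollary \ref{cor:pm_monotonicity} handles correctly (so the union route survives under a finitary hypothesis).

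Your parenthetical fallback, however, is the correct proof and should be promoted to the main argument. With $S:=\bigcup\mathcal{D}\in ADM$, Theorem \ref{thm:iterate_d_adm_to_comp} yields $L:=\bigcup_{\alpha<\beta}d^{\alpha}\pair{S}\in COMP$ with $S\subseteq L$; and if $C\in COMP$ is any upper bound of $\mathcal{D}$, then $S\subseteq C$, whence $d^{\alpha}\pair{S}\subseteq d^{\alpha}\pair{C}=C$ for all $\alpha$ by Corollary \ref{cor:d_monotone} and transfinite induction at limit stages, so $L\subseteq C$. Hence $L$ is the $\subseteq$-least upper bound of $\mathcal{D}$ \emph{in} $\ang{COMP,\subseteq}$, which is all that Definition \ref{def:dcpo} requires --- the supremum need not be the set-theoretic union. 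In the example above the supremum is $S\cup\set{a}=d(S)$, not $S$. Read with the fallback as primary, your proposal establishes the theorem in its stated order-theoretic sense and in fact repairs the paper's proof rather than merely reproducing it.
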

\begin{proof}
Let $\ang{\mathcal{D},\subseteq}\subseteq\ang{COMP,\subseteq}$ be any directed subset. By Theorem \ref{thm:comp_in_adm}, $\ang{\mathcal{D},\subseteq}$ is also a directed subset of $\ang{ADM,\subseteq}$. By Corollary \ref{cor:ADM_cpo1} (page \pageref{cor:ADM_cpo1}), $\bigcup\mathcal{D}\in ADM$. Therefore, it is sufficient to show $d\pair{\bigcup\mathcal{D}}\subseteq\bigcup\mathcal{D}$. We can apply the same reasoning in Theorem \ref{thm:COMP_chain_comp} and by invoking that $\mathcal{D}$ is a directed subset. Therefore, $\bigcup\mathcal{D}$ is a fixed point of $d$ and hence $\bigcup\mathcal{D}\in COMP$. As $\mathcal{D}$ is an arbitrary directed subset, the result follows.
\end{proof}


\begin{theorem}\label{thm:COMP_glb_non_es}
Let $\es\neq\mathcal{C}\subseteq COMP$, then there exists $S\in COMP$ such that $S$ is the $\subseteq$-glb of $\mathcal{C}$.
\end{theorem}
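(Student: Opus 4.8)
The plan is to reduce the problem to the already-established existence of admissible greatest lower bounds (Corollary \ref{cor:non-empty_adm_glb}) and then to show that this admissible glb is automatically a complete extension. Since $\mathcal{C}\subseteq COMP\subseteq ADM$ by Theorem \ref{thm:comp_in_adm}, and $\mathcal{C}\neq\es$, Corollary \ref{cor:non-empty_adm_glb} supplies an $M\in ADM$ that is the $\subseteq$-greatest admissible lower bound of $\mathcal{C}$; this $M$ will be my candidate for the glb inside $COMP$. (Existence of such an admissible lower bound is already guaranteed, the least complete extension $G$ of Definition \ref{def:pre_grounded} being one such bound.)

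First I would record that $M\subseteq C$ for every $C\in\mathcal{C}$. The crucial step is to show $M$ is a fixed point of $d$. Since each $C\in\mathcal{C}$ is complete, $d(C)=C$, so applying $\subseteq$-monotonicity of $d$ (Corollary \ref{cor:d_monotone}) to $M\subseteq C$ yields $d(M)\subseteq d(C)=C$ for every $C\in\mathcal{C}$; hence $d(M)$ is itself a lower bound of $\mathcal{C}$. Moreover $d(M)\in ADM$ because $d$ restricts to $ADM$ (Corollary \ref{cor:d_presv_ADM}), so $d(M)$ is an \emph{admissible} lower bound of $\mathcal{C}$. By maximality of $M$ among such bounds, $d(M)\subseteq M$. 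On the other hand $M\in ADM$ gives $M\subseteq d(M)$, so $M=d(M)$, i.e. $M\in F_d$. Since $M\in ADM\subseteq CF$ (Corollary \ref{cor:ADM_in_CF}), we conclude $M\in CF\cap F_d=COMP$.

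It then remains to verify that $M$ is the glb of $\mathcal{C}$ in the poset $\ang{COMP,\subseteq}$, and not merely in $\ang{ADM,\subseteq}$. This is immediate: $M$ is a lower bound of $\mathcal{C}$ lying in $COMP$, and any complete lower bound $T$ of $\mathcal{C}$ is in particular an admissible lower bound, so $T\subseteq M$ by the maximality of $M$; thus $M$ is the greatest complete lower bound. The main obstacle is the fixed-point step, namely the insight that the admissible glb cannot be a \emph{strict} postfixed point of $d$, because $d(M)$ would then be a strictly larger admissible lower bound, contradicting maximality. Once this squeeze $M\subseteq d(M)\subseteq M$ is in place, the rest is bookkeeping.
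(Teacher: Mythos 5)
Your proposal is correct and takes essentially the same route as the paper: the paper's proof also forms the set $LB$ of admissible lower bounds of $\mathcal{C}$, shows $d:LB\to LB$ via $d(T)\subseteq d(C')=C'$ for each complete $C'$, and then uses maximality of $\bigcup LB$ together with $S\subseteq d(S)$ to squeeze out $d(S)=S$, exactly your fixed-point step. The only difference is presentational: you invoke Corollary \ref{cor:non-empty_adm_glb} as a black box to obtain the admissible glb $M$, whereas the paper re-runs that construction inline before applying the same maximality argument.
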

\begin{proof}
If $\bigcap\mathcal{C}\in COMP$, then the result follows by choosing $C=\bigcap\mathcal{C}$. 

Otherwise,\footnote{Note that from Example \ref{eg:fri_COMP} (page \pageref{eg:fri_COMP}), if we choose $\mathcal{C}=\set{\set{a,e},\set{b,e}}$, we have $\bigcap\mathcal{C}=\set{e}\notin COMP$.} similar to the proof of Corollary \ref{cor:non-empty_adm_glb} (page \pageref{cor:non-empty_adm_glb}), let $LB:=\set{T\in ADM\:\vline\:\pair{\forall C'\in\mathcal{C}}T\subseteq C'}$. By Corollary \ref{cor:empty_is_admissible} (page \pageref{cor:empty_is_admissible}), $\es\in LB$ so $LB\neq\es$. Let $T\in LB$ and $C'\in\mathcal{C}$ be arbitrary. As $T\subseteq C'$, then $d\pair{T}\subseteq d\pair{C'}=C'$, hence $\pair{\forall C'\in\mathcal{C}}d\pair{T}\subseteq C'$ as well. Therefore, $d(T)\in LB$ and hence $d:LB\to LB$ is well-defined. Now let $S:=\bigcup LB$, which for the same reasons as Corollary \ref{cor:non-empty_adm_glb}, it follows that $S\in ADM$. Furthermore, as $S=\bigcup_{T\in LB}T$ and $\pair{\forall C'\in\mathcal{C}}T\subseteq C'$ and hence $S\in LB$. Therefore, $d(S)\in LB$. As $S\in ADM$ we have $S\subseteq d(S)$, and as $S$ is $\subseteq$-maximal in $LB$, $d(S)\subseteq S$. This establishes that $S\in COMP$.

Therefore, we have found a complete extension $S$ that is a lower bound of $\mathcal{C}$ as $S\in LB$, and is a greatest such lower bound because for any $S'\in LB$, $S'\subseteq S$ by definition. The result follows.
\end{proof}


\begin{theorem}
\cite[Theorem 25(3)]{Dung:95} $\ang{COMP,\subseteq}$ is a complete semilattice.
\end{theorem}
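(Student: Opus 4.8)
The plan is to unpack Definition \ref{def:complete_SL} of a complete semilattice and observe that its two defining requirements have already been established for $\ang{COMP,\subseteq}$ in the immediately preceding results. A complete semilattice requires (i) that every non-empty subset possesses a greatest lower bound, and (ii) that the poset be chain complete, i.e. every (ascending) chain has a least upper bound. This mirrors exactly the structure used earlier to conclude that $\ang{CF,\subseteq}$ and $\ang{ADM,\subseteq}$ are complete semilattices, so the proof here should be a short assembly rather than a fresh construction.

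First I would invoke Theorem \ref{thm:COMP_glb_non_es}, which guarantees that every non-empty $\mathcal{C}\subseteq COMP$ has a $\subseteq$-greatest lower bound lying in $COMP$; this discharges requirement (i). Second I would invoke Theorem \ref{thm:COMP_chain_comp}, which states that $\ang{COMP,\subseteq}$ is chain complete, so every chain in $COMP$ has its supremum (computed as the union) in $COMP$; this discharges requirement (ii). Together these two facts are precisely the conjunction demanded by the definition of a complete semilattice, so the result follows immediately.

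There is no genuine obstacle remaining, since all the real work was done in Theorems \ref{thm:COMP_chain_comp} and \ref{thm:COMP_glb_non_es}; the only point requiring care is simply confirming that the two clauses cited match the clauses of Definition \ref{def:complete_SL} exactly (non-empty-infimum-completeness together with chain completeness), with no hidden extra condition such as closure under arbitrary suprema. Accordingly, a complete proof can be written as:

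\begin{proof}
By Theorem \ref{thm:COMP_glb_non_es}, every non-empty subset of $\ang{COMP,\subseteq}$ has a $\subseteq$-greatest lower bound in $COMP$. By Theorem \ref{thm:COMP_chain_comp}, $\ang{COMP,\subseteq}$ is chain complete. These are precisely the two defining conditions of a complete semilattice (Definition \ref{def:complete_SL}, page \pageref{def:complete_SL}). The result follows.
\end{proof}
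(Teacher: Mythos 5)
Your proof is correct and takes exactly the same approach as the paper, which likewise cites Theorem \ref{thm:COMP_glb_non_es} for greatest lower bounds of non-empty subsets and Theorem \ref{thm:COMP_chain_comp} for chain completeness, matching the two clauses of Definition \ref{def:complete_SL}. Nothing is missing.
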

\begin{proof}
This follows from Theorems \ref{thm:COMP_chain_comp} and \ref{thm:COMP_glb_non_es}.
\end{proof}

\noindent The main difference between the proofs of Theorem \ref{thm:COMP_glb_non_es} and \cite[Theorem 25(3)]{Dung:95} is that the latter applies transfinite induction (Corollary \ref{cor:ADM_complete_closure}) to locate the complete extension that is the $\subseteq$-glb, while Theorem \ref{thm:COMP_glb_non_es} avoids transfinite induction by showing $d:LB\to LB$ and applies maximality to show that $\bigcup LB$ is indeed the $\subseteq$-glb complete extension.

\subsection{Summary}

\begin{itemize}
\item $S\subseteq A$ is complete iff it is both conflict-free and a fixed point of $d$. The set of all complete extensions is $COMP(AF)$ or just $COMP$.
\item It can be shown that $COMP\neq\es$, and the $\subseteq$-smallest complete extension is the least fixed point of $d$.
\item If $U=\es$ then $\es\in COMP$. For all $C\in COMP$, $U\subseteq C$.
\item Clearly, $COMP\subseteq ADM$, and $d:COMP\to COMP$. For $S\in ADM$ there exists a $\subseteq$-least $C\in COMP$ such that $S\subseteq C$. Further, $ADM=\set{\es}$ iff $COMP=\set{\es}$. 
\item If the underlying AF is limited controversial then there is a non-empty complete extension.
\item Let $G\in COMP$ be the $\subseteq$-minimal complete extension. If the underlying AF is uncontroversial then for $a\notin G\cup G^+$, there are complete extensions $C_1$ and $C_2$ such that $a\in C_1\cap C_2^+$.
\item $\ang{COMP,\subseteq}$ is a complete semilattice that is also directed complete.
\end{itemize}

\newpage

\section{Preferred, Stable and Grounded Extensions}

We now discuss the most important types of complete extensions used to draw inferences from AFs.

\subsection{Preferred Extensions}\label{sec:PREF}

\subsubsection{Definition}

\begin{definition}\label{def:preferred_ext}
\cite[Definition 7]{Dung:95} A \textbf{preferred extension} is a $\subseteq$-maximal admissible set.
\end{definition}

\noindent Preferred extensions are thus the (order-theoretic) largest admissible sets. We will prove that they are also complete extensions in what follows.

\begin{corollary}\label{cor:pref_max_adm}
We have that $PREF=\max_{\subseteq}ADM$.
\end{corollary}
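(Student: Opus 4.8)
The plan is to observe that this equality is essentially a restatement of Definition \ref{def:preferred_ext}, once we fix the notational convention that $PREF$ denotes the set of all preferred extensions of the underlying AF, in line with the conventions already adopted for $CF$, $SD$, $ADM$ and $COMP$. First I would recall that, under this convention, $PREF$ is by definition the collection of all preferred extensions, so the task reduces to identifying that collection with $\max_{\subseteq}ADM$.

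Next I would unwind Definition \ref{def:preferred_ext}: a set $S\subseteq A$ is a preferred extension iff $S$ is a $\subseteq$-maximal admissible set. Spelling out ``$\subseteq$-maximal admissible set'' means precisely that $S\in ADM$ and there is no $T\in ADM$ with $S\subset T$, which is exactly the statement $S\in\max_{\subseteq}ADM$. Hence the biconditional $S\in PREF\Leftrightarrow S\in\max_{\subseteq}ADM$ holds for every $S\subseteq A$, and the set equality $PREF=\max_{\subseteq}ADM$ follows by extensionality.

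I expect no genuine obstacle here, since the content is purely definitional and requires no appeal to the lattice-theoretic machinery developed for $ADM$. The substantive facts lie elsewhere and belong to their own statements: that $PREF\neq\es$, which follows from Corollary \ref{cor:ADM_has_max} (i.e. $\max_{\subseteq}ADM\neq\es$), and that every preferred extension is in fact a complete extension. I would therefore keep this proof to the single observation above and defer existence and the relationship to $COMP$ to subsequent results.
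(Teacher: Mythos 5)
Your proof is correct and matches the paper's, which simply observes the equality is immediate from Definition \ref{def:preferred_ext}; your unwinding of ``$\subseteq$-maximal admissible set'' and appeal to extensionality is just a more explicit version of the same purely definitional argument. You are also right to defer existence ($PREF\neq\es$) and the relation to $COMP$ to the subsequent results.
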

\begin{proof}
Immediate by Definition \ref{def:preferred_ext}.
\end{proof}

\begin{definition}
Given an AF, let $PREF\subseteq\pow\pair{A}$ denote the set of preferred extensions.
\end{definition}

\noindent If we need to make the underlying AF $\mathcal{A}$ explicit, we may write $PREF\pair{\mathcal{A}}$.

\begin{example}
(Example \ref{eg:nixon} continued, page \pageref{eg:nixon}, from \cite[Example 9]{Dung:95}) In the Nixon diamond, we have $PREF=\set{\set{a},\set{b}}$.
\end{example}

\begin{example}
(Example \ref{eg:Israel_Arab} continued, page \pageref{eg:Israel_Arab}, from Example \cite[Example 8]{Dung:95}) In this case, we have $PREF=\set{\set{a,c}}$.
\end{example}

\begin{example}
(Example \ref{eg:BH_eg221} continued, page \pageref{eg:BH_eg221}) Here, $A=\set{a,b,c,f,e}$ and $R=\set{(a,b),(c,b),(c,f),(f,c),(f,e),(e,e)}$, so $PREF = \set{\set{a,c},\set{a,f}}$.
\end{example}

\noindent Do not confuse the meaning of ``$\subseteq$-maximal'' with ``containing the most arguments'', as the following example shows:

\begin{example}\label{eg:accrual}
Consider the AF with arguments $A=\set{a,b,c}$ and attacks $R=\set{(a,b),(b,a),(b,c),(c,b)}$. This AF is depicted in Figure \ref{fig:accrual}.

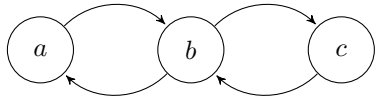
\begin{figure}[H]
\begin{center}
\begin{tikzpicture}[>=stealth',shorten >=1pt,node distance=2cm,on grid,initial/.style    ={}]
\tikzset{mystyle/.style={->,relative=false,in=0,out=0}};
\node[state] (a) at (0,0) {$ a $};
\node[state] (b) at (2,0) {$ b $};
\node[state] (c) at (4,0) {$ c $};
\draw [->] (a) to [out = 45, in = 135] (b);
\draw [->] (b) to [out = 225, in = -45] (a);
\draw [->] (b) to [out = 45, in = 135] (c);
\draw [->] (c) to [out = 225, in = -45] (b);
\end{tikzpicture}
\caption{The AF from Example \ref{eg:accrual}.}\label{fig:accrual}
\end{center}
\end{figure}
\noindent In this case, $PREF=\set{\set{a,c},\set{b}}$.
\end{example}

\subsubsection{Existence and Lattice-Theoretic Structure}

\begin{corollary}\label{cor:ADM2PREF}
\cite[Theorem 11(2)]{Dung:95} For any $S\in ADM$ there exists a $E\in PREF$ such that $S\subseteq E$. 
\end{corollary}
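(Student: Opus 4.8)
The plan is to obtain $E$ by a direct application of Zorn's lemma to the sub-poset of admissible sets that already contain $S$. Concretely, I would fix $S\in ADM$ and consider the collection $\mathcal{I}_S:=\set{T\in ADM\:\vline\:S\subseteq T}$ partially ordered by $\subseteq$. This collection is non-empty, since $S\in\mathcal{I}_S$, which in particular furnishes an upper bound for the empty chain.

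The key step is to check that every chain $\mathcal{C}\subseteq\mathcal{I}_S$ has an upper bound lying inside $\mathcal{I}_S$, for which the natural candidate is $\bigcup\mathcal{C}$. Since any chain is a directed set, Corollary \ref{cor:adm_dir_set} (page \pageref{cor:adm_dir_set}) gives $\bigcup\mathcal{C}\in ADM$; and because $S\subseteq T$ for every $T\in\mathcal{C}$, we also get $S\subseteq\bigcup\mathcal{C}$, so $\bigcup\mathcal{C}\in\mathcal{I}_S$ and is an upper bound of $\mathcal{C}$. Having verified the hypotheses, Zorn's lemma then yields a $\subseteq$-maximal element $E\in\mathcal{I}_S$, and by construction $S\subseteq E$.

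It remains to upgrade maximality within $\mathcal{I}_S$ to maximality in all of $ADM$, i.e. to conclude $E\in PREF=\max_\subseteq ADM$ (Corollary \ref{cor:pref_max_adm}, page \pageref{cor:pref_max_adm}). I would argue by contradiction: if $E$ were not $\subseteq$-maximal in $ADM$, there would exist $E'\in ADM$ with $E\subsetneq E'$, whence $S\subseteq E\subsetneq E'$ forces $E'\in\mathcal{I}_S$, contradicting the maximality of $E$ in $\mathcal{I}_S$. Therefore $E$ is $\subseteq$-maximal in $ADM$, so $E\in PREF$ with $S\subseteq E$, as required. I do not anticipate a genuine obstacle here, as this is the standard chain-completeness-plus-Zorn pattern already used to prove $\max_\subseteq ADM\neq\es$ (Corollary \ref{cor:ADM_has_max}, page \pageref{cor:ADM_has_max}); the only points needing care are the bookkeeping for the empty chain and the final passage from maximality in the restricted poset to maximality in $ADM$.
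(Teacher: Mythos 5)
Your proof is correct and follows essentially the same route as the paper's primary proof: both apply Zorn's lemma to the upper-set $\set{T\in ADM\:\vline\:S\subseteq T}$, using Corollary \ref{cor:adm_dir_set} to bound chains by their unions, and then pass from maximality in the restricted poset to maximality in $ADM$. If anything, you are slightly more careful than the paper on the two points you flag (the empty chain, and the explicit upgrade of maximality to $E\in PREF$); the paper also gives a second, genuinely different proof via the fundamental lemma and transfinite iteration of $d$, which you did not need.
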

\begin{proof}
Given $S\in ADM$, consider its upper-set $\mathcal{S}:=\set{T\in ADM\:\vline\:S\subseteq T}$. We show $\mathcal{S}$ has a $\subseteq$-maximal element, which would be our $E$.

Let $\mathcal{D}\subseteq\mathcal{S}$ be a directed subset. As $\mathcal{D}$ consists of sets of the form $T\in ADM$ and $S\subseteq T$, then $S\subseteq \bigcup\mathcal{D}\in ADM$ by Corollary \ref{cor:adm_dir_set} (page \pageref{cor:adm_dir_set}). But as $S\subseteq\bigcup\mathcal{D}$, we have that $\bigcup\mathcal{D}\in\mathcal{S}$. As $\mathcal{D}$ is any directed subset of $\mathcal{S}$, and every directed subset is a chain, and $\mathcal{D}$ has an upper bound $\bigcup\mathcal{D}$, we have shown that every chain in $\mathcal{S}$ has an upper bound. Therefore, by Zorn's lemma, $\mathcal{S}$ has a maximal element, which we will call $E$. As $\mathcal{S}$ is a $\subseteq$-up-set, we have that $E\in PREF$. The result follows.
\end{proof}

\noindent We offer another proof of Corollary \ref{cor:ADM2PREF} that makes use of the fundamental lemma (Lemma \ref{lem:dfl}, page \pageref{lem:dfl}).


\begin{proof}
(Of Corollary \ref{cor:ADM2PREF}) Let $S\in ADM$, so $S\subseteq d(S)$. Consider the set $d(S)-S$ and invoke the well-ordering theorem such that we can write its elements as an ordinal-indexed list.\footnote{This is because we are not making any assumptions that the underlying AF is finite. Further, the well-ordering theorem is equivalent to the axiom of choice.} Starting from the element indexed by the least ordinal, say $a_0$, by the fundamental lemma, $S\cup\set{a_0}\in ADM$. We can append such elements one by one to $S$ and the fundamental lemma guarantees that the result is still in $ADM$. For the limit case, we apply the fact that $ADM$ is a dcpo (Corollary \ref{cor:ADM_cpo1}, page \pageref{cor:ADM_cpo1}) such that the union is still in $ADM$. More precisely, this is done via transfinite induction on the ordinal-valued indices of the elements in $d(S)-S$.
\begin{enumerate}
\item (Base) $S\in ADM$
\item (Successor) $S\cup\set{a_0,a_1,\ldots,a_{\beta}}\in ADM$ and $a_{\beta+1}\in d(S)-S$ implies that $S\cup\set{a_0,a_1,\ldots,a_{\beta},a_{\beta+1}}\in ADM$ by the fundamental lemma.
\item (Limit) If $S\cup\set{a_{\beta}}_{\beta<\gamma}\in ADM$ where $\gamma$ is a limit ordinal, then the sequence of sets $T_{\beta}:=S\cup\set{a_{\beta}}_{\beta<\gamma}$ is an ascending chain of admissible sets, such that $\bigcup_{\beta<\gamma}T_{\beta}\in ADM$ by Corollary \ref{cor:ADM_cpo1}.
\end{enumerate}
Therefore, transfinite induction shows that $d(S)\in ADM$. We then apply transfinite induction a second time to show that for a suitably large ordinal $\gamma$, $E:=d^\gamma(S)\in ADM$ (recall also Theorem \ref{thm:TFI_iterate_d_ADM}, page \pageref{thm:TFI_iterate_d_ADM}), and is also a fixed point of $d$ (Theorem \ref{thm:iterate_d_adm_to_comp}, page \pageref{thm:iterate_d_adm_to_comp}). Let $a\in A - E$, then $a\notin d(E)$ so $a^-\not\subseteq E^+$. Therefore, $E\cup\set{a}\notin ADM$. Therefore, $E\in PREF$ and $S\subseteq E$.
\end{proof}

The following result applies Corollaries \ref{cor:empty_is_admissible} (page \pageref{cor:empty_is_admissible}) and \ref{cor:ADM2PREF} to give a different proof of Corollary \ref{cor:ADM_has_max} (page \pageref{cor:ADM_has_max}).

\begin{corollary}\label{cor:pref_exist}
\cite[Corollary 12]{Dung:95} Any AF has a preferred extension, i.e. $PREF\neq\es$.
\end{corollary}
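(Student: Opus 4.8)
The plan is to chain together the two corollaries explicitly named in the surrounding text, namely Corollary \ref{cor:empty_is_admissible} (that $\es\in ADM$) and Corollary \ref{cor:ADM2PREF} (that every admissible set is contained in some preferred extension). Since all the genuine work has already been carried out in establishing Corollary \ref{cor:ADM2PREF} via Zorn's lemma (or, in the alternative proof, via transfinite iteration of $d$ using the fundamental lemma), the statement $PREF\neq\es$ is essentially a one-line consequence, and I expect no real obstacle.

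Concretely, I would first observe that $ADM\neq\es$, because $\es\in ADM$ by Corollary \ref{cor:empty_is_admissible}. I then apply Corollary \ref{cor:ADM2PREF} to the particular admissible set $S=\es$: this yields some $E\in PREF$ with $\es\subseteq E$. The containment $\es\subseteq E$ is vacuous and is not needed beyond witnessing that $PREF$ is inhabited; the only point being used is that the corollary \emph{produces} an element of $PREF$. Hence $PREF\neq\es$, as required.

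Since there is nothing delicate to verify, the ``hard part'' is purely expository: I would make clear that this is deliberately a corollary of the stronger existence-and-domination statement, rather than a fresh argument. It is worth noting that one could instead derive $PREF\neq\es$ directly from Corollary \ref{cor:ADM_has_max} (which asserts $\max_{\subseteq}ADM\neq\es$) together with Corollary \ref{cor:pref_max_adm} (which identifies $PREF=\max_{\subseteq}ADM$); but the intended route here, as signalled by the preamble, is the reverse — to re-derive the nonemptiness of $\max_{\subseteq}ADM$ as a byproduct of Corollary \ref{cor:ADM2PREF}. I would therefore phrase the proof so that it leans solely on Corollaries \ref{cor:empty_is_admissible} and \ref{cor:ADM2PREF}, keeping the logical dependencies consistent with the narrative that this gives ``a different proof of Corollary \ref{cor:ADM_has_max}''.
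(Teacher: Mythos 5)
Your proof is correct and is exactly the paper's argument: take $S=\es\in ADM$ by Corollary \ref{cor:empty_is_admissible} and apply Corollary \ref{cor:ADM2PREF} to obtain some $E\in PREF$, whence $PREF\neq\es$. Your expository remarks about the logical dependencies are also consistent with the paper's narrative, so nothing needs changing.
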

\begin{proof}
By Corollary \ref{cor:empty_is_admissible}, we can choose $S=\es\in ADM$ and invoke Corollary \ref{cor:ADM2PREF}. Alternatively, this follows from Corollary \ref{cor:ADM_has_max} (page \pageref{cor:ADM_has_max}).
\end{proof}

Note that in proving Corollary \ref{cor:ADM2PREF}, we have used either Zorn's lemma or the well-ordering theorem, and hence the axiom of choice (AC), to prove that every AF has a preferred extension. Clearly, AC is sufficient to demonstrate this. It is reasonable to ask whether it is \textit{necessary}. In Appendix \ref{app:AC} (page \pageref{app:AC}), we show that AC is also necessary, hence the assertion that all AFs have preferred extensions is equivalent to AC.

\begin{theorem}\label{thm:pref_in_comp}
\cite[Theorem 25(1)]{Dung:95} $PREF\subseteq COMP$, and the converse is not true.
\end{theorem}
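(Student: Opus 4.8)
The plan is to prove both inclusions by exploiting the characterisation $PREF=\max_{\subseteq}ADM$ (Corollary \ref{cor:pref_max_adm}) and the fact that $COMP=CF\cap F_d$. First I would take an arbitrary $S\in PREF$. Since $PREF\subseteq ADM$, we immediately have $S\in CF$ and $S\subseteq d(S)$ (the self-defending half of admissibility). To conclude $S\in COMP$, it remains only to upgrade the inclusion $S\subseteq d(S)$ to an equality $S=d(S)$, i.e. to establish the reverse inclusion $d(S)\subseteq S$, which would place $S$ in $F_d$.

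The reverse inclusion is where the one substantive idea enters, and I would argue by contradiction. Suppose $d(S)\not\subseteq S$, so there is some $a\in d(S)$ with $a\notin S$. Because $S\in ADM$ and $a\in d(S)$, Dung's fundamental lemma (Lemma \ref{lem:dfl}(1)) yields $S\cup\set{a}\in ADM$. But $a\notin S$ gives the strict containment $S\subsetneq S\cup\set{a}$, contradicting the $\subseteq$-maximality of $S$ among admissible sets. Hence $d(S)\subseteq S$, so $S=d(S)\in F_d$, and together with $S\in CF$ this gives $S\in CF\cap F_d=COMP$. Since $S\in PREF$ was arbitrary, $PREF\subseteq COMP$.

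For the failure of the converse I would simply exhibit a complete extension that is not preferred. A clean witness is floating reinstatement (Example \ref{eg:fri}, continued in Example \ref{eg:fri_COMP}), where $COMP=\set{\es,\set{a,e},\set{b,e}}$ while the maximal admissible sets are $PREF=\set{\set{a,e},\set{b,e}}$; here $\es\in COMP\setminus PREF$. (Alternatively Example \ref{eg:BH_eg221}, where $\set{a}\in COMP$ but $\set{a}\notin PREF$, serves the same purpose.) I would state that this shows $COMP\not\subseteq PREF$ in general, completing the claim.

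There is really no serious obstacle here: the whole argument hinges on having the fundamental lemma already in hand, and the only thing to be careful about is not conflating the two defining conditions of admissibility. The ``hard part,'' such as it is, is recognising that maximality plus the fundamental lemma is exactly the lever that converts a postfixed point of $d$ into a genuine fixed point — this is the conceptual content of why $\subseteq$-maximal admissible sets are automatically complete.
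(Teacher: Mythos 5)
Your proof is correct and takes essentially the same approach as the paper: Dung's fundamental lemma (Lemma \ref{lem:dfl}) combined with the $\subseteq$-maximality of $S$ in $ADM$ yields $d(S)\subseteq S$, and a complete extension that is not preferred (the paper's witness is $\set{a}$ from Example \ref{eg:BH_eg221}, which you also cite; your $\es\in COMP\setminus PREF$ from floating reinstatement works equally well) refutes the converse. The only cosmetic difference is that you phrase the maximality step as a proof by contradiction where the paper argues directly that $S\cup\set{a}\subseteq S$ forces $a\in S$.
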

\begin{proof}
Let $S\in PREF$, then $S\in\max_{\subseteq}ADM$. We need to show $d(S)\subseteq S$. Let $a\in d(S)$, then $S\cup\set{a}\in ADM$ by Dung's fundamental lemma (Lemma \ref{lem:dfl}, page \pageref{lem:dfl}). Trivially, $S\subseteq S\cup\set{a}$, but as $S\in\max_{\subseteq}ADM$, $S\cup\set{a}\subseteq S$ and hence $a\in S$. Therefore, as $a$ is arbitrary, $d(S)\subseteq S$. Therefore, $S\in COMP$.

The converse is not true. From Example \ref{eg:BH_eg221} (page \pageref{eg:BH_eg221}), $\set{a}\in COMP$ but $\set{a}\notin PREF$.
\end{proof}

\begin{theorem}\label{thm:pref_max_comp}
$PREF=\max_{\subseteq}COMP$.
\end{theorem}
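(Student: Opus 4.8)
The plan is to prove the set equality by establishing the two inclusions $PREF\subseteq\max_{\subseteq}COMP$ and $\max_{\subseteq}COMP\subseteq PREF$ separately, leaning entirely on three facts already in hand: $PREF=\max_{\subseteq}ADM$ (Corollary \ref{cor:pref_max_adm}), the chain of inclusions $PREF\subseteq COMP\subseteq ADM$ (Theorems \ref{thm:pref_in_comp} and \ref{thm:comp_in_adm}), and the ``completion'' result that every admissible set is contained in a preferred extension (Corollary \ref{cor:ADM2PREF}).

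For the forward inclusion $PREF\subseteq\max_{\subseteq}COMP$, I would take $S\in PREF$. By Theorem \ref{thm:pref_in_comp}, $S\in COMP$, so it remains to check $S$ is $\subseteq$-maximal within $COMP$. I would suppose $T\in COMP$ with $S\subseteq T$; since $COMP\subseteq ADM$ by Theorem \ref{thm:comp_in_adm}, we have $T\in ADM$, and because $S$ is $\subseteq$-maximal in $ADM$ (as $S\in PREF=\max_{\subseteq}ADM$), the relation $S\subseteq T$ forces $S=T$. Hence $S\in\max_{\subseteq}COMP$.

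For the reverse inclusion $\max_{\subseteq}COMP\subseteq PREF$, I would take $S\in\max_{\subseteq}COMP$. Then $S\in COMP\subseteq ADM$, so $S$ is admissible, and by Corollary \ref{cor:ADM2PREF} there is some $E\in PREF$ with $S\subseteq E$. Since $PREF\subseteq COMP$ by Theorem \ref{thm:pref_in_comp}, we have $E\in COMP$; the $\subseteq$-maximality of $S$ in $COMP$ together with $S\subseteq E$ then yields $S=E$, so $S=E\in PREF$.

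I do not expect a genuine obstacle here: the statement is a bookkeeping consequence of results proved earlier, and the only point requiring care is to invoke the \emph{right} maximality each time --- maximality in $ADM$ (via $PREF=\max_{\subseteq}ADM$) to close off the forward direction, and maximality in $COMP$ (the hypothesis) together with the embedding Corollary \ref{cor:ADM2PREF} to close off the reverse direction. One should also note in passing that all suprema/maxima are taken with respect to the same order $\subseteq$ on $\pow\pair{A}$, so no reconciliation of orders is needed. Combining the two inclusions gives $PREF=\max_{\subseteq}COMP$.
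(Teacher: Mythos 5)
Your proof is correct and uses exactly the same ingredients as the paper's ($PREF=\max_{\subseteq}ADM$, the inclusions $PREF\subseteq COMP\subseteq ADM$, and Corollary \ref{cor:ADM2PREF}), with an identical forward direction. Your reverse inclusion is just the direct (contrapositive-free) form of the paper's argument --- taking $S\in\max_{\subseteq}COMP$, embedding it in some $E\in PREF$ via Corollary \ref{cor:ADM2PREF}, and using maximality in $COMP$ to force $S=E$ --- which is in fact marginally cleaner, since it avoids the paper's case split on whether $P\in ADM$ and whether $P\in COMP$.
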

\begin{proof}
($\Rightarrow$) Let $P\in PREF$. Let $C\in COMP$ such that $P\subseteq C$. Clearly, $C\in ADM$ so $C\subseteq P$. Therefore, $C=P$ and hence $P\in\max_{\subseteq}COMP$.

($\Leftarrow$, contrapositive) Let $P\notin PREF$, then either $P\notin ADM$, or $P\in ADM$ and is not $\subseteq$-maximal.
\begin{enumerate}
\item If $P\notin ADM$, then $P\notin COMP$, which means $P\notin \max_{\subseteq} COMP$.
\item If $P\in ADM$ and is not $\subseteq$-maximal, then by Corollary \ref{cor:ADM2PREF}, there is some $E\in PREF$ such that $P\subseteq E$. By assumption, $P\subset E$. Clearly, $E\in COMP$. Assume that $P\in COMP$ (else $P\notin\max_{\subseteq} COMP$ and the result follows), we have that $P\subset E$ and hence $P\notin\max_{\subseteq} COMP$. 
\end{enumerate}
The result follows.
\end{proof}

\noindent This means that every complete extension is contained in a preferred extension.

\begin{example}
\cite[Exercise 7(a)]{Caminada:08} From Figure \ref{fig:Caminada_ex2} (page \pageref{fig:Caminada_ex2}), $COMP=\set{\set{a,c}}$ and hence $PREF=\set{\set{a,c}}$.
\end{example}

\begin{example}
\cite[Exercise 7(b)]{Caminada:08} From Figure \ref{fig:Caminada_ex6b} (page \pageref{fig:Caminada_ex6b}), $COMP=\set{\set{a},\set{a,c,f},\set{a,e}}$ and hence $PREF=\set{\set{a,c,f},\set{a,e}}$.
\end{example}

\begin{example}
\cite[Exercise 7(c)]{Caminada:08} From Figure \ref{fig:Caminada_ex6c} (page \pageref{fig:Caminada_ex6c}), $COMP=\set{\set{b,e}}$ and hence $PREF=\set{\set{b,e}}$.
\end{example}

\begin{example}
\cite[Exercise 7(d)]{Caminada:08} From Figure \ref{fig:Caminada_ex3} (page \pageref{fig:Caminada_ex3}), $COMP=\set{\es,\set{a},\set{b,e}}$ and hence $PREF=\set{\set{a},\set{b,e}}$.
\end{example}

\begin{corollary}\label{cor:empty_pref}
$COMP=\set{\es}$ iff $PREF=\set{\es}$.
\end{corollary}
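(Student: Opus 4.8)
The plan is to prove both implications directly, leaning on the inclusion $PREF\subseteq COMP$ and the characterisation $PREF=\max_{\subseteq}COMP$ established above, together with the fact that $PREF$ is always non-empty. No transfinite induction is needed here, since all the heavy lifting has already been done in the preceding existence and containment results.

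For the forward implication, I would assume $COMP=\set{\es}$. By Theorem \ref{thm:pref_in_comp} we have $PREF\subseteq COMP=\set{\es}$, so in particular $PREF\subseteq\set{\es}$. Since every AF has a preferred extension (Corollary \ref{cor:pref_exist}), we know $PREF\neq\es$, and combining these two facts forces $PREF=\set{\es}$.

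For the backward implication, I would assume $PREF=\set{\es}$ and argue that every complete extension must then be empty. The key input is that each complete extension lies below a preferred one: since $COMP\subseteq ADM$ (Theorem \ref{thm:comp_in_adm}), for any $C\in COMP$ Corollary \ref{cor:ADM2PREF} supplies some $E\in PREF$ with $C\subseteq E$. (Equivalently, one may cite the remark following Theorem \ref{thm:pref_max_comp} that every complete extension is contained in a preferred extension.) As $PREF=\set{\es}$, this gives $C\subseteq\es$, hence $C=\es$, so $COMP\subseteq\set{\es}$. Finally $\es\in PREF\subseteq COMP$, whence $COMP=\set{\es}$.

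Both directions are essentially bookkeeping with the already-proven set inclusions, so I do not expect a genuine obstacle. The one point meriting care is the backward direction: to exclude complete extensions other than $\es$, I must explicitly invoke that each complete extension sits below a preferred extension, rather than merely using $PREF=\max_{\subseteq}COMP$ (maximality of $PREF$ alone would not, in an arbitrary poset, guarantee that every element is dominated by a maximal one). This is precisely what Corollary \ref{cor:ADM2PREF} provides.
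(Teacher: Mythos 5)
Your proof is correct, and in the backward direction it takes a genuinely different (and in fact more careful) route than the paper. The paper proves ($\Rightarrow$) in one line via $PREF=\max_{\subseteq}COMP$, and proves ($\Leftarrow$) by contrapositive with a case split on whether $\es\in COMP$: if $\es\in COMP$ and $COMP\neq\set{\es}$, it picks a non-empty $C\in COMP$ and asserts $C\in\max_{\subseteq}COMP=PREF$; if $\es\notin COMP$, it uses $PREF\subseteq COMP$ to get $\es\notin PREF$. You instead argue ($\Leftarrow$) directly: any $C\in COMP\subseteq ADM$ is contained in some $E\in PREF$ by Corollary \ref{cor:ADM2PREF}, so $PREF=\set{\es}$ forces $C\subseteq\es$, and $\es\in PREF\subseteq COMP$ closes the argument. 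Your closing caveat identifies exactly the weak point of the paper's version: an arbitrary non-empty complete extension need not itself be $\subseteq$-maximal (e.g.\ $\set{a}\in COMP$ in Example \ref{eg:BH_eg221} is not preferred), so the paper's assertion ``$C\in\max_{\subseteq}COMP$'' is not literally justified as stated and tacitly needs the domination fact that every complete (indeed admissible) set sits below a preferred one — which is precisely what you invoke explicitly. What the paper's contrapositive buys is brevity; what your direct route buys is that each step is licensed by a previously proven result, with the Zorn-style content isolated in Corollary \ref{cor:ADM2PREF} rather than smuggled into a maximality claim.
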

\begin{proof}
($\Rightarrow$) If $COMP=\set{\es}$, then $\max_{\subseteq}COMP=\set{\es}=PREF$.

($\Leftarrow$, contrapositive) Let $COMP\neq\set{\es}$. As $COMP\neq\es$, either $\es\in COMP$ or $\es\notin COMP$. If $\es\in COMP$ and $COMP\neq\set{\es}$, then there is some non-empty $C\in COMP$. Therefore, $C\in\max_{\subseteq}COMP=PREF$ and hence $PREF\neq\set{\es}$. If $\es\notin COMP$, then as $PREF\subseteq COMP$, $\es\notin PREF$ either so $PREF\neq\set{\es}$. In either case, $PREF\neq\set{\es}$.
\end{proof}

\begin{theorem}
If the AF is isomorphic to $C_n$ where $n$ is even, then $\abs{PREF}=2$.
\end{theorem}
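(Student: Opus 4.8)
The plan is to identify the two preferred extensions explicitly and then show that no other maximal admissible set exists. Write the arguments as $\set{a_i}_{i\in\integ/n\integ}$ with $R\pair{a_i,a_{i+1}}$ for every $i$ (indices taken mod $n$), so that in $C_n$ each argument $a_i$ has exactly one attacker, namely $a_{i-1}$, and attacks exactly one argument, namely $a_{i+1}$. The two candidates are the ``alternating'' sets $E_0:=\set{a_i\:\vline\:i\text{ even}}$ and $E_1:=\set{a_i\:\vline\:i\text{ odd}}$; since $n$ is even, parity is preserved mod $n$, so these are well-defined and partition $A$ into two sets of size $n/2$. First I would verify that both are admissible via Corollary \ref{cor:adm_cf_sd}: each $E_k$ is conflict-free because no two arguments of the same parity are adjacent on the cycle, and each is self-defending because the unique attacker of any $a_i\in E_k$ is $a_{i-1}$, which is itself attacked by $a_{i-2}\in E_k$, giving $E_k^-\subseteq E_k^+$ (Theorem \ref{thm:sd_equiv}).

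The crux is to show these are the only non-empty admissible sets. The key observation is a closure property of self-defending sets on $C_n$: if $S\in ADM$ and $a_i\in S$, then since $a_{i-1}$ is the only attacker of $a_i$ and $a_{i-2}$ is the only attacker of $a_{i-1}$, self-defence ($S^-\subseteq S^+$) forces $a_{i-2}\in S$. Iterating this gives $a_{i-2k}\in S$ for all $k\in\nat$; because $n$ is even, the indices $\set{i-2k\bmod n\:\vline\:k\in\nat}$ range over precisely the $n/2$ arguments sharing the parity of $i$. Hence any non-empty $S\in ADM$ contains an entire parity class. If $S$ also contained an argument of the opposite parity, the same closure would force $S$ to contain both classes, i.e. $S=A$; but $A$ contains adjacent pairs $a_i,a_{i+1}$ and so is not conflict-free, contradicting $S\in ADM\subseteq CF$. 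Therefore every non-empty admissible set equals exactly one of $E_0$ or $E_1$.

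It then follows that $ADM=\set{\es,E_0,E_1}$. Since $\es\subset E_0$, the $\subseteq$-maximal admissible sets are exactly $E_0$ and $E_1$, which are distinct and $\subseteq$-incomparable, so by Corollary \ref{cor:pref_max_adm} we obtain $PREF=\set{E_0,E_1}$ and hence $\abs{PREF}=2$. The main obstacle is the closure step, and it is precisely here that the hypothesis that $n$ is even is indispensable: for odd $n$ the iteration $i\mapsto i-2$ has full orbit mod $n$, so membership of a single argument would force $S=A$, whence no non-empty conflict-free self-defending set could exist and $PREF$ would collapse to $\set{\es}$ rather than a two-element set.
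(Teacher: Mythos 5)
Your proof is correct, and its positive half is the same construction as the paper's: exhibit the two alternating parity classes and check via Theorem \ref{thm:sd_equiv} that each is conflict-free and self-defending. Where you genuinely diverge is in the uniqueness step. The paper verifies that the two candidate sets are admissible, remarks that they are ``clearly $\subseteq$-maximal'', and concludes $PREF=\set{S_1,S_2}$; but maximality of the two exhibited sets only shows $\set{S_1,S_2}\subseteq PREF$ and does not by itself exclude some \emph{other} $\subseteq$-maximal admissible set, so the paper's proof has a (small) gap at exactly the point your argument is strongest. Your closure observation --- that on $C_n$ each argument has a unique attacker, so $a_i\in S\in ADM$ forces $a_{i-2}\in S$, whence any non-empty admissible set contains a full parity class and $ADM=\set{\es,E_0,E_1}$ --- supplies the missing classification, from which $PREF=\max_{\subseteq}ADM=\set{E_0,E_1}$ follows by Corollary \ref{cor:pref_max_adm}. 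It also has the merit of isolating precisely where evenness of $n$ enters (the orbit of $i\mapsto i-2$ modulo $n$ is one parity class rather than all of $\integ/n\integ$), and your closing remark correctly recovers the odd case $PREF=\set{\es}$, in agreement with Theorem \ref{thm:odd_cycle_empty_PREF}, as a by-product of the same orbit computation.
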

\begin{proof}
WLOG let $A=\set{a_1,\ldots, a_n}$ where $n$ is even. Partition this into two sets $S_1:=\set{a_i\in A\:\vline\:1\leq i\leq n,\:i\text{ odd}}$ and $S_2:=\set{a_i\in A\:\vline\:1\leq i\leq n,\:i\text{ even}}$. Clearly both are cf sets as $R\pair{a_i,\:a_{i+1}}$ for $1\leq i\leq n-1$ and $R\pair{a_n,a_1}$ are the only attacks. Further, they are self-defending: for any $a_{k-1}$ attacking $S_i$ for $i=1,2$, there is some $a_{k-2}\in S_i$ that attacks $a_{k-1}$. These are clearly $\subseteq$-maximal. Therefore, $PREF=\set{S_1,S_2}$ and hence $\abs{PREF}=2$.
\end{proof}

\noindent As $C_n$ for $n$ even are bipartite graphs, any AF isomorphic to $C_n$ will have each partition forming a preferred extension.

\begin{theorem}\label{thm:odd_cycle_empty_PREF}
If the AF is isomorphic to $C_n$ where $n$ is odd, then $PREF=\set{\es}$. 
\end{theorem}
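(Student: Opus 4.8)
The plan is to prove the stronger statement $ADM=\set{\es}$ and then read off the result from $PREF=\max_{\subseteq}ADM$ (Corollary \ref{cor:pref_max_adm}). Since $\es\in ADM$ for every AF (Corollary \ref{cor:empty_is_admissible}), once we show that $\es$ is the \emph{only} admissible set it follows at once that $\max_{\subseteq}ADM=\set{\es}$, i.e. $PREF=\set{\es}$. So the whole task reduces to showing that no non-empty $S\subseteq A$ can be admissible.

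First I would fix the labelling $A=\set{a_1,\ldots,a_n}$ with attacks $R(a_i,a_{i+1})$ (indices read modulo $n$), so that in $C_n$ every argument has exactly one attacker, its predecessor, and attacks exactly one argument, its successor. Writing $p$ and $s$ for the predecessor and successor maps, both are bijections of $A$, and for any $S\subseteq A$ we have $S^-=p(S)$ and $S^+=s(S)$; in particular $\abs{S^-}=\abs{S}=\abs{S^+}$, since $A$ is finite and bijections preserve cardinality.

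Now suppose $S\in ADM$ is non-empty. By self-defence (Theorem \ref{thm:sd_equiv}) we have $S^-\subseteq S^+$, and since these two sets have the same finite cardinality $\abs{S}$, in fact $S^-=S^+$, i.e. $p(S)=s(S)$. As $s=p^{-1}$, this rearranges to $p^2(S)=S$, so $S$ is invariant under the permutation $p^2$. Because $n$ is odd, $\gcd(2,n)=1$, so $p^2$ is again a single $n$-cycle on $A$; the orbit of any element of $S$ under $p^2$ is therefore all of $A$, forcing $S=A$. (Equivalently, by element chasing: $a_i\in S$ needs its attacker $a_{i-1}$ to lie in $S^+$, which forces $a_{i-2}\in S$, and iterating the step $i\mapsto i-2$ through all residues modulo $n$ sweeps out every argument.)

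This is the contradiction: $A$ is not conflict-free because $R\neq\es$ gives $A^2\cap R\neq\es$, whereas $ADM\subseteq CF$ (Corollary \ref{cor:adm_cf_sd}). Hence no non-empty admissible set exists, $ADM=\set{\es}$, and therefore $PREF=\set{\es}$. The only step requiring genuine care is promoting $S^-\subseteq S^+$ to the equality $S^-=S^+$ via the cardinality count, together with the coprimality observation $\gcd(2,n)=1$ that makes $p^2$ act transitively; this coprimality is precisely where the odd parity of $n$ enters, and it is the crux of the argument (for even $n$ the map $p^2$ splits into two orbits, yielding the two preferred extensions found in the preceding theorem instead).
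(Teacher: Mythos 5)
Your proof is correct, but it takes a genuinely different route from the paper's. The paper argues combinatorially about $\subseteq$-maximal conflict-free sets: such a set has at most $\lfloor n/2\rfloor$ arguments, so (by oddness) some pair $a,b\in S$ sits at path distance $3$, whence $b$'s unique attacker is attacked only by the successor of $a$, which conflict-freeness excludes from $S$; self-defence therefore fails, and the paper then extends this failure to all non-empty cf sets. Your argument instead exploits the algebraic structure of the cycle: the predecessor and successor maps are inverse bijections, so self-defence $S^-\subseteq S^+$ (Theorem \ref{thm:sd_equiv}) upgrades via the cardinality count $\abs{S^-}=\abs{S}=\abs{S^+}$ to the equality $p(S)=s(S)$, i.e. $p^2(S)=S$, and since $\gcd(2,n)=1$ the double shift acts transitively, forcing $S=A$, which violates conflict-freeness (Corollary \ref{cor:adm_cf_sd}); your parenthetical element chase ($a_i\in S$ forces $a_{i-2}\in S$, and $i\mapsto i-2$ sweeps all residues mod odd $n$) even reaches $S=A$ directly from self-defence alone, without the cardinality step. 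Your route buys several things: it is fully rigorous where the paper's write-up is loose (the paper's bound should read $\lfloor\abs{A}/2\rfloor$, and its claim that subsets of a non-self-defending set cannot be self-defending is not valid reasoning in general), it handles the $n=1$ self-loop uniformly, it proves the stronger statement $ADM=\set{\es}$ explicitly before invoking $PREF=\max_{\subseteq}ADM$ (Corollary \ref{cor:pref_max_adm}), and it isolates exactly where parity enters — $\gcd(2,n)=1$ — while simultaneously explaining the even case, where $p^2$ has two orbits corresponding to the two preferred extensions of the preceding theorem. The paper's approach, for its part, is more geometric and needs no appeal to permutation orbits, but as written it is the weaker of the two proofs.
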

\begin{proof}
WLOG let $C_n=\set{a_1,\ldots,a_n}$ for $n$ odd. Let $S\subseteq A$ be a non-empty cf set, which can have at most $\lfloor\abs{S}/2\rfloor$ arguments. For the $\subseteq$-largest such set, there is at least a pair of arguments (say $a$ and $b$) in $S$ whose path length is 3 (say from $a$ to $b$), which implies that there is some argument (in this case $b$) in $S$ whose attacker is not attacked by $S$ (as $a$ cannot reach it). Therefore, any such $S$ cannot be self-defending, and any subset of this $S$ cannot be self-defending either. However, as $U=\es$, the only admissible extension is $\es$ and hence $PREF=\set{\es}$.
\end{proof}

\noindent So AFs whose underlying digraph is an odd cycle will have $PREF=\set{\es}$, and hence $COMP=ADM=\set{\es}$ (Corollary \ref{cor:empty_pref} above and Corollary \ref{cor:empty_comp}, page \pageref{cor:empty_comp}).

\begin{theorem}\label{thm:no_even_cycle_pref_unique}
\cite[Theorem 2.3.1]{EoA} If the underlying AF is finite and has no even cycle, then the preferred extension is unique.
\end{theorem}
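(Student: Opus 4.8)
The plan is to argue by contradiction: assume there are two distinct preferred extensions $E_1, E_2 \in PREF$ and manufacture an even-length directed cycle, contradicting the hypothesis. Since $PREF = \max_{\subseteq} ADM$ and $PREF \neq \es$ (Corollaries \ref{cor:pref_max_adm} and \ref{cor:pref_exist}), proving that any two preferred extensions coincide establishes that $PREF$ is a singleton.

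First I would observe that $E_1 \cup E_2 \notin ADM$: were it admissible, the $\subseteq$-maximality of $E_1$ and $E_2$ in $ADM$ would give $E_1 = E_1 \cup E_2 = E_2$, contradicting $E_1 \neq E_2$. By Lemma \ref{lem:union_cf_adm_adm} applied to the family $\set{E_1, E_2}$, since $E_1 \cup E_2 \notin ADM$ it cannot be conflict-free. Hence there is an attack inside $E_1 \cup E_2$; using that $E_1$ and $E_2$ are each conflict-free, a short case check shows this attack must cross between the two extensions, so we may take $R(u,v)$ with $u \in E_1 \setminus E_2$ and $v \in E_2 \setminus E_1$ (relabelling $E_1, E_2$ if necessary).

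The heart of the argument is to build an infinite backward chain of defences $z_0 = v,\ z_1 = u,\ z_2,\ z_3,\ldots$ with $R(z_{n+1}, z_n)$ for all $n$, such that $z_n \in E_2 \setminus E_1$ when $n$ is even and $z_n \in E_1 \setminus E_2$ when $n$ is odd. The construction is inductive: given that $z_{n+1}$ attacks $z_n$ and $z_n$ lies in one of the (admissible) extensions, that extension defends $z_n$, yielding an attacker $z_{n+2}$ of $z_{n+1}$ inside the same extension as $z_n$; conflict-freeness of the \emph{other} extension then forces $z_{n+2}$ out of it, so $z_{n+2}$ lands in the correct pure part. Since $A$ is finite, the chain must repeat, $z_i = z_j$ with $i<j$; taking the first repeat gives a simple directed cycle $z_j \to z_{j-1} \to \cdots \to z_i = z_j$ of length $j - i$.

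The step I expect to be the main obstacle — and the real content of the theorem — is controlling the \emph{parity} of this cycle. The key point is that $E_1 \setminus E_2$ and $E_2 \setminus E_1$ are disjoint, and every $z_n$ has been pinned to one of them according to the parity of $n$; therefore $z_i = z_j$ can only occur when $i$ and $j$ have the same parity, forcing $j - i$ to be even. This produces an even cycle (of length $\geq 2$, since consecutive $z_n$ lie in different pure parts and are thus distinct), contradicting the no-even-cycle hypothesis. Consequently $E_1 \cup E_2$ is conflict-free after all, so $E_1 \cup E_2 \in ADM$ by Lemma \ref{lem:union_cf_adm_adm}, and maximality gives $E_1 = E_2$. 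Note that finiteness is used only to guarantee the repeat, whereas the disjointness-of-pure-parts bookkeeping is precisely what excludes an odd cycle, which the hypothesis would otherwise permit.
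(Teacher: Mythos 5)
Your proof is correct, and its engine is the same as the paper's (which follows \cite[Theorem 2.6]{BenchCapon:14}): from a crossing attack between two distinct preferred extensions, alternately invoke self-defence of each extension to build a backward chain of attackers that alternates between the two sides, then use finiteness to close the chain into a cycle whose length is even because of the alternation. Where you genuinely differ --- to your advantage --- is in how the crossing attack is obtained and how carefully the parity is controlled. The paper picks \emph{arbitrary} $p\in P-Q$ and $q\in Q-P$ and asserts ``either $R(p,q)$ or $R(q,p)$, for if neither, then $p$ can be added to $Q$''; as stated this is a gap, since the absence of attacks between that single pair does not make $Q\cup\set{p}$ admissible: $p$ may conflict with, or be left undefended against, \emph{other} arguments of $Q$. (Take two disjoint 2-cycles $a\leftrightarrow b$ and $c\leftrightarrow e$, with $P=\set{a,c}$, $Q=\set{b,e}$, $p=a$, $q=e$: there is no attack between $a$ and $e$, yet $a$ cannot join $Q$ because $b$ attacks it. This AF has even cycles, so the theorem is safe, but the claimed dichotomy for arbitrary pairs is false in general, and the paper invokes it before the no-even-cycle hypothesis enters.) Your initialization repairs this: $E_1\cup E_2\notin ADM$ by $\subseteq$-maximality, hence $E_1\cup E_2\notin CF$ by Lemma \ref{lem:union_cf_adm_adm}, and conflict-freeness of each $E_i$ (which also excludes endpoints in $E_1\cap E_2$) forces the offending attack to run between $E_1\setminus E_2$ and $E_2\setminus E_1$. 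Likewise, your parity bookkeeping --- pinning $z_n$ to $E_2\setminus E_1$ or $E_1\setminus E_2$ according to the parity of $n$, with membership in the pure parts enforced at each step by conflict-freeness of the \emph{opposite} extension, so that disjointness of the pure parts forces any repeat $z_i=z_j$ to have $j-i$ even and $\geq 2$ --- makes explicit what the paper leaves implicit when it says the terminating process ``will yield an even cycle''. The trade-off is negligible: you spend Lemma \ref{lem:union_cf_adm_adm} where the paper uses only maximality, but in exchange the first step is watertight and the evenness of the cycle is actually proved rather than asserted.
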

\begin{proof}
(From \cite[Theorem 2.6]{BenchCapon:14}, contrapositive) Suppose there are at least two preferred extensions, and pick two of them $P$ and $Q$, where $P\neq Q$. Consider the sets $P-Q$ and $Q-P$, neither of which can be empty else $P\subseteq Q$ or $Q\subseteq P$, which contradicts the fact that $P$ and $Q$ are distinct preferred extensions. As the AF is finite, WLOG let $P-Q=\set{p_1,\ldots, p_n}$ and $Q-P=\set{q_1,\ldots, q_m}$ for some $n,m\in\nat^+$. Let $p\in P-Q$ and $q\in Q-P$. Either $R(p,q)$ or $R(q,p)$, for if neither, then (say) $p$ can be added to $Q$ making $Q\cup\set{p}\supset Q$, contradicting that $Q\in PREF$.

We now use this to construct our even cycle. WLOG suppose $R(p,q)$, then as $q\in Q-P\subseteq Q\in PREF\subseteq ADM$, there is some $r_1\in Q$ such that $R(r_1,p)$. If $r_1=q$, then $\set{p,q}$ forms an even cycle. If $r_1\neq q$, then as $p\in P-Q\subseteq P\in PREF\subseteq ADM$, there exists an $r_2\in P$ such that $R(r_2,r_1)$. If $r_2=p$ then $\set{p,r_1}$ forms an even cycle. If $r_2\neq p$, then by the same argument as above by invoking an appropriate counter-attack which is guaranteed to exist as preferred extensions are self-defending, we will yield an even cycle as we alternate between the preferred extensions $P$ and $Q$. This process must terminate as we have assumed that the AF is finite. Therefore, the AF has an even cycle.
\end{proof}

The following result strengthens Theorem \ref{thm:no_even_cycle_pref_unique}.

\begin{theorem}\label{thm:pref_only_empty}
\cite[Corollary 2.3.1]{EoA} Let $\ang{A,R}$ be a finite AF such that $U=\es$ and has no even cycle, then $PREF=\set{\es}$.
\end{theorem}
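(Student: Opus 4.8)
The plan is to show that the two hypotheses together force $ADM=\set{\es}$; then $PREF=\max_{\subseteq}ADM=\set{\es}$ by Corollary \ref{cor:pref_max_adm}. Since $\es\in ADM$ for every AF (Corollary \ref{cor:empty_is_admissible}), it suffices to rule out any \emph{non-empty} admissible set. Note that Theorem \ref{thm:no_even_cycle_pref_unique} already guarantees the preferred extension is unique under these hypotheses, so the present statement genuinely strengthens it by pinning that unique extension down to $\es$. I would therefore assume for contradiction that some $E\in ADM$ with $E\neq\es$ exists and manufacture an even cycle, contradicting the no-even-cycle hypothesis.

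The core of the argument is to build an infinite directed walk that strictly alternates between $E$ and $A-E$. Pick $a_0\in E$. Since $U=\es$, the argument $a_0$ is attacked, say $R(b_0,a_0)$; conflict-freeness of $E$ forces $b_0\notin E$ (otherwise $E$ would contain an internal attack), while self-defence of $E$, i.e. $E^-\subseteq E^+$ (Theorem \ref{thm:sd_equiv}), yields some $a_1\in E$ with $R(a_1,b_0)$. Iterating this step produces sequences $\set{a_i}_{i\in\nat}\subseteq E$ and $\set{b_i}_{i\in\nat}\subseteq A-E$ satisfying $R(b_i,a_i)$ and $R(a_{i+1},b_i)$ for every $i$. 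Reading the edges forwards, this is a directed walk $\cdots a_{i+1}\to b_i\to a_i\cdots$ whose vertices alternate between $E$ and its complement, the alternation being guaranteed precisely by conflict-freeness placing each $b_i$ outside $E$.

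Because $A$ is finite and all $a_i$ lie in $E\subseteq A$, some argument must repeat: $a_i=a_j$ for some $i<j$. The walk segment from $a_j$ back to $a_i$ is then a closed directed walk, from which I extract a simple cycle by cutting at the first repeated vertex. The decisive point is parity: since $E$ and $A-E$ are disjoint and the walk alternates between them, each vertex occurs only at walk-positions of one fixed parity, so any two equal positions differ by an even number. Hence the extracted simple cycle has even length; moreover its length is at least $2$, since two consecutive walk-vertices lie on opposite sides and cannot coincide. This even cycle contradicts the hypothesis, so no non-empty admissible set exists, giving $ADM=\set{\es}$ and therefore $PREF=\set{\es}$.

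The main obstacle is exactly this parity/bipartiteness step. One must be careful that conflict-freeness really does force every $b_i\notin E$, so that the constructed walk genuinely alternates, and that it is this alternation—rather than the mere evenness of the closed walk's total length—that guarantees the extracted \emph{simple} cycle is even (a closed walk of even length could a priori decompose into an even number of odd cycles). The remaining ingredients—well-definedness of the walk at each stage from $U=\es$ together with self-defence, and finiteness forcing a repeated vertex—are routine.
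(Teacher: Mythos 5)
Your proposal is correct and follows essentially the same route as the paper's proof: both use $U=\es$ together with conflict-freeness and self-defence to build the alternating backward attack chain with $R(b_i,a_i)$, $R(a_{i+1},b_i)$, $a_i\in E$ and $b_i\in A-E$, then invoke finiteness to force a repeated argument and hence an even cycle (your passage through $ADM=\set{\es}$ rather than directly through a non-empty preferred extension is an immaterial variant, since $PREF=\max_{\subseteq}ADM$). If anything, you are more careful than the paper at the final step, since you justify via the alternation (the two occurrences of a repeated vertex sit at positions of equal parity) why the extracted \emph{simple} cycle must be even, a point the paper's proof leaves implicit.
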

\begin{proof}
(Contrapositive) Assume that our AF is finite and $U=\es$. Assume that $PREF\neq\set{\es}$, we prove that there must exist an even cycle.

As $PREF\neq\es$, we have some $S\in PREF$ such that $S\neq\es$. Clearly, $S$ is a finite set. Choose $a_0\in S$. As $U=\es$, there is some $b_0\in A-S$ such that $R\pair{b_0,a_0}$. As $U=\es$, and $S\in PREF$, there is some $a_1\in S$ such that $R\pair{a_1,b_0}$. If $a_1=a_0$, then we have an even cycle $a_0$ attacks $b_0$ attacks $a_0$ and the result follows.

Otherwise, $a_1\neq a_0$, and as before, we have some $b_1\in A-S$ attacking $a_1$, and some $a_2\in S$ attacking $b_1$. If $a_2=a_0$, then we have an even cycle $a_0$ attacks $b_1$ attacks $a_1$ attacks $b_0$ attacks $a_0$, and the result follows. Similarly, if $a_2=a_1$, we have an even cycle again.

We cannot extend this path $(a_0,b_0,a_1,b_1,a_2,b_2,\ldots)$ such that $a_i\notin\set{a_j}_{j<i}$ indefinitely, because $A$ is finite. Therefore, there exists some $a_k$ that repeats twice in this path, and this guarantees the existence of an even cycle.
\end{proof}



\begin{definition}
Let $S$ be a set. We say a family $\mathcal{F}$ of sets \textbf{covers} $S$ iff $S\subseteq\bigcup\mathcal{F}$.
\end{definition}

\begin{corollary}\label{cor:if_pref_cover_then_rg1}
\cite[Lemma 1]{Croitoru:13} For any AF $\ang{A,R}$, if $PREF$ covers $A$, then $\bigcap PREF=U$. If the hypothesis is not true, then the consequent may or may not be true.
\end{corollary}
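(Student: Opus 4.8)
The plan is to prove $\bigcap PREF = U$ by a double inclusion, and then to settle the ``may or may not'' clause with two examples in which the covering hypothesis fails.

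The inclusion $U \subseteq \bigcap PREF$ holds unconditionally, so I would dispose of it first: by Theorem \ref{thm:pref_in_comp} we have $PREF \subseteq COMP$, and by Corollary \ref{cor:U_subset_COMP} every complete extension contains $U$; hence $U \subseteq C$ for each $C \in PREF$, so $U \subseteq \bigcap PREF$. This direction does not use the covering hypothesis at all.

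The reverse inclusion $\bigcap PREF \subseteq U$ is where the hypothesis enters, and I would prove it by contradiction. Suppose $a \in \bigcap PREF$ but $a \notin U$; then $a$ has some attacker $b \in a^-$, i.e. $R(b,a)$. Since $PREF$ covers $A$, there is a preferred extension $P$ with $b \in P$. As $a$ belongs to \emph{every} preferred extension, in particular $a \in P$. But $P \in PREF \subseteq ADM \subseteq CF$ (Corollaries \ref{cor:pref_max_adm} and \ref{cor:ADM_in_CF}), so $P$ is conflict-free; yet $a,b \in P$ with $R(b,a)$ forces $P^2 \cap R \neq \es$, contradicting conflict-freeness via Theorem \ref{thm:cf_char}. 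Hence $a \in U$, giving $\bigcap PREF \subseteq U$ and, combined with the first inclusion, the desired equality.

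For the final clause I would exhibit two AFs that fail to cover $A$ but behave oppositely. A $3$-cycle (any AF isomorphic to $C_3$, say $A=\set{a,b,c}$ with $R=\set{\pair{a,b},\pair{b,c},\pair{c,a}}$) has $U=\es$ and $PREF=\set{\es}$ by Theorem \ref{thm:odd_cycle_empty_PREF}, so $\bigcup PREF=\es$ does not cover $A$, yet $\bigcap PREF=\es=U$; here the consequent still holds. By contrast, simple reinstatement (Example \ref{eg:simple_reinstatement}) has $U=\set{c}$ and $PREF=\set{\set{a,c}}$, so $\bigcup PREF=\set{a,c}$ misses $b$ and fails to cover $A$, while $\bigcap PREF=\set{a,c}\neq\set{c}=U$; here the consequent fails. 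There is no real computational obstacle here; the crux is recognising that covering is precisely what forces an attacker of a ``universally preferred'' argument into the same conflict-free set as that argument, and the only care required is choosing the two examples so that they genuinely separate the two possibilities.
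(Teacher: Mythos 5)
Your proof is correct and follows essentially the same route as the paper's: the unconditional inclusion $U\subseteq\bigcap PREF$ via $PREF\subseteq COMP$ and Corollary \ref{cor:U_subset_COMP}, the converse by contradiction using the covering hypothesis to place an attacker of $a$ inside a preferred (hence conflict-free) extension alongside $a$, and the same simple-reinstatement counterexample for the failure case. The only difference is cosmetic: for the case where the consequent still holds without covering, you use a $3$-cycle (where $\bigcap PREF=\es=U$) instead of the paper's Example \ref{eg:BH_eg221}, and both choices work.
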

\begin{proof}
($\Leftarrow$) By Theorem \ref{thm:pref_in_comp} (page \pageref{thm:pref_in_comp}), $PREF\subseteq COMP$. By Corollary \ref{cor:U_subset_COMP} (page \pageref{cor:U_subset_COMP}), for all $C\in COMP$, $U\subseteq C$. Therefore, it follows that for all $E\in PREF$, $U\subseteq E$. Therefore, $U\subseteq\bigcap_{E\in PREF}E=\bigcap PREF$.

($\Rightarrow$) Let $a\in \bigcap PREF$. Assume for contradiction that $a\notin U$. Therefore, there is some $b\in a^-$. But as this $b\in A$, there is some $E\in PREF$ such that $b\in E$ by our hypothesis. As $a\in\bigcap PREF$, clearly $a\in E$ as well. Therefore, we have found $a,b\in E\in PREF$ such that $b\in a^-$, which means $E\notin CF$ -- contradiction, as $PREF\subseteq CF$. Therefore, $a\in U$.

Now suppose that the preferred extensions do not cover $A$. We give two examples where $\bigcap PREF = U$ and $\bigcap PREF\neq U$.
\begin{enumerate}
\item In Example \ref{eg:BH_eg221} (page \pageref{eg:BH_eg221}), $PREF=\set{\set{a,c},\set{a,f}}$, which clearly does not cover $A=\set{a,b,c,f,e}$, but $\bigcap PREF=\set{a}=U$.
\item In Example \ref{eg:simple_reinstatement} (page \pageref{eg:simple_reinstatement}), $PREF=\set{\set{a,c}}$ and $A=\set{a,b,c}$. Therefore, $PREF$ does not cover $A$, and $\bigcap PREF=\set{a,c}\neq U=\set{c}$.
\end{enumerate}
This shows the result.
\end{proof}

\begin{corollary}
\cite[Proposition 6]{Coste:05} For all symmetric AFs, if there are no self-attacking arguments, then $PREF$ covers $A$.
\end{corollary}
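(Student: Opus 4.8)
The plan is to show that every single argument lies in some preferred extension, which is exactly the statement that $PREF$ covers $A$ (i.e.\ $A\subseteq\bigcup PREF$). The natural route is to exploit the earlier characterisation of symmetric AFs in Theorem~\ref{thm:sym_AF_CF_ADM_equal}, which states that a non-empty, non-trivial AF without self-attacking arguments is symmetric iff $CF=ADM$. The first thing I would note is that the hypotheses of that theorem are automatically met here: a symmetric AF has $R\neq\es$ by Definition~\ref{def:symmetric}, so the AF is non-trivial, and $R\neq\es$ forces $A\neq\es$, so the AF is non-empty; the assumption that there are no self-attacking arguments is given. Hence Theorem~\ref{thm:sym_AF_CF_ADM_equal} applies and yields $CF=ADM$.

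Next I would take an arbitrary argument $a\in A$ and form the singleton $\set{a}$. Since the AF has no self-attacking arguments, $\pair{a,a}\notin R$, so $\set{a}^2\cap R=\es$, which by Theorem~\ref{thm:cf_char} means $\set{a}\in CF$. Using $CF=ADM$, we immediately get $\set{a}\in ADM$.

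Then I would invoke Corollary~\ref{cor:ADM2PREF}, which guarantees that for any admissible set $S$ there is a preferred extension $E\in PREF$ with $S\subseteq E$. Applying this to $S=\set{a}$ produces some $E\in PREF$ with $a\in\set{a}\subseteq E$, so $a\in\bigcup PREF$. Since $a\in A$ was arbitrary, this shows $A\subseteq\bigcup PREF$, i.e.\ $PREF$ covers $A$, completing the proof.

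This argument is really a short chain of already-established facts, so I do not expect a genuine obstacle. The only point requiring care is checking that the symmetric AF indeed satisfies the non-empty and non-trivial side-conditions of Theorem~\ref{thm:sym_AF_CF_ADM_equal} so that the equality $CF=ADM$ is legitimately available; everything after that is routine. It is worth flagging that the no-self-attacking-arguments hypothesis is essential in two places at once---both to invoke Theorem~\ref{thm:sym_AF_CF_ADM_equal} and to ensure each singleton $\set{a}$ is conflict-free---so an argument that was self-attacking could never belong to a conflict-free (hence admissible, hence preferred) set, and the covering property would fail.
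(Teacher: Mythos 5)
Your proof is correct and takes essentially the same route as the paper's own: both observe that each singleton $\set{a}$ is conflict-free (no self-attacks), upgrade it to admissible via Theorem \ref{thm:sym_AF_CF_ADM_equal}, and then embed it into a preferred extension by Corollary \ref{cor:ADM2PREF}. Your explicit verification that a symmetric AF is automatically non-empty and non-trivial (since $R\neq\es$ forces $A\neq\es$), so that the hypotheses of Theorem \ref{thm:sym_AF_CF_ADM_equal} are genuinely met, is a small point of rigour the paper leaves implicit.
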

\begin{proof}
Let $\ang{A,R}$ be such an AF. Let $a\in A$, then $\set{a}\in CF$ because there are no self-attacking arguments. By Theorem \ref{thm:sym_AF_CF_ADM_equal} (page \pageref{thm:sym_AF_CF_ADM_equal}), $\set{a}\in ADM$. By Theorem \ref{cor:ADM2PREF} (page \pageref{cor:ADM2PREF}), there is a preferred extension that contains $\set{a}$. As $a$ is arbitrary, then every argument in $A$ belongs to some preferred extension $P\subseteq A$, hence $PREF$ covers $A$.
\end{proof}


The lattice-theoretic properties are trivial for $PREF$.

\begin{corollary}\label{cor:PREF_antichain}
$\ang{PREF,\subseteq}$ is an antichain.
\end{corollary}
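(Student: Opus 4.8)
The plan is to argue directly from the characterisation $PREF=\max_{\subseteq}ADM$ (Corollary \ref{cor:pref_max_adm}), mirroring exactly the earlier proof that $NAI$ is a $\subseteq$-antichain. First I would dispose of the degenerate case: if $PREF$ has at most one element it is vacuously an antichain (and we already know $PREF\neq\es$ by Corollary \ref{cor:pref_exist}, so the relevant subcase is $\abs{PREF}=1$).

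For the main case I would take two \emph{distinct} preferred extensions $P,Q\in PREF$ and show they are $\subseteq$-incomparable. Suppose for contradiction that $P\subseteq Q$. Since $Q\in PREF\subseteq ADM$ and $P$ is a $\subseteq$-maximal element of $ADM$ with $P\subseteq Q$, maximality of $P$ forces $P=Q$, contradicting distinctness. By the symmetric argument $Q\subseteq P$ is likewise impossible. Hence no two distinct elements of $PREF$ are $\subseteq$-comparable, so $\ang{PREF,\subseteq}$ is an antichain.

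I do not expect any genuine obstacle here: the statement is immediate once one unwinds the definition of a $\subseteq$-maximal element, and the argument is purely order-theoretic, making no further use of the attack relation $R$ or of conflict-freeness beyond what is already packaged into $ADM$. The only points requiring mild care are to invoke maximality within $ADM$ (rather than within $CF$, as in the $NAI$ case) and to separate out the empty/singleton situation so that "distinct $P,Q$" is meaningful.
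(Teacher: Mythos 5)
Your proof is correct and is essentially the paper's argument: the paper also derives the antichain property immediately from $PREF=\max_{\subseteq}ADM$ (Corollary \ref{cor:pref_max_adm}), leaving implicit exactly the unwinding of maximality you spell out. Your version merely makes explicit the incomparability argument that the paper's earlier $NAI$ proof carries out and the $PREF$ proof abbreviates as ``immediate''.
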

\begin{proof}
Immediate, as $PREF=\max_{\subseteq}ADM$, by Corollary \ref{cor:pref_max_adm} (page \pageref{cor:pref_max_adm}).
\end{proof}

\subsubsection{Relationship with Naive Extensions}\label{sec:naive_and_preferred}

Recall that naive extensions are $\subseteq$-maximal $CF$ sets. Of course, $PREF\subseteq CF$ and $PREF=\max_{\subseteq}COMP$. How are preferred and naive extensions related? It turns out that neither implies the other.

\begin{lemma}
Preferred extensions are not always naive extensions.
\end{lemma}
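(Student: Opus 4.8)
The plan is to prove this negative statement by exhibiting a single counterexample, i.e. an AF possessing a preferred extension that fails to be a naive extension. The structural point to keep in mind throughout is the chain of inclusions $PREF = \max_{\subseteq}ADM$ (Corollary \ref{cor:pref_max_adm}) and $ADM\subseteq CF$ (Corollary \ref{cor:ADM_in_CF}), so that every preferred extension is automatically conflict-free; what can fail is $\subseteq$-maximality \emph{among} conflict-free sets, since $NAI=\max_{\subseteq}CF$. Thus the whole task reduces to finding an AF where a $\subseteq$-maximal admissible set sits strictly below some conflict-free set.

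The obstruction that makes this possible is that admissibility demands self-defence, a strictly stronger requirement than conflict-freeness: an argument may be conflict-free-addable to a set yet not admissibly-addable, because it cannot attack back its own attackers. The cleanest witness is an odd cycle. Concretely I would take $\ang{A,R}$ isomorphic to $C_3$, say $A=\set{a,b,c}$ with $R=\set{(a,b),(b,c),(c,a)}$. By Theorem \ref{thm:odd_cycle_empty_PREF}, since this AF is an odd cycle we get $PREF=\set{\es}$ for free, so $\es$ is a (the) preferred extension. On the other hand, in a directed $3$-cycle every pair of distinct arguments is joined by an attack, so $CF=\set{\es,\set{a},\set{b},\set{c}}$ and hence $NAI=\max_{\subseteq}CF=\set{\set{a},\set{b},\set{c}}$. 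Since $\es\notin NAI$, the preferred extension $\es$ is not a naive extension, which establishes the claim.

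There is no genuinely hard step here; the only ``obstacle'' is choosing a transparent example and correctly computing its $CF$, $NAI$, $ADM$ and $PREF$, the subtle part being to confirm that the preferred extension really is strictly smaller than every maximal conflict-free set. I would double-check that no singleton in the $C_3$ is admissible (each $\set{a_i}$ is attacked by its cyclic predecessor but attacks only its successor, so $\set{a_i}^-\not\subseteq\set{a_i}^+$ and $\set{a_i}\notin SD$ by Theorem \ref{thm:sd_equiv}), which forces $ADM=\set{\es}$ and pins down $PREF=\set{\es}$ independently of Theorem \ref{thm:odd_cycle_empty_PREF} if one prefers a self-contained verification. As an alternative witness already drawn in the text, Example \ref{eg:ab_self_attack} works identically: there $CF=\set{\es,\set{b}}$ gives $NAI=\set{\set{b}}$, while the self-attacking $a$ leaves $\set{b}$ undefended so $ADM=PREF=\set{\es}$, and again $\es\in PREF\setminus NAI$.
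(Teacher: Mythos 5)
Your proof is correct and takes essentially the same route as the paper, which likewise uses the directed $3$-cycle with $A=\set{a,b,c}$, $R=\set{(a,b),(b,c),(c,a)}$, computes $CF=\set{\es,\set{a},\set{b},\set{c}}$ and $NAI=\set{\set{a},\set{b},\set{c}}$, and invokes Theorem \ref{thm:odd_cycle_empty_PREF} to conclude $PREF=\set{\es}$. Your self-contained verification that no singleton in $C_3$ is self-defending, and your alternative witness via Example \ref{eg:ab_self_attack}, are correct additions beyond what the paper records but do not change the argument.
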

\begin{proof}
Consider the AF whose underlying digraph is $C_3$, i.e. $A=\set{a,b,c}$ and $R=\set{(a,b),(b,c),(c,a)}$. Clearly, $CF=\set{\es,\set{a},\set{b},\set{c}}$ and hence $NAI=\set{\set{a},\set{b},\set{c}}$. However, by Theorem \ref{thm:odd_cycle_empty_PREF}, $PREF=\set{\es}$. Therefore, preferred extensions are not always naive extensions.
\end{proof}

\begin{lemma}\label{lem:nai_not_pref}
Naive extensions are not always preferred extensions.
\end{lemma}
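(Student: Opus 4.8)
The plan is to prove this non-implication by a single explicit counterexample: an AF possessing a set that is $\subseteq$-maximal among conflict-free sets yet fails to be self-defending, hence fails to be admissible and a fortiori fails to be preferred. The natural candidate is simple reinstatement (Example \ref{eg:simple_reinstatement}), where $A=\set{a,b,c}$ and $R=\set{(b,a),(c,b)}$; the asymmetry of this chain is exactly what forces a maximal consistent set to leave an attack unanswered.

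First I would identify the witness set $\set{b}$ and confirm it is a naive extension. From Example \ref{eg:simp_re_nai} we already have $NAI=\set{\set{a,c},\set{b}}$, so $\set{b}\in NAI$. To see this from scratch, $\set{b}$ is conflict-free, and it is $\subseteq$-maximal in $CF$ because neither proper superset is conflict-free: $\set{a,b}$ fails since $R(b,a)$, and $\set{b,c}$ fails since $R(c,b)$. Thus $\set{b}\in\max_{\subseteq}CF=NAI$.

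Next I would show $\set{b}\notin PREF$. Since $PREF\subseteq ADM$ and every admissible set is self-defending, it suffices to check that $\set{b}$ is not self-defending. Here $b^-=\set{c}$ while $\set{b}^+=\set{a}$, so $b^-\not\subseteq\set{b}^+$; equivalently $b\notin d\pair{\set{b}}$, so $\set{b}\not\subseteq d\pair{\set{b}}$ and $\set{b}\notin SD$. Hence $\set{b}\notin ADM$ and therefore $\set{b}\notin PREF$. (One could instead simply cite that $PREF=\set{\set{a,c}}$ for this AF, as recorded in the proof of Corollary \ref{cor:if_pref_cover_then_rg1}.) This exhibits a naive extension that is not a preferred extension, completing the proof.

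The argument presents no genuine obstacle; the only point demanding care is verifying that $\set{b}$ really is $\subseteq$-maximal in $CF$, which reduces to the two one-line conflict-freeness checks above. The conceptual takeaway driving the example is that naiveté imposes only conflict-freeness, whereas a preferred extension additionally demands self-defence: the argument $b$ belongs to a maximal consistent set but cannot reply to its undefeated attacker $c$, so maximality in $CF$ does not survive the passage to maximality in $ADM$.
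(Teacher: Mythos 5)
Your proof is correct, and it follows the same overall strategy as the paper -- refutation by a single counterexample exhibiting a $\subseteq$-maximal conflict-free set that fails self-defence -- but with a different witness. The paper constructs a fresh four-argument AF, $A=\set{a,b,c,d}$ with $R=\set{(a,b),(a,c),(b,d),(c,d)}$, and shows that the two-element naive extension $\set{b,c}$ is not preferred (indeed $b^-=\set{a}$ while $\set{b,c}^+=\set{d}$, so it is not self-defending), whereas you reuse the running example of simple reinstatement and the singleton witness $\set{b}$, whose status in $NAI$ is already recorded in Example \ref{eg:simp_re_nai} and whose exclusion from $PREF$ follows from the one-line check $b^-=\set{c}\not\subseteq\set{b}^+=\set{a}$. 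Your route is more economical: it needs no new framework or figure, leans on facts the paper has already established, and uses the smallest possible AF exhibiting the phenomenon (an asymmetric chain with an undefeated attacker). What the paper's choice buys is a slightly richer illustration -- the failing naive extension there has two mutually compatible arguments that \emph{collectively} still cannot answer the attack from the unattacked argument $a$, showing the gap between naivety and admissibility is not an artefact of singletons -- but logically either example settles the lemma, and your verification of maximality in $CF$ (both proper supersets $\set{a,b}$ and $\set{b,c}$ fail conflict-freeness) is complete and correct.
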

\begin{proof}
Consider the AF $A=\set{a,b,c,d}$ and $R=\set{(a,b),(a,c),(b,d),(c,d)}$. This AF is depicted in Figure \ref{fig:nai_not_pref}.

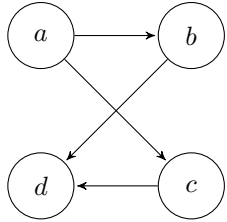
\begin{figure}[H]
\begin{center}
\begin{tikzpicture}[>=stealth',shorten >=1pt,node distance=2cm,on grid,initial/.style    ={}]
\tikzset{mystyle/.style={->,relative=false,in=0,out=0}};
\node[state] (a) at (0,0) {$ a $};
\node[state] (b) at (2,0) {$ b $};
\node[state] (c) at (0,-2) {$ c $};
\node[state] (d) at (2, -2) {$ d $};
\draw [->] (a) to (b);
\draw [->] (a) to (c);
\draw [->] (b) to (d);
\draw [->] (c) to (d);
\end{tikzpicture}
\caption{The AF from Lemma \ref{lem:nai_not_pref}}\label{fig:nai_not_pref}
\end{center}
\end{figure}
\noindent Clearly, $CF=\set{\es,\set{a},\set{b},\set{c},\set{d},\set{a,d},\set{b,c}}$. By inspection, we have that $NAI=\set{\set{a,d},\set{b,c}}$. However, $PREF=\set{\set{a,d}}$. Therefore, we have a naive extension $\set{b,c}$ that is not preferred. 
\end{proof}

\begin{corollary}\label{cor:sym_PREF_NAI_equal}
If an AF is symmetric, then $PREF=NAI$.
\end{corollary}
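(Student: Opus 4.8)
The plan is to reduce the statement to the set equality $CF = ADM$ and then pass to $\subseteq$-maximal elements on both sides. Recall that $NAI = \max_{\subseteq}CF$ by definition, while $PREF = \max_{\subseteq}ADM$ by Corollary \ref{cor:pref_max_adm}. Since $\max_{\subseteq}$ depends only on the underlying set, once $CF = ADM$ is established we obtain immediately $NAI = \max_{\subseteq}CF = \max_{\subseteq}ADM = PREF$. So the entire content lies in proving $CF = ADM$ for symmetric AFs, and the final lattice-level step is a triviality.

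First I would observe that a symmetric AF is automatically non-empty and non-trivial, since Definition \ref{def:symmetric} demands $R\neq\es$, and $R\neq\es$ forces $A\neq\es$; this disposes of two of the three hypotheses of Theorem \ref{thm:sym_AF_CF_ADM_equal} for free. For the equality itself, the inclusion $ADM\subseteq CF$ holds for every AF by Corollary \ref{cor:ADM_in_CF}. For the reverse inclusion I would argue directly: take $S\in CF$, and for any $a\in S^-$ pick $b\in S$ with $R(a,b)$; by symmetry $R(b,a)$, so $a\in S^+$. Hence $S^-\subseteq S^+$, i.e. $S\in SD$ by Theorem \ref{thm:sd_equiv}, and therefore $S\in CF\cap SD = ADM$ by Corollary \ref{cor:adm_cf_sd}. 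This yields $CF\subseteq ADM$ and hence $CF = ADM$.

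The one subtlety I would flag — and the reason I would re-derive the inclusion rather than merely cite Theorem \ref{thm:sym_AF_CF_ADM_equal} — is that the corollary makes no assumption ruling out self-attacking arguments, whereas that theorem does. The key point is that this restriction is irrelevant for the direction we need: by Theorem \ref{thm:cf_char} no conflict-free set contains a self-attacking argument, so the symmetry step $a\in S^-\Rightarrow a\in S^+$ goes through unchanged even in the presence of self-loops (those self-attacking arguments simply lie outside every cf set, hence outside every naive and every preferred extension). I do not expect any genuine obstacle beyond making this observation cleanly; everything else is bookkeeping with the inclusions already proved in the excerpt.
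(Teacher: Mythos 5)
Your proof is correct, and it is actually more careful than the paper's own one-line argument. The paper proves this corollary simply by citing Theorem \ref{thm:sym_AF_CF_ADM_equal} to get $CF=ADM$ and then taking $\max_{\subseteq}$ on both sides — exactly your final step. What you do differently is refuse to cite that theorem wholesale, and you are right to do so: its hypotheses include the absence of self-attacking arguments, which symmetry does \emph{not} guarantee (Definition \ref{def:symmetric} explicitly permits $(a,a)\in R$, and Example \ref{eg:loop_defence} exhibits a symmetric AF with a self-loop), so the paper's proof as written applies a theorem whose hypotheses are not verified. Your repair is exactly the right one: the no-self-attack hypothesis is used only in the converse direction of Theorem \ref{thm:sym_AF_CF_ADM_equal} (where one needs $\set{b}\in CF$ for a non-counter-attacked argument $b$), while the forward direction $CF\subseteq ADM$ — the only part the corollary needs — uses nothing but symmetry of $R$, as your direct computation $a\in S^-\Rightarrow a\in S^+$ shows; your observations that $R\neq\es$ forces $A\neq\es$, and that self-attacking arguments lie outside every cf set anyway, tie off the remaining hypotheses. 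In short, the paper's route is shorter but contains a genuine citation gap; your route inlines the one-line symmetry argument and thereby yields the corollary for \emph{all} symmetric AFs, self-loops included, which is the generality the statement actually claims.
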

\begin{proof}
If an AF is symmetric, then by Theorem \ref{thm:sym_AF_CF_ADM_equal} (page \pageref{thm:sym_AF_CF_ADM_equal}), $CF=ADM$, hence $NAI=\max_{\subseteq}CF=\max_{\subseteq}ADM=PREF$.
\end{proof}

\subsection{Stable Extensions}\label{sec:STAB}

\subsubsection{Definition}

\begin{theorem}\label{thm:stable_TFAE}
\cite[Lemma 14]{Dung:95} For $S\subseteq A$, $S=n(S)$ iff $S$ is cf and $A-S\subseteq S^+$.
\end{theorem}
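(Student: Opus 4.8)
The plan is to decompose the set equality $S = n(S)$ into its two defining inclusions and identify each with one of the two conditions on the right-hand side. Recalling from Definition \ref{def:neutrality_function} that $n(S) = A - S^+$, I would begin by noting that $S = n(S)$ holds if and only if both $S \subseteq n(S)$ and $n(S) \subseteq S$ hold.

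For the forward inclusion, $S \subseteq n(S)$ is, by condition (1) of Theorem \ref{thm:cf_char} together with Definition \ref{definition:cf}, precisely the statement that $S$ is conflict-free. Thus this inclusion corresponds exactly to the ``$S$ is cf'' clause on the right. Nothing more needs to be done here beyond citing the equivalences already established.

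The main step is to handle the reverse inclusion $n(S) \subseteq S$, i.e. $A - S^+ \subseteq S$, and show it is equivalent to $A - S \subseteq S^+$. This is a pure complementation argument inside the ambient set $A$: both inclusions are equivalent to $A = S \cup S^+$. Concretely, $A - S^+ \subseteq S$ asserts that every argument not attacked by $S$ lies in $S$; taking the contrapositive, every argument outside $S$ is attacked by $S$, which is exactly $A - S \subseteq S^+$. Conjoining the two inclusions then yields that $S = n(S)$ iff $S$ is cf and $A - S \subseteq S^+$, as required.

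The only point requiring any care is this complementation equivalence $A - S^+ \subseteq S \iff A - S \subseteq S^+$; everything else is a direct unpacking of the definitions of $n$ and of conflict-freeness. I therefore do not anticipate any genuine obstacle, and the argument should be short.
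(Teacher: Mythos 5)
Your proof is correct, and it takes a mildly but genuinely different route from the paper's: you decompose the fixed-point equation $S=n(S)$ componentwise into the two inclusions $S\subseteq n(S)$ and $n(S)\subseteq S$, identifying the first with conflict-freeness via Theorem \ref{thm:cf_char}(1) and showing the second is \emph{unconditionally} equivalent to $A-S\subseteq S^+$ (both inclusions saying exactly that $A=S\cup S^+$), whereas the paper argues direction-by-direction ($\Rightarrow$ and $\Leftarrow$) through the set equality $S^+=A-S$. Both arguments rest on the same ingredients, namely Definition \ref{def:neutrality_function} and the characterisation of cf sets as postfixed points of $n$, but your organisation buys some extra transparency: in the paper's ($\Leftarrow$) direction, the assertion that ``$S$ attacks all arguments outside of it'' means $S^+=A-S$ silently uses conflict-freeness to get the inclusion $S^+\subseteq A-S$ (i.e. $S\cap S^+=\es$, condition (2) of Theorem \ref{thm:cf_char}), since the attack condition alone only gives $A-S\subseteq S^+$; your complementation equivalence $A-S^+\subseteq S\Leftrightarrow A-S\subseteq S^+$ requires no such hypothesis, so the two conditions on the right-hand side are seen to correspond exactly, and independently, to the two halves of the equality $S=n(S)$. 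The paper's version is marginally shorter to write; yours makes the logical structure cleaner by keeping the two clauses from interacting.
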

\begin{proof}
($\Rightarrow$) If $S=n(S)$, then $S=A-S^+$ by Definition \ref{def:neutrality_function}, which means $S^+=A-S$, hence $S$ attacks all arguments outside of it. Further, $S=n(S)$ implies $S\subseteq n(S)$, hence $S$ is cf by Definition \ref{definition:cf}.

($\Leftarrow$) Let $S$ be a cf set that attacks all arguments outside of it. The latter means $S^+=A-S$ and hence $S=A-S^+=n(S)$, hence $S=n(S)$ so $S$ must be a stable extension.
\end{proof}

\begin{definition}\label{def:stable_ext}
\cite[Definition 13]{Dung:95} We say $S\subseteq A$ is a \textbf{stable extension} iff $S$ satisfies any one of the two equivalent properties in Theorem \ref{thm:stable_TFAE}.
\end{definition}

\subsubsection{Existence}

\begin{definition}
Given an underlying AF, let $STAB\subseteq\pow\pair{A}$ denote the set of stable extensions.
\end{definition}

\noindent If the underlying AF $\mathcal{A}=\ang{A,R}$ needs to be explicitly specified, we can write $STAB(\mathcal{A})$ or $STAB(\ang{A,R})$.

\begin{example}
(Example \ref{eg:BH_eg221} continued, page \pageref{eg:BH_eg221}) For the AF $A=\set{a,b,c,f,e}$ and $R=\set{(a,b),(c,b),(c,f),(f,c),(f,e),(e,e)}$. We have $STAB = \set{\set{a,f}}$.
\end{example}

\begin{corollary}\label{cor:empty_AF_iff_STAB_has_empty}
\cite[Theorem 2.2.3]{EoA} $A=\es$ iff $\es\in STAB$.
\end{corollary}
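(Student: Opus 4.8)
The plan is to reduce both directions to the fixed-point characterisation of stable extensions, namely that $S\in STAB$ iff $S=n\pair{S}$ (Definition \ref{def:stable_ext}, via Theorem \ref{thm:stable_TFAE}), together with the single computed value $n\pair{\es}=A$ from Corollary \ref{cor:neutrality_es}. Once that value is in hand, everything collapses to a one-line equivalence, so I expect no genuine obstacle; the only care needed is to invoke the fixed-point form of the definition rather than getting drawn into a case analysis.

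For the forward direction, I would assume $A=\es$. Since $\pow\pair{A}=\set{\es}$ in this case, the empty set is the only candidate extension at all. Applying Corollary \ref{cor:neutrality_es} gives $n\pair{\es}=A=\es$, so that $\es=n\pair{\es}$, which by Definition \ref{def:stable_ext} means precisely $\es\in STAB$.

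For the reverse direction, I would assume $\es\in STAB$, so that $\es=n\pair{\es}$ by Definition \ref{def:stable_ext}. Substituting the value $n\pair{\es}=A$ from Corollary \ref{cor:neutrality_es} immediately yields $A=\es$, which is the desired conclusion.

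As a sanity check I would re-derive the same equivalence through the alternative characterisation in Theorem \ref{thm:stable_TFAE}: the empty set is always conflict-free (Corollary \ref{cor:es_cf}), and the remaining condition $A-\es\subseteq\es^+$ reduces, using $\es^+=\es$ (Corollary \ref{cor:es_pm}), to $A\subseteq\es$, i.e. $A=\es$. This confirms the result by an independent route. The main point to get right is simply citing $n\pair{\es}=A$; the statement is genuinely trivial once that identity is applied, and I would resist over-engineering the argument.
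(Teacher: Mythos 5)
Your proof is correct and matches the paper's own argument, which likewise reduces the statement to the fixed-point characterisation $\es\in STAB\Leftrightarrow n\pair{\es}=\es$ together with the identity $n\pair{\es}=A$ from Corollary \ref{cor:neutrality_es}. Your additional sanity check via Theorem \ref{thm:stable_TFAE} and Corollary \ref{cor:es_pm} is a harmless (and correct) redundancy rather than a genuinely different route.
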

\begin{proof}
$\es$ is stable iff $n\pair{\es}=\es=A$ by Corollary \ref{cor:neutrality_es} (page \pageref{cor:neutrality_es}).
\end{proof}

\begin{corollary}
\cite[Example 2.2.5]{EoA} It is possible for $STAB=\es$.
\end{corollary}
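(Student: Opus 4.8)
The plan is to prove this possibility statement by exhibiting a single concrete AF whose set of stable extensions is empty. Since Theorem \ref{thm:stable_TFAE} characterises stability as being conflict-free together with attacking every argument outside oneself, the strategy is to pick an AF in which every conflict-free set necessarily leaves some argument both unattacked by it and outside it, so that the condition $A-S\subseteq S^+$ always fails. The most economical witness is the single self-attacking argument $A=\set{a}$ with $R=\set{\pair{a,a}}$; an equally standard alternative is the odd cycle $C_3$, i.e. $A=\set{a,b,c}$ with $R=\set{(a,b),(b,c),(c,a)}$, which also connects naturally to the preceding discussion of odd cycles (cf. Theorem \ref{thm:odd_cycle_empty_PREF}).

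For the self-loop example, I would enumerate the subsets of $A=\set{a}$, of which there are only $\es$ and $\set{a}$, and rule out each. Since $A\neq\es$, Corollary \ref{cor:empty_AF_iff_STAB_has_empty} immediately gives $\es\notin STAB$. For $\set{a}$, the argument $a$ is self-attacking, so $a\in a^+$ and hence $\set{a}\cap\set{a}^+=\set{a}\neq\es$, which by Theorem \ref{thm:cf_char} shows $\set{a}\notin CF$; as stable extensions are conflict-free, $\set{a}\notin STAB$. Both candidates are excluded, so $STAB=\es$.

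If instead I used $C_3$, the conflict-free sets are exactly $\es,\set{a},\set{b},\set{c}$. Here $\es$ is rejected by Corollary \ref{cor:empty_AF_iff_STAB_has_empty} since $A\neq\es$, and for each singleton, say $\set{a}$, we have $\set{a}^+=\set{b}$ while $A-\set{a}=\set{b,c}$, so $c\notin\set{a}^+$ and the stability condition $A-S\subseteq S^+$ fails; the cyclic symmetry disposes of $\set{b}$ and $\set{c}$ identically. Again $STAB=\es$.

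The main obstacle is essentially nonexistent: the only substantive work is verifying exhaustively that no candidate meets the stability criterion. The two points requiring a little care are to invoke the correct characterisation of stability (conflict-freeness \emph{plus} attacking all outsiders) rather than working directly from $S=n(S)$, and to dispatch the empty set via Corollary \ref{cor:empty_AF_iff_STAB_has_empty} instead of recomputing $n(\es)$ by hand.
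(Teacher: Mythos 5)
Your proposal is correct and matches the paper's own proof in essentials: both exhibit the one-argument self-loop $A=\set{a}$, $R=\set{\pair{a,a}}$ and exhaustively rule out the only two subsets $\es$ and $\set{a}$, differing only in that the paper computes $n\pair{\es}=\set{a}$ and $n\pair{\set{a}}=\es$ directly to show $n$ has no fixed point, while you use the equivalent conflict-free-plus-attacks-all-outsiders characterisation from Theorem \ref{thm:stable_TFAE} together with Corollary \ref{cor:empty_AF_iff_STAB_has_empty}. Your supplementary $C_3$ witness is also verified correctly but is not needed.
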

\begin{proof}
Let $\ang{A,R}$ be the 1-cycle, i.e. $A=\set{a}$ and $R=\set{\pair{a,a}}$. We have $\pow\pair{A}=\set{\es,\set{a}}$. Further, $n\pair{\set{a}}=\es$ and $n\pair{\es}=\set{a}$. Therefore, $n$ has no fixed points, hence by Theorem \ref{thm:stable_TFAE}, $STAB=\es$.
\end{proof}

\begin{theorem}\label{thm:STAB_in_PREF}
\cite[Lemma 15]{Dung:95} $STAB\subseteq PREF$, and the converse is not true in general.
\end{theorem}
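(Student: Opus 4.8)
The plan is to establish the inclusion $STAB\subseteq PREF$ in two stages---first showing that every stable extension is admissible, then that it is $\subseteq$-maximal among admissible sets---and afterwards to exhibit a concrete preferred extension that fails to be stable, settling the converse. Since $PREF=\max_{\subseteq}ADM$ by Corollary \ref{cor:pref_max_adm}, these two stages together give exactly the membership condition for $PREF$.

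First I would show that any $S\in STAB$ lies in $ADM$. As $S$ is stable, $S=n(S)$, so in particular $S\subseteq n(S)$, which by Definition \ref{definition:cf} makes $S$ conflict-free. Moreover, a fixed point of $n$ is a fixed point of $d$ by Corollary \ref{cor:fpn_fpd}, so $S=d(S)$ and hence $S\subseteq d(S)$, i.e. $S$ is self-defending. By Corollary \ref{cor:adm_cf_sd}, $S\in CF\cap SD=ADM$.

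Next I would prove maximality. Suppose $T\in ADM$ with $S\subseteq T$; I claim $T=S$. If there were some $a\in T-S$, then $a\in A-S$, and the stability characterisation of Theorem \ref{thm:stable_TFAE} gives $A-S\subseteq S^+$, so $a\in S^+$. Since $S\subseteq T$ implies $S^+\subseteq T^+$ by Corollary \ref{cor:pm_monotonicity}, we get $a\in T^+$, whence $a\in T\cap T^+$. This contradicts $T\in CF$ (a conflict-free set satisfies $T\cap T^+=\es$ by Theorem \ref{thm:cf_char}, and $T\in ADM\subseteq CF$). Hence no such $a$ exists, $T=S$, and $S$ is $\subseteq$-maximal in $ADM$, i.e. $S\in PREF$.

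The one genuinely delicate point is this maximality argument: it is where the defining feature of stability---that $S$ attacks \emph{everything} outside itself---does the real work, combined with the conflict-freeness of the candidate superset $T$. The passage from stable-extension membership to fixed point of $d$ (via Corollary \ref{cor:fpn_fpd}) is the only step where I must invoke the earlier machinery rather than argue directly. For the converse I would reuse Example \ref{eg:BH_eg221}, where the preferred extensions are $\set{a,c}$ and $\set{a,f}$ while $STAB=\set{\set{a,f}}$; the set $\set{a,c}$ is preferred but not stable precisely because it fails to attack the self-attacking argument $e$, so that $A-\set{a,c}\not\subseteq\set{a,c}^+$. This single example witnesses $PREF\not\subseteq STAB$, showing the converse fails in general.
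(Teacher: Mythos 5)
Your proof is correct and follows essentially the same route as the paper: both derive self-defence from Corollary \ref{cor:fpn_fpd}, conflict-freeness from $S\subseteq n(S)$, and maximality from the stability property $A-S\subseteq S^+$ forcing any strict conflict-free (hence admissible) superset to be impossible, and both use Example \ref{eg:BH_eg221} with $\set{a,c}$ for the failure of the converse. Your version merely spells out the maximality contradiction in more detail (the paper also notes the vacuous case $STAB=\es$, which your per-element argument covers implicitly).
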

\begin{proof}
This is trivial if $STAB=\es$, so assume that $STAB\neq\es$. If $S\in STAB$, then $S^+=A-S$ and hence any strict superset of $S$ cannot be in $CF$. Therefore, as $S$ is cf and self-defending (the latter follows from Corollary \ref{cor:fpn_fpd} (page \pageref{cor:fpn_fpd}), $S\in PREF$.

The converse is not true, e.g. in Example \ref{eg:BH_eg221} (page \pageref{eg:BH_eg221}), $\set{a,c}\in PREF$ but $\set{a,c}\notin STAB$.
\end{proof}



It also follows by Corollary \ref{cor:PREF_antichain} that $STAB$ is also a $\subseteq$-antichain thus its lattice-theoretic properties are trivial.

\begin{corollary}\label{cor:empty_pref_no_stab}
For a non-empty AF, if $PREF=\set{\es}$, then $STAB=\es$.
\end{corollary}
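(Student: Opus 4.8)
The plan is to combine the containment $STAB\subseteq PREF$ from Theorem \ref{thm:STAB_in_PREF} with the hypothesis $PREF=\set{\es}$. These immediately give $STAB\subseteq\set{\es}$, so the only conceivable stable extension is $\es$ itself, and the whole task reduces to ruling this one candidate out.

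To discard $\es$, I would invoke Corollary \ref{cor:empty_AF_iff_STAB_has_empty}, which characterises exactly when the empty set is stable: $A=\es$ iff $\es\in STAB$. Since the statement assumes the AF is non-empty, i.e. $A\neq\es$, the contrapositive of this equivalence yields $\es\notin STAB$. Putting the two facts together, $STAB\subseteq\set{\es}$ while $\es\notin STAB$, and therefore $STAB=\es$, as required.

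There is no real obstacle here; the argument is a two-line consequence of earlier results. The only point worth flagging is why the non-emptiness hypothesis is genuinely needed: the containment $STAB\subseteq PREF$ alone does not force $STAB=\es$, because $PREF=\set{\es}$ really can occur (for instance in odd-cycle AFs, by Theorem \ref{thm:odd_cycle_empty_PREF}). One must therefore use the non-emptiness of $A$ together with the stability-of-$\es$ characterisation to eliminate the empty set from $STAB$, rather than expecting it to drop out of the inclusion automatically.
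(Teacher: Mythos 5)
Your proof is correct and follows exactly the paper's own argument: it combines $STAB\subseteq PREF$ (Theorem \ref{thm:STAB_in_PREF}) with the hypothesis $PREF=\set{\es}$, and then eliminates $\es$ via the contrapositive of Corollary \ref{cor:empty_AF_iff_STAB_has_empty} using $A\neq\es$. Your closing remark on why non-emptiness is essential is a sound (and welcome) addition, but the core argument coincides with the paper's.
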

\begin{proof}
By Theorem \ref{thm:STAB_in_PREF}, $STAB\subseteq PREF=\set{\es}$. As $A\neq\es$, by Corollary \ref{cor:empty_AF_iff_STAB_has_empty}, $\es\notin STAB$. Therefore, $STAB=\es$.
\end{proof}

\begin{corollary}
For all $S\in STAB$, $U\subseteq S$, and it is not neceessarily the case that $U=S$.
\end{corollary}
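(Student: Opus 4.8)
The plan is to prove the two assertions separately. For the containment $U \subseteq S$, I would exploit the inclusion chain already established: by Theorem~\ref{thm:STAB_in_PREF} we have $STAB \subseteq PREF$, and by Theorem~\ref{thm:pref_in_comp} we have $PREF \subseteq COMP$, so every stable extension is in particular a complete extension. Corollary~\ref{cor:U_subset_COMP} then gives $U \subseteq C$ for every $C \in COMP$, and instantiating $C = S$ yields $U \subseteq S$ immediately.

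Alternatively, and perhaps more transparently, I would argue directly from the defining property of stability. Let $a \in U$, so $a^- = \es$ by Definition~\ref{def:unattacked}. Suppose for contradiction that $a \notin S$, i.e. $a \in A - S$. Since $S \in STAB$, Theorem~\ref{thm:stable_TFAE} gives $A - S \subseteq S^+$, hence $a \in S^+$, meaning some argument in $S$ attacks $a$. But then $a^- \neq \es$, contradicting $a \in U$. Therefore $a \in S$, and as $a$ was arbitrary, $U \subseteq S$.

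For the second assertion — that $U = S$ need not hold — I would simply exhibit a stable extension strictly containing $U$. Simple reinstatement (Example~\ref{eg:simple_reinstatement}), where $A = \set{a,b,c}$ and $R = \set{(b,a),(c,b)}$, works: here $U = \set{c}$, while $\set{a,c}$ is conflict-free with $\set{a,c}^+ = \set{b} = A - \set{a,c}$, so $\set{a,c} \in STAB$ by Theorem~\ref{thm:stable_TFAE}. Thus $U = \set{c} \subsetneq \set{a,c} = S$, showing that equality can fail.

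I do not expect any genuine obstacle here, since both halves reduce to results and examples already in hand; the only care needed is to choose an example whose stable extension genuinely contains a reinstated (attacked but defended) argument beyond the unattacked ones, which is exactly what simple reinstatement supplies via $c$ reinstating $a$.
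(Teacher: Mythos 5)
Your proposal is correct and takes essentially the same route as the paper, which likewise proves $U\subseteq S$ both via the chain $STAB\subseteq PREF\subseteq COMP$ together with Corollary \ref{cor:U_subset_COMP}, and by a direct argument from stability (your contradiction argument is just the contrapositive of the paper's observation that $a^-=\es$ gives $a\in n(S)=S$). The counterexample is also identical: simple reinstatement with $U=\set{c}\subsetneq\set{a,c}\in STAB$.
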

\begin{proof}
From Corollary \ref{cor:U_subset_COMP} (page \pageref{cor:U_subset_COMP}) and Theorems \ref{thm:pref_in_comp} and \ref{thm:STAB_in_PREF}, we have that $STAB\subseteq PREF\subseteq COMP$, so for $S\in STAB$, $S\in COMP$ and hence $U\subseteq S$.

Alternatively, for any $a\in U$, $a^-=\es$ by definition so for any $S\subseteq A$, $a\in n(S)$. If $S\in STAB$, $a\in n(S)=S$ and the result follows.

For the converse, Example \ref{eg:simple_reinstatement} (page \pageref{eg:simple_reinstatement}), we have $\set{a,c}\in STAB$ but $U=\set{c}\subseteq\set{a,c}$.
\end{proof}

\begin{example}
\cite[Exercise 8(a)]{Caminada:08} From Figure \ref{fig:Caminada_ex2} (page \pageref{fig:Caminada_ex2}), we have that $PREF=\set{\set{a,c}}$ and because $a$ attacks $b$ and $c$ attacks $d$, we have $STAB=\set{\set{a,c}}$.
\end{example}

\begin{example}
\cite[Exercise 8(b)]{Caminada:08} From Figure \ref{fig:Caminada_ex6b} (page \pageref{fig:Caminada_ex6b}), we have that $PREF=\set{\set{a,c,f},\set{a,e}}$, and $STAB=PREF$.
\end{example}

\begin{example}
\cite[Exercise 8(c)]{Caminada:08} From Figure \ref{fig:Caminada_ex6c} (page \pageref{fig:Caminada_ex6c}), we have that $PREF=\set{\set{b,e}}$, but $STAB=\es$ because the self-attacking argument $a$ is not attacked by anything else.
\end{example}

\begin{example}
\cite[Exercise 8(d)]{Caminada:08} From Figure \ref{fig:Caminada_ex3} (page \pageref{fig:Caminada_ex3}), we have that $PREF=\set{\set{a},\set{b,e}}$, but $\set{a}$ does not attack all arguments outside of it while $\set{b,e}$ does, so $STAB=\set{\set{b,e}}$.
\end{example}

\subsubsection{Relationship with Naive and Preferred Extensions}\label{sec:stab_nai_pref}

\begin{corollary}\label{cor:stab_are_nai}
For all AFs, $STAB\subseteq NAI$.
\end{corollary}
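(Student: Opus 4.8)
The plan is to unpack both set names and verify the two defining conditions of a naive extension directly. Recall $NAI=\max_{\subseteq}CF$, so I must show that every $S\in STAB$ is (i) conflict-free and (ii) $\subseteq$-maximal among conflict-free sets. Condition (i) is immediate: by Definition \ref{def:stable_ext} and Theorem \ref{thm:stable_TFAE}, $S\in STAB$ means $S=n(S)$, whence $S\subseteq n(S)$, so $S\in CF$ by Definition \ref{definition:cf}. This gives $STAB\subseteq CF$ for free, and reduces the whole result to establishing maximality.

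For condition (ii), I would exploit the alternative characterisation of stability from Theorem \ref{thm:stable_TFAE}, namely that a stable $S$ satisfies $A-S\subseteq S^+$ (i.e. $S$ attacks everything outside itself). The argument is by contradiction: suppose some $T\in CF$ strictly contains $S$, and pick $a\in T-S$. Then $a\in A-S\subseteq S^+$, so there is some $b\in S$ with $R(b,a)$. Since $S\subseteq T$, both $a,b\in T$ with $R(b,a)$, which forces $T^2\cap R\neq\es$ and hence $T\notin CF$ by Theorem \ref{thm:cf_char} --- a contradiction. Therefore no proper conflict-free superset of $S$ exists, so $S\in\max_{\subseteq}CF=NAI$.

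Combining the two steps yields $STAB\subseteq NAI$, as required. I would also remark that the maximality half of the argument is exactly the observation already made inside the proof of Theorem \ref{thm:STAB_in_PREF} (that any strict superset of a stable set fails to be conflict-free), so the corollary could alternatively be phrased as reusing that step; but giving the self-contained contradiction above is cleaner and keeps the dependence on just Theorems \ref{thm:cf_char} and \ref{thm:stable_TFAE}.

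I do not anticipate a genuine obstacle here --- the result is essentially a direct reading of the definitions --- so the only thing to be careful about is citing the $A-S\subseteq S^+$ form of stability (rather than the fixed-point form $S=n(S)$), since it is that form which makes the maximality contradiction transparent. The one subtlety worth a sentence is the edge case $A=\es$: there $STAB=\set{\es}$ by Corollary \ref{cor:empty_AF_iff_STAB_has_empty} and $\es$ is vacuously the unique maximal conflict-free set, so the inclusion still holds and needs no separate treatment.
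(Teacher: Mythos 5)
Your proof is correct and follows essentially the same route as the paper: conflict-freeness falls out of $S\subseteq n(S)$, and maximality follows because $A-S\subseteq S^+$ forces any proper superset of $S$ to contain an attack, hence to leave $CF$ (the paper phrases this as $S\cup\set{a}\notin CF$ for $a\in A-S$, which is your contradiction with $T$ specialised to $T=S\cup\set{a}$). Your added remarks --- the overlap with the step inside Theorem \ref{thm:STAB_in_PREF} and the $A=\es$ edge case --- are accurate but not needed, since the argument is vacuous when $STAB=\es$, exactly as the paper notes.
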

\begin{proof}
The result is trivial for $STAB=\es$, so suppose we have an AF $\ang{A,R}$ with $S\in STAB$, then $S\in CF$. Let $a\in A - S$, then $a\in S^+$ hence $S\cup\set{a}\notin CF$. Therefore, $S$ is a $\subseteq$-maximal $CF$ set, i.e. $S\in NAI$.
\end{proof}

It is natural to ask when the converse of Theorem \ref{thm:STAB_in_PREF} is true. One sufficient condition is:

\begin{lemma}\label{lem:pref_naive_implies_stable}
Assume an AF with no self-attacking arguments, then if $PREF\subseteq NAI$ or $NAI\subseteq PREF$, then $PREF=STAB$. The assumption that there are no self-attacking arguments is necessary.
\end{lemma}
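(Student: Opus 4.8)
The plan is to notice that the inclusion $STAB\subseteq PREF$ holds for \emph{every} AF (Theorem \ref{thm:STAB_in_PREF}), so the whole content of the lemma is the reverse inclusion $PREF\subseteq STAB$ under either hypothesis. The first move is to reduce both hypotheses to the single condition $PREF\subseteq NAI$. In the case $PREF\subseteq NAI$ there is nothing to do. In the case $NAI\subseteq PREF$ I would argue $PREF\subseteq NAI$ as follows: take $S\in PREF$, which is conflict-free, and extend it to a $\subseteq$-maximal conflict-free set $T\supseteq S$ (such a $T$ exists by Zorn's lemma applied to $\set{T'\in CF\:\vline\:S\subseteq T'}$, whose chains have upper bounds since $\ang{CF,\subseteq}$ is chain complete by Corollary \ref{cor:cf_chain_complete}), so $T\in NAI\subseteq PREF$; since $PREF$ is a $\subseteq$-antichain (Corollary \ref{cor:PREF_antichain}) and $S\subseteq T$ with both in $PREF$, we conclude $S=T\in NAI$. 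Thus in either case $PREF\subseteq NAI$, and in fact the second case forces $NAI=PREF$.

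The core step is then: assuming the AF has no self-attacking arguments and $PREF\subseteq NAI$, every $S\in PREF$ is stable. I would fix $S\in PREF$ and an arbitrary $a\in A-S$, and use the characterization of stability (Theorem \ref{thm:stable_TFAE}) — namely $S\in CF$ together with $A-S\subseteq S^+$ — so it suffices to show $a\in S^+$. Since $S\in NAI$ is $\subseteq$-maximal conflict-free, $S\cup\set{a}\notin CF$, so by Theorem \ref{thm:cf_char} there are $x,y\in S\cup\set{a}$ with $R(x,y)$. Conflict-freeness of $S$ rules out $x,y\in S$, and the no-self-attack hypothesis rules out $x=y=a$; hence exactly one of $x,y$ equals $a$. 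If $y=a$ and $x\in S$ then $a\in S^+$ directly; if $x=a$ and $y\in S$ then $a\in S^-$. In the latter case I invoke that $S\in PREF\subseteq ADM$ is self-defending, so $S^-\subseteq S^+$ (Theorem \ref{thm:sd_equiv}), giving $a\in S^+$ again. Either way $a\in S^+$, so $A-S\subseteq S^+$ and $S\in STAB$. Combined with $STAB\subseteq PREF$, this yields $PREF=STAB$.

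For the necessity claim I would exhibit the $1$-cycle $\ang{\set{a},\set{\pair{a,a}}}$. Here $CF=\set{\es}$, so $NAI=\set{\es}$, and likewise $ADM=\set{\es}$, so $PREF=\set{\es}$; both inclusions $PREF\subseteq NAI$ and $NAI\subseteq PREF$ therefore hold. However $n\pair{\es}=A=\set{a}\neq\es$ and $n\pair{\set{a}}=\es\neq\set{a}$, so $n$ has no fixed point and $STAB=\es$. Thus $PREF=\set{\es}\neq\es=STAB$, showing the hypotheses hold while the conclusion fails once a self-attacking argument is present.

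The main obstacle is the reduction in the second case: it is tempting to think $NAI\subseteq PREF$ is weaker and handled separately, but the real work is recognizing that the antichain structure of $PREF$ plus the extension-to-maximal-cf argument collapses it back to $PREF\subseteq NAI$. The secondary subtlety, and the precise place the no-self-attack hypothesis is used, is in the core step: without it the failure of $S\cup\set{a}\in CF$ could be caused purely by $a$ attacking itself, which would license neither $a\in S^+$ nor $a\in S^-$, breaking the argument exactly as the $1$-cycle counterexample illustrates.
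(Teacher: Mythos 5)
Your proof is correct, and its core step --- deriving $a\in S^+$ for each $a\in A-S$ from the failure of $S\cup\set{a}\in CF$, with the no-self-attack hypothesis eliminating the case $x=y=a$ and self-defence ($S^-\subseteq S^+$, Theorem \ref{thm:sd_equiv}) converting $a\in S^-$ into $a\in S^+$ --- is exactly the paper's argument. Where you diverge is the treatment of the hypothesis $NAI\subseteq PREF$: the paper re-runs the case analysis directly on an arbitrary $S\in NAI$ (which is preferred by hypothesis), thereby establishing $NAI\subseteq STAB$, and then asserts $STAB=PREF$; strictly speaking this leaves a step unspoken, since $NAI\subseteq STAB\subseteq PREF$ does not by itself yield $PREF\subseteq STAB$. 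Your reduction closes precisely this gap: extending any $S\in PREF$ to a $\subseteq$-maximal conflict-free $T\in NAI$ (Zorn's lemma via the chain completeness of $CF$, Corollary \ref{cor:cf_chain_complete}) and invoking that $PREF$ is a $\subseteq$-antichain (Corollary \ref{cor:PREF_antichain}) forces $S=T\in NAI$, so the second hypothesis collapses to the first and the core argument need only be run once. You also prove the necessity claim explicitly with the $1$-cycle $\ang{\set{a},\set{\pair{a,a}}}$, for which $PREF=NAI=\set{\es}$ while $STAB=\es$; the paper's proof states necessity but exhibits no witness within the proof itself (the same example appears elsewhere in the paper to show $STAB=\es$ is possible). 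In short, your route buys a single uniform core step, a fully justified second case, and an explicit counterexample; the paper's route buys brevity at the cost of the implicit antichain argument you supply.
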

\begin{proof}
Let $\ang{A,R}$ be any AF without self-attacking arguments. It is sufficient to show that $PREF\subseteq STAB$ by Theorem \ref{thm:STAB_in_PREF}.

Assume $PREF\subseteq NAI$, then let $S\in PREF$ be naive. Therefore, for any $a\notin S$, $S\cup\set{a}\notin CF$. We have three possibilities:
\begin{enumerate}
\item $a$ is self-attacking, which we have excluded.
\item $a\in S^+$.
\item $a\in S^-$, but as $S\in PREF\subseteq ADM\subseteq SD$, $a\in S^-\subseteq S^+$ by Theorem \ref{thm:sd_equiv} (page \pageref{thm:sd_equiv}), therefore, $a\in S^+$.
\end{enumerate}
Therefore, in all cases, $a\in S^+$. This means for all $a\notin S$, $a\in S^+$, hence $A-S\subseteq S^+$, so $S=A-S^+=n(S)$, therefore $S\in STAB$. As $S\in PREF$ is arbitrary, we have shown $STAB=PREF$.

Assume $NAI\subseteq PREF$, then let $S\in NAI$ be preferred, i.e. $S\in SD$. For $a\notin S$, $S\cup\set{a}\notin CF$ because $S\in NAI$. By analogous reasoning to the above case, we have that $a\in S^+$ and hence $S\in STAB$. Therefore, $STAB=PREF$.
\end{proof}

\subsection{The Grounded Extension}\label{sec:G}

We have already encountered the $\subseteq$-least complete extension $G$ (Section \ref{sec:comp_exists_fp}, page \pageref{sec:comp_exists_fp}). We now study its properties further.

\subsubsection{Definition}

\begin{definition}\label{def:grounded}
\cite[Definition 20]{Dung:95} Given an AF, its \textbf{grounded extension}, $G$, is the $\subseteq$-least fixed point of $d$.
\end{definition}

\noindent This just names Definition \ref{def:pre_grounded} (page \pageref{def:pre_grounded}). We now give some examples:

\begin{example}
(Example \ref{eg:nixon}, page \pageref{eg:nixon} continued) In the Nixon diamond, $G=\es$. This is because $COMP=\set{\es,\set{a},\set{b}}$ and hence $\bigcap COMP=\es$.
\end{example}

\begin{example}
(Example \ref{eg:simple_reinstatement}, page \pageref{eg:simple_reinstatement} continued) In simple reinstatement, $G=\set{a,c}$. This is because $COMP=\set{\set{a,c}}$.
\end{example}

\begin{example}
\cite[Exercise 5(a)]{Caminada:08} For Figure \ref{fig:Caminada_ex2} (page \pageref{fig:Caminada_ex2}), $G=\set{a,c}$. This is because $COMP=\set{\set{a,c}}$.
\end{example}

\begin{example}
\cite[Exercise 5(b)]{Caminada:08} For Figure \ref{fig:Caminada_ex6b} (page \pageref{fig:Caminada_ex6b}), $G=\set{a}$. This is because $COMP=\set{\set{a},\set{a,c,f},\set{a,e}}$ hence $\bigcap COMP=\set{a}$.
\end{example}

\begin{example}
\cite[Exercise 5(c)]{Caminada:08} For Figure \ref{fig:Caminada_ex6c} (page \pageref{fig:Caminada_ex6c}), $G=\set{b,e}$. This is because $COMP=\set{\set{b,e}}$.
\end{example}

\begin{example}
\cite[Exercise 5(d)]{Caminada:08} For Figure \ref{fig:Caminada_ex3} (page \pageref{fig:Caminada_ex3}), $G=\es$, because $U=\es$. This is because $COMP=\set{\es,\set{a},\set{b,e}}$, hence $\bigcap COMP=\es$.
\end{example}

\subsubsection{Existence and Uniqueness}

\begin{theorem}
Given an AF, its grounded extension exists and is unique.
\end{theorem}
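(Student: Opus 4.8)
The plan is to reduce both claims to fixed-point theory that has already been assembled earlier in the excerpt, so that very little genuinely new work is required. First I would recall that the defence function $d:\pow\pair{A}\to\pow\pair{A}$ is $\subseteq$-monotone (Corollary \ref{cor:d_monotone}) and that $\ang{\pow\pair{A},\subseteq}$ is a complete lattice. Applying the Knaster--Tarski theorem (Theorem \ref{thm:KT}), exactly as was done when proving that $\ang{F_d,\subseteq}$ is a complete lattice, yields that the poset of fixed points $\ang{F_d,\subseteq}$ is itself a complete lattice, and in particular non-empty.

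For existence, I would then use that every complete lattice is bounded, so $F_d$ possesses a $\subseteq$-least element; by Definition \ref{def:grounded} this least fixed point is precisely the grounded extension $G$. Hence $G$ exists. As an alternative route to existence I could instead invoke Corollary \ref{cor:comp_exists} (which gives $COMP\neq\es$) together with Corollary \ref{cor:cap_comp_is_G} (which identifies $G=\bigcap COMP$), but the cleaner argument is the direct appeal to boundedness of the complete lattice $F_d$.

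For uniqueness, I would appeal to the general order-theoretic fact that a $\subseteq$-least element of any partially ordered set is unique: if $G_1$ and $G_2$ were both $\subseteq$-least fixed points of $d$, then being least each would be contained in the other, so $G_1\subseteq G_2$ and $G_2\subseteq G_1$, whence $G_1=G_2$ by antisymmetry of $\subseteq$. This settles uniqueness with no reference to the specific structure of $d$.

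The ``hard part'' is really that there is no hard part: the substantive content was discharged earlier when $\ang{F_d,\subseteq}$ was shown to be a complete lattice via Knaster--Tarski. The present statement is therefore a short bookkeeping argument that stitches together $\subseteq$-monotonicity of $d$, the completeness (hence boundedness) of $F_d$, and the uniqueness of least elements in a poset; the only thing to be careful about is to phrase uniqueness purely in terms of antisymmetry of $\subseteq$ rather than re-deriving any property of $d$.
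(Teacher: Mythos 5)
Your proposal is correct and matches the paper's own proof, which likewise derives existence and uniqueness immediately from the Knaster--Tarski theorem applied to the $\subseteq$-monotone defence function $d$ on the complete lattice $\ang{\pow\pair{A},\subseteq}$. Your explicit spelling-out of boundedness of the fixed-point lattice and uniqueness of least elements via antisymmetry is just a more detailed rendering of the same one-line argument.
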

\begin{proof}
This is immediate from the Knaster-Tarski theorem (Theorem \ref{thm:KT}, page \pageref{thm:KT}), as $d$ is a $\subseteq$-monotonic function on the complete lattice $\ang{\pow\pair{A},\subseteq}$.
\end{proof}

Intuitively, the grounded extension captures skeptical reasoning. The agent believes in only those arguments that are either in $U$ or ultimately reinstated by, i.e. grounded in, $U$.

\begin{theorem}
Given an AF, its grounded extension is the $\subseteq$-smallest complete extension.
\end{theorem}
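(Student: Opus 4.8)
The plan is to recognise that this statement is, in substance, a repackaging of Theorem \ref{thm:grounded_def2}. By Definition \ref{def:grounded}, the grounded extension $G$ is the $\subseteq$-least fixed point of $d$, which is precisely the $\subseteq$-least element of $F_d$ (Definition \ref{def:d_fp}). Theorem \ref{thm:grounded_def2} asserts that $S$ is the $\subseteq$-least element of $F_d$ if and only if $S$ is the $\subseteq$-least complete extension. Instantiating that equivalence at $S = G$ yields the claim at once. So the first and shortest route I would take is simply to cite Definition \ref{def:grounded} to identify $G$ as the least element of $F_d$, and then invoke Theorem \ref{thm:grounded_def2} to conclude that $G$ is the $\subseteq$-least complete extension.

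A more self-contained route, which I would include for illustration, avoids quoting Theorem \ref{thm:grounded_def2} and argues directly in two steps. First I would show $G \in COMP = CF \cap F_d$. Since $G$ is a fixed point of $d$ we have $G \in F_d$ immediately, so the only content is $G \in CF$. By the Knaster--Tarski theorem (Theorem \ref{thm:KT}) the least fixed point $G$ is the supremum of the ascending transfinite chain $\es \subseteq d(\es) \subseteq d^2(\es) \subseteq \cdots$ obtained by iterating $d$ from $\es$. Because $\es \in CF$ (Corollary \ref{cor:es_cf}) and $d$ is closed on $CF$ (Theorem \ref{thm:d_presv_CF}), every iterate $d^{\alpha}(\es)$ lies in $CF$, and since $\ang{CF,\subseteq}$ is directed complete (Corollary \ref{cor:CF_di_comp}) the supremum $G$ lies in $CF$ as well. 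Hence $G \in CF \cap F_d = COMP$.

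Second, I would establish minimality among complete extensions. Let $C \in COMP$ be arbitrary; then $C \in F_d$, and since $G$ is the $\subseteq$-least element of $F_d$ we get $G \subseteq C$. As $C$ ranges over all complete extensions, $G$ is the $\subseteq$-smallest complete extension.

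The only step carrying any content is confirming $G \in CF$, i.e.\ that the least fixed point of $d$ is actually conflict-free (a priori, nothing forces an arbitrary fixed point of $d$ to be conflict-free). This is exactly where I expect the main obstacle to sit, and it is discharged either by the transfinite-iteration argument above, using closure of $CF$ under $d$ and directed-completeness of $\ang{CF,\subseteq}$, or more cheaply by the already-proved Theorem \ref{thm:grounded_def2}, which bundles this observation into its $(2 \Rightarrow 1)$ direction.
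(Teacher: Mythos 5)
Your first route is exactly the paper's proof: the paper disposes of this theorem in a single line by citing Theorem \ref{thm:grounded_def2}, just as you do, with Definition \ref{def:grounded} identifying $G$ as the $\subseteq$-least element of $F_d$. Your second, self-contained route is also essentially correct, and you are right that its only real content is $G\in CF$; but note one attribution slip. Theorem \ref{thm:KT} (Knaster--Tarski as stated in the paper) only says that $F_d$ is a complete lattice, hence that a least fixed point \emph{exists}; it does not say that this least fixed point is the supremum of the transfinite iteration $d^{\alpha}\pair{\es}$. That characterisation requires a separate (standard) argument: the iterates form an increasing ordinal-indexed chain by Lemma \ref{lem:ordinal_iteration_d_monotone}, stabilise at a sufficiently large ordinal by a cardinality argument, the stabilised value is then a fixed point, and transfinite induction shows every fixed point of $d$ contains every iterate, so the stabilised value is least. (Kleene's theorem, Theorem \ref{thm:fp_thm}, would give the $\omega$-iteration version, but only under $\omega$-continuity, which the paper shows may fail for non-finitary AFs.) Once this is repaired, your $CF$ argument is fine --- indeed Theorem \ref{thm:d_presv_CF} already handles the limit stages, so the extra appeal to directed completeness of $\ang{CF,\subseteq}$ is redundant (chain completeness, Corollary \ref{cor:cf_chain_complete}, would suffice); alternatively, since $\es\in ADM$ by Corollary \ref{cor:empty_is_admissible}, Theorem \ref{thm:iterate_d_adm_to_comp} applied to $S=\es$ gives $G\in COMP$ directly. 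Your minimality step, that $C\in COMP$ implies $C\in F_d$ and hence $G\subseteq C$, is correct as written.
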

\begin{proof}
This follows from Theorem \ref{thm:grounded_def2} (page \pageref{thm:grounded_def2}).
\end{proof}

\begin{corollary}\label{cor:U_subset_G}
$U\subseteq G$.
\end{corollary}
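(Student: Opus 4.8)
The plan is to observe that this statement is an immediate consequence of results already established for the defence function $d$, since the grounded extension $G$ is defined as the $\subseteq$-least fixed point of $d$ (Definition \ref{def:grounded}). The key fact I would invoke is Corollary \ref{cor:iterate_d_get_U} (page \pageref{cor:iterate_d_get_U}), which asserts that $U \subseteq d(S)$ for \emph{every} $S \subseteq A$; intuitively, the unattacked arguments are defended vacuously and so appear in the defence of any set whatsoever.

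First I would note that because $G$ is a fixed point of $d$, we have $d(G) = G$. Then, instantiating Corollary \ref{cor:iterate_d_get_U} with $S = G$ yields $U \subseteq d(G)$. Combining these two facts gives $U \subseteq d(G) = G$, which is exactly the desired inclusion. The argument is therefore a one-line deduction:
\begin{align*}
U \subseteq d(G) = G.
\end{align*}

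An equally short alternative route would be to recall that $G$ is the $\subseteq$-least complete extension (this is the content of Theorem \ref{thm:grounded_def2}, restated just above the present corollary), so in particular $G \in COMP$, and then to apply Corollary \ref{cor:U_subset_COMP} (page \pageref{cor:U_subset_COMP}), which already records that $U \subseteq C$ for every $C \in COMP$; taking $C = G$ finishes the proof. Either phrasing works, and I would likely present the first since it appeals directly to the fixed-point characterisation used in Definition \ref{def:grounded}.

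There is no real obstacle here: the statement is a genuine corollary, and all the work has been done upstream in establishing that $d$ always retains the unattacked arguments. The only thing worth being careful about is citing the correct prior result, namely that $U \subseteq d(S)$ holds uniformly in $S$, rather than merely $U = d(\es)$ (Corollary \ref{cor:unattacked_d_es}); the uniform version is what makes the substitution $S = G$ legitimate.
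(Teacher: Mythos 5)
Your proposal is correct, and your ``alternative route'' is precisely the paper's own proof: it cites Theorem \ref{thm:grounded_def2} to get $G\in COMP$ and then applies Corollary \ref{cor:U_subset_COMP}. Your primary argument ($U\subseteq d(G)=G$ via Corollary \ref{cor:iterate_d_get_U}) is essentially the same mathematics with the intermediate corollary inlined, since the paper proves Corollary \ref{cor:U_subset_COMP} by exactly that fixed-point computation.
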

\begin{proof}
By Theorem \ref{thm:grounded_def2} (page \pageref{thm:grounded_def2}), $G\in COMP$. By Corollary \ref{cor:U_subset_COMP} (page \pageref{cor:U_subset_COMP}), $U\subseteq G$.
\end{proof}

\noindent Recall also from Corollary \ref{cor:empty_grounded} that $U=\es$ iff $G=\es$.

\begin{example}
\cite[Example 22]{Dung:95} In Example \ref{eg:nixon} (page \pageref{eg:nixon}), as $U=\es$, we have $G=\es$.
\end{example}

\begin{corollary}
If $U\neq\es$ and $a\in A$ is indirectly defended by $U$ (Definition \ref{def:indirectly_defends}, page \pageref{def:indirectly_defends}), then $a\in G$.
\end{corollary}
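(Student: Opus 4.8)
The plan is to use the iterative (Kleene-style) construction of the grounded extension and induct on the length of the defending path. Since $d$ is $\subseteq$-monotone (Corollary \ref{cor:d_monotone}) and $\es\in ADM$ (Corollary \ref{cor:empty_is_admissible}), iterating $d$ from $\es$ produces, at a sufficiently large ordinal $\beta$, the complete extension $L:=\bigcup_{\alpha<\beta}d^\alpha\pair{\es}$ by Theorem \ref{thm:iterate_d_adm_to_comp}. This $L$ is a fixed point of $d$ contained in every fixed point (by monotonicity $d^\alpha\pair{\es}\subseteq d^\alpha\pair{F}=F$ for any fixed point $F$), so $L$ is the $\subseteq$-least fixed point, i.e. $L=G$ (Definition \ref{def:grounded}). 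Recalling $d\pair{\es}=U$ (Corollary \ref{cor:unattacked_d_es}) and $U\subseteq G$ (Corollary \ref{cor:U_subset_G}), it therefore suffices to show that $a$ enters some iterate $d^\alpha\pair{\es}\subseteq G$.

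Next I would unpack the hypothesis. That $a$ is indirectly defended by $U$ means (Definition \ref{def:indirectly_defends}) there is a $u\in U$ and an even-length path $u=x_0,x_1,\ldots,x_{2k}=a$ with $R\pair{x_i,x_{i+1}}$ for each $i$. The natural device is induction on $k$. For the base case $k=0$ we have $a=u\in U=d\pair{\es}\subseteq G$. For the inductive step I would examine the tail $x_{2k-2}\to x_{2k-1}\to a$: the even sub-path $u,\ldots,x_{2k-2}$ of length $2(k-1)$ places $x_{2k-2}\in G$ by the inductive hypothesis, so $G$ attacks $x_{2k-1}$, giving $x_{2k-1}\in G^+$ with $x_{2k-1}\in a^-$.

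The step I expect to be the main obstacle is the passage from ``one attacker of $a$ lies in $G^+$'' to ``$a\in G$''. Since $G=d\pair{G}$, membership $a\in G$ is equivalent to $a^-\subseteq G^+$, a claim about \emph{every} attacker of $a$, whereas the given even path only certifies the single penultimate vertex $x_{2k-1}\in a^-$. To close this gap one must guarantee that each $y\in a^-$ is also counter-attacked by $G$; the bare even-path hypothesis supplies a defending walk to $a$ but controls none of the other attackers. I would therefore try to strengthen the induction so that what is propagated along the path is not a single counter-attack but a conflict-free, self-defending set absorbing all of $a^-$, ideally invoking the generalised fundamental lemma (Lemma \ref{lem:gen_dfl}) to keep the accumulated set admissible as it grows. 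Extracting this stronger invariant from the hypothesis as stated is exactly the crux on which the argument rests.
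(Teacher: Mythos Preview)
You have put your finger on a genuine defect, and it is not merely an obstacle to your approach: the corollary as stated is false. Consider $A=\set{u,x,a,y,z}$ with attacks $R(u,x)$, $R(x,a)$, $R(y,a)$, $R(y,z)$, $R(z,y)$. Here $U=\set{u}$, and the path $u\to x\to a$ of length~$2$ witnesses that $a$ is indirectly defended by $U$. Yet $d\pair{\es}=\set{u}$ and $d\pair{\set{u}}=\set{u}$ (since $a^-=\set{x,y}\not\subseteq\set{x}=\set{u}^+$), so $G=\set{u}$ and $a\notin G$. A single even path to $a$ cannot force $a^-\subseteq G^+$ when $a$ has other attackers lying on an even cycle disjoint from $U$.

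The paper's proof makes precisely the unjustified leap you anticipated. Its inductive step reads: ``Let $a'$ be of length~$2$ away from $a$, therefore for all $b\in a'^-$, $b\in a^+$''. The word ``therefore'' is doing no work here; a length-$2$ path $a\to c\to a'$ certifies only that \emph{one} element of $a'^-$ (namely $c$) lies in $a^+$, not all of them. Your instinct that one would need a stronger hypothesis---something guaranteeing that \emph{every} attacker of $a$ is itself attacked from within the iterates of $d$ on $\es$---is exactly right, and no amount of strengthening the induction will extract that from the bare even-path definition of indirect defence. The intended result presumably requires either that every attacker of $a$ be indirectly attacked by $U$, or some well-foundedness condition on the framework.
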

\begin{proof}
If $U\neq\es$ and $a\in A$ is indirectly defended by $U$, then there exists $b\in U$ such that there is an even-length path from $b$ to $a$. Suppose the length is $2n$ for some $n\in\nat$. We prove by induction on $n$.
\begin{enumerate}
\item (Base) If $n=0$, then $b=a$, i.e. the path length is 0, then $a\in U\subseteq G$ by Corollary \ref{cor:U_subset_G}.
\item (Inductive) Suppose the path length is $2n$ and that $a\in G$. Let $a'$ be of length 2 away from $a$, therefore for all $b\in a'^-$, $b\in a^+$ so $b\in G^+$. Therefore, $a'\in d(G)=G$.
\end{enumerate}
The result follows by mathematical induction.
\end{proof}

The following result is important as it provides an efficient way of calculating the grounded extension for finite argumentation frameworks.

\begin{theorem}
If $d$ is $\omega$-continuous, then $\bigcup_{i\in\nat}d^i\pair{\es}$ is the grounded extension. 
\end{theorem}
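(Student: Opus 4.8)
The plan is to recognise this as an instance of Kleene's fixed-point theorem and to prove it directly from the hypotheses already available. Write $L := \bigcup_{i\in\nat} d^i\pair{\es}$. First I would check that $\set{d^i\pair{\es}}_{i\in\nat}$ is an ascending $\omega$-chain: since $\es$ is the $\subseteq$-least element of $\pow\pair{A}$ we trivially have $\es = d^0\pair{\es} \subseteq d^1\pair{\es}$, and then by the $\subseteq$-monotonicity of $d$ (Corollary \ref{cor:d_monotone}) an easy induction on $i$ gives $d^i\pair{\es} \subseteq d^{i+1}\pair{\es}$ for all $i\in\nat$. Hence $L$ is a genuine limit of an $\omega$-chain, to which $\omega$-continuity may be applied.

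Next I would show that $L$ is a fixed point of $d$. Applying $\omega$-continuity (Definition \ref{def:omega_cts}) to the chain above,
\[ d\pair{L} = d\pair{\bigcup_{i\in\nat} d^i\pair{\es}} = \bigcup_{i\in\nat} d\pair{d^i\pair{\es}} = \bigcup_{i\in\nat} d^{i+1}\pair{\es}. \]
The re-indexed union $\bigcup_{i\in\nat} d^{i+1}\pair{\es}$ differs from $L$ only in that it omits the $i=0$ term $d^0\pair{\es} = \es$; but $\es \subseteq d^1\pair{\es}$, so that term is already absorbed and the two unions coincide. Therefore $d\pair{L} = L$, i.e. $L \in F_d$ (Definition \ref{def:d_fp}).

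It remains to verify that $L$ is the $\subseteq$-least fixed point. I would take an arbitrary fixed point $S$ of $d$, so $d\pair{S} = S$. Since $\es \subseteq S$, monotonicity together with an induction on $i$ yields $d^i\pair{\es} \subseteq d^i\pair{S} = S$ for every $i\in\nat$, where the inductive step uses $d\pair{S} = S$. Taking the union over $i$ gives $L \subseteq S$. Hence $L$ lies below every fixed point of $d$, so $L$ is the $\subseteq$-least fixed point, which by Definition \ref{def:grounded} is precisely the grounded extension $G$.

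The argument is essentially routine once $\omega$-continuity is in hand; the only point demanding care is the re-indexing in the fixed-point step, where one must remember to account for the dropped base term $\es$ and observe that it is absorbed by $d^1\pair{\es}$, so that the union is unchanged. Everything else reduces to $\subseteq$-monotonicity plus a pair of straightforward inductions.
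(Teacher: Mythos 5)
Your proof is correct and is essentially the paper's argument: the paper simply instantiates Kleene's fixed point theorem (Theorem \ref{thm:fp_thm}) with $\ang{D,\leq,\bot}=\ang{\pow\pair{A},\subseteq,\es}$ and $f=d$, and your three steps --- the ascending chain $\set{d^i\pair{\es}}_{i\in\nat}$ via monotonicity, $\omega$-continuity yielding a fixed point with the dropped $\es$ term absorbed, and induction showing $L\subseteq S$ for any fixed point $S$ --- reproduce, inlined, exactly the appendix proof of that theorem. The re-indexing point you flag is the same absorption step handled there via $\bot$ being the identity of $\vee$ (Corollary \ref{cor:bot_id_join}), so nothing is missing.
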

\begin{proof}
We instantiate Kleene's fixed point theorem (Theorem \ref{thm:fp_thm}, page \pageref{thm:fp_thm}) where $\ang{D,\leq,\bot}=\ang{\pow\pair{A},\subseteq,\es}$ and $f=d$. Then $F:=\bigcup_{i\in\nat}d^i\pair{\es}$ is the least fixed point of $d$. By Definition \ref{def:grounded}, $F=G$.
\end{proof}

\begin{example}
(Example \ref{eg:Israel_Arab} continued, page \pageref{eg:Israel_Arab}, from \cite[Example 21]{Dung:95}) We have that $d\pair{\es}=\set{c}$, $d^2\pair{\es}=\set{a,c}=d^k\pair{\es}$ for $k>2$. Therefore, $G=\set{a,c}$.
\end{example}

\begin{corollary}
If $S\subseteq G$ then $S$ is not necessarily admissible.
\end{corollary}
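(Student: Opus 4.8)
The plan is to establish this ``not necessarily'' claim by exhibiting a single explicit counterexample, since the statement only asserts that $S\subseteq G$ fails to guarantee admissibility in general. The conceptual reason is that $ADM$ is not $\subseteq$-downward closed; while $CF$ is downward closed (Corollary \ref{cor:CF_down_closed}), the self-defending sets $SD$ are not, so a subset of an admissible set can lose self-defence. I would therefore look for an AF whose grounded extension $G$ is non-empty and contains a proper subset that attacks nothing yet is itself attacked.

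The cleanest witness is simple reinstatement (Example \ref{eg:simple_reinstatement}), where $A=\set{a,b,c}$ and $R=\set{(b,a),(c,b)}$. First I would recall that its grounded extension is $G=\set{a,c}$: indeed $U=\set{c}$, $d\pair{\es}=\set{c}$, $d\pair{\set{c}}=\set{a,c}$, and $\set{a,c}$ is a fixed point of $d$, so it is the $\subseteq$-least complete extension. Next I would take $S:=\set{a}$, which clearly satisfies $S\subseteq G$.

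I would then check that $S$ is not admissible by showing it fails self-defence. Since $a$ attacks nothing we have $\set{a}^+=\es$, whereas $a^-=\set{b}$; thus $a^-=\set{b}\not\subseteq\es=\set{a}^+$, so $a\notin d\pair{\set{a}}$ and $\set{a}\not\subseteq d\pair{\set{a}}$. By Theorem \ref{thm:sd_equiv} this means $\set{a}\notin SD$, hence $\set{a}\notin CF\cap SD=ADM$ by Corollary \ref{cor:adm_cf_sd}. Therefore $S\subseteq G$ while $S\notin ADM$, which gives the result.

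There is essentially no difficult step here: the whole content is recognising that admissibility is not inherited by subsets (because $SD$ is not downward closed) and then picking an example where a defended argument $a$ is only indirectly reinstated, so that $a$ in isolation cannot fend off its attacker. The only thing to be careful about is verifying the grounded extension computation, which is routine and already recorded for this example elsewhere in the text.
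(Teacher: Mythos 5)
Your proposal is correct and follows essentially the same route as the paper, which also takes simple reinstatement with $G=\set{a,c}$ and exhibits $S=\set{a}$ as a subset that fails self-defence against $b$. Your additional verification of the grounded extension and the explicit appeal to Theorem \ref{thm:sd_equiv} and Corollary \ref{cor:adm_cf_sd} merely spell out details the paper leaves implicit.
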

\begin{proof}
From Example \ref{eg:simple_reinstatement} (page \pageref{eg:simple_reinstatement}), as $G=\set{c,a}$, let $S=\set{a}$, which is clearly not admissible because it does not defend itself against the attack from $b$.
\end{proof}

\subsection{Summary}

On preferred extensions:

\begin{itemize}
\item $S\subseteq A$ is preferred iff it is a $\subseteq$-maximal admissible extension. The set of all preferred extensions of a given AF $\mathcal{A}$ is $PREF\pair{\mathcal{A}}$ or just $PREF$.
\item For any AF, $PREF\neq\es$, and for any $S\in ADM$ there is an $E\in PREF$ such that $S\subseteq E$.
\item We have $PREF=\max_{\subseteq}ADM=\max_{\subseteq}COMP$, and that $COMP=\set{\es}$ iff $ADM=\set{\es}$.
\item If $\ang{A,R}$ is an even cycle, then $PREF$ has two sets. If $\ang{A,R}$ is an odd cycle, then $PREF=\set{\es}$.
\item If the underlying AF has no even cycle, then $PREF$ is unique.
\item If the underlying AF has no even cycle and $U=\es$, then $PREF=\set{\es}$.
\item $PREF$ is a $\subseteq$-antichain.
\item If $PREF$ covers $A$, then $\bigcap PREF= U$.
\item In general, $PREF\not\subseteq NAI$ and $NAI\not\subseteq PREF$, but if the AF is symmetric, then $PREF=NAI$.
\end{itemize}

\noindent On stable extensions:

\begin{itemize}
\item $S\subseteq A$ is stable iff it is a fixed point of $n$. The set of all stable extensions is $STAB(AF)$ or just $STAB$.
\item $STAB\subseteq PREF$, $STAB\subseteq NAI$, and if $PREF=\set{\es}$, then $STAB=\es$.
\item $A=\es$ iff $STAB=\set{\es}$.
\item If either $PREF\subseteq NAI$ or $NAI\subseteq PREF$, then $STAB=PREF$.
\end{itemize}

\noindent On the grounded extension

\begin{itemize}
\item The $\subseteq$-least complete extension is the grounded extension, $G$, which exists and is unique for all AF.
\item $U\subseteq G$, and $U=\es$ iff $G=\es$.
\item If $d$ is $\omega$-continuous, then the limit of the $\subseteq$-ascending chain $\set{d^k\pair{\es}}_{k\in\nat}$ is $G$.
\end{itemize}

\newpage

\section{Which Arguments are Justified?}

\subsection{Credulous and Sceptical Justification}

The Dung semantics are the four main semantics that we have discussed, and all are variations of admissible sets.

\begin{definition}
For an AF, its \textbf{Dung semantics} are: complete, preferred, stable and grounded.
\end{definition}

Given that there are multiple notions of what it means for a set of arguments to be justified, we now define what it means for an individual argument to be justified. Let AF be an argumentation framework.

\begin{definition}
We say $a\in A$ is \textbf{skeptically justified w.r.t. preferred / stable semantics} iff
\begin{align}
a\in\bigcap PREF\text{ and }a\in\bigcap STAB
\end{align}
respectively. If $STAB=\es$ then no argument can be skeptically justified w.r.t. stable semantics.
\end{definition}

\begin{example}
(Example \ref{eg:fri}, page \pageref{eg:fri} continued), we have $\bigcap PREF=\set{e}$ hence the argument $e$ is skeptically justified w.r.t. preferred semantics.
\end{example}


\begin{definition}
We say $a\in A$ is \textbf{credulously justified w.r.t. complete / preferred / stable semantics} iff
\begin{align}
a\in\bigcup COMP,\:a\in\bigcup PREF\text{ and }a\in\bigcup STAB
\end{align}
respectively. If $STAB=\es$ then no argument can be credulously justified w.r.t. stable semantics.
\end{definition}

\begin{example}
(Example \ref{eg:fri}, page \pageref{eg:fri} continued), we have $\bigcup PREF=\set{a,b,e}$ hence the argument $a$ is credulously justified w.r.t. preferred semantics.
\end{example}

\begin{definition}
We say $a\in A$ is \textbf{justified w.r.t. the grounded semantics} iff $a\in G$.
\end{definition}

\noindent Notice that as $G$ is unique, skeptical inference and credulous inference coincide. Further, as $G=\bigcap COMP$ (Corollary \ref{cor:cap_comp_is_G}, page \pageref{cor:cap_comp_is_G}), we do not consider skeptical justification w.r.t. complete semantics.

\begin{example}
(Example \ref{eg:nixon}, page \pageref{eg:nixon} continued) As $G=\es$, no argument is justified w.r.t. the grounded semantics.
\end{example}

\begin{definition}
\cite[Definition 2.13]{Baroni:09} An argument is \textbf{overruled w.r.t. a given semantics} iff it is not credulously justified w.r.t. to that semantics.
\end{definition}

To summarise, the Dung semantics consists of the complete, preferred, stable and grounded extensions. An argument is skeptically justified w.r.t. a given semantics iff it is in all of the extensions of that given type. Notice this means that skeptical complete is the same as grounded. An argument is credulously justified w.r.t. a given semantics iff it is in at least one of the extensions of that given type.


\subsection{Coincidence of Semantics}

\subsubsection{Equality of All Dung Semantics}

So far we have established $STAB\subset PREF\subset COMP\subset ADM\subset CF$, because the reverse inclusions do not hold in general.\footnote{This result has been shown in Theorem \ref{thm:STAB_in_PREF} (page \pageref{thm:STAB_in_PREF}), Theorem \ref{thm:pref_in_comp} (page \pageref{thm:pref_in_comp}), Theorem \ref{thm:comp_in_adm} (page \pageref{thm:comp_in_adm}) and Corollary \ref{cor:ADM_in_CF} (page \pageref{cor:ADM_in_CF}).} Further, we have established that $G\in COMP$. We can illustrate this in the following Hasse diagram:

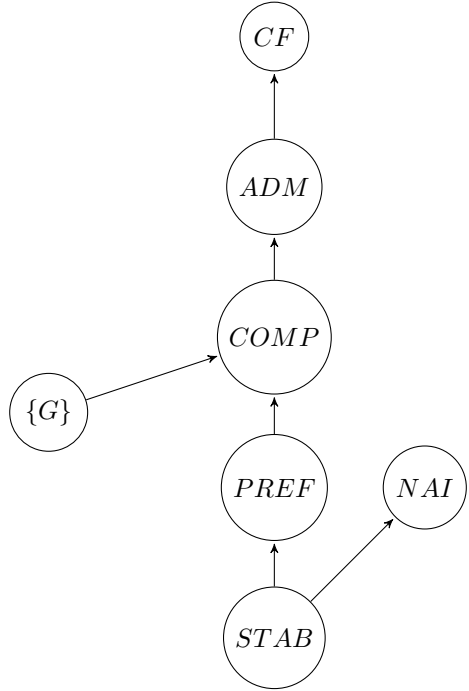
\begin{figure}[H]
\begin{center}
\begin{tikzpicture}[>=stealth',shorten >=1pt,node distance=2cm,on grid,initial/.style    ={}]
\tikzset{mystyle/.style={->,relative=false,in=0,out=0}};
\node[state] (STAB) at (0,0) {$ STAB $};
\node[state] (PREF) at (0,2) {$ PREF $};
\node[state] (COMP) at (0,4) {$ COMP $};
\node[state] (ADM) at (0,6) {$ ADM $};
\node[state] (CF) at (0,8) {$ CF $};
\node[state] (G) at (-3,3) {$ \set{G} $};
\node[state] (NAI) at (2,2) {$ NAI $};

\draw [->] (STAB) to (PREF);
\draw [->] (PREF) to (COMP);
\draw [->] (COMP) to (ADM);
\draw [->] (ADM) to (CF);
\draw [->] (G) to (COMP);
\draw [->] (STAB) to (NAI);
\end{tikzpicture}
\caption{A Hasse diagram where the arrows represent the containment relations between each type of sets of arguments.}\label{fig:semantics_hasse1}
\end{center}
\end{figure}

Under which circumstances for the AF can we have equality between the semantics? The strongest form of this equality is when all Dung semantics collapse and there is only one set of winning arguments. Figure \ref{fig:semantics_hasse1} suggests that this will happen if $G\in STAB$. We prove the following result.

\begin{lemma}\label{lem:grd_stab}
If $G\in STAB$, then $PREF=COMP=\set{G}=STAB$. Therefore, all semantics coincide. The converse is false.
\end{lemma}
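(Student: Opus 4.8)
The plan is to exploit the defining property of stability together with the fact that $G$ is the $\subseteq$-least complete extension. First I would unpack the hypothesis $G\in STAB$: by Theorem \ref{thm:stable_TFAE} this says precisely that $G\in CF$ and $A-G\subseteq G^+$, i.e. $G$ attacks every argument lying outside it. This is the only feature of $G\in STAB$ that the argument will use.

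The core step is to show $COMP=\set{G}$. Take an arbitrary $C\in COMP$. Since $G$ is the $\subseteq$-least complete extension (equivalently $G=\bigcap COMP$ by Corollary \ref{cor:cap_comp_is_G}), we have $G\subseteq C$. I would then rule out strict containment by contradiction: if there were some $a\in C-G$, then $a\notin G$ forces $a\in G^+$ by stability, so there is a $b\in G$ with $R(b,a)$; but $G\subseteq C$ gives $b\in C$, whence $a,b\in C$ with $R(b,a)$ contradicts $C\in CF$ (recall $COMP\subseteq CF$). Hence $C=G$, so $COMP=\set{G}$.

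With $COMP=\set{G}$ established, the remaining equalities follow immediately from the inclusion chain proved earlier. By Theorems \ref{thm:STAB_in_PREF} and \ref{thm:pref_in_comp} we have $STAB\subseteq PREF\subseteq COMP=\set{G}$, and since $G\in STAB$ by hypothesis all three sets must equal $\set{G}$. Thus $STAB=PREF=COMP=\set{G}$, and every Dung semantics returns the single extension $G$, as claimed.

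For the converse --- that the collapse of the complete, preferred and grounded semantics onto $G$ need not make $G$ stable --- I would exhibit the $1$-cycle $\ang{\set{a},\set{\pair{a,a}}}$. There $U=\es$ so $G=\es$, and one computes $CF=ADM=COMP=PREF=\set{\es}$, whereas $n$ has no fixed point so $STAB=\es$; hence $G=\es\notin STAB$ even though the other three semantics coincide on $G$. The main obstacle is really the contradiction step: it is essential that stability supplies an attacker of $a$ \emph{inside} $G$ (rather than merely somewhere in $A$), since that is exactly what forces the conflict to land within the conflict-free set $C$ and thereby pins every complete extension down to $G$.
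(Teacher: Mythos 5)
Your proof is correct, but the key collapse step runs along a genuinely different line from the paper's. The paper gets $C=G$ for arbitrary $C\in COMP$ purely order-theoretically: since $G\in STAB\subseteq PREF$ (Theorem \ref{thm:STAB_in_PREF}) and $PREF=\max_{\subseteq}COMP$ (Theorem \ref{thm:pref_max_comp}), $G$ is simultaneously the $\subseteq$-least and a $\subseteq$-maximal complete extension, which pins $COMP$ to $\set{G}$ without ever unpacking what stability means. You instead argue from first principles: $G\subseteq C$ by leastness, and any $a\in C-G$ would satisfy $a\in G^+$ by $A-G\subseteq G^+$ (Theorem \ref{thm:stable_TFAE}), planting an attack $R(b,a)$ with $b\in G\subseteq C$ inside the conflict-free set $C$ --- contradiction. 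Your route is more elementary and self-contained for the core step (it needs only the definition of stability, $COMP\subseteq CF$, and Corollary \ref{cor:cap_comp_is_G}, not the maximality machinery), at the cost of a slightly longer argument; the paper's buys brevity by reusing structural results already on the shelf. Your wrap-up via $STAB\subseteq PREF\subseteq COMP=\set{G}$ together with $G\in STAB$ matches the paper's. For the converse you also improve on the paper in concreteness: the paper merely posits a non-empty AF with $COMP=\set{\es}$ (whence $G=\es$ and $STAB=\es$), while you instantiate this with the $1$-cycle $\ang{\set{a},\set{\pair{a,a}}}$, for which the paper itself verifies $STAB=\es$ elsewhere, and your computations $CF=ADM=COMP=PREF=\set{\es}$, $G=\es\notin STAB$ are all correct. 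Your closing remark is also well taken: it is precisely the fact that stability places an attacker of $a$ \emph{inside} $G$ (not merely in $A$) that lets the conflict land within $C$.
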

\begin{proof}
Let $C\in COMP$ be arbitrary, then $G\subseteq C$. But as $G\in STAB\subseteq PREF=\max_{\subseteq}COMP$, then $C\subseteq G$ and hence $C=G$. As $G\in COMP$ and $C\in COMP$ is arbitrary, we have shown that any complete extension is equal to $G$ and hence $COMP=\set{G}$. From $PREF=\max_\subseteq COMP$, we conclude $PREF=\set{G}$. As $G\in STAB$, then $\es\neq STAB\subseteq PREF=\set{G}$ and hence $STAB=\set{G}$ as well. Therefore, all semantics coincide because $G$ is grounded, complete, preferred and stable.

The converse is false. Suppose we have a non-empty AF where $COMP=\set{\es}$ hence $G=\es$ and $STAB=\es$. Therefore, $G\notin STAB$.
\end{proof}

\begin{example}
(Example \ref{eg:Israel_Arab} continued, page \pageref{eg:Israel_Arab}) As $G=\set{a,c}$, we have $G\in STAB$ because the only argument outside is $b$, which is attacked by $c$. Therefore, $STAB=PREF=COMP=\set{\set{a,c}}$.
\end{example}

\begin{corollary}
If the AF is empty then all four semantics coincide.
\end{corollary}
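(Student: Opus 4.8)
The plan is to reduce everything to the single observation that when $A=\es$ the powerset $\pow\pair{A}=\set{\es}$, so every semantics is forced to be a subset of $\set{\es}$, and then to invoke the coincidence result already proved in Lemma \ref{lem:grd_stab}. First I would note that since there are no arguments at all, there are trivially no unattacked arguments, so $U=\es$; by Corollary \ref{cor:empty_grounded} this gives $G=\es$.

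Next I would verify the hypothesis $G\in STAB$ needed to apply Lemma \ref{lem:grd_stab}. Since $A=\es$, Corollary \ref{cor:empty_AF_iff_STAB_has_empty} yields $\es\in STAB$. Combining this with $G=\es$ from the previous step gives $G=\es\in STAB$, i.e.\ $G\in STAB$. At that point Lemma \ref{lem:grd_stab} applies directly and delivers $PREF=COMP=\set{G}=STAB=\set{\es}$, so the complete, preferred, stable and grounded semantics all coincide.

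Alternatively, and perhaps more transparently, I could chain the ``emptiness'' equivalences: $\es\in CF$ (Corollary \ref{cor:es_cf}) together with $\pow\pair{A}=\set{\es}$ forces $CF=\set{\es}$, hence $ADM=\set{\es}$ (Corollary \ref{cor:empty_is_admissible} plus $ADM\subseteq CF$), then $COMP=\set{\es}$ by Corollary \ref{cor:empty_comp}, then $PREF=\set{\es}$ by Corollary \ref{cor:empty_pref}, while $STAB=\set{\es}$ from Corollary \ref{cor:empty_AF_iff_STAB_has_empty}, and $G=\es$ as above. Either route works; I would present the Lemma \ref{lem:grd_stab} route as the main proof since it is the shortest, possibly remarking on the second route as a sanity check.

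There is essentially no mathematical obstacle here, as the statement is a corollary in the strict sense: every ingredient is an already-established equivalence or inclusion. The only point requiring a moment's care is making explicit that $U=\es$ in the empty framework (so that $G=\es$), and recording that $\es$ is genuinely the \emph{only} candidate extension because $\pow\pair{A}=\set{\es}$; once those trivialities are stated, the appeal to Lemma \ref{lem:grd_stab} closes the argument immediately.
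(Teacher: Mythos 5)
Your main argument is correct and is essentially the paper's own proof: both establish $G=\es$ from $U=\es$ and then show $G\in STAB$ before invoking Lemma \ref{lem:grd_stab}, the only difference being that you cite Corollary \ref{cor:empty_AF_iff_STAB_has_empty} where the paper directly computes $n\pair{\es}=\es-\es^+=\es$ (the same computation underlying that corollary). Your alternative chain through $CF=ADM=COMP=PREF=\set{\es}$ is a fine sanity check but adds nothing beyond the main route.
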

\begin{proof}
If $A=\es$ then $G=\es$ because $U=\es$, and further, $n\pair{\es}=\es-\es^+=\es$, hence the grounded extension $\es$ is stable. By Lemma \ref{lem:grd_stab}, all four semantics coincide.
\end{proof}

\noindent One further sufficient condition to achieve equality of all semantics is the following.

\begin{theorem}\label{thm:wf_one_ext}
\cite[Theorem 30]{Dung:95} If the AF is non-empty and well-founded (Definition \ref{def:well_founded}, page \pageref{def:well_founded}), then all four semantics coincide. The converse is not true in general.
\end{theorem}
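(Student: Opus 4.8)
The plan is to show that for a non-empty well-founded AF the grounded extension $G$ is in fact \emph{stable}, and then to invoke Lemma \ref{lem:grd_stab} to collapse all four semantics onto $\set{G}$. Since $G\in COMP\subseteq CF$, it is already conflict-free, so by Theorem \ref{thm:stable_TFAE} it remains only to prove $A-G\subseteq G^+$, i.e.\ that every argument outside $G$ is attacked by $G$. (Non-emptiness together with Corollary \ref{cor:wf_unattacked} also ensures $U\neq\es$ and hence $G\neq\es$, though this is not what drives the coincidence.)

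To establish $A-G\subseteq G^+$ I would argue by contradiction. Set $B:=A-\pair{G\cup G^+}$, the arguments that are neither in $G$ nor attacked by $G$, and suppose $B\neq\es$. Take any $a\in B$. Because $G$ is the grounded extension it is a fixed point of $d$, so $a\notin G=d(S)$ with $S=G$ forces $a^-\not\subseteq G^+$; hence there is an attacker $b\in a^-$ with $b\notin G^+$. This $b$ cannot lie in $G$, for $R(b,a)$ together with $b\in G$ would give $a\in G^+$, contradicting $a\in B$. Thus $b\notin G$ and $b\notin G^+$, i.e.\ $b\in B$, and $R(b,a)$ gives $a\in B^+$. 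As $a\in B$ was arbitrary, $B\subseteq B^+$ with $B\neq\es$, which by Corollary \ref{cor:non_wf_core} contradicts well-foundedness. Therefore $B=\es$, so $A=G\cup G^+$ and $A-G\subseteq G^+$, giving $G\in STAB$. Lemma \ref{lem:grd_stab} then yields $PREF=COMP=\set{G}=STAB$, and since $G$ is grounded, all four Dung semantics coincide.

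For the converse I would exhibit a non-empty AF that is not well-founded yet on which all four semantics still coincide. Example \ref{eg:Israel_Arab}, with $A=\set{a,b,c}$ and $R=\set{(a,b),(b,a),(c,b)}$, works: there $G=\set{a,c}$ attacks the only outside argument $b$, so $G$ is stable and $STAB=PREF=COMP=\set{\set{a,c}}$, yet the $2$-cycle on $\set{a,b}$ makes the framework non-well-founded by Corollary \ref{cor:cyclic_not_wf}.

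The main obstacle is the core contradiction step: for each $a\in B$ one must extract an attacker that again lies in $B$, rather than merely somewhere in $A$. This is exactly where the full fixed-point equation $G=d(G)$ (not just $G\subseteq d(G)$) is needed, to guarantee an \emph{undefended} attacker $b$ of $a$, and where one must check both that $b\notin G$ and that $b\notin G^+$ so as to place $b$ back in $B$ and thereby feed Corollary \ref{cor:non_wf_core}.
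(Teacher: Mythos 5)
Your proof is correct and takes essentially the same route as the paper's: the paper presents it contrapositively, but its set $S:=\set{x\in A-G\:\vline\:x\notin G^+}$ is exactly your $B=A-\pair{G\cup G^+}$, and it runs the identical core argument (use $G=d(G)$ to extract an attacker $b\notin G\cup G^+$, conclude $B\subseteq B^+$ with $B\neq\es$, invoke Corollary \ref{cor:non_wf_core}), collapses the semantics via Lemma \ref{lem:grd_stab}, and refutes the converse with the same Example \ref{eg:Israel_Arab}. If anything, your justification that $b\notin G$ (since $b\in G$ with $R(b,a)$ would force $a\in G^+$, contradicting $a\in B$) is stated more precisely than the paper's brief appeal to $G\in CF$.
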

\begin{proof}
(Contrapositive) If all four semantics do not coincide, then the grounded extension $G$ is not stable by Lemma \ref{lem:grd_stab}. Therefore, $\pair{\exists a\notin G}a\notin G^+$. Define the set $S:=\set{x\in A-G\:\vline\:x\notin G^+}$. As $a\in S$, then $S\neq\es$. As $a\notin G=d(G)$, then $a\notin d(G)$, so $a^-\not\subseteq G^+$. There is some $b\in a^-$ such that $b\notin G^+$. Note as $G\in CF$, $b\notin G$ hence $b\in A-G$ and $b\notin G^+$. Therefore, we have found some $b\in S$ such that $R(b,a)$, so $a\in S^+$. As $a$ is arbitrary, $S\subseteq S^+$ and $S\neq\es$. By Corollary \ref{cor:non_wf_core} (page \pageref{cor:non_wf_core}), the underlying $\ang{A,R}$ is not well-founded.

If $G=\es$, which does not conflict with the fact that $G$ is not stable as $A\neq\es$, then $S=A\neq\es$ and by the same argument we show that $A\subseteq A^+$, hence $\ang{A,R}$ is also not well-founded.

For the converse, Example \ref{eg:Israel_Arab} (page \pageref{eg:Israel_Arab}) satisfies $STAB=PREF=COMP$ $=\set{G}$, but has a 2-cycle $(a,b),(b,a)\in R$, so this AF is not well-founded.
\end{proof}


\begin{corollary}
For finite acyclic AFs, there is only one extension of all four types.
\end{corollary}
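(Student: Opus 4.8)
The plan is to combine two results already established in the excerpt: that finite acyclic AFs are well-founded (Corollary \ref{cor:fin_acyc_wf}), and that non-empty well-founded AFs have all four Dung semantics coinciding (Theorem \ref{thm:wf_one_ext}). The only wrinkle is that Theorem \ref{thm:wf_one_ext} carries a non-emptiness hypothesis, so I would split the argument into two cases according to whether $A=\es$.

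First I would treat the case where the finite acyclic AF $\ang{A,R}$ is non-empty. By Corollary \ref{cor:fin_acyc_wf}, every finite acyclic AF is well-founded, so $\ang{A,R}$ is non-empty and well-founded and Theorem \ref{thm:wf_one_ext} applies directly, giving that all four semantics coincide. Recalling the conclusion form of Lemma \ref{lem:grd_stab}, ``all four semantics coincide'' means precisely $STAB=PREF=COMP=\set{G}$, where $G$ is the grounded extension (which is unique for every AF). In particular there is exactly one set that is simultaneously grounded, complete, preferred and stable, which is the desired conclusion.

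Second I would dispatch the case $A=\es$ by invoking the earlier corollary stating that all four semantics coincide for the empty AF; here the single extension is $G=\es$, which is grounded, complete, preferred and stable. Combining the two cases yields the result in full generality.

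The main obstacle here is bookkeeping rather than mathematics: the empty case must be handled separately, since it lies outside the hypotheses of Theorem \ref{thm:wf_one_ext}, even though the empty AF is itself (vacuously) well-founded. Beyond that there is no substantive new argument to make — the content has been front-loaded into Corollary \ref{cor:fin_acyc_wf} and Theorem \ref{thm:wf_one_ext} — so once the case split is in place the proof is essentially a two-line deduction, with the uniqueness of $G$ supplying the ``only one extension'' part of the claim.
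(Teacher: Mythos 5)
Your proof is correct and takes essentially the same route as the paper, whose entire proof is the one-line citation of Corollary \ref{cor:fin_acyc_wf} together with Theorem \ref{thm:wf_one_ext}. Your explicit case split for $A=\es$ is in fact slightly more careful than the paper's version, which silently leaves the empty case (outside the non-emptiness hypothesis of Theorem \ref{thm:wf_one_ext}) to the reader, even though it is covered by the earlier corollary on empty AFs exactly as you say.
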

\begin{proof}
This follows from Corollary \ref{cor:fin_acyc_wf} (page \pageref{cor:fin_acyc_wf}) and Theorem \ref{thm:wf_one_ext}.
\end{proof}

It is easy to see that if all Dung semantics coincide, then $ADM$ has $G$ to be its $\subseteq$-greatest element, and thus becomes a bounded poset $ADM\subseteq\pow\pair{G}$.




\subsubsection{Coherent Argumentation Frameworks}

A weaker case is to investigate when $PREF = STAB$.

\begin{definition}\label{def:coh_AF}
\cite[Definition 31(1)]{Dung:95} An AF is \textbf{coherent} iff $PREF = STAB$.
\end{definition}

\begin{corollary}\label{cor:wf_implies_coh}
If an AF is non-empty and well-founded, then it is coherent. The converse is not true.
\end{corollary}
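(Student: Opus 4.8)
The plan is to obtain the forward implication as an immediate consequence of Theorem \ref{thm:wf_one_ext}, and then to refute the converse by exhibiting a small cyclic counterexample. For the forward direction, suppose $\ang{A,R}$ is non-empty and well-founded. Theorem \ref{thm:wf_one_ext} tells us that all four Dung semantics coincide; in particular $PREF=STAB$. By Definition \ref{def:coh_AF} this is precisely what it means for the AF to be coherent, so the implication follows with essentially no extra work. The only thing to note is that coherence is the weakest of the conclusions of Theorem \ref{thm:wf_one_ext} (it asserts equality of just two of the four semantics), so nothing beyond that theorem is needed.

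For the converse, I would recall that it asserts ``coherent $\Rightarrow$ non-empty and well-founded'', whose negation requires a coherent AF that is either empty or not well-founded. The cleanest witness is the Nixon diamond (Example \ref{eg:nixon}), where $A=\set{a,b}$ and $R=\set{(a,b),(b,a)}$. Here I would verify that $\set{a}$ is stable since $\set{a}^+=\set{b}=A-\set{a}$, and symmetrically $\set{b}$ is stable, so $STAB=\set{\set{a},\set{b}}$; combined with the already-computed $PREF=\set{\set{a},\set{b}}$, this gives $PREF=STAB$, hence coherence. Yet this AF contains a $2$-cycle and is therefore not well-founded by Corollary \ref{cor:cyclic_not_wf}, so the converse fails.

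There is no real obstacle here: the forward direction is a one-line appeal to Theorem \ref{thm:wf_one_ext}, and the converse only requires checking the stable extensions of a two-node example. The one point warranting care is making the logical shape of ``the converse is not true'' explicit, namely that a \emph{non-empty, non-well-founded} coherent AF suffices (one need not also show it is well-founded or empty), so that the Nixon diamond genuinely does the job.
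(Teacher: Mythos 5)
Your proof is correct, and its forward direction is exactly the paper's: a one-line appeal to Theorem \ref{thm:wf_one_ext} followed by Definition \ref{def:coh_AF}. Where you diverge is the witness refuting the converse. You use the Nixon diamond (Example \ref{eg:nixon}), and your verification is sound: $\set{a}^+=\set{b}=A-\set{a}$ with $\set{a}\in CF$ gives $\set{a}\in STAB$ via Theorem \ref{thm:stable_TFAE}, symmetrically for $\set{b}$, so $PREF=STAB=\set{\set{a},\set{b}}$ and the AF is coherent, while the $2$-cycle rules out well-foundedness by Corollary \ref{cor:cyclic_not_wf}. The paper instead uses the four-argument AF of Figure \ref{fig:LC_coh}, with $PREF=STAB=\set{\set{b,e}}$ and a $3$-cycle. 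Your example is smaller and easier to check, which is a genuine advantage for this corollary in isolation; the paper's choice, however, is not arbitrary: the same AF is recycled to refute the converses of Theorem \ref{thm:LC_coherent} and Theorem \ref{thm:uncontroversial_coherent}, for which the Nixon diamond cannot serve, since it is finite with only an even cycle and hence limited controversial by Corollary \ref{cor:finite_no_odd_cycles_LC} (indeed uncontroversial, as every path from $a$ to $b$ has odd length and every path from $a$ to itself has even length). You also correctly pin down the logical shape of the refutation — a coherent AF that is non-empty but not well-founded suffices — so there is no gap.
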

\begin{proof}
If an AF is non-empty and well-founded, then by Theorem \ref{thm:wf_one_ext}, all four semantics coincide. In particular $PREF = STAB$, hence by Definition \ref{def:coh_AF}, this AF is also coherent.

For the converse, consider the AF with $A=\set{a,b,c,e}$ and $$R=\set{(a,b),(b,c),(c,a),(e,a)}.$$ This is depicted in Figure \ref{fig:wf_implies_coh}.

\begin{figure}[H]
\begin{center}
\begin{tikzpicture}[>=stealth',shorten >=1pt,node distance=2cm,on grid,initial/.style    ={}]
\tikzset{mystyle/.style={->,relative=false,in=0,out=0}};
\node[state] (a) at (0,0) {$ a $};
\node[state] (b) at (1,2) {$ b $};
\node[state] (c) at (2,0) {$ c $};
\node[state] (e) at (-2,0) {$ e $};
\draw [->] (a) to (b);
\draw [->] (b) to (c);
\draw [->] (c) to (a);
\draw [->] (e) to (a);
\end{tikzpicture}
\caption{The AF for the converse of Corollary \ref{cor:wf_implies_coh}.}\label{fig:wf_implies_coh}
\end{center}
\end{figure}
\noindent We have that $PREF=STAB=\set{\set{b,e}}$. Therefore, this AF is coherent. However, the 3-cycle $\set{a,b,c}$ means this AF is not well-founded (although it is non-empty).
\end{proof}

\begin{corollary}\label{cor:coh_AF_has_stab}
If an AF is coherent, then $STAB\neq\es$.
\end{corollary}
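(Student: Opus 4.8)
The plan is to observe that this is an immediate consequence of the fact that preferred extensions always exist, combined with the definition of coherence. Specifically, I would first recall Corollary \ref{cor:pref_exist}, which guarantees that \emph{every} AF (coherent or not) satisfies $PREF\neq\es$, since one can always start from $\es\in ADM$ and extend it to a $\subseteq$-maximal admissible set via Corollary \ref{cor:ADM2PREF}.

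Next I would unpack the hypothesis. By Definition \ref{def:coh_AF}, an AF being coherent means precisely that $PREF=STAB$. There is nothing further to extract from this definition; the two semantics are literally equal as sets.

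Finally, I would simply chain these two facts together: since $STAB=PREF$ by coherence and $PREF\neq\es$ unconditionally, it follows that $STAB\neq\es$. The argument is a one-line substitution, so I do not anticipate any genuine obstacle — the only thing to be careful about is citing the existence of preferred extensions (Corollary \ref{cor:pref_exist}) rather than, say, merely the existence of admissible sets (which could include only $\es$). The proof I would write is essentially:

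\begin{proof}
By Corollary \ref{cor:pref_exist}, every AF satisfies $PREF\neq\es$. Since the AF is coherent, Definition \ref{def:coh_AF} gives $STAB=PREF$. Therefore $STAB=PREF\neq\es$.
\end{proof}
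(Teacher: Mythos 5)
Your proof is correct and follows exactly the paper's own route: the paper likewise derives the result immediately from Definition \ref{def:coh_AF} together with Corollary \ref{cor:pref_exist}. Your added caution about citing the existence of \emph{preferred} extensions (rather than merely admissible sets) is apt but does not change the argument.
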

\begin{proof}
This follows from Definition \ref{def:coh_AF} and Corollary \ref{cor:pref_exist} (page \pageref{cor:pref_exist}).
\end{proof}

\begin{theorem}\label{thm:LC_coherent}
\cite[Theorem 33(1)]{Dung:95} Every limited controversial AF is coherent. The converse is not true.
\end{theorem}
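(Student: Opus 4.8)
The plan is to prove the forward direction by establishing $PREF \subseteq STAB$; since $STAB \subseteq PREF$ holds for every AF (Theorem \ref{thm:STAB_in_PREF}), this yields $PREF = STAB$, which is coherence by Definition \ref{def:coh_AF}. So let $\ang{A,R}$ be limited controversial and take an arbitrary $E \in PREF$. I would argue by contradiction, assuming $E \notin STAB$. By Theorem \ref{thm:stable_TFAE} this means $A - E \not\subseteq E^+$, so there is some $a \in A$ with $a \notin E$ and $a \notin E^+$.

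Next I would isolate the ``uncommitted region'' $A^* := A - \pair{E \cup E^+}$, which is non-empty since $a \in A^*$. Consider the induced sub-framework on $A^*$; by Corollary \ref{cor:LC_subAF} it is again limited controversial, so by Theorem \ref{thm:LC_non_empty_comp} it has a non-empty complete extension, which by Theorem \ref{thm:comp_in_adm} is a non-empty admissible set $E^*$ of the sub-framework. The crux is then to show that $E \cup E^*$ is admissible in the full AF; since $E^* \subseteq A^*$ is non-empty and disjoint from $E$, this gives $E \subsetneq E \cup E^*$ with both admissible, contradicting the $\subseteq$-maximality of $E$ in $ADM$.

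To verify $E \cup E^* \in ADM = CF \cap SD$, I would treat conflict-freeness and self-defence separately. For conflict-freeness: $E$ and $E^*$ are individually cf (the attacks inside $A^*$ are inherited), no argument of $E$ attacks $A^* \supseteq E^*$ by definition of $A^*$, and no argument of $E^*$ attacks $E$ — for if $x \in E^*$ attacked $y \in E$ then $x \in E^- \subseteq E^+$ (as $E$ is self-defending, Theorem \ref{thm:sd_equiv}), contradicting $x \in A^*$. For self-defence, i.e. $\pair{E \cup E^*}^- \subseteq \pair{E \cup E^*}^+$ via Theorem \ref{thm:sd_equiv}: any attacker $z$ of $E$ lies in $E^- \subseteq E^+$; and for any attacker $z$ of $E^*$ I would first show $z \notin E$ (else $E$ would attack $A^*$), so either $z \in E^+$ directly, or else $z \in A^*$, in which case $z$ attacks $E^*$ inside the sub-framework and is therefore counter-attacked by $E^*$. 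In every case $z \in \pair{E \cup E^*}^+$, so $E \cup E^*$ is self-defending, completing the contradiction and showing $E$ is stable.

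For the converse, I would exhibit a coherent AF that is not limited controversial. The AF in Figure \ref{fig:LC_coh} (page \pageref{fig:LC_coh}), with $A = \set{a,b,c,e}$ and $R = \set{(a,b),(b,c),(c,a),(e,a)}$, contains the odd $3$-cycle $a \to b \to c \to a$, so by the contrapositive of Corollary \ref{cor:odd_cycle_not_lc} it is not limited controversial; yet as computed there $PREF = STAB = \set{\set{b,e}}$, so it is coherent. The main obstacle is the admissibility verification for $E \cup E^*$: the bookkeeping of where each attacker lives — in $E$, in $E^+$, or in the uncommitted region $A^*$ — must be done carefully, and it is exactly the limited-controversy hypothesis, funnelled through the existence of the non-empty admissible set $E^*$ in the sub-framework, that drives the argument.
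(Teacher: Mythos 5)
Your proposal is correct and follows essentially the same route as the paper's own proof: take $E\in PREF-STAB$, pass to the induced sub-framework on $A-\pair{E\cup E^+}$, invoke Corollary \ref{cor:LC_subAF} and Theorem \ref{thm:LC_non_empty_comp} to obtain a non-empty complete (hence admissible) set there, and contradict the $\subseteq$-maximality of $E$ by showing the union is admissible, with the identical counterexample (Figure \ref{fig:LC_coh}) for the converse. If anything, your case analysis of where an attacker $z$ of $E^*$ can live (in $E$, in $E^+$, or in the uncommitted region) is spelled out more explicitly than in the paper, which asserts $b\in C^+$ somewhat tersely.
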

\begin{proof}
Let $\ang{A,R}$ be a limited controversial AF. Assume for contradiction that it is not coherent, i.e. $PREF\neq STAB$. It follows that there is some $E\in PREF-STAB$. This means $E\cup E^+\subset A$ and hence $A-\pair{E\cup E^+}\neq\es$. Let $A':=A-\pair{E\cup E^+}$ and let $\ang{A',R'}\subseteq_g\ang{A,R}$. By Corollary \ref{cor:LC_subAF} (page \pageref{cor:LC_subAF}), $\ang{A',R'}$ is also limited controversial. By Theorem \ref{thm:LC_non_empty_comp} (page \pageref{thm:LC_non_empty_comp}), there is $C\subseteq A'$, $C\in COMP\pair{\ang{A',R'}}$ such that $C\neq\es$.

Now, as $\pair{\forall a\in A'}a\notin E^+$ and $C\subseteq A'$, we have that $E\cup C\in CF(\ang{A,R})$. Now let $b\in\pair{E\cup C}^-$. Either $b\in E^-$ or $b\in C^-$ by Corollary \ref{cor:cup_cap_plus_minus} (page \pageref{cor:cup_cap_plus_minus}). But as $E\in PREF\pair{\ang{A,R}}$ and $C\in COMP\pair{\ang{A',R'}}$, we must have $b\in E^+$ and $b\in C^+$. Therefore, $E\cup C\in ADM\pair{\ang{A,R}}$. However, as $C\neq\es$, we have constructed a strict superset of $E$ that is admissible, which contradicts the claim that $E$ is preferred.

For the converse, refer to the AF depicted in Figure \ref{fig:wf_implies_coh} in Corollary \ref{cor:wf_implies_coh}, which is a coherent AF that is not limited controversial due to its 3-cycle.
\end{proof}

\begin{corollary}
\cite[Corollary 36]{Dung:95} If the underlying AF is limited controversial then $STAB\neq\es$.
\end{corollary}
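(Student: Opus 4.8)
The plan is to derive this directly by chaining together two results already proved in the excerpt, so the argument will be almost immediate. The key observation is that being limited controversial is a stronger hypothesis than being coherent: Theorem \ref{thm:LC_coherent} states precisely that every limited controversial AF is coherent, i.e. satisfies $PREF = STAB$. Once we are in the coherent regime, the existence of a stable extension follows from the existence of a preferred extension.

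First I would invoke Theorem \ref{thm:LC_coherent} to pass from the hypothesis that the AF is limited controversial to the conclusion that it is coherent, so that $PREF = STAB$ by Definition \ref{def:coh_AF}. Next I would appeal to Corollary \ref{cor:pref_exist}, which guarantees $PREF \neq \es$ for any AF whatsoever. Combining these, $STAB = PREF \neq \es$, which is the desired conclusion. Equivalently, one can cite Corollary \ref{cor:coh_AF_has_stab} directly, since it already packages the step ``coherent $\Rightarrow STAB \neq \es$'' for us; in that case the proof reduces to a single line: limited controversial implies coherent (Theorem \ref{thm:LC_coherent}), and coherent implies $STAB \neq \es$ (Corollary \ref{cor:coh_AF_has_stab}).

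There is essentially no obstacle here, as all the real work has already been done. The genuinely hard step was Theorem \ref{thm:LC_coherent} itself, whose proof constructs a non-empty complete extension inside the induced sub-framework on $A - (E \cup E^+)$ and uses it to contradict maximality of a preferred extension that is not stable; once that is in hand, the present corollary is just a specialization. I would therefore keep the write-up to one or two sentences, citing Theorem \ref{thm:LC_coherent} together with either Corollary \ref{cor:pref_exist} or Corollary \ref{cor:coh_AF_has_stab}, and flag that the result is a direct consequence of coherence plus the universal non-emptiness of $PREF$.
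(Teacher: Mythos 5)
Your proof is correct and is exactly the paper's argument: the paper's proof reads ``Immediate from Corollary \ref{cor:coh_AF_has_stab} and Theorem \ref{thm:LC_coherent}'', which is precisely your one-line chain (limited controversial $\Rightarrow$ coherent $\Rightarrow$ $STAB\neq\es$, the latter resting on $PREF\neq\es$). Nothing to add or fix.
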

\begin{proof}
Immediate from  Corollary \ref{cor:coh_AF_has_stab} and Theorem \ref{thm:LC_coherent}.
\end{proof}

\begin{theorem}\label{thm:uncontroversial_coherent}
\cite[Definition 33(2)]{Dung:95} Every uncontroversial AF is coherent. The converse is not true.
\end{theorem}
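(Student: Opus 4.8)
The plan is to split the statement into the main implication and the converse, and to observe that both halves are almost immediate given the machinery already assembled. The forward direction—that every uncontroversial AF is coherent—does not require any new construction; it chains together two prior results. First I would invoke Corollary \ref{cor:uncont_LC}, which tells us that an uncontroversial AF is limited controversial. Then I would apply Theorem \ref{thm:LC_coherent}, which states that every limited controversial AF is coherent, i.e. $PREF=STAB$. Composing these two facts gives the claim directly, so the forward direction is a one-line transitivity argument.

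For the converse I need a single AF that is coherent but \emph{not} uncontroversial, i.e. one containing a controversial argument. The natural candidate is the AF already drawn in Figure \ref{fig:LC_coh} (from Corollary \ref{cor:wf_implies_coh} and reused in Theorem \ref{thm:LC_coherent}), with $A=\set{a,b,c,e}$ and $R=\set{(a,b),(b,c),(c,a),(e,a)}$. We have already recorded there that $PREF=STAB=\set{\set{b,e}}$, so this AF is coherent by Definition \ref{def:coh_AF}; I would simply cite that computation rather than redo it.

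The one point that genuinely needs checking—and the step I expect to be the main obstacle—is confirming that this AF is controversial, not merely ``not limited controversial''. I would argue this explicitly by exhibiting a controversial argument. Since $a,b,c$ form a $3$-cycle with $a\to b\to c\to a$, there is a path of length $3$ from $a$ to $a$ (once around the cycle), so $a$ indirectly attacks $a$; and a path of length $6$ from $a$ to $a$ (twice around the cycle), so $a$ indirectly defends $a$. Hence $a$ is controversial with respect to itself, the AF contains a controversial argument, and is therefore controversial, i.e. \emph{not} uncontroversial. One could alternatively route through Corollary \ref{cor:odd_cycle_not_lc} to note the odd cycle precludes limited controversy and hence uncontroversiality, but giving the explicit odd/even paths is cleaner and self-contained.

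This establishes that a coherent AF need not be uncontroversial, completing the converse and the proof.
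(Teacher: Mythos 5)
Your proposal is correct and takes essentially the same route as the paper: the forward direction is exactly the composition of Corollary \ref{cor:uncont_LC} with Theorem \ref{thm:LC_coherent}, and the converse uses the same coherent AF of Figure \ref{fig:LC_coh}, which the paper likewise identifies as controversial because $a$ indirectly attacks and defends itself. Your only difference is that you explicitly exhibit the odd-length (3) and even-length (6) paths around the cycle $a\to b\to c\to a$ to witness that $a$ is controversial w.r.t.\ itself, a verification the paper merely asserts.
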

\begin{proof}
Immediate from Corollary \ref{cor:uncont_LC} (page \pageref{cor:uncont_LC}) and Theorem \ref{thm:LC_coherent}. For the converse, the coherent AF depicted in Figure \ref{fig:wf_implies_coh} has argument $a$ which indirectly attacks and defends itself and hence this AF is controversial.
\end{proof}

\noindent We have a useful result that gives an equivalent characterisation of coherent AFs that do not have self-attacking arguments.

\begin{lemma}\label{lem:coh_implies_PREF_subset_NAI}
If an AF is coherent, then $PREF\subseteq NAI$.
\end{lemma}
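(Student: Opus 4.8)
The plan is to recognize that this statement is essentially an immediate consequence of two facts already in hand, so the approach is simply to chain them together rather than to do any new combinatorial work. The key observation is that coherence is \emph{defined} as the equality $PREF = STAB$ (Definition \ref{def:coh_AF}), and that the inclusion $STAB \subseteq NAI$ has already been established for \emph{all} AFs in Corollary \ref{cor:stab_are_nai}.

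Concretely, the first step would be to invoke the hypothesis of coherence to rewrite $PREF$ as $STAB$. The second step would be to apply Corollary \ref{cor:stab_are_nai}, which gives $STAB \subseteq NAI$ unconditionally. Combining these two yields
\[
PREF = STAB \subseteq NAI,
\]
which is exactly the claim. No appeal to the structure of $NAI = \max_{\subseteq} CF$ or to the definition of preferred extensions as $\subseteq$-maximal admissible sets is needed beyond what has already been absorbed into Corollary \ref{cor:stab_are_nai}.

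Because the argument is just a substitution followed by a known inclusion, there is no genuine obstacle here; the only thing worth being careful about is to cite the correct earlier results (Definition \ref{def:coh_AF} for the meaning of coherence, and Corollary \ref{cor:stab_are_nai} for the universal inclusion $STAB \subseteq NAI$). I would also note, as context rather than as part of the proof, that this inclusion can be strict and need not reverse in general, which is presumably why a separate characterisation (likely with an additional hypothesis such as the absence of self-attacking arguments, as in Lemma \ref{lem:pref_naive_implies_stable}) is being set up in the surrounding material.
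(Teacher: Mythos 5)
Your proof is correct and coincides exactly with the paper's: both use the definition of coherence to rewrite $PREF$ as $STAB$ and then apply Corollary \ref{cor:stab_are_nai} to conclude $PREF = STAB \subseteq NAI$. Nothing further is needed.
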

\begin{proof}
By Corollary \ref{cor:stab_are_nai} (page \pageref{cor:stab_are_nai}), $PREF=STAB\subseteq NAI$. Therefore, $PREF\subseteq NAI$.
\end{proof}

\begin{lemma}\label{lem:pref_subset_nai_implies_coh_if_no_self_attack}
If an AF has no self-attacking arguments, and $PREF\subseteq NAI$, then the AF is coherent.
\end{lemma}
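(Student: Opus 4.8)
The plan is to observe that this lemma is essentially the converse half of the characterisation begun in Lemma \ref{lem:coh_implies_PREF_subset_NAI}, and that it follows almost immediately from the already-established Lemma \ref{lem:pref_naive_implies_stable}. That earlier lemma states that for an AF with no self-attacking arguments, either of the hypotheses $PREF\subseteq NAI$ or $NAI\subseteq PREF$ forces $PREF=STAB$. Since the hypotheses here are exactly ``no self-attacking arguments'' together with $PREF\subseteq NAI$, I would simply invoke Lemma \ref{lem:pref_naive_implies_stable} to conclude $PREF=STAB$, and then appeal to Definition \ref{def:coh_AF} to conclude that the AF is coherent. In this sense there is no real obstacle: the work has already been done upstream.

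If instead I wanted to give a self-contained argument (not relying on Lemma \ref{lem:pref_naive_implies_stable}), the plan would be to show the only nontrivial inclusion, $PREF\subseteq STAB$, since $STAB\subseteq PREF$ holds for every AF by Theorem \ref{thm:STAB_in_PREF}. So I would take an arbitrary $S\in PREF$. By hypothesis $S\in NAI$, i.e. $S$ is a $\subseteq$-maximal conflict-free set, so for every $a\notin S$ we have $S\cup\set{a}\notin CF$. I would then analyse why adding $a$ creates a conflict: the conflict is either a self-attack on $a$, an attack from $S$ on $a$ (so $a\in S^+$), or an attack from $a$ on $S$ (so $a\in S^-$). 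The first case is excluded by the no-self-attack assumption. In the third case, since $S\in PREF\subseteq ADM\subseteq SD$, Theorem \ref{thm:sd_equiv} gives $S^-\subseteq S^+$, so $a\in S^-\subseteq S^+$, again placing $a\in S^+$.

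Collecting the cases, every $a\in A-S$ satisfies $a\in S^+$, hence $A-S\subseteq S^+$; combined with conflict-freeness of $S$ this is exactly the characterisation of a stable extension in Theorem \ref{thm:stable_TFAE}, so $S\in STAB$. As $S\in PREF$ was arbitrary this yields $PREF\subseteq STAB$, and with the reverse inclusion we obtain $PREF=STAB$, i.e. the AF is coherent by Definition \ref{def:coh_AF}.

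The only point requiring care is the case split on why $S\cup\set{a}$ fails to be conflict-free: one must be sure these three possibilities are exhaustive and that the no-self-attack hypothesis genuinely eliminates the problematic case. This is precisely where the hypothesis is used, and it is the same subtlety already flagged in Lemma \ref{lem:pref_naive_implies_stable} (whose statement notes the assumption is necessary). Given that, I would keep the written proof to a single sentence citing Lemma \ref{lem:pref_naive_implies_stable} and Definition \ref{def:coh_AF}, since reproducing the case analysis would merely duplicate that lemma.
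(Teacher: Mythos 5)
Your proposal is correct and matches the paper's own proof, which likewise just takes $S\in PREF$, notes $S\in NAI$ by hypothesis, applies Lemma \ref{lem:pref_naive_implies_stable} to get $S\in STAB$, and combines $PREF\subseteq STAB$ with Theorem \ref{thm:STAB_in_PREF} to conclude coherence via Definition \ref{def:coh_AF}. Your optional self-contained case analysis is also sound (and is essentially the proof of Lemma \ref{lem:pref_naive_implies_stable} itself), so your instinct to keep the written proof to the one-line citation is the right call.
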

\begin{proof}
Let $S\in PREF$, then $S\in NAI$ and by Lemma \ref{lem:pref_naive_implies_stable}, $S\in STAB$. It follows that $PREF\subseteq STAB$, but as $STAB\subseteq PREF$, this AF is coherent.
\end{proof}

\begin{theorem}\label{thm:no_self_attack_coh_equiv}
For AFs that do not have self-attacking arguments, $PREF\subseteq NAI$ iff the AF is coherent.
\end{theorem}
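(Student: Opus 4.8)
The plan is to prove the biconditional by invoking the two immediately preceding lemmas, one for each direction, since together they are precisely engineered to yield this equivalence. The statement restricts attention to AFs without self-attacking arguments, so I will keep that standing hypothesis in mind throughout and note where it is actually needed.

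First I would handle the ($\Leftarrow$) direction: assume the AF is coherent and conclude $PREF\subseteq NAI$. This is exactly the content of Lemma \ref{lem:coh_implies_PREF_subset_NAI}, whose proof does not even require the absence of self-attacking arguments (it runs through $PREF=STAB\subseteq NAI$ via Corollary \ref{cor:stab_are_nai}). So this direction is immediate by citation.

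Next I would handle the ($\Rightarrow$) direction: assume $PREF\subseteq NAI$ and conclude that the AF is coherent. Here the hypothesis that the AF has no self-attacking arguments is essential, and this is precisely the content of Lemma \ref{lem:pref_subset_nai_implies_coh_if_no_self_attack} (which in turn leans on Lemma \ref{lem:pref_naive_implies_stable} to upgrade naive-and-preferred sets to stable sets, an inference that fails when self-attacking arguments are present). So this direction is again immediate by citation.

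Combining the two directions gives the equivalence. The one point worth flagging is that the two lemmas are not symmetric in their reliance on the no-self-attack hypothesis — it is genuinely needed only for ($\Rightarrow$) — but since the blanket assumption of the theorem supplies it, there is no obstacle. The proof is therefore a clean two-line synthesis: ``Combine Lemma \ref{lem:coh_implies_PREF_subset_NAI} for the ($\Leftarrow$) direction with Lemma \ref{lem:pref_subset_nai_implies_coh_if_no_self_attack} for the ($\Rightarrow$) direction.'' I do not anticipate any hard step; the real work was already discharged in establishing the two lemmas.
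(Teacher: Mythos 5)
Your proposal is correct and is essentially identical to the paper's own proof, which likewise disposes of the ($\Rightarrow$) direction by citing Lemma \ref{lem:pref_subset_nai_implies_coh_if_no_self_attack} and the ($\Leftarrow$) direction by citing Lemma \ref{lem:coh_implies_PREF_subset_NAI}. Your added remark that the no-self-attack hypothesis is only genuinely used in the ($\Rightarrow$) direction is accurate and consistent with how the two lemmas are stated.
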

\begin{proof}
($\Rightarrow$) This is Lemma \ref{lem:pref_subset_nai_implies_coh_if_no_self_attack}. ($\Leftarrow$) This is Lemma \ref{lem:coh_implies_PREF_subset_NAI}.
\end{proof}

\begin{example}\label{eg:not_coh_AF}
We illustrate an example of Theorem \ref{thm:no_self_attack_coh_equiv} where we have an AF that has no self-attacking arguments, and is not coherent as $PREF\not\subseteq NAI$. Consider the AF where $A=\set{a_0,a_1,a_2,a_3,a_4,a_5}$ with the attacks illustrated in Figure \ref{fig:not_coh_AF}, where we have abbreviated 2-cycles with two-headed arrows.

\begin{figure}[H]
\begin{center}
\begin{tikzpicture}[>=stealth',shorten >=1pt,node distance=2cm,on grid,initial/.style    ={}]
\tikzset{mystyle/.style={->,relative=false,in=0,out=0}};
\node[state] (0) at (0,0) {$ a_0 $};
\node[state] (1) at (-2,0) {$ a_1 $};
\node[state] (2) at (2,2) {$ a_2 $};
\node[state] (3) at (0,2) {$ a_3 $};
\node[state] (4) at (2,0) {$ a_4 $};
\node[state] (5) at (-2,2) {$ a_5 $};
\draw [<->] (0) to (4);
\draw [->] (0) to (5);
\draw [->] (1) to (0);
\draw [->] (1) to (3);
\draw [<->] (2) to (3);
\draw [<->] (2) to (4);
\draw [->] (3) to (0);
\draw [->] (3) to (5);
\draw [->] (4) to (3);
\draw [->] (5) to (1);
\end{tikzpicture}
\caption{The AF from Example \ref{eg:not_coh_AF}}\label{fig:not_coh_AF}
\end{center}
\end{figure}
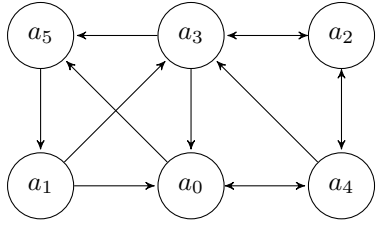

For the AF depicted in Figure \ref{fig:not_coh_AF}, we have that
\begin{align*}
NAI =& \set{\set{a_2,a_0},\set{a_2,a_1},\set{a_2,a_5},\set{a_3},\set{a_4,a_1},\set{a_4,a_5}}\\
PREF =& \set{\set{a_2},\set{a_4,a_5}}\\
STAB =& \set{\set{a_4,a_5}}.
\end{align*}
This AF is not coherent despite not having any self-attacking arguments, because $PREF\not\subseteq NAI$. Further, by the contrapositive of Theorem \ref{thm:LC_coherent}, this AF is not limited coherent, as shown by the 3-cycle $a_1$, $a_3$ and $a_5$. Similarly, by the contrapositive of Theorem \ref{thm:uncontroversial_coherent}, this AF is controversial, with (e.g.) the argument $a_3$ being controversial with respect to $a_5$.
\end{example}

\begin{example}\label{eg:no_self_attack_coh_equiv2}
We again illustrate Theorem \ref{thm:no_self_attack_coh_equiv} with a second example. Consider the AF with $A=\set{a_0,a_1,a_2,a_3,a_4}$ and attacks illustrated in Figure \ref{fig:not_coh_AF2}, where we have abbreviated 2-cycles with two-headed arrows.

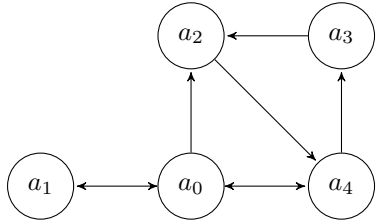
\begin{figure}[H]
\begin{center}
\begin{tikzpicture}[>=stealth',shorten >=1pt,node distance=2cm,on grid,initial/.style    ={}]
\tikzset{mystyle/.style={->,relative=false,in=0,out=0}};
\node[state] (0) at (0,0) {$ a_0 $};
\node[state] (1) at (-2,0) {$ a_1 $};
\node[state] (2) at (0,2) {$ a_2 $};
\node[state] (3) at (2,2) {$ a_3 $};
\node[state] (4) at (2,0) {$ a_4 $};
\draw [<->] (0) to (1);
\draw [->] (0) to (2);
\draw [<->] (0) to (4);
\draw [->] (2) to (4);
\draw [->] (3) to (2);
\draw [->] (4) to (3);
\end{tikzpicture}
\caption{The AF from Example \ref{eg:no_self_attack_coh_equiv2}}\label{fig:not_coh_AF2}
\end{center}
\end{figure}

For the AF depicted in Figure \ref{fig:not_coh_AF2}, we have
\begin{align*}
NAI =& \set{\set{a_1,a_2},\set{a_0,a_3},\set{a_1,a_3},\set{a_1,a_4}}\\
PREF =& \set{\set{a_1},\set{a_0,a_3}}\\
STAB =& \set{\set{a_0,a_3}}.
\end{align*}
Despite there not being any self-attacking arguments, we have that $PREF\not\subseteq NAI$ and hence $PREF\neq STAB$. Notice that the three-cycle $a_2$, $a_4$ and $a_3$ renders this AF not limited controversial.
\end{example}

Notice both Examples \ref{eg:not_coh_AF} and \ref{eg:no_self_attack_coh_equiv2} each have a three-cycle ($a_5, a_1, a_0$ in the former, and $a_2, a_4, a_3$ in the latter). This is actually a general result for finite AFs:

\begin{corollary}
\cite[Fact 19]{dunne2002coherence} If a finite AF is not coherent, then it has an odd cycle. The converse is not true.
\end{corollary}
\begin{proof}
Let an AF be finite and not coherent, then by the contrapositive of Theorem \ref{thm:LC_coherent}, this AF cannot be limited controversial. By the contrapositive of Corollary \ref{cor:finite_no_odd_cycles_LC} (page \pageref{cor:finite_no_odd_cycles_LC}), this AF must have an odd cycle.

The converse is not true, e.g. Figure \ref{fig:wf_implies_coh} depicts an AF with a three cycle and that it is coherent.
\end{proof}

\begin{theorem}
\cite[Proposition 5]{Coste:05} If an AF is symmetric with no self-attacking arguments, then it is coherent.
\end{theorem}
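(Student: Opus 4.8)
The plan is to derive this as an immediate consequence of two results already established for, respectively, symmetric frameworks and self-attack-free frameworks, so that essentially no fresh argument is needed. The hypotheses of the statement—symmetry together with the absence of self-attacking arguments—place us simultaneously in the setting of Corollary \ref{cor:sym_PREF_NAI_equal} and of Theorem \ref{thm:no_self_attack_coh_equiv}, and the conclusion falls out by chaining them.

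First I would invoke Corollary \ref{cor:sym_PREF_NAI_equal}: symmetry alone forces $PREF = NAI$, and in particular yields the inclusion $PREF \subseteq NAI$. Second, since by hypothesis the AF has no self-attacking arguments, Theorem \ref{thm:no_self_attack_coh_equiv} applies, and it asserts precisely that $PREF \subseteq NAI$ is equivalent to coherence; hence the AF is coherent, i.e. $PREF = STAB$. As an alternative to the final appeal, one could bypass Theorem \ref{thm:no_self_attack_coh_equiv} and feed the inclusion $PREF \subseteq NAI$ directly into Lemma \ref{lem:pref_naive_implies_stable}, whose no-self-attack hypothesis is satisfied, to conclude $PREF = STAB$ and thereby coherence by Definition \ref{def:coh_AF}.

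The only point demanding a moment's care—and the closest thing to an obstacle—is checking the side conditions that make the cited results applicable. Under Definition \ref{def:symmetric} a symmetric AF has a non-empty attack relation, so it is automatically non-empty and non-trivial; these are exactly the standing assumptions behind the supporting equivalence Theorem \ref{thm:sym_AF_CF_ADM_equal} on which Corollary \ref{cor:sym_PREF_NAI_equal} rests. Thus all hypotheses are met and no genuine difficulty remains, the substantive content having been packaged in the earlier corollary, lemma, and theorem.
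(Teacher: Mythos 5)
Your proof is correct and follows essentially the same route as the paper: Corollary \ref{cor:sym_PREF_NAI_equal} gives $PREF = NAI$, hence $PREF \subseteq NAI$, and Theorem \ref{thm:no_self_attack_coh_equiv} (whose relevant direction is just Lemma \ref{lem:pref_naive_implies_stable}, so your ``alternative'' ending is the same argument unfolded) then yields coherence. Your extra check that symmetry forces the AF to be non-empty and non-trivial, so that the hypotheses of Theorem \ref{thm:sym_AF_CF_ADM_equal} underlying the corollary are met, is a point the paper leaves implicit and is correctly handled.
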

\begin{proof}
For a given AF with no self-attacking arguments, if it is symmetric, then $PREF=NAI$ by Corollary \ref{cor:sym_PREF_NAI_equal} (page \pageref{cor:sym_PREF_NAI_equal}). It follows that $PREF\subseteq NAI$ and, by Theorem \ref{thm:no_self_attack_coh_equiv}, this AF is coherent.
\end{proof}

\subsubsection{Relatively Grounded Argumentation Frameworks}

\begin{lemma}\label{lem:cap_pref_not_G}
\cite[Remark 26]{Dung:95} The intersection of all preferred extensions may not be the grounded extension.
\end{lemma}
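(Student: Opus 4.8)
The plan is to prove this negative statement by exhibiting a single concrete AF in which $\bigcap PREF$ strictly exceeds $G$. Before choosing the example, it is worth recording the ``easy direction'' that frames the lemma: since $PREF\subseteq COMP$ (Theorem \ref{thm:pref_in_comp}, page \pageref{thm:pref_in_comp}) and $G=\bigcap COMP$ (Corollary \ref{cor:cap_comp_is_G}, page \pageref{cor:cap_comp_is_G}), intersecting over the smaller family $PREF$ yields a superset, so $G=\bigcap COMP\subseteq\bigcap PREF$ always holds. Thus the content of the lemma is precisely that this inclusion can be strict, and a counterexample suffices.

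The example I would use is floating reinstatement (Example \ref{eg:fri}, page \pageref{eg:fri}), where $A=\set{a,b,c,e}$ and $R=\set{(a,b),(b,a),(a,c),(b,c),(c,e)}$. The key observation is that every argument here is attacked ($a$ by $b$, $b$ by $a$, $c$ by both $a$ and $b$, and $e$ by $c$), so $U=\es$; by Corollary \ref{cor:empty_grounded} (page \pageref{cor:empty_grounded}) this gives $G=\es$. On the other hand, we have already recorded in Example \ref{eg:fri_COMP} (page \pageref{eg:fri_COMP}) that $COMP=\set{\es,\set{a,e},\set{b,e}}$, and since $PREF=\max_{\subseteq}COMP$ (Theorem \ref{thm:pref_max_comp}, page \pageref{thm:pref_max_comp}) we obtain $PREF=\set{\set{a,e},\set{b,e}}$.

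The computation that finishes the argument is then immediate:
\begin{align*}
\bigcap PREF=\set{a,e}\cap\set{b,e}=\set{e}\neq\es=G.
\end{align*}
The argument $e$ is ``floatingly reinstated'': it is defended in each preferred extension (by $a$ in one, by $b$ in the other), hence survives skeptical preferred reasoning, yet it belongs to no complete extension forced purely by the unattacked arguments, so it is excluded from the grounded extension. This is exactly the phenomenon the lemma asserts.

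Since the preferred extensions and the grounded extension of this AF have both been determined earlier in the excerpt, there is no genuine obstacle to overcome: the only thing requiring care is to cite the right prior results (the value of $COMP$ from Example \ref{eg:fri_COMP}, the identity $PREF=\max_{\subseteq}COMP$, and $U=\es\Rightarrow G=\es$) rather than re-deriving $PREF$ from scratch. If I wanted the proof to be self-contained I would instead directly verify that $\set{a,e}$ and $\set{b,e}$ are admissible and $\subseteq$-maximal, but invoking the already-established computations is cleaner and is all that the lemma needs.
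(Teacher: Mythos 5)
Your proof is correct and takes essentially the same route as the paper: both use the floating reinstatement AF of Example \ref{eg:fri} with $\bigcap PREF = \set{e} \neq \es = G$, the latter via $U=\es$. Your extra observations (deriving $PREF$ from $COMP$ via Theorem \ref{thm:pref_max_comp}, and noting the always-valid inclusion $G\subseteq\bigcap PREF$) are sound refinements of detail rather than a different argument.
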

\begin{proof}
From Example \ref{eg:fri} (page \pageref{eg:fri}), we have $PREF=\set{\set{a,e},\set{b,e}}$ and hence $\bigcap PREF=\set{e}$, but as $U=\es$, we have $G=\es$. Therefore, $\bigcap PREF\neq G$.
\end{proof}

Lemma \ref{lem:cap_pref_not_G} motivates us to classify AFs by whether they do satisfy the property that the intersection of all preferred extensions is the grounded extension.

\begin{definition}
\cite[Definition 31(2)]{Dung:95} An AF is \textbf{relatively grounded} iff $\bigcap PREF = G$.
\end{definition}

We give two examples of relatively grounded AFs, one where $G$ is empty and the other where $G$ is not empty.

\begin{example}\label{eg:RG_AF}
(Example \ref{eg:no_self_attack_coh_equiv2} continued) Notice for this AF, $G=\es$ because $U=\es$ (Corollary \ref{cor:empty_grounded}, page \pageref{cor:empty_grounded}). Further, $PREF=\set{\set{a_1},\set{a_0,a_3}}$ and hence $\bigcap PREF= G$. Therefore, this AF is relatively grounded. This is also an example of an AF that is not coherent and relatively grounded.
\end{example}

\begin{example}\label{eg:RG_AF2}
Consider the following AF where $A=\set{a_0,a_1,a_2,a_3}$ and the attacks are depicted in Figure \ref{fig:RG_nonempty_G}, where we have used two-headed arrows to represent 2-cycles.

\begin{figure}[H]
\begin{center}
\begin{tikzpicture}[>=stealth',shorten >=1pt,node distance=2cm,on grid,initial/.style    ={}]
\tikzset{mystyle/.style={->,relative=false,in=0,out=0}};
\node[state] (0) at (0,0) {$ a_0 $};
\node[state] (1) at (2,0) {$ a_1 $};
\node[state] (2) at (2,-2) {$ a_2 $};
\node[state] (3) at (0,-2) {$ a_3 $};
\draw [<->] (0) to (1);
\draw [<->] (0) to (3);
\draw [<->] (3) to (1);
\end{tikzpicture}
\caption{The AF from Example \ref{eg:RG_AF2}}\label{fig:RG_nonempty_G}
\end{center}
\end{figure}
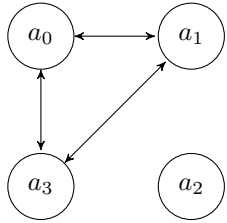

\noindent In this case, we have
\begin{align*}
G =& \set{a_2}\\
PREF =& \set{\set{a_2,a_0},\set{a_2,a_1},\set{a_2,a_3}}\\
STAB =& \set{\set{a_2,a_0},\set{a_2,a_1},\set{a_2,a_3}}
\end{align*}
\noindent Clearly this is coherent and relatively grounded, because $\bigcap PREF=G$.
\end{example}

\begin{corollary}\label{cor:wf_RG}
If an AF is well-founded and non-empty, then it is relatively grounded. The converse is not true.
\end{corollary}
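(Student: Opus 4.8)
The plan is to obtain the forward implication as an immediate consequence of the collapse of semantics under well-foundedness (Theorem \ref{thm:wf_one_ext}), and to settle the converse by reusing a cyclic example already computed in the paper.

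First I would treat the forward direction. Assume the AF is non-empty and well-founded. By Theorem \ref{thm:wf_one_ext} (page \pageref{thm:wf_one_ext}), all four Dung semantics coincide, so in particular $PREF = \set{G}$, the singleton whose unique element is the grounded extension. Intersecting over this one-element family yields $\bigcap PREF = G$, which is exactly the condition for the AF to be relatively grounded.

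Next I would dispatch the converse, which requires a relatively grounded AF that fails to be both well-founded and non-empty; the simplest such witness is a relatively grounded but cyclic AF. Example \ref{eg:RG_AF2} (page \pageref{eg:RG_AF2}) serves: there $G = \set{a_2}$ and $PREF = \set{\set{a_2, a_0}, \set{a_2, a_1}, \set{a_2, a_3}}$, so $\bigcap PREF = \set{a_2} = G$ and the AF is relatively grounded. Yet its $2$-cycles make it cyclic, hence not well-founded by Corollary \ref{cor:cyclic_not_wf} (page \pageref{cor:cyclic_not_wf}). This shows that relative groundedness need not entail well-foundedness.

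I do not anticipate any real difficulty, since the forward direction is a one-line specialisation of Theorem \ref{thm:wf_one_ext}. The only step that warrants attention is confirming the counterexample, namely that the $PREF$ and $G$ reported in Example \ref{eg:RG_AF2} are correct and that their intersection genuinely equals $G$; but as these values are already established there, it suffices to cite them.
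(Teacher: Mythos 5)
Your proposal is correct and takes essentially the same route as the paper: the forward direction is the same one-line specialisation of Theorem \ref{thm:wf_one_ext} giving $PREF=\set{G}$ and hence $\bigcap PREF=G$, and the converse is refuted by exhibiting a cyclic (hence, by Corollary \ref{cor:cyclic_not_wf}, non-well-founded) relatively grounded AF. The only difference is cosmetic: you cite Example \ref{eg:RG_AF2}, whereas the paper cites Example \ref{eg:no_self_attack_coh_equiv2} (whose relative groundedness is established in Example \ref{eg:RG_AF}); both witnesses are already verified in the paper and equally valid.
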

\begin{proof}
If an AF is well-founded and non-empty, then by Theorem \ref{thm:wf_one_ext} its grounded extension $G$ is its unique preferred extension. Hence $PREF=\set{G}$ and $\bigcap PREF=G$. Therefore, this AF is relatively grounded.

For the converse, Example \ref{eg:no_self_attack_coh_equiv2} is a relatively grounded AF that has a 2-cycle ($a_0$ and $a_1$), hence it is not well-founded.
\end{proof}

\begin{theorem}\label{thm:uncontroversial_RG}
\cite[Theorem 33(2)]{Dung:95} Every uncontroversial AF is relatively grounded. The converse is not true.
\end{theorem}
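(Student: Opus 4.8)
The plan is to prove the two inclusions $G \subseteq \bigcap PREF$ and $\bigcap PREF \subseteq G$ separately, noting that only the second uses uncontroversiality. The first holds for \emph{every} AF: by Theorem \ref{thm:pref_in_comp} we have $PREF \subseteq COMP$, so intersecting over the smaller family gives $\bigcap COMP \subseteq \bigcap PREF$; combined with $G = \bigcap COMP$ (Corollary \ref{cor:cap_comp_is_G}) this yields $G \subseteq \bigcap PREF$. Here $PREF \neq \es$ by Corollary \ref{cor:pref_exist}, so the intersection is meaningful. The work lies entirely in the reverse inclusion.

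For $\bigcap PREF \subseteq G$, I would argue by contradiction: suppose $a \in \bigcap PREF$ but $a \notin G$. The first task is to verify the hypothesis of Dung's Lemma 35 (Theorem \ref{thm:lem35}), namely $a \notin G \cup G^+$. We already have $a \notin G$, so it suffices to show $a \notin G^+$. If some $g \in G$ attacked $a$, then fixing any $E \in PREF$ we would have $g \in G \subseteq E$ (using $G \subseteq \bigcap PREF$ from the first inclusion) and $a \in E$, with $R(g,a)$ — contradicting $E \in CF$ since $PREF \subseteq CF$. Hence $a \notin G \cup G^+$, and because the AF is uncontroversial, Theorem \ref{thm:lem35} supplies complete extensions $C_1, C_2$ with $a \in C_1 \cap C_2^+$.

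The decisive step is then to exploit $a \in C_2^+$. By Theorem \ref{thm:pref_max_comp}, $PREF = \max_{\subseteq} COMP$, so $C_2$ is contained in some preferred extension $P$. Consequently $a \in C_2^+ \subseteq P^+$, i.e. some argument of $P$ attacks $a$; but $a \in \bigcap PREF \subseteq P$, so this contradicts $P$ being conflict-free. The contradiction forces $a \in G$, giving $\bigcap PREF \subseteq G$ and hence $\bigcap PREF = G$, which is exactly relative groundedness.

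For the converse I would exhibit a relatively grounded AF that is controversial. Example \ref{eg:RG_AF} (the continuation of Example \ref{eg:no_self_attack_coh_equiv2}) is relatively grounded, since there $\bigcap PREF = G = \es$; yet that framework contains the odd $3$-cycle on $a_2, a_4, a_3$, so it is not limited controversial, and therefore controversial by the contrapositive of Corollary \ref{cor:uncont_LC}. Thus not every relatively grounded AF is uncontroversial. The main obstacle I anticipate is the careful bookkeeping in the middle paragraph: first confirming $a \notin G^+$ so that Lemma 35 is applicable, and then correctly lifting $C_2$ to a preferred extension via $PREF = \max_{\subseteq} COMP$ — both are short but are the load-bearing links in the argument.
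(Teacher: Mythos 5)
Your proposal is correct and follows essentially the same route as the paper's proof: contradiction on $a\in\bigcap PREF - G$, verification that $a\notin G^+$ via conflict-freeness of preferred extensions, an appeal to Theorem \ref{thm:lem35} to obtain $C_1,C_2$, lifting $C_2$ into a preferred extension to contradict $a\in\bigcap PREF$, and the same counterexample (Example \ref{eg:RG_AF}) for the converse. The only nuance is your citation for the lifting step: $PREF=\max_{\subseteq}COMP$ (Theorem \ref{thm:pref_max_comp}) by itself only identifies preferred extensions with the $\subseteq$-maximal complete ones, and the guarantee that $C_2$ actually sits below some maximal element is supplied by Corollary \ref{cor:ADM2PREF} (applicable since $C_2\in COMP\subseteq ADM$), which is what the paper invokes there.
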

\begin{proof}
Assume for contradiction that $G\neq\bigcap PREF$. By definition, $G\subset\bigcap PREF$. Let $a\in\bigcap PREF-G\neq\es$. If $a\in G^+$, then as $a\in\bigcap PREF$, for any $P\in PREF$, $a\in P$, and $G\subset P$, so $P\notin CF$ -- contradiction, because $PREF\subseteq CF$. Therefore, $a\notin G^+$. As the underlying AF is uncontroversial, by Theorem \ref{thm:lem35} (page \pageref{thm:lem35}), there exists complete extensions $C_1$ and $C_2$ such that $a\in C_1\cap C_2^+$. But $C_2\in ADM$ and hence by Corollary \ref{cor:ADM2PREF} (page \pageref{cor:ADM2PREF}), there is some $P\in PREF$ such that $C_2\subseteq P$, so $a\in P^+$, which contradicts that $a\in\bigcap PREF$ as $PREF\subseteq CF$. Therefore, $G=\bigcap PREF$.

For the converse, Example \ref{eg:no_self_attack_coh_equiv2} (page \pageref{eg:no_self_attack_coh_equiv2}) is an AF that is relatively grounded (see Example \ref{eg:RG_AF}) and by Figure \ref{fig:not_coh_AF2} has a controversial argument, i.e. $a_0$ is controversial with respect to $a_4$.
\end{proof}

\begin{theorem}
\cite[Lemma 1]{Croitoru:13} Given an AF $\ang{A,R}$, if $PREF$ covers $A$ then this AF is relatively grounded.
\end{theorem}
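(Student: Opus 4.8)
The plan is to recognise that relative groundedness, $\bigcap PREF = G$, follows almost immediately from a result already in hand, namely Corollary \ref{cor:if_pref_cover_then_rg1}, together with a standard ``sandwich'' argument. Since by Corollary \ref{cor:pref_exist} we always have $PREF\neq\es$, the intersection $\bigcap PREF$ is a well-defined intersection over a non-empty family, so there is no degenerate case to worry about.

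First I would invoke Corollary \ref{cor:if_pref_cover_then_rg1}: because the hypothesis is exactly that $PREF$ covers $A$, that corollary gives $\bigcap PREF = U$. This is the one substantive input, and it is already proved. Next I would establish the two-sided containment $U\subseteq G\subseteq\bigcap PREF$. The left inclusion $U\subseteq G$ is Corollary \ref{cor:U_subset_G} and holds for every AF. For the right inclusion, I would use that $G$ is the $\subseteq$-least complete extension (Definition \ref{def:grounded}/Theorem \ref{thm:grounded_def2}), so $G\subseteq C$ for every $C\in COMP$; since $PREF\subseteq COMP$ by Theorem \ref{thm:pref_in_comp}, in particular $G\subseteq P$ for every $P\in PREF$, and therefore $G\subseteq\bigcap PREF$.

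Combining these facts yields the chain
\[
U\subseteq G\subseteq\bigcap PREF = U,
\]
so every inclusion is forced to be an equality; in particular $G=\bigcap PREF$, which is precisely the definition of the AF being relatively grounded.

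There is essentially no hard step here: the genuine work was done in Corollary \ref{cor:if_pref_cover_then_rg1}. The only point demanding a moment's care is the inclusion $G\subseteq\bigcap PREF$, and even that is routine once one recalls that preferred extensions are complete and $G$ is the least complete extension; everything then collapses by the squeeze. I would therefore expect the proof to be only a few lines long, and would resist any temptation to re-derive $\bigcap PREF = U$ from scratch, citing Corollary \ref{cor:if_pref_cover_then_rg1} instead.
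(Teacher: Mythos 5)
Your proposal is correct and follows essentially the same route as the paper: both invoke Corollary \ref{cor:if_pref_cover_then_rg1} to get $\bigcap PREF = U$, use $U\subseteq G$ (Corollary \ref{cor:U_subset_G}) for one inclusion, and the fact that $G$ is the $\subseteq$-least complete extension together with $PREF\subseteq COMP$ for the other, concluding by the two-sided containment. Your version merely makes the squeeze $U\subseteq G\subseteq\bigcap PREF=U$ and the non-emptiness of $PREF$ explicit, which the paper leaves implicit.
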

\begin{proof}
By Corollary \ref{cor:if_pref_cover_then_rg1} (page \pageref{cor:if_pref_cover_then_rg1}), if $PREF$ covers $A$, then $\bigcap PREF=U$. By Corollary \ref{cor:U_subset_G} (page \pageref{cor:U_subset_G}), $\bigcap PREF\subseteq G$. However, as $G$ is the $\subseteq$-least complete extension, and all preferred extensions are complete, we have $G\subseteq\bigcap PREF$. The result follows.
\end{proof}


\subsection{Summary}

\begin{itemize}
\item In any AF, if $G\in STAB$ then $STAB=PREF=COMP=\set{G}$.
\item If the AF is non-empty and well-founded, then $STAB=PREF=COMP=\set{G}$.
\item If the AF is limited controversial, then $PREF=STAB$.
\item For any AF, $PREF\subseteq NAI$ iff $PREF = STAB$.
\item If the AF is uncontroversial, then $\bigcap PREF=G$.
\end{itemize}

\newpage

\section{Conclusion}

In this note, we have reviewed \cite[Section 2]{Dung:95}, with the aim of making all of the proofs explicit. We do not claim originality as many of the results in this note are likely folklore. We hope that this note will be useful for students and researchers approaching abstract argumentation theory, in particular \cite{Dung:95}, for the first time.

\subsection{Summary of the Various Types of Sets of Arguments}

Given an AF $\ang{A,R}$ with neutrality function $n$ and defence function $d$:

\begin{table}[H]
\begin{center}
\begin{tabular}{ | c | c | c | }
\hline
\textbf{Type of Set of Args} & \textbf{Section} & \textbf{Definition} \\ \hline
$CF$ & \ref{sec:CF} (page \pageref{sec:CF}) & $S\in CF\Leftrightarrow S\subseteq n(S)$ \\ \hline
$NAI$ & \ref{sec:NAI} (page \pageref{sec:NAI}) & $NAI=\max_{\subseteq} CF$ \\ \hline
$SD$ & \ref{sec:SD} (page \pageref{sec:SD}) & $S\subseteq d(S)$ \\ \hline
$ADM$ & \ref{sec:ADM} (page \pageref{sec:ADM}) & $ADM=CF\cap SD$ \\ \hline
$COMP$ & \ref{sec:COMP} (page \pageref{sec:COMP}) & $S\in COMP\Leftrightarrow\sqbra{S\in CF, S=d(S)}$ \\ \hline
$PREF$ & \ref{sec:PREF} (page \pageref{sec:PREF}) & $PREF=\max_{\subseteq} ADM$ \\ \hline
$STAB$ & \ref{sec:STAB} (page \pageref{sec:STAB}) & $S\in STAB\Leftrightarrow S=n(S)$ \\ \hline
$G$ & \ref{sec:G} (page \pageref{sec:G}) & $G=\bigcap COMP$ \\ \hline
\end{tabular}
\caption{A table summarising the types of sets of arguments and their definition.}
\end{center}
\end{table}

\noindent All such sets of arguments, apart from the stable extensions, are non-empty for all AFs. Only the grounded extension is unique for all AFs. Lattice theoretically under $\subseteq$, $G$, $STAB$, $PREF$ and $NAI$ are antichains. $CF$, $SD$ and $ADM$ are bounded from below by $\es$. Further, $CF$, $ADM$ and $COMP$ are all directed complete and hence chain complete. $SD$ is closed under arbitrary unions and hence also directed complete, while $CF$ is closed under arbitrary intersections. Both $ADM$ and $COMP$ are complete semilattices.

\subsection{Acknowledgements}

The author would like to thank Phan Minh Dung, Christof Spanring and Hannes Strass for clarifying Theorem \ref{thm:COMP_glb_non_es} (page \pageref{thm:COMP_glb_non_es}) in private conversation, thereby pointing out the error in \cite{APYwrong1}. The author would also like to thank Jack Mumford for his constructive criticism and for suggesting the converse to Theorem \ref{thm:LC_coherent} (page \pageref{thm:LC_coherent}). The author would like to thank Christof Spanring again for clarifying Example \ref{eg:bi_inf} (page \pageref{eg:bi_inf}). The author would like to thank Amit Kuber for clarifying the two proofs of Corollary \ref{cor:ADM2PREF} (page \pageref{cor:ADM2PREF}) and the proof of Theorem \ref{thm:pref_only_empty} (page \pageref{thm:pref_only_empty}).

When the first version of this document was posted on ArXiV, the author was supported by the UK Engineering \& Physical Sciences Research Council (EPSRC) under grant \#EP/P010105/1.

\newpage

\appendix

\section{Directed Graphs}\label{app:digraphs}

In abstract argumentation, arguments and attacks between arguments are respectively represented as nodes and edges of a directed graph. We therefore recap some elementary notions of graph theory.

\begin{definition}
A \textbf{directed graph} (digraph) is a pair $\ang{A,R}$ where $A$ is a \textbf{set of nodes} and $R\subseteq A^2$ is a binary relation.
\end{definition}

\begin{example}\label{eg:path}
For $n\in\nat$, the digraph $P_n$ is called the directed path graph on $n$ nodes and has $A=\set{a_k}_{k=1}^n$ and $R=\set{\pair{a_k,a_{k+1}}}_{k=1}^{n-1}$. Notice $P_0$ is the empty graph $\ang{\es,\es}$. A typical non-empty graph $P_n$ is depicted in Figure \ref{fig:path_graph}.

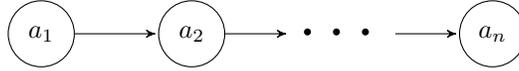
\begin{figure}[H]
\begin{center}
\begin{tikzpicture}[>=stealth',shorten >=1pt,node distance=2cm,on grid,initial/.style    ={}]
\tikzset{mystyle/.style={->,relative=false,in=0,out=0}};
\node[state] (1) at (0,0) {$ a_1 $};
\node[state] (2) at (2,0) {$ a_2 $};
\node (d) at (4,0) {\Huge $ \cdots $};
\node[state] (n) at (6,0) {$ a_n $};
\draw [->] (1) to (2);
\draw [->] (2) to (d);
\draw [->] (d) to (n);
\end{tikzpicture}
\caption{A depiction of $P_n$, from Example \ref{eg:path}.}\label{fig:path_graph}
\end{center}
\end{figure}
\end{example}

\begin{example}\label{eg:cycle}
For $n\in\nat$, the digraph $C_n$ is called the directed cycle graph on $n$ nodes and has $A=\set{a_k}_{k=1}^n$ and $R=\set{\pair{a_k,a_{k+1}}}_{k=1}^{n-1}\cup\set{\pair{a_n,a_1}}$. Clearly, $C_0=\ang{\es,\es}$. A typical non-empty graph $C_n$ is depicted in Figure \ref{fig:cycle_graph}.

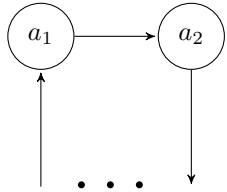
\begin{figure}[H]
\begin{center}
\begin{tikzpicture}[>=stealth',shorten >=1pt,node distance=2cm,on grid,initial/.style    ={}]
\tikzset{mystyle/.style={->,relative=false,in=0,out=0}};
\node[state] (1) at (0,0) {$ a_1 $};
\node[state] (2) at (2,0) {$ a_2 $};
\draw [->] (1) to (2);
\draw [->] (2) to (2,-2);
\node (d) at (1,-2) {\Huge $ \cdots $};
\draw [->] (0,-2) to (1);
\end{tikzpicture}
\caption{A depiction of $C_n$, from Example \ref{eg:cycle}.}\label{fig:cycle_graph}
\end{center}
\end{figure}
\end{example}

\noindent In what follows let $\ang{A,R}$ be an arbitrary digraph.

\begin{definition}\label{def:forward_back}
For $a\in A$, we define two sets.
\begin{align}
a^+:=&\set{b\in A\:\vline\: R(a,b)}\text{ and }\label{eq:singleton_plus}\\
a^-:=&\set{b\in A\:\vline\: R(b,a)}.\label{eq:singleton_minus}
\end{align}
We call $a^+$ the \textbf{forward set of $a$} and $a^-$ the \textbf{backward set of $a$}.
\end{definition}

\begin{corollary}\label{cor:plus_minus}
For all $a,b\in A$, $b\in a^-$ iff $a\in b^+$.
\end{corollary}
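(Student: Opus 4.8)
The plan is to prove the biconditional $b \in a^-$ iff $a \in b^+$ directly by unwinding the two definitions from Definition \ref{def:forward_back}, since each side is literally a membership statement about a set carved out by the relation $R$. The key observation is that both conditions reduce to the single relational fact $R(b,a)$, so the proof is a short chain of equivalences rather than two separate implications.

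First I would recall that by Equation \ref{eq:singleton_minus}, $a^- = \set{c \in A \mid R(c,a)}$, so membership $b \in a^-$ is by definition equivalent to $R(b,a)$. Next I would recall that by Equation \ref{eq:singleton_plus}, $b^+ = \set{c \in A \mid R(b,c)}$, so membership $a \in b^+$ is by definition equivalent to $R(b,a)$ as well. Since both $b \in a^-$ and $a \in b^+$ are each equivalent to the same proposition $R(b,a)$, they are equivalent to one another, which establishes the claim.

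There is essentially no obstacle here: the statement is an immediate consequence of the symmetry between the two definitions, where the forward set uses $R$ with $a$ in the first coordinate and the backward set uses $R$ with $a$ in the second coordinate. The only thing to be careful about is keeping the order of arguments in $R$ straight, so I would write out the chain explicitly as
\begin{align*}
b \in a^- \Leftrightarrow R(b,a) \Leftrightarrow a \in b^+,
\end{align*}
using Definition \ref{def:forward_back} for the first and third equivalences. This makes the argument transparent and leaves no room for confusion about which coordinate of the relation plays which role.
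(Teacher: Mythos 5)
Your proof is correct and is essentially identical to the paper's: both unwind Definition \ref{def:forward_back} to reduce $b\in a^-$ and $a\in b^+$ to the single fact $R(b,a)$ and chain the equivalences. Nothing is missing.
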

\begin{proof}
We have that $b\in a^-$ iff $R(b,a)$ iff $a\in b^+$ by Definition \ref{def:forward_back}.
\end{proof}

\begin{definition}\label{def:source_node}
We say $a\in A$ is a \textbf{source node} iff $a^-=\es$.
\end{definition}

\begin{definition}\label{def:S_pm}
For $S\subseteq A$, we define two sets:
\begin{align}
S^+:=&\set{a\in A\:\vline\:\pair{\exists b\in S}R(b,a)}\label{eq:S_plus}\\
S^-:=&\set{a\in A\:\vline\:\pair{\exists b\in S}R(a,b)}.\label{eq:S_minus}
\end{align}
We call $S^+$ the \textbf{forward set of $S$} and $S^-$ the \textbf{backward set of $S$}.
\end{definition}

\begin{corollary}\label{cor:pm_function}
If $S=T$ then $S^\pm=T^\pm$. The converses are not in general true.
\end{corollary}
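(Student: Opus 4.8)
The plan is to handle the two assertions separately, since they are of entirely different character. For the forward implication I would argue pure well-definedness: the sets $S^+$ and $S^-$ are each given by an explicit set-builder formula (Equations \ref{eq:S_plus} and \ref{eq:S_minus}) in which the only free parameter is $S$. Consequently, if $S=T$ then substituting $T$ for $S$ throughout these formulas yields $S^+=T^+$ and $S^-=T^-$ simultaneously. Equivalently, $\pair{\cdot}^+$ and $\pair{\cdot}^-$ are genuine functions $\pow\pair{A}\to\pow\pair{A}$, and functions respect equality of inputs; I would state this in a single line covering both superscripts (hence $S^\pm=T^\pm$).

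For the second assertion I need a counterexample witnessing that neither $\pair{\cdot}^+$ nor $\pair{\cdot}^-$ is injective. The cleanest choice is a digraph consisting of a single isolated node: take $A=\set{a}$ and $R=\es$. Then $\set{a}^+=\es=\es^+$ and $\set{a}^-=\es=\es^-$ (the empty-set values coming from Corollary \ref{cor:es_pm}), while $\set{a}\neq\es$. Thus $S^+=T^+$ does not force $S=T$, and the very same pair of sets refutes the converse for the backward set as well, so both converses fail in general. If a non-trivial illustration were preferred, one could instead take two distinct sinks attacking nothing (distinct $S,T$ with $S^+=T^+=\es$) or two distinct sources (for the backward case), but the isolated-vertex example is minimal and dispatches both converses at once.

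The main obstacle here is essentially nil: this is a routine bookkeeping corollary rather than a substantive result. The only points requiring any care are (i) making explicit that the forward direction is nothing more than the observation that a set defined by a formula in $S$ is a function of $S$, and (ii) choosing a single counterexample that simultaneously defeats both converses, which the isolated-node digraph accomplishes.
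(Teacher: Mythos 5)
Your proof is correct and takes essentially the same route as the paper: the forward direction is the same observation that $S^{\pm}$ is given by a formula depending only on $S$ (the paper writes it out element-wise: $a\in S^+$ iff $\pair{\exists b\in S}R(b,a)$ iff $\pair{\exists b\in T}R(b,a)$), and the converse is refuted by exhibiting non-injectivity. The only difference is the witness: you use the isolated node with $\set{a}^{\pm}=\es=\es^{\pm}$, while the paper takes $A=\set{a,b,c}$, $R=\set{(a,c),(b,c)}$ with $S=\set{a}$, $T=\set{b}$ (so $S^+=T^+=\set{c}$) and passes to the dual digraph for the backward case — both are valid, and your single minimal example dispatches both converses at once.
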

\begin{proof}
We have that $a\in S^+$ iff $\pair{\exists b\in S}R(b,a)$ iff $\pair{\exists b\in T}R(b,a)$ because $S=T$, iff $a\in T^+$. Similarly, $a\in S^-$ iff $\pair{\exists b\in S}R(a,b)$ iff $\pair{\exists b\in T}R(a,b)$ because $S=T$, iff $a\in T^-$. The result follows.

Now consider the digraph $A=\set{a,b,c}$ and $R=\set{(a,c),(b,c)}$. This is depicted in Figure \ref{fig:pm_not_inj1}.

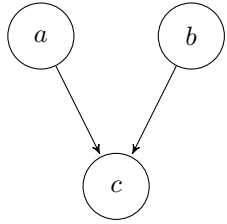
\begin{figure}[H]
\begin{center}
\begin{tikzpicture}[>=stealth',shorten >=1pt,node distance=2cm,on grid,initial/.style    ={}]
\tikzset{mystyle/.style={->,relative=false,in=0,out=0}};
\node[state] (a) at (0,0) {$ a $};
\node[state] (b) at (2,0) {$ b $};
\node[state] (c) at (1,-2) {$ c $};
\draw [->] (a) to (c);
\draw [->] (b) to (c);
\end{tikzpicture}
\caption{The first digraph mentioned in Corollary \ref{cor:pm_function}.}\label{fig:pm_not_inj1}
\end{center}
\end{figure}

\noindent Suppose $S=\set{a}$ and $T=\set{b}$. We have $S^+=T^+=\set{c}$ and $S\neq T$.

Now consider the dual digraph where $R^{\text{op}}=\set{(c,a),(c,b)}$, which is depicted in Figure \ref{fig:pm_not_inj2}.

\begin{figure}[H]
\begin{center}
\begin{tikzpicture}[>=stealth',shorten >=1pt,node distance=2cm,on grid,initial/.style    ={}]
\tikzset{mystyle/.style={->,relative=false,in=0,out=0}};
\node[state] (a) at (0,0) {$ a $};
\node[state] (b) at (2,0) {$ b $};
\node[state] (c) at (1,-2) {$ c $};
\draw [->] (c) to (a);
\draw [->] (c) to (b);
\end{tikzpicture}
\caption{The second digraph mentioned in Corollary \ref{cor:pm_function}.}\label{fig:pm_not_inj2}
\end{center}
\end{figure}
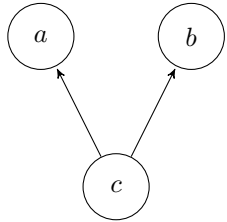

\noindent Then $S^-=T^-=\set{c}$ while $S\neq T$. Therefore, the functions $S\mapsto S^{\pm}$ are not injective.
\end{proof}

\begin{corollary}
The functions $\pair{\cdot}^{\pm}:\pow\pair{A}\to\pow\pair{A}$ where $S\mapsto S^{\pm}$ are well-defined.
\end{corollary}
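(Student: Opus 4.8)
The plan is to unpack what ``well-defined as a function'' demands and verify each requirement, both of which have essentially already been established. A map $\pair{\cdot}^{\pm}:\pow\pair{A}\to\pow\pair{A}$ is well-defined precisely when (i) for every input $S\in\pow\pair{A}$ the prescribed output $S^{\pm}$ genuinely lies in the codomain $\pow\pair{A}$, i.e. $S^{\pm}\subseteq A$, and (ii) the rule is single-valued, meaning equal inputs produce equal outputs: $S=T\Rightarrow S^{\pm}=T^{\pm}$.

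First I would dispatch requirement (i) by appeal to Definition \ref{def:S_pm}. There, $S^+$ and $S^-$ are defined as $\set{a\in A\:\vline\:\pair{\exists b\in S}R(b,a)}$ and $\set{a\in A\:\vline\:\pair{\exists b\in S}R(a,b)}$ respectively; since both are carved out as subsets of $A$ by a defining predicate, they are by construction elements of $\pow\pair{A}$. Hence the output always lands in the codomain, with no further argument needed.

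Next I would handle requirement (ii), which is exactly the forward implication of Corollary \ref{cor:pm_function}: that corollary establishes $S=T\Rightarrow S^{\pm}=T^{\pm}$. So single-valuedness is already in hand, and the two requirements together yield that $\pair{\cdot}^{\pm}$ is a well-defined function.

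I expect no genuine obstacle here, since the statement is a routine formal verification; the only substantive point is recognising that the content of ``well-defined'' splits into the codomain-membership condition (immediate from Definition \ref{def:S_pm}) and the single-valuedness condition (supplied by Corollary \ref{cor:pm_function}). The one thing worth flagging is that single-valuedness uses only the forward direction of Corollary \ref{cor:pm_function}; the failure of the converse noted there (the maps are not injective) is irrelevant to well-definedness and should not be conflated with it.
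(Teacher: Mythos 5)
Your proposal is correct and matches the paper's proof exactly: the paper likewise splits well-definedness into totality (immediate from Definition \ref{def:S_pm}, since $S^{\pm}$ is carved out as a subset of $A$) and single-valuedness (supplied by Corollary \ref{cor:pm_function}). Your closing remark that non-injectivity is irrelevant is a sensible clarification but adds nothing the paper's argument needs.
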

\begin{proof}
Totality follows from Definition \ref{def:S_pm}. Single-valuedness follows from Corollary \ref{cor:pm_function}.
\end{proof}

\begin{corollary}\label{cor:set_pm}
We have:
\begin{align}
S^+=\bigcup_{a\in S}a^+\text{ and }S^-=\bigcup_{a\in S}a^-.
\end{align}
\end{corollary}
\begin{proof}
We have $a\in S^+$ iff $\pair{\exists b\in S}R(b,a)$ iff $\pair{\exists b\in S}a\in b^+$ by Equation \ref{eq:singleton_plus}, iff $a\in\bigcup_{b\in S}b^+$. Similarly, $a\in S^-$ iff $\pair{\exists b\in S}R(a,b)$ iff $\pair{\exists b\in S}a\in b^-$ by Equation \ref{eq:singleton_minus}, iff $a\in\bigcup_{b\in S}b^-$. The result follows.
\end{proof}

\begin{corollary}
We have that $\set{a}^\pm=a^\pm$.
\end{corollary}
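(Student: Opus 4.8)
The plan is to derive this immediately from Corollary \ref{cor:set_pm} (page \pageref{cor:set_pm}) by instantiating its generalised unions at the singleton set $S = \set{a}$. That corollary tells us $S^+ = \bigcup_{b\in S} b^+$ and $S^- = \bigcup_{b\in S} b^-$ for every $S\subseteq A$; taking $S = \set{a}$ collapses each union to a single term indexed by $b = a$, yielding $\set{a}^+ = a^+$ and $\set{a}^- = a^-$ at once.

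If instead I wanted a self-contained argument that does not route through Corollary \ref{cor:set_pm}, I would unfold the two definitions directly. For the forward set, an element $c$ lies in $\set{a}^+$ iff $\pair{\exists b\in\set{a}}R(b,c)$ by Definition \ref{def:S_pm}; since the only member of $\set{a}$ is $a$ itself, this bounded existential is equivalent to $R(a,c)$, which is exactly the condition $c\in a^+$ from Definition \ref{def:forward_back}. The identical one-step unfolding, with the two arguments of $R$ swapped, handles the backward set: $c\in\set{a}^-$ iff $\pair{\exists b\in\set{a}}R(c,b)$ iff $R(c,a)$ iff $c\in a^-$.

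I do not expect any genuine obstacle here. The result is a notational consistency check confirming that the set-valued operators of Definition \ref{def:S_pm} agree with the element-valued operators of Definition \ref{def:forward_back} on singletons, which is precisely what licenses the abuse of notation $a^\pm$ for $\set{a}^\pm$ used throughout the paper. The only point needing even minimal care is recognising that a bounded existential quantifier ranging over a one-element set is logically equivalent to its single instance; everything else is immediate, so I would keep the write-up to a single line citing Corollary \ref{cor:set_pm}.
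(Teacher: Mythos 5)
Your primary argument—instantiating Corollary \ref{cor:set_pm} at $S=\set{a}$ so that each generalised union collapses to its single term—is exactly the paper's proof, and it is correct. The supplementary direct unfolding of Definitions \ref{def:S_pm} and \ref{def:forward_back} is a sound (if redundant) sanity check, and your one-line write-up citing Corollary \ref{cor:set_pm} would match the paper as written.
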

\begin{proof}
Immediate from Corollary \ref{cor:set_pm} by setting $S=\set{a}$.
\end{proof}

\begin{corollary}\label{cor:es_pm}
We have that $\es^\pm=\es$.
\end{corollary}
\begin{proof}
Immediate from Corollary \ref{cor:set_pm} and the definition of the empty union.
\end{proof}

\begin{corollary}\label{cor:pm_monotonicity}
If $S\subseteq T\subseteq A$ then $S^\pm\subseteq T^\pm$. The converse is not necessarily true.
\end{corollary}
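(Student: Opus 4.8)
The plan is to prove the positive inclusion directly from the definitions and then exhibit a small counterexample to the converse. First I would handle $S^+\subseteq T^+$: assuming $S\subseteq T$, take any $a\in S^+$, so by Definition \ref{def:S_pm} there is some $b\in S$ with $R(b,a)$; since $S\subseteq T$ we have $b\in T$, whence $a\in T^+$. The argument for $S^-\subseteq T^-$ is symmetric, replacing $R(b,a)$ by $R(a,b)$ throughout. Alternatively, and more slickly, I could invoke Corollary \ref{cor:set_pm} and observe that $S^{\pm}=\bigcup_{a\in S}a^{\pm}$ is a union over a subfamily of the index set of $T^{\pm}=\bigcup_{a\in T}a^{\pm}$, so the inclusion follows immediately from the monotonicity of arbitrary unions.

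For the converse I would reuse the digraph already introduced in Corollary \ref{cor:pm_function}, namely $A=\set{a,b,c}$ with $R=\set{(a,c),(b,c)}$ (Figure \ref{fig:pm_not_inj1}). Taking $S=\set{a}$ and $T=\set{b}$, we compute $S^+=\set{c}=T^+$ and $S^-=\es=T^-$, so that both $S^+\subseteq T^+$ and $S^-\subseteq T^-$ hold, yet $S=\set{a}\not\subseteq\set{b}=T$. This shows the inclusion cannot be reversed for either the forward or the backward sets.

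The argument is entirely routine and I do not anticipate any genuine obstacle. The only point requiring slight care is the meaning of the $\pm$ abbreviation: the statement bundles two parallel claims, so both the positive inclusion and the counterexample must address the forward and backward sets simultaneously. The chosen example is convenient precisely because the backward sets are empty, making $S^-\subseteq T^-$ hold trivially while $S^+\subseteq T^+$ holds as an equality, so a single instance settles both cases at once.
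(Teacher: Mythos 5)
Your proposal is correct and takes essentially the same approach as the paper: the inclusion follows from Corollary \ref{cor:set_pm} together with monotonicity of unions (your direct element-chase is an equivalent unwinding), and the converse is refuted by the digraph of Figure \ref{fig:pm_not_inj1} with $S=\set{a}$, $T=\set{b}$. The only difference is cosmetic: since $S^-=T^-=\es$ in that digraph, you correctly observe that this single example disposes of both the forward and backward cases at once, whereas the paper additionally passes to the dual digraph of Figure \ref{fig:pm_not_inj2} to handle the $-$ case separately.
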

\begin{proof}
The result follows from Corollary \ref{cor:set_pm} and the properties of set-theoretic union. From Figure \ref{fig:pm_not_inj1}, it is the case that $S^+=T^+=\set{c}$ but $S=\set{a}\not\subseteq T=\set{b}$. Similarly, from Figure \ref{fig:pm_not_inj2}, we have $S^-\subseteq T^-$ and $S\not\subseteq T$. Therefore, the converse is false.
\end{proof}

\noindent Notice how Corollary \ref{cor:pm_function} follows trivially from Corollary \ref{cor:pm_monotonicity} using the definition of set equality.

\begin{definition}
Let $B\subseteq A$. The \textbf{induced directed graph (digraph) w.r.t. $B$} (a.k.a the \textbf{full subgraph w.r.t. $B$}) is the digraph $\ang{B,R_B}$ where $R_B=B^2\cap R$. We will write $\ang{B,R_B}\subseteq_{g}\ang{A,R}$.
\end{definition}

\noindent Clearly, $\subseteq_g$ is a reflexive and transitive relation on the class of digraphs.

\newpage

\section{Unions and Intersections of Bounded Quantifiers}

\begin{definition}\label{def:bdd_quant}
Let $X$ be a set and $A, P\subseteq X$. We have
\begin{align}
\pair{\exists x\in A}P(x)\Leftrightarrow&\pair{\exists x\in X}\pair{x\in A\wedge P(x)},\label{eq:bdd_E}\text{ and }\\
\pair{\forall x\in A}P(x)\Leftrightarrow&\pair{\forall x\in X}\pair{x\in A\rightarrow P(x)}.\label{eq:bdd_A}
\end{align}
where we will occasionally write ``$,$'' instead of ``$\wedge$'', and ``$\Rightarrow$'' instead of ``$\rightarrow$''.
\end{definition}

\noindent The following results are likely folklore; they are included as we make use of them.

\begin{theorem}\label{thm:bq_cup_cap}
Let $X$ be a set. Let $I$ be any index set such that $\set{A_i}_{i\in I}\subseteq\pow\pair{X}$. Let $P(x)$ be any unary predicate that may be true or false on the elements $x$ of $X$. We have the following results:
\begin{align}
\pair{\forall x\in\bigcup_{i\in I}A_i}P(x)\Leftrightarrow&\pair{\forall i\in I}\pair{\forall x\in A_i}P(x)\label{eq:all_cup},\\
\pair{\exists x\in\bigcup_{i\in I}A_i}P(x)\Leftrightarrow&\pair{\exists i\in I}\pair{\exists x\in A_i}P(x)\label{eq:some_cup},\\
\pair{\exists x\in\bigcap_{i\in I}A_i}P(x)\Rightarrow&\pair{\forall i\in I}\pair{\exists x\in A_i}P(x)\label{eq:some_cap}\text{ and }\\
\pair{\forall x\in\bigcap_{i\in I}A_i}P(x)\Leftarrow&\pair{\exists i\in I}\pair{\forall x\in A_i}P(x)\label{eq:all_cap}.
\end{align}
Further, the converses of Equations \ref{eq:some_cap} and \ref{eq:all_cap} are not true in general.
\end{theorem}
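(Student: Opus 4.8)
The plan is to reduce every claim to the definitions of the bounded quantifiers (Definition \ref{def:bdd_quant}, Equations \ref{eq:bdd_E} and \ref{eq:bdd_A}) together with the membership characterisations $x\in\bigcup_{i\in I}A_i\Leftrightarrow\pair{\exists i\in I}x\in A_i$ and $x\in\bigcap_{i\in I}A_i\Leftrightarrow\pair{\forall i\in I}x\in A_i$, and then to apply standard first-order logical equivalences. The entire proof is a sequence of quantifier manipulations, so I would organise it as four short arguments, one per displayed implication.

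First I would prove Equation \ref{eq:all_cup}. Unfolding the bounded universal via Equation \ref{eq:bdd_A} gives $\pair{\forall x\in X}\sqbra{\pair{\exists i\in I}x\in A_i\to P(x)}$, and I would invoke the equivalence $\pair{\exists i\,Q_i}\to R\equiv\pair{\forall i}\pair{Q_i\to R}$ (valid because $R=P(x)$ does not mention $i$), followed by commuting the two universal quantifiers, to land on $\pair{\forall i\in I}\pair{\forall x\in A_i}P(x)$. Equation \ref{eq:some_cup} is the exact dual: unfolding the bounded existential via Equation \ref{eq:bdd_E} produces $\pair{\exists x\in X}\sqbra{\pair{\exists i\in I}x\in A_i\wedge P(x)}$, and the equivalence $\pair{\exists i\,Q_i}\wedge R\equiv\pair{\exists i}\pair{Q_i\wedge R}$ together with commuting the two existential quantifiers yields the claim. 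Both are genuine biconditionals since every step used is itself an equivalence.

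For the two one-directional results I would exhibit a common witness. For Equation \ref{eq:some_cap}, supposing $x_0\in\bigcap_{i\in I}A_i$ with $P(x_0)$, then for every $i\in I$ we have $x_0\in A_i$, so $x_0$ witnesses $\pair{\exists x\in A_i}P(x)$, establishing $\pair{\forall i\in I}\pair{\exists x\in A_i}P(x)$. For Equation \ref{eq:all_cap}, supposing some $i_0\in I$ satisfies $\pair{\forall x\in A_{i_0}}P(x)$, any $x\in\bigcap_{i\in I}A_i$ lies in $A_{i_0}$ in particular, whence $P(x)$. These do not reverse precisely because one side demands a single common witness or index while the other only supplies one per index, and I would pin this down with counterexamples: for the converse of Equation \ref{eq:some_cap}, take $I=\set{1,2}$, $A_1=\set{1}$, $A_2=\set{2}$ and $P$ true everywhere, so each $A_i$ has a $P$-witness yet $\bigcap_{i\in I}A_i=\es$; for the converse of Equation \ref{eq:all_cap}, take $A_1=\set{1,2}$, $A_2=\set{1,3}$ with $P$ true only at $1$, so $\bigcap_{i\in I}A_i=\set{1}$ makes the left side hold while neither $A_i$ has $P$ throughout.

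There is no real obstacle here; the content is the bookkeeping of quantifier laws. The only point needing care is the vacuous edge cases, namely that $\pair{\exists x\in\es}P(x)$ is false and $\pair{\forall x\in\es}P(x)$ is true, which is exactly what makes the first counterexample work and which I would state explicitly. I would also note in passing the degenerate index set $I=\es$, although the downstream uses of this theorem will have nonempty $I$.
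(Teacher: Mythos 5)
Your proposal is correct and follows essentially the same route as the paper: both unfold the bounded quantifiers via Definition \ref{def:bdd_quant}, establish Equations \ref{eq:all_cup} and \ref{eq:some_cup} by distributing and commuting quantifiers, obtain Equations \ref{eq:some_cap} and \ref{eq:all_cap} from the one-way swap $\exists\forall\Rightarrow\forall\exists$ (which your witness arguments simply phrase semantically), and refute the converses with counterexamples of exactly the same shape (an empty intersection with per-index witnesses, and a common element satisfying $P$ inside sets that each contain a $P$-failure). The only presentational difference is that the paper runs an explicit case analysis on $X=\es$ and $I=\es$ while you compress this into a remark, which is harmless since your quantifier manipulations remain valid in the degenerate cases.
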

\begin{proof}
If $X=\es$, then $\set{A_i}_{i\in I}=\set{\es}$ for any index set $I$. Therefore, Equation \ref{eq:all_cup} reduces to true iff true, Equation \ref{eq:some_cup} reduces to false iff false, Equation \ref{eq:some_cap} reduces to false implies true if $I=\es$ and false implies false if $I\neq\es$, and Equation \ref{eq:all_cap} reduces to false implies true if $I=\es$ and true implies true if $I\neq\es$. In all cases, the four equations are true regardless of $I$.

If $X\neq\es$ and $I=\es$, then Equation \ref{eq:all_cup} reduces to true iff true, Equation \ref{eq:some_cup} reduces to false iff false, Equation \ref{eq:some_cap} reduces to true implies true, and Equation \ref{eq:all_cap} reduces to false implies either true or false, as $P(x)$ may not hold on all elements on $X$. In all cases, the four equations are true.

Now assume $X\neq\es$ and $I\neq\es$. For Equation \ref{eq:all_cup}:

\begin{align*}
\pair{\forall x\in\bigcup_{i\in I}A_i}P(x)\Leftrightarrow&\pair{\forall x\in X}\pair{x\in\bigcup_{i\in I}A_i\Rightarrow P(x)}\\
\Leftrightarrow&\pair{\forall x\in X}\sqbra{\pair{\exists i\in I}x\in A_i\Rightarrow P(x)}\\
\Leftrightarrow&\pair{\forall x\in X}\pair{\forall i\in I}\sqbra{x\in A_i\Rightarrow P(x)}\\
\Leftrightarrow&\pair{\forall i\in I}\pair{\forall x\in X}\sqbra{x\in A_i\Rightarrow P(x)}\\
\Leftrightarrow&\pair{\forall i\in I}\pair{\forall x\in A_i}P(x),
\end{align*}
which proves Equation \ref{eq:all_cup}.

For Equation \ref{eq:some_cup}:
\begin{align*}
\pair{\exists x\in\bigcup_{i\in I}A_i}P(x)\Leftrightarrow&\pair{\exists x\in X}\pair{x\in\bigcup_{i\in I}A_i\text{ and }P(x)}\\
\Leftrightarrow&\pair{\exists x\in X}\sqbra{\pair{\pair{\exists i\in I}x\in A_i}\text{ and }P(x)}\\
\Leftrightarrow&\pair{\exists x\in X}\pair{\exists i\in I}\pair{x\in A_i\text{ and }P(x)}\\
\Leftrightarrow&\pair{\exists i\in I}\pair{\exists x\in X}\pair{x\in A_i\text{ and }P(x)}\\
\Leftrightarrow&\pair{\exists i\in I}\pair{\exists x\in A_i}P(x),
\end{align*}
which proves Equation \ref{eq:some_cup}.

For Equation \ref{eq:some_cap}:
\begin{align}
\pair{\exists x\in\bigcap_{i\in I}A_i}P(x)\Leftrightarrow&\pair{\exists x\in X}\pair{x\in\bigcap_{i\in I}A_i\text{ and }P(x)}\nonumber\\
\Leftrightarrow&\pair{\exists x\in X}\sqbra{\pair{\pair{\forall i\in I}x\in A_i}\text{ and }P(x)}\nonumber\\
\Leftrightarrow&\pair{\exists x\in X}\pair{\forall i\in I}\pair{x\in A_i\text{ and }P(x)}\label{eq:step1}\\
\Rightarrow&\pair{\forall i\in I}\pair{\exists x\in X}\pair{x\in A_i\text{ and }P(x)}\label{eq:step2}\\
\Leftrightarrow&\pair{\forall i\in I}\pair{\exists x\in A_i}P(x),\nonumber
\end{align}
which proves Equation \ref{eq:some_cap}. The converse may not be true because Equation \ref{eq:step1} is strictly stronger than Equation \ref{eq:step2}. More concretely, let $X=\nat$ and $P(x)\Leftrightarrow x$ is even. Let $I=\set{1,2}$ and let $A_i=:A$ and $A_2=: B$. Let $A:=\set{2,3,4}$ and $B:=\set{5,6,7}$. Clearly, $\pair{\exists x\in A}P(x)$ is true with 2 as a witness. Further, $\pair{\exists x\in B}P(x)$ is true with 6 as a witness. Therefore, the right hand side of Equation \ref{eq:some_cap} is true. However, $A\cap B=\es$, which has no even numbers. Therefore, $\pair{\exists x\in A\cap B}P(x)$ is false.

Finally, for Equation \ref{eq:all_cap}:

\begin{align}
\pair{\exists i\in I}\pair{\forall x\in A_i}P(x)\Leftrightarrow&\pair{\exists i\in I}\pair{\forall x\in X}\pair{x\in A_i\Rightarrow P(x)}\label{eq:step3}\\
\Rightarrow&\pair{\forall x\in X}\pair{\exists i\in I}\pair{x\in A_i\Rightarrow P(x)}\label{eq:step4}\\
\Leftrightarrow&\pair{\forall x\in X}\sqbra{\pair{\pair{\forall i\in I}x\in A_i}\Rightarrow P(x)}\nonumber\\
\Leftrightarrow&\pair{\forall x\in X}\pair{x\in\bigcap_{i\in I}A_i\Rightarrow P(x)}\nonumber\\
\Leftrightarrow&\pair{\forall x\in\bigcap_{i\in I}A_i}P(x),\nonumber
\end{align}
which proves half of Equation \ref{eq:all_cap}. The converse may not be true because Equation \ref{eq:step3} is strictly stronger than Equation \ref{eq:step4}. More concretely, let $X=\nat$ and $P(x)\Leftrightarrow x$ is even. Let $I=\set{1,2}$ such that $A_1=:A$ and $A_2=:B$. Let $A:=\set{1,4}$ and $B:=\set{3,4}$. Clearly $A\cap B=\set{4}$ and hence $\pair{\forall x\in A\cap B}P(x)$ is true. However, neither $\pair{\forall x\in A}P(x)$ nor $\pair{\forall x\in A\cap B}P(x)$ is true. In the first case, this is because $1\in A$. In the second case, this is because $3\in B$.

This proves all four equations and shows that the converses of the latter two are not true in general.
\end{proof}

\noindent We now apply Theorem \ref{thm:bq_cup_cap} (page \pageref{thm:bq_cup_cap}) to digraphs.

\begin{corollary}\label{cor:cup_cap_plus_minus}
We have that
\begin{align}
&\pair{\bigcup_{i\in I}S_i}^{\pm}=\bigcup_{i\in I}S_i^{\pm}\label{eq:plus_cup}
\\
&\pair{\bigcap_{i\in I}S_i}^{\pm}\subseteq\bigcap_{i\in I}S_i^{\pm},\label{eq:plus_cap}
\end{align}
and the converse is not true in general for Equation \ref{eq:plus_cap}.
\end{corollary}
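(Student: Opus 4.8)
The plan is to reduce both displayed statements to the quantifier identities already proved in Theorem \ref{thm:bq_cup_cap}, using the characterisation of $S^{\pm}$ from Definition \ref{def:S_pm} as a bounded existential quantifier ranging over $S$. Fix the family $\set{S_i}_{i\in I}$ and an arbitrary $a\in A$. For the forward set I would feed Theorem \ref{thm:bq_cup_cap} the predicate $P(b):\Leftrightarrow R(b,a)$, treating $a$ as a parameter and $b$ as the bound variable, so that $a\in S^{+}\Leftrightarrow\pair{\exists b\in S}P(b)$ by Definition \ref{def:S_pm}; for the backward set I would instead take $P(b):\Leftrightarrow R(a,b)$. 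Because only the roles of the two coordinates of $R$ differ, the whole argument runs identically for the two signs, which is why I would state it once with the $\pm$ notation.

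For Equation \ref{eq:plus_cup} I would chain equivalences: $a\in\pair{\bigcup_{i\in I}S_i}^{\pm}$ iff $\pair{\exists b\in\bigcup_{i\in I}S_i}P(b)$, which by Equation \ref{eq:some_cup} is equivalent to $\pair{\exists i\in I}\pair{\exists b\in S_i}P(b)$, i.e. $\pair{\exists i\in I}a\in S_i^{\pm}$, i.e. $a\in\bigcup_{i\in I}S_i^{\pm}$. Since $a$ is arbitrary the two sets coincide. For Equation \ref{eq:plus_cap} the same chain applies, except that at the critical step I would invoke Equation \ref{eq:some_cap} in place of Equation \ref{eq:some_cup}; as that line is only a one-way implication I obtain $a\in\pair{\bigcap_{i\in I}S_i}^{\pm}\Rightarrow\pair{\forall i\in I}a\in S_i^{\pm}\Leftrightarrow a\in\bigcap_{i\in I}S_i^{\pm}$, which is exactly the claimed inclusion.

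To show the converse of Equation \ref{eq:plus_cap} fails, I would exhibit a small digraph (to be drawn as Figure \ref{fig:cap_plus}, so that the forward reference in the proof of Corollary \ref{cor:neutrality_cup_cap} resolves): take $A=\set{a,b,c,x}$ with $R=\set{\pair{a,x},\pair{c,x}}$, and set $S_1=\set{a,b}$ and $S_2=\set{b,c}$. Then $S_1^{+}=S_2^{+}=\set{x}$, so $S_1^{+}\cap S_2^{+}=\set{x}$, whereas $S_1\cap S_2=\set{b}$ and $\pair{S_1\cap S_2}^{+}=b^{+}=\es$; hence $\bigcap_{i\in I}S_i^{+}\not\subseteq\pair{\bigcap_{i\in I}S_i}^{+}$. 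For the backward set I would reverse every edge, i.e. use the dual digraph $R^{\text{op}}=\set{\pair{x,a},\pair{x,c}}$, which gives the symmetric counterexample $S_1^{-}=S_2^{-}=\set{x}$ while $\pair{S_1\cap S_2}^{-}=b^{-}=\es$.

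I expect no genuine obstacle: the result is essentially a transcription of Theorem \ref{thm:bq_cup_cap} into digraph notation. The only points that demand a little care are, first, recognising that the predicate handed to Theorem \ref{thm:bq_cup_cap} carries the free parameter $a$ while the quantifier binds $b$, so one must not silently quantify over the wrong variable; and second, checking that the single displayed counterexample digraph does double duty, both falsifying the converse here and supplying the digraph that the neutrality corollary cites.
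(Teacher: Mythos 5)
Your proposal is correct and matches the paper's own proof essentially line for line: the same reduction of Definition \ref{def:S_pm} to the bounded-quantifier identities of Theorem \ref{thm:bq_cup_cap} (Equations \ref{eq:some_cup} and \ref{eq:some_cap}), the same uniform treatment of the $\pm$ cases via the two orientations of $R$, and the identical counterexample $\ang{\set{a,b,c,x},\set{\pair{a,x},\pair{c,x}}}$ with $S_1=\set{a,b}$, $S_2=\set{b,c}$ and its dual for the backward case.
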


\noindent Notice that for $I=\es$, both results reduce to $\es=\es$ and $A^\pm\subseteq A$, which are trivially true.

\begin{proof}
For the $+$ case in Equation \ref{eq:plus_cup}, we apply Equations \ref{eq:S_plus} and \ref{eq:some_cup}.
\begin{align*}
a\in\pair{\bigcup_{i\in I}S_i}^+\Leftrightarrow&\pair{\exists b\in\bigcup_{i\in I}S_i}R(b,a)\\
\Leftrightarrow&\pair{\exists i\in I}\pair{\exists b\in S_i}R(b,a)\\
\Leftrightarrow&\pair{\exists i\in I} a\in S_i^+\\
\Leftrightarrow& a\in\bigcup_{i\in I}S_i^+.
\end{align*}
Notice that for the $-$ case, the proof is the same but with $+$ replaced by $-$ and $R(b,a)$ replaced by $R(a,b)$. Therefore, Equation \ref{eq:plus_cup} follows. 

For the $+$ case in Equation \ref{eq:plus_cap}, we apply Equations \ref{eq:S_plus} and \ref{eq:some_cap}.
\begin{align*}
a\in\pair{\bigcap_{i\in I}S_i}^+\Leftrightarrow&\pair{\exists b\in\bigcap_{i\in I}S_i}R(b,a)\\
\Rightarrow&\pair{\forall i\in I}\pair{\exists b\in S_i}R(b,a)\\
\Leftrightarrow&\pair{\forall i\in I} a\in S_i^+\\
\Leftrightarrow& a\in\bigcap_{i\in I}S_i^+.
\end{align*}
Notice that for the $-$ case, the proof is the same but with $+$ replaced by $-$ and $R(b,a)$ replaced by $R(a,b)$. Therefore, Equation \ref{eq:plus_cap} follows.

For the converse to the $+$ case in Equation \ref{eq:plus_cap}, consider the digraph $$\ang{\set{a,b,c,x},\set{\pair{a,x},\pair{c,x}}}.$$ This digraph is depicted in Figure \ref{fig:cap_plus}.

\begin{figure}[H]
\begin{center}
\begin{tikzpicture}[>=stealth',shorten >=1pt,node distance=2cm,on grid,initial/.style    ={}]
\tikzset{mystyle/.style={->,relative=false,in=0,out=0}};
\node[state] (a) at (0,0) {$ a $};
\node[state] (b) at (2,0) {$ b $};
\node[state] (c) at (4,0) {$ c $};
\node[state] (x) at (2,-2) {$ x $};
\draw [->] (a) to (x);
\draw [->] (c) to (x);
\end{tikzpicture}
\caption{The digraph mentioned in Corollary \ref{cor:cup_cap_plus_minus}.}\label{fig:cap_plus}
\end{center}
\end{figure}
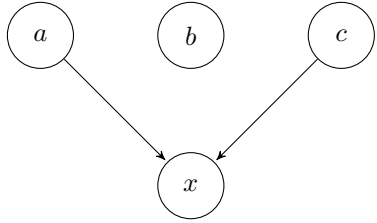

\noindent Let $S_1:=\set{a,b}$ and $S_2:=\set{b,c}$. We have $S_1^+=S_2^+=\set{x}$ and $S_1^+\cap S_2^+=\set{x}$. However, $S_1\cap S_2=\set{b}$ and $\pair{S_1\cap S_2}^+=\es$. It is not the case that $\set{x}\subseteq\es$.

For the converse in the $-$ case, take the dual of the digraph in Figure \ref{fig:cap_plus} with the same definitions of $S_1$ and $S_2$ to conclude that $\set{x}\not\subseteq\es$. Therefore, the converse to Equation \ref{eq:plus_cap} does not hold in general.
\end{proof}

\newpage

\section{A Recap of Order and Lattices}\label{app:order}

Many of these results can be found in textbooks such as \cite{Lattices} or \cite{subrahmanyam2018elementary}.

\subsection{Partially Ordered Sets}

In abstract argumentation, we will be concerned with partially ordered sets of the form $\ang{\mathcal{S},\subseteq}$ and lattices of the form $\ang{\mathcal{S},\cap,\cup}$, where $\mathcal{S}\subseteq\pow\pair{A}$ for a set $A$. We therefore recap the necessary elements of order and lattice theory.

\begin{definition}
A binary relation $R$ on a set $A$ is a \textbf{partial order} iff it is reflexive, antisymmetric and transitive.
\end{definition}

\begin{definition}
A digraph $\ang{A,R}$ is a \textbf{partially ordered set} (poset) iff $R$ is a partial order on $A$. We denote $R$ with $\leq$ and the underlying set will be denoted as $P$.
\end{definition}

\begin{example}
For any set $X$, the structure $\ang{\pow\pair{X},\subseteq}$ is a poset.
\end{example}

\begin{definition}
A \textbf{sub-poset} of $\ang{P,\leq}$ is a poset $\ang{Q,\leq'}$ where $Q\subseteq P$ and $\leq':=Q^2\cap\leq$.
\end{definition}

\begin{example}
For any set $X$ and family of subsets $\mathcal{S}\subseteq\pow\pair{X}$, the structure $\ang{\mathcal{S},\subseteq}$ is a poset.
\end{example}

\begin{definition}
Let $\ang{P,\leq}$ be a poset. We say $m\in U\subseteq P$ is a \textbf{maximal element} of $U$ iff for all $x\in U$, if $m\leq x$ then $m=x$. We let $\max_\leq U\subseteq U$ denote the set of all maximal elements of $U$.
\end{definition}

\begin{definition}
Let $\ang{P,\leq}$ be a poset. We say $m\in U\subseteq P$ is the \textbf{greatest element} of $U$ iff for all $x\in U$, $m\geq x$.
\end{definition}

\noindent Note that greatest elements are necessarily unique and maximal, but maximal elements do not have to be unique, and a maximal element do not have to be the greatest.

\begin{definition}
A poset $\ang{P,\leq}$ is an \textbf{antichain} iff $\leq$ is the diagonal relation on $P$.
\end{definition}

\noindent Clearly for any $\ang{P,\leq}$ and $U\subseteq P$, $\leq$ restricted onto $\max_\leq U$ is an antichain.


\begin{definition}
A poset $\ang{P,\leq}$ is a \textbf{totally ordered set} (toset) a.k.a. \textbf{chain} iff for all $x,y\in P$, either $x\leq y$ or $y\leq x$.
\end{definition}

\begin{definition}
Let $\ang{P,\leq}$ be a poset. A \textbf{chain} in $\ang{P,\leq}$ is a sub-poset that is also a toset.
\end{definition}



\begin{definition}
Let $\ang{P,\leq}$ be a poset. An \textbf{$\omega$-chain} is a $\geq$-ascending sequence\footnote{Recall for any set $X$, an $X$-sequence is a function $f:\nat\to X$. If $X$ has a partial order $\leq$, then $f$ is $\geq$-ascending iff $f(i)\leq f(i+1)$.} in $\ang{P,\leq}$.
\end{definition}

\begin{definition}
Let $\ang{P,\leq}$ be a poset and $Q\subseteq P$. We say $u\in P$ is an \textbf{upper bound for $Q$} iff $\pair{\forall x\in Q}x\leq u$. Similarly, $l\in P$ is a \textbf{lower bound for $Q$} iff $\pair{\forall x\in Q}l\leq x$.
\end{definition}

\begin{definition}
A poset $\ang{P,\leq}$ is \textbf{directed} iff every (not necessarily distinct) pair of elements in $P$ has an upper bound in $P$.
\end{definition}

\begin{definition}
We say $D$ is a \textbf{directed subset} of $\ang{P,\leq}$ iff $\ang{D,\leq\cap D^2}$ is a poset and is directed.\footnote{N.B. The upper bound of every pair of elements now has to be in $D$.}
\end{definition}

\begin{corollary}
Let $\ang{D,\leq}$ be an infinite directed poset and $x\in D$. Then there exists an $\omega$-chain in $\ang{D,\leq}$ starting from $x$.
\end{corollary}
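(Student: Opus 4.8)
The plan is to construct the desired sequence $f:\nat\to D$ by recursion from $f(0):=x$, letting directedness supply a legitimate successor at each stage. Suppose $f(0)\leq\cdots\leq f(n)$ has been defined with $f(0)=x$. If $f(n)$ is the greatest element of $D$, set $f(n+1):=f(n)$. Otherwise there is some $y\in D$ with $y\not\leq f(n)$, and since $\ang{D,\leq}$ is directed the pair $\set{f(n),y}$ has an upper bound $z\in D$; because $z\geq f(n)$ while $z\geq y\not\leq f(n)$, we have $z\neq f(n)$ and hence $f(n)<z$, so I set $f(n+1):=z$. In either case $f(n)\leq f(n+1)$, so $f$ is $\geq$-ascending, i.e. an $\omega$-chain, and $f(0)=x$ by construction. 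Directedness is exactly what guarantees the successor exists, and the infinitude of $D$ is what lets the chain keep making genuine (strict) progress at every stage where the current term is not already greatest.

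First I would make the recursion rigorous. Since we perform countably many selections, each depending on the previous term (a witness $y$ and then an upper bound $z$), the construction is an instance of dependent choice, which is consistent with the paper's appeal to the well-ordering theorem and Zorn's lemma elsewhere. To avoid repeated ad hoc choices I would instead fix a well-ordering of $D$ in advance and always take $y$ to be the $\leq$-least witness and $z$ the $\leq$-least upper bound of $\set{f(n),y}$, so that the successor rule is an explicit function of $f(n)$ and the recursion theorem applies cleanly.

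The step I expect to be the main obstacle is not producing successors—directedness hands those to us immediately—but correctly delimiting what the statement claims. Under the definition in force an $\omega$-chain is merely a $\geq$-ascending sequence, with no requirement of strict increase or infinite image; consequently the recursion above always terminates successfully, and indeed the constant sequence $f\equiv x$ already witnesses the conclusion, so the hypotheses are not logically indispensable for the literal claim. The care therefore lies in not over-reading the corollary: one cannot strengthen it to assert an $\omega$-chain with infinite image. An infinite antichain $\set{a_i}_{i\in\nat}$ augmented with a single top element $t$ (with $a_i\leq t$ for all $i$) is an infinite directed poset in which every ascending sequence starting from some $a_i$ must jump to $t$ and then stabilise, so no infinite-image $\omega$-chain from $a_i$ exists. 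I would close by noting explicitly that the constructed chain may become eventually constant precisely when it reaches a greatest element, and that this is nonetheless a bona fide $\omega$-chain in the sense defined.
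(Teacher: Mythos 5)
Your proof is correct, and its engine is the same as the paper's: starting from $x$, you recursively invoke directedness to supply an upper bound of the current term and a chosen companion, producing a $\geq$-ascending $\nat$-indexed sequence. Where you genuinely part ways with the paper is in what is claimed about the resulting chain, and here your version is the sounder one. The paper sets $x_0=x_1:=x$ and lets $x_{i+2}$ be an upper bound of $\set{x_i,x_{i+1}}$, then asserts that the sequence ``has cardinality $\aleph_0$'' and that new terms ``may not be equal'' to earlier ones ``because $D$ is infinite''; that justification does not survive scrutiny, and your antichain-plus-top example refutes the implicit strengthening: the poset $\set{a_i}_{i\in\nat}\cup\set{t}$ with $a_i\leq t$ for all $i$ is infinite and directed, yet every ascending sequence starting from $a_0$ has image of size at most two. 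Your construction handles this correctly by making strict progress exactly when the current term is not greatest and conceding eventual constancy otherwise, and your observation that, under the paper's Definition of an $\omega$-chain (merely a $\geq$-ascending sequence $f:\nat\to D$), the constant sequence $f\equiv x$ already witnesses the literal statement is exactly right --- the hypotheses of infinitude and directedness are not logically indispensable for the claim as stated, only for the (false) infinite-image strengthening. Your care about the choice principles involved (dependent choice, or fixing a well-ordering in advance so the successor rule is a genuine function and the recursion theorem applies) also goes beyond anything in the paper's proof. In short: same recursive upper-bound construction, but your delimitation of the statement is correct where the paper's proof overclaims.
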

\begin{proof}
We construct the chain as follows. Let $x_0:=x$ and $x_1:=x$. Trivially, $x_0\leq x_1$. Now let $x_2\in D$ be the upper bound of $x_0$ and $x_1$, which is well-defined because $D$ is directed and trivially $x_0\leq x_1\leq x_2$. Inductively, for $i\in\nat$, $x_{i+2}$ is the upper bound of the set $\set{x_{i+1},x_i}$, which always exists in $D$, and may not be equal to $x_i$ or $x_{i+1}$ because $D$ is infinite. Therefore, the $D$-sequence $\set{x_i}_{i\in\nat}$ is well-defined, has cardinality $\aleph_0$, and satisfies $\pair{\forall i\in\nat}x_i\leq x_{i+1}$. This is our $\omega$-chain in $\ang{D,\leq}$, starting from $x_0=x$.
\end{proof}

In what follows let $\ang{P,\leq}$ be an arbitrary poset.

\begin{definition}
For $Q\subseteq P$, let $Q^{\text{up}}:=\set{u\in P\:\vline\:\pair{\forall x\in Q}x\leq u}$ denote the set of upper bounds for $Q$. The \textbf{least upper bound of $Q$}, $\sup Q$, is $\min_\leq Q^{\text{up}}$.
\end{definition}

\begin{definition}
For $Q\subseteq P$, let $Q^{\text{low}}:=\set{l\in P\:\vline\:\pair{\forall x\in Q}l\leq x}$ denote the set of lower bounds for $Q$. The \textbf{greatest lower bound of $Q$}, $\inf Q$, is $\max_\leq Q^{\text{low}}$.
\end{definition}

\noindent Depending on $Q$, $\sup Q$ and $\inf Q$ may not exist, but if each does exist then each is unique.

\begin{definition}
Let $\ang{P,\leq}$ and $\ang{Q,\leq'}$ be two posets. A function $f:P\to Q$ is \textbf{monotone} iff $\pair{\forall x,y\in P}\sqbra{x\leq y\Rightarrow f(x)\leq' f(y)}$.
\end{definition}

\begin{corollary}
Let $\ang{P,\leq}$ and $\ang{Q,\leq'}$ be two posets and $f:P\to Q$ be a monotone function. If $\set{x_i}_{i\in\nat}$ is a $\leq$-chain in $P$, then $\set{f\pair{x_i}}_{i\in\nat}$ is a $\leq'$-chain in $Q$.
\end{corollary}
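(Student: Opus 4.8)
The plan is to unwind the definition of a chain (here an $\omega$-chain, i.e. a $\leq$-ascending $P$-sequence) and apply monotonicity of $f$ pointwise. By definition, $\set{x_i}_{i\in\nat}$ being an $\omega$-chain means $\pair{\forall i\in\nat}x_i\leq x_{i+1}$. First I would fix an arbitrary $i\in\nat$ and invoke the defining property of monotonicity: since $x_i\leq x_{i+1}$, we have $f\pair{x_i}\leq' f\pair{x_{i+1}}$. As $i$ is arbitrary, this establishes $\pair{\forall i\in\nat}f\pair{x_i}\leq' f\pair{x_{i+1}}$, which is exactly the condition for $\set{f\pair{x_i}}_{i\in\nat}$ to be a $\leq'$-ascending $Q$-sequence, hence an $\omega$-chain in $\ang{Q,\leq'}$.

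The only subtlety worth addressing explicitly is that one should confirm $\set{f\pair{x_i}}_{i\in\nat}$ is genuinely a well-defined $Q$-sequence, which is immediate because $f:P\to Q$ is a (total, single-valued) function, so $f\pair{x_i}\in Q$ for each $i$. No transitivity or antisymmetry argument is needed; the comparabilities required for a chain are supplied directly by the adjacent inequalities $x_i\leq x_{i+1}$ together with monotonicity, and if one prefers the "totally ordered set" reading of a chain, the same pointwise application of monotonicity to any comparable pair $x_i,x_j$ yields comparability of $f\pair{x_i},f\pair{x_j}$.

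There is essentially no hard part: the result is a one-line consequence of the definition of monotonicity. The main thing to be careful about is simply to quote the correct definition of $\omega$-chain from the excerpt (a $\geq$-ascending sequence, equivalently $x_i\leq x_{i+1}$) so that the reader sees the inequality being fed into monotonicity, rather than leaving the reduction implicit.

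\begin{proof}
Since $\set{x_i}_{i\in\nat}$ is a $\leq$-chain, it is $\leq$-ascending, i.e. $\pair{\forall i\in\nat}x_i\leq x_{i+1}$. Fix any $i\in\nat$. As $f$ is monotone and $x_i\leq x_{i+1}$, we have $f\pair{x_i}\leq' f\pair{x_{i+1}}$. Since $i$ is arbitrary, $\pair{\forall i\in\nat}f\pair{x_i}\leq' f\pair{x_{i+1}}$, so $\set{f\pair{x_i}}_{i\in\nat}$ is a $\leq'$-ascending $Q$-sequence, i.e. a $\leq'$-chain in $\ang{Q,\leq'}$.
\end{proof}
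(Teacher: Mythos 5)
Your proof is correct and takes the same route as the paper, whose entire proof reads ``Immediate from the fact that $f$ is monotone''; you have simply spelled out the pointwise application of monotonicity to adjacent terms (and, appropriately, noted that the totally-ordered reading of ``chain'' is handled identically by applying monotonicity to any comparable pair). Nothing is missing.
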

\begin{proof}
Immediate from the fact that $f$ is monotone.
\end{proof}

\begin{definition}\label{def:prefix_postfix_fp}
Let $\ang{P,\leq}$ be a posets and let $f:P\to P$ be monotone. We say $x\in P$ is a:
\begin{enumerate}
\item \textbf{prefixed point} iff $f(x)\leq x$.
\item \textbf{postfixed point} iff $x\leq f(x)$.
\item \textbf{fixed point} iff $f(x)=x$.
\end{enumerate}
\end{definition}

\subsection{Lattices}

\begin{definition}
A poset $\ang{P,\leq}$ is a \textbf{lattice} iff for every pair $x,y\in P$, $\sup\set{x,y}$ and $\inf\set{x,y}$ always exist. We formalise this as two binary operations:
\begin{align}
\wedge:P^2\to& P\nonumber\\
\pair{x,y}\mapsto& x\wedge y:=\inf\set{x,y}.\\
\vee:P^2\to& P\nonumber\\
\pair{x,y}\mapsto& x\vee y:=\sup\set{x,y}.
\end{align}
We call $\wedge$ \textbf{meet} and $\vee$ \textbf{join}.
\end{definition}

\noindent Clearly $x\wedge x=x$ and $x\vee x=x$ for any lattice $\ang{L,\wedge,\vee}$ and $x\in L$.

\begin{example}
For any set $X$, $\ang{\pow\pair{X},\cap,\cup}$ is a lattice where $\cap$ is meet and $\cup$ is join. In abstract argumentation we will be mainly concerned with lattices of the form $\ang{\mathcal{S},\cap,\cup}$ where $\mathcal{S}\subseteq\pow\pair{X}$.
\end{example}

\noindent Every lattice is a poset where $x\leq y\Leftrightarrow x\wedge y=x\Leftrightarrow x\vee y=y$. Not every poset is a lattice.

\begin{example}
The antichain of length 2, i.e. $P=\set{0,1}$ and $0=1$ is not a lattice because $\sup\set{0,1}$ and $\inf\set{0,1}$ do not exist. However, this is a well-defined poset.
\end{example}

\begin{definition}
As a poset, the least element (if it exists) of a lattice is called the \textbf{bottom element}, and the greatest element (if it exists) of a lattice is called the \textbf{top element}. A lattice with both top and bottom elements is a \textbf{bounded lattice}.
\end{definition}

\begin{corollary}\label{cor:bot_id_join}
Let $\bot$ denote the least element of a lattice $L$ if it exists, then for any $x\in L$, $x\vee\bot=x$.
\end{corollary}
\begin{proof}
Let $x$ be as given, then by definition $\bot\leq x$. We have that $x\vee\bot=\sup\set{\bot,x}=\max\set{\bot,x}=x$.
\end{proof}

\begin{definition}
A poset is a \textbf{complete lattice} iff \textit{all} subsets $Q\subseteq P$ has a least upper bound $\bigwedge Q$ and a greatest lower bound $\bigvee Q$ in the poset.
\end{definition}

\begin{corollary}\label{cor:CL_not_empty}
A complete lattice is never empty.
\end{corollary}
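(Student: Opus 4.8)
The plan is to argue by contradiction: I would suppose that the underlying poset $\ang{P,\leq}$ of a complete lattice has $P=\es$ and derive a contradiction from the completeness requirement. The key observation is that completeness quantifies over \emph{all} subsets $Q\subseteq P$, and the empty set $\es$ is always such a subset, so in particular $\sup\es$ must exist in $P$.

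First I would unpack what $\sup\es$ actually is. By the definition of least upper bound, $\sup\es=\min_\leq\es^{\text{up}}$, where $\es^{\text{up}}=\set{u\in P\:\vline\:\pair{\forall x\in\es}x\leq u}$. Since the condition $\pair{\forall x\in\es}x\leq u$ is vacuously true for every $u$, every element of $P$ counts as an upper bound of $\es$; that is, $\es^{\text{up}}=P$. Hence $\sup\es$ is precisely the least element of $P$ (the bottom element), and its mere existence already requires $P$ to contain something.

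Now I would close the contradiction: if $P=\es$, then $\es^{\text{up}}=P=\es$, so $\min_\leq\es$ would have to exist — but the empty set has no minimal, hence no least, element. This contradicts the assumption that $\ang{P,\leq}$ is a complete lattice, in which $\sup\es$ is guaranteed to exist. Therefore $P\neq\es$, as claimed.

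There is no genuine obstacle here; the only subtlety worth flagging is the vacuous-truth step, which is what forces every element of $P$ to count as an upper bound of $\es$ and thereby identifies $\sup\es$ with the bottom element. One could equally run the dual argument using $\inf\es$ as the top element, or simply note that completeness forces $\sup P$ to exist as a bona fide element of $P$, witnessing $P\neq\es$ directly.
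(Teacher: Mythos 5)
Your proof is correct and uses essentially the same idea as the paper: apply completeness to the empty subset $\es\subseteq P$, whose supremum (and infimum) must exist as an element of $P$, forcing $P\neq\es$. The paper states this directly in two lines, whereas you wrap it in a contradiction and explicitly unpack the vacuous-truth identification of $\sup\es$ with the bottom element, but the underlying argument is identical.
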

\begin{proof}
Let $\ang{L,\leq}$ be a complete lattice. $\es\subseteq L$ has a least upper bound and a greatest lower bound, both in $L$. Therefore, $L\neq\es$.
\end{proof}

\begin{definition}
For a complete lattice, we define its greatest element to be $\top:=\bigwedge\es$ and least element to be $\bot:=\bigvee\es$.
\end{definition}

\begin{corollary}\label{cor:interval_CL}
Let $\ang{L,\leq}$ be a complete lattice and let $a,b\in L$. If $a\leq b$, then $\ang{\sqbra{a,b},\leq}$ is also a complete lattice, where $\sqbra{a,b}:=\set{x\in L\:\vline\:a\leq x\leq b}$.
\end{corollary}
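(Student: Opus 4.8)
The plan is to verify the definition of a complete lattice directly for $\ang{\sqbra{a,b},\leq}$: I would show that \emph{every} subset $Q\subseteq\sqbra{a,b}$ possesses both a least upper bound and a greatest lower bound that themselves lie in $\sqbra{a,b}$. Since the order on $\sqbra{a,b}$ is just the restriction of $\leq$ to the interval, the work splits into two tasks, namely (i) producing candidate bounds using the completeness of the ambient lattice $L$, and (ii) checking that those candidates actually sit inside the interval and remain optimal relative to the smaller poset.

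First I would treat a non-empty $Q\subseteq\sqbra{a,b}$. Using completeness of $L$, let $s$ be the least upper bound of $Q$ taken in $L$ and let $m$ be its greatest lower bound taken in $L$. The key step is a sandwiching argument showing $s,m\in\sqbra{a,b}$: because every element of $Q$ is $\geq a$, the element $a$ is a lower bound of $Q$, so $a\leq m$; dually $b$ is an upper bound of $Q$, so $s\leq b$. Picking any $q\in Q$ (possible as $Q\neq\es$) gives $m\leq q\leq b$ and $a\leq q\leq s$, whence $a\leq m\leq b$ and $a\leq s\leq b$, i.e. $m,s\in\sqbra{a,b}$. It then remains to observe that $s$ stays a \emph{least} upper bound in the sub-poset: any upper bound of $Q$ drawn from $\sqbra{a,b}$ is in particular an upper bound of $Q$ in $L$, hence is $\geq s$; the argument for $m$ as greatest lower bound is entirely dual.

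Next I would dispatch the empty subset $Q=\es$ separately, since this is where the only real subtlety lies. Here every element of $\sqbra{a,b}$ is vacuously an upper bound, so the least upper bound of $\es$ in $\sqbra{a,b}$ is the bottom element $a$, and dually its greatest lower bound is the top element $b$; both belong to $\sqbra{a,b}$ precisely because the hypothesis $a\leq b$ guarantees $a,b\in\sqbra{a,b}$. Combining the empty and non-empty cases, every subset of $\sqbra{a,b}$ has a least upper bound and a greatest lower bound in $\sqbra{a,b}$, so $\ang{\sqbra{a,b},\leq}$ is a complete lattice.

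I expect the main obstacle to be bookkeeping rather than anything conceptual: one must remember that completeness quantifies over all subsets including $\es$ (so that $\sqbra{a,b}$ is forced to carry a bottom and top, supplied respectively by $a$ and $b$), and one must keep straight the distinction between a bound computed in $L$ and a bound computed in $\sqbra{a,b}$, even though the inherited order makes the two coincide whenever the $L$-bound already lands inside the interval.
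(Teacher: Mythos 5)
Your proposal is correct and follows essentially the same route as the paper's proof: compute $\bigvee Q$ and $\bigwedge Q$ in the ambient complete lattice $L$, sandwich them into $\sqbra{a,b}$ using a chosen element of a non-empty $Q$, and handle $Q=\es$ separately with $a$ as supremum and $b$ as infimum. Your explicit check that an $L$-supremum lying in $\sqbra{a,b}$ remains least among upper bounds taken in the sub-poset is a small point the paper leaves implicit, but it is the same argument.
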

\begin{proof}
Let $a\leq b$ so $\sqbra{a,b}\neq\es$ (else it cannot be a complete lattice by Corollary \ref{cor:CL_not_empty}). Let $S\subseteq\sqbra{a,b}$ be arbitrary. If $S=\es$ then $\bigvee S=a$ and $\bigwedge S=b$. If $S\neq\es$, then for all $s\in S$, $a\leq s\leq b$ so $a$ is a lower bound of $S$ and $b$ is an upper bound of $S$. As $L$ is a complete lattice and $S\subseteq\sqbra{a,b}\subseteq L$, we have $p:=\bigwedge S,\:q:=\bigvee S\in L$. By definition, $a\leq p$ and $q\leq b$. Now let $x\in S$ be arbitrary, then $a\leq p\leq x\leq q\leq b$. This means $p,q\in\sqbra{a,b}$ and hence $\sqbra{a,b}$ contains the supremum and infimum of $S$. As $S$ is arbitrary, $\sqbra{a,b}$ is also a complete lattice.
\end{proof}

\begin{example}
For any set $X$, $\ang{\pow\pair{X},\subseteq}$ is a complete lattice. For every family of subsets of $X$, their collective union is the greatest lower bound and their collective intersection is the least upper bound.
\end{example}

\begin{lemma}\label{lem:KT}
(\textbf{Knaster-Tarski lemma}) Let $\ang{L,\leq}$ be a complete lattice and $f:L\to L$ be a monotone function. The set $F:=\set{x\in L\:\vline\:f(x)=x}$ is a bounded lattice with respect to $\leq$.
\end{lemma}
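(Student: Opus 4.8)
The plan is to build the lattice structure on $F$ from the bottom up: first produce a least and a greatest fixed point (which will serve as the bottom and top of $F$), and then show that any two fixed points have a meet and a join lying \emph{inside} $F$. The key tools throughout are the monotonicity of $f$ and the completeness of $L$.

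First I would locate the least fixed point. Consider the set of prefixed points $D:=\set{x\in L\:\vline\:f(x)\leq x}$ (Definition \ref{def:prefix_postfix_fp}). Since $L$ is complete it is non-empty (Corollary \ref{cor:CL_not_empty}), and its top element $\top$ satisfies $f(\top)\leq\top$, so $\top\in D$ and $D\neq\es$. Let $p$ be the greatest lower bound of $D$, which exists by completeness. For every $x\in D$ we have $p\leq x$, so by monotonicity $f(p)\leq f(x)\leq x$; thus $f(p)$ is a lower bound of $D$, whence $f(p)\leq p$ and $p\in D$. Applying $f$ again gives $f(f(p))\leq f(p)$, so $f(p)\in D$ and therefore $p\leq f(p)$. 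The two inequalities yield $f(p)=p$, so $p\in F$; and since every fixed point lies in $D$, $p$ is $\leq$-least in $F$. Dually, working with the postfixed points $E:=\set{x\in L\:\vline\:x\leq f(x)}$ (non-empty since $\bot\in E$) and its least upper bound, I obtain the greatest fixed point $q$. This already shows $F$ is bounded, with bottom $p$ and top $q$.

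The substantive step is to produce binary meets and joins within $F$, and here the obstacle is that $F$ is generally \emph{not} closed under the meet and join of $L$: for $x,y\in F$ there is no reason for $x\vee y$ or $x\wedge y$ to be fixed. To build the join, set $u:=x\vee y$ computed in $L$ and consider the interval $\sqbra{u,\top}$, which is itself a complete lattice by Corollary \ref{cor:interval_CL}. I would check that $f$ restricts to a monotone self-map of $\sqbra{u,\top}$: from $x\leq u$ and $y\leq u$ monotonicity gives $x=f(x)\leq f(u)$ and $y=f(y)\leq f(u)$, so $u=x\vee y\leq f(u)$; consequently for any $z\in\sqbra{u,\top}$ we have $u\leq f(u)\leq f(z)\leq\top$, i.e. $f(z)\in\sqbra{u,\top}$. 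Re-running the least-fixed-point construction of the previous paragraph inside the complete lattice $\sqbra{u,\top}$ produces the $\leq$-least fixed point of $f$ above $u$; being a fixed point above both $x$ and $y$, and least among such, it is exactly $\sup_F\set{x,y}$. The meet $\inf_F\set{x,y}$ is obtained symmetrically by taking $v:=x\wedge y$, restricting $f$ to the complete lattice $\sqbra{\bot,v}$, and extracting the greatest fixed point there.

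Together these give every pair in $F$ a least upper bound and a greatest lower bound lying in $F$, so $\ang{F,\leq}$ is a lattice, and with bottom $p$ and top $q$ it is bounded. The main difficulty, as flagged above, is precisely the non-closure of $F$ under the ambient operations, which is circumvented by the interval-plus-fixed-point argument rather than by restricting $\wedge$ and $\vee$ directly. I note that the same argument applied to an arbitrary subset $S\subseteq F$ (using the supremum of $S$ in $L$ in place of $u$) in fact upgrades the conclusion to $F$ being a complete lattice, of which the lattice property and boundedness are immediate consequences.
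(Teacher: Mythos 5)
Your proof is correct, and it is in fact more complete than the paper's own proof of this lemma. The paper's proof constructs only the greatest fixed point, as the supremum $u:=\bigvee D$ of the postfixed points $D=\set{x\in L\:\vline\:x\leq f(x)}$ (exactly your dual step), appeals to duality for the least fixed point, and opens by claiming that the conclusion is \emph{equivalent} to $F$ having greatest and least elements; but a poset with top and bottom need not be a lattice, so the paper's argument as written establishes only that $F$ is a bounded \emph{poset} and silently defers the actual lattice structure to Theorem \ref{thm:KT}. Your interval argument --- restricting $f$ to $\sqbra{x\vee y,\top}$ (resp.\ $\sqbra{\bot,x\wedge y}$), verifying it is a monotone self-map, and extracting the least (resp.\ greatest) fixed point there via Corollary \ref{cor:interval_CL} --- supplies precisely the missing binary joins and meets inside $F$, correctly identifying that $F$ is in general not closed under the ambient $\vee$ and $\wedge$; this is the same device the paper deploys later in the proof of Theorem \ref{thm:KT}, and your closing observation that running it on an arbitrary $S\subseteq F$ upgrades the conclusion to a complete lattice is exactly that theorem. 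The trade-off: the paper's route keeps the lemma short because only the extremal fixed points are actually used downstream, at the cost of not literally proving the lemma as stated; your route is self-contained, justifies the word ``lattice'' in the statement, and anticipates the theorem's machinery --- so where the two texts differ, the gap is on the paper's side, not yours.
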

\begin{proof}
This is equivalent to showing that $F$ has a greatest element (the greatest fixed point of $f$) and a least element (the least fixed point of $f$). Let $D$ be the set of all postfixed points of $L$, i.e. $D=\set{x\in L\:\vline\:x\leq f(x)}$ (Definition \ref{def:prefix_postfix_fp}). As $L$ is a complete lattice, $0_L:=\bigwedge L\in L$ and for all $x\in L$, $f(x)\in L$ so $\pair{\forall x\in L}0_L\leq f(x)$. Thus, $0_L\in D$ and hence $D\neq\es$. Therefore, there is some $x\in D$, iff $x\leq f(x)$, implies that $f(x)\leq f^2(x)$ because $f$ is monotone, iff $f(x)\in D$.

As $D\subseteq L$ and $L$ is a complete lattice, let $u:=\bigvee D\in L$. By definition, $\pair{\forall x\in D}x\leq u$ hence $f(x)\leq f(u)$. But as $x\in D$, $x\leq f(x)\leq f(u)$ so $f(u)$ is an upper bound of $D$. As $u$ is the supremum of $D$, have $u\leq f(u)$. Therefore, $u\in D$. Further, $u\leq f(u)$ implies $f(u)\leq f^2(u)$ iff $f(u)\in D$. But as $f(u)\in D$ and $u$ is the supremum of $D$, $f(u)\leq u$. Therefore, $f(u)=u$. As $P\subseteq D$, $D$ contains all fixed points, and $u\in P$ is the greatest fixed point of $f$.

Dually, as $f$ also has a least fixed point of $f$ by arguing as above on the dual lattice of $L$. Therefore, $P$ is a bounded lattice.
\end{proof}

\begin{theorem}\label{thm:KT}
(\textbf{Knaster-Tarski theorem}) Let $\ang{L,\leq}$ be a complete lattice and $f:L\to L$ be a monotone function. The set $F:=\set{x\in L\:\vline\:f(x)=x}$ is a complete lattice.
\end{theorem}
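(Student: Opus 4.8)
The plan is to upgrade Lemma \ref{lem:KT} from ``bounded lattice'' to ``complete lattice'' by showing that every subset $S\subseteq F$ possesses both a supremum and an infimum lying in $F$. Since $F$ is already known (from Lemma \ref{lem:KT}) to be a non-empty poset with a least and a greatest element, establishing arbitrary suprema and infima in $F$ is exactly what is needed. The key device is the interval-restriction trick: I will cut $L$ down to a sub-interval on which $f$ becomes an endofunction, observe that such an interval is itself a complete lattice by Corollary \ref{cor:interval_CL}, and then feed it back into Lemma \ref{lem:KT}.

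For the supremum, fix $S\subseteq F$ and let $a:=\sup S$, which exists because $L$ is a complete lattice. First I would check that $a$ is a postfixed point: for each $s\in S$ we have $s\leq a$, so by monotonicity $s=f(s)\leq f(a)$, whence $f(a)$ is an upper bound of $S$ and therefore $a\leq f(a)$. Next, taking $\top$ to be the top element of $L$, I claim $f$ restricts to a monotone endofunction on $\sqbra{a,\top}$: if $a\leq x\leq\top$ then $a\leq f(a)\leq f(x)\leq\top$, so $f(x)\in\sqbra{a,\top}$. By Corollary \ref{cor:interval_CL}, $\ang{\sqbra{a,\top},\leq}$ is a complete lattice, so Lemma \ref{lem:KT} applies and yields a least fixed point $m$ of $f$ inside $\sqbra{a,\top}$. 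This $m$ is a genuine fixed point of $f$, so $m\in F$, and $m\geq a\geq s$ for all $s\in S$, so $m$ is an $F$-upper bound of $S$. It remains to verify minimality: any $u\in F$ with $u\geq s$ for all $s\in S$ satisfies $u\geq\sup S=a$, hence $u\in\sqbra{a,\top}$ is a fixed point of $f$ in that interval, and minimality of $m$ in $\sqbra{a,\top}$ gives $m\leq u$. Thus $m$ is the least upper bound of $S$ in $F$.

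The infimum is handled dually: with $b:=\inf S$ one shows $f(b)\leq b$, restricts $f$ to $\sqbra{\bot,b}$ (where $\bot$ is the bottom of $L$), and extracts the greatest fixed point in that interval via Lemma \ref{lem:KT}, which is the greatest lower bound of $S$ in $F$. Alternatively, once all suprema in $F$ are known to exist, the infimum of $S$ can be recovered as the $F$-supremum of the set of $F$-lower bounds of $S$, using that $F$ has a bottom element. I expect the main obstacle to be the two verifications that pin $m$ down as an extremum in all of $F$ rather than merely in the chosen interval: one must argue that every fixed-point upper (resp. lower) bound of $S$ automatically falls inside the interval, so that the least (resp. greatest) fixed point delivered by the lemma is optimal among all of $F$. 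The endofunction check $f\pair{\sqbra{a,\top}}\subseteq\sqbra{a,\top}$ rests crucially on the postfixed-point inequality $a\leq f(a)$ established at the outset, so that computation must precede the interval argument.
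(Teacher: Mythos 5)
Your proposal is correct and follows essentially the same route as the paper's proof: both take the supremum $a$ of $S$ in $L$, verify the postfixed-point inequality $a\leq f(a)$, restrict $f$ to the complete lattice $\sqbra{a,\top}$ via Corollary \ref{cor:interval_CL}, extract the least fixed point there by Lemma \ref{lem:KT}, and handle the infimum dually on $\sqbra{\bot,\inf S}$. If anything, your explicit check that every fixed point of $f$ which upper-bounds $S$ must lie in $\sqbra{a,\top}$ (so that the extracted point is least among all of $F$, not merely within the interval) spells out a step the paper leaves implicit.
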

\begin{proof}
Let $S\subseteq F$ be arbitrary. Let $s:=\bigvee S \in L$ as $L$ is a complete lattice. We show that there is an element of $F$ that is greater than all elements in $S$, and this is the smallest such element of $F$. It is sufficient to show the stronger result that this element of $F$ is greater than $s$.

Consider the interval $\sqbra{s,1_L}\subseteq L$ where $1_L:=\bigvee L\in L$. We show that $f:\sqbra{s,1_L}\to\sqbra{s,1_L}$. Clearly, for all $a\in S$, $a\leq s$. As $S\subseteq F$, $f(a)=a$. Therefore, $a=f(a)\leq f(s)$ as $f$ is monotone, which means $a\leq f(s)$ and as $a\in S$ is arbitrary, $f(s)$ is an upper bound for $S$. As $s$ is the supremum for $S$, we must have $s\leq f(s)$. Let $x\in\sqbra{s,1_L}$, then $s\leq x$, and hence $f(s)\leq f(x)$ and hence $s\leq f(x)$, so $f(x)\in \sqbra{s,1_L}$. Therefore, $f:\sqbra{s,1_L}\to\sqbra{s,1_L}$ is well-defined.

As $\sqbra{s,1_L}\subseteq L$ is a complete lattice by Corollary \ref{cor:interval_CL}, and $f$ is a monotonic function, then by Lemma \ref{lem:KT}, $f$ has a least fixed point which is the supremum of $S$. By definition, this is in $F$.

Dually, the infimum of $S$ is also in $F$ as the greatest fixed point of $f$ as a function on the complete lattice $\sqbra{0_L,\bigwedge S}$. Therefore, $S\subseteq F$ has a supremum and infimum both in $F$. As $S\subseteq F$ is arbitrary, $F$ is a complete lattice.
\end{proof}

\subsection{Complete Partial Orders}

\begin{definition}
The \textbf{limit} of a chain $C$ in a poset that is also a lattice is $\sup C$.
\end{definition}

\begin{corollary}
Finite chains always have a limit in the chain.
\end{corollary}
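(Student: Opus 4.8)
The plan is to reduce the statement to the elementary fact that a non-empty finite totally ordered set has a greatest element, and then to observe that this greatest element is precisely the supremum and obviously lies in the chain. First I would prove by induction on $\abs{C}\in\nat$ that every non-empty finite chain $C$ has a greatest element under its induced total order. The base case $\abs{C}=1$ is immediate, as the unique element is greatest. For the inductive step, given a chain of size $n+1$, I would remove one element $x$ to obtain a chain of size $n$, which by the inductive hypothesis has a greatest element $m$; since $C$ is totally ordered, $m$ and $x$ are comparable, so $\max_\leq\set{m,x}$ is the greatest element of $C$.

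Next I would verify that this greatest element $m$ equals $\sup C$. Because $m$ is greatest, $\pair{\forall x\in C}x\leq m$, so $m$ is an upper bound of $C$. For any other upper bound $u$, the fact that $m\in C$ forces $m\leq u$; hence $m$ is the least upper bound, i.e. $m=\sup C$. Since $m\in C$ by construction, the limit $\sup C$ lies in the chain, which is exactly the claim. Appealing to the definition preceding this corollary, the limit of $C$ is $\sup C=m\in C$.

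The main obstacle is really just the edge case of the empty chain: by the conventions recorded earlier, $\sup\es$ is the bottom element $\bot$ of the ambient lattice, which need not belong to $\es$. I would therefore read the corollary as asserting the conclusion for non-empty finite chains (equivalently, exclude the empty chain, whose supremum $\bot\notin\es$ fails the conclusion for trivial reasons). With that caveat the argument is routine, resting entirely on the existence of greatest elements in finite tosets together with the characterisation of a greatest element of a set as its supremum.
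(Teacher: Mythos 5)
Your proof is correct and takes essentially the same approach as the paper, which simply writes the finite chain as $C=\set{c_1,\ldots,c_n}$ and observes that $\bigvee C=\max_{\leq}C$ exists and lies in $C$. You additionally supply the routine induction showing a non-empty finite toset has a greatest element, and you rightly flag the empty-chain edge case (where the limit $\sup\es=\bot$ need not lie in $\es$), a caveat the paper's proof silently glosses over by implicitly taking $C$ non-empty.
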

\begin{proof}
Let $\ang{P,\leq}$ be a poset and $C\subseteq P$ be a chain. WLOG let $C=\set{c_1,\ldots,c_n}$ as it is finite. Clearly $\bigvee C=\max_{\leq} C=:c_{\max}$, which exists and is in $C$. Therefore, $C$ contains its own limit.
\end{proof}

Clearly not every chain has a well-defined limit.

\begin{example}
The poset $\ang{\nat,\leq}$ contains itself as a chain, and $\sup\nat$ is not contained in $\nat$. Therefore, $\nat$ itself does not have a limit.
\end{example}

\begin{definition}
A poset is \textbf{chain-complete} iff the limit of every chain is in the poset.
\end{definition}

\begin{definition}
A poset is \textbf{$\omega$-complete} iff every $\omega$-chain has a least upper bound in the poset.
\end{definition}

\noindent Clearly every chain-complete poset is $\omega$-complete.

\begin{example}
$\ang{\pow\pair{X},\subseteq}$ is chain complete, where for every chain $\set{S_i}_{i\in I}$, its limit which is the union of all such elements of the chain is clearly in $\pow\pair{X}$.
\end{example}

\begin{definition}\label{def:complete_SL}
\cite[Footnote 5]{Dung:95} A poset $\ang{P,\leq}$ is a \textbf{complete semilattice} iff every non-empty subset of $P$ has an infimum, and $P$ is chain complete.
\end{definition}

\noindent There is no universally accepted and consistent definition of a ``complete semilattice''.\footnote{See \url{https://en.wikipedia.org/wiki/Semilattice}, last accessed 5/10/2017.} Definition \ref{def:complete_SL} comes from \cite[Page 330, Footnote 5]{Dung:95}. Further, the infimum is with respect to $\subseteq$ and does not have to be set-theoretic intersection.

\begin{corollary}
Let $\ang{P,\leq}$ and $\ang{Q,\leq'}$ be $\omega$-complete posets and $f:P\to Q$ be a monotone function. Let $\set{x_i}_{i\in\nat}$ be a $P$-chain with limit $x:=\bigvee_{i\in\nat}x_i\in P$.\footnote{This abuses notation as we use $\bigvee$ to refer to the iterated meet of elements in both $P$ and $Q$.} Then
\begin{align}\label{eq:limit_higher}
\bigvee_{i\in\nat}f\pair{x_i}\leq' f\pair{\bigvee_{i\in\nat}x_i}=f(x).
\end{align}
\end{corollary}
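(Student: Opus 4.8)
The plan is to reduce the claim to the single defining property of a least upper bound in $\ang{Q,\leq'}$: the quantity $\bigvee_{i\in\nat}f\pair{x_i}$ is, by definition, the \emph{least} element of $Q$ that dominates every $f\pair{x_i}$. So it will suffice to produce \emph{one} upper bound of the family $\set{f\pair{x_i}}_{i\in\nat}$, and the obvious candidate is $f(x)$; the inequality then drops out with no computation. Note also that the right-hand equality $f\pair{\bigvee_{i\in\nat}x_i}=f(x)$ is purely definitional, since $x$ was introduced as $\bigvee_{i\in\nat}x_i$, so the only content is the $\leq'$ comparison.

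First I would secure the well-definedness of the left-hand side before comparing it to anything. Because $\set{x_i}_{i\in\nat}$ is a $\leq$-chain in $P$ and $f$ is monotone, the image $\set{f\pair{x_i}}_{i\in\nat}$ is a $\leq'$-chain in $Q$, by the preceding corollary on monotone images of chains. Since $\ang{Q,\leq'}$ is $\omega$-complete, this $\omega$-chain has a least upper bound $\bigvee_{i\in\nat}f\pair{x_i}\in Q$, so the expression on the left genuinely denotes an element of $Q$. Next I would exhibit $f(x)$ as an upper bound: for each $i\in\nat$ the element $x=\bigvee_{i\in\nat}x_i$ is in particular an upper bound of the chain, so $x_i\leq x$, whence $f\pair{x_i}\leq' f(x)$ by monotonicity of $f$; as $i$ was arbitrary, $f(x)$ dominates the whole family $\set{f\pair{x_i}}_{i\in\nat}$. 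Finally, minimality of $\bigvee_{i\in\nat}f\pair{x_i}$ among upper bounds forces $\bigvee_{i\in\nat}f\pair{x_i}\leq' f(x)$, which is exactly the assertion.

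The argument is short and essentially mechanical, so I do not expect a genuine obstacle so much as a bookkeeping point to be careful about: one must not write $\bigvee_{i\in\nat}f\pair{x_i}$ until its existence has been justified, and that justification is precisely where both hypotheses in play — chain-preservation under the monotone map and $\omega$-completeness of the target poset $Q$ — are actually consumed. After that, monotonicity supplies the upper-bound property and the least-upper-bound characterisation finishes the proof. It is worth flagging that the inequality is in general strict and cannot be upgraded to equality for all such chains; demanding equality throughout is exactly the definition of $\omega$-continuity of $f$, a strictly stronger condition than monotonicity.
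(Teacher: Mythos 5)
Your proof is correct and takes essentially the same route as the paper's: exhibit $f(x)$ as an upper bound of $\set{f\pair{x_i}}_{i\in\nat}$ via $x_i\leq x$ and monotonicity, then conclude $\bigvee_{i\in\nat}f\pair{x_i}\leq' f(x)$ by the least-upper-bound property. Your explicit justification that $\bigvee_{i\in\nat}f\pair{x_i}$ exists (the image of the chain is a $\leq'$-chain, and $Q$ is $\omega$-complete) is a point the paper leaves implicit, but it is a refinement rather than a different argument.
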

\begin{proof}
By definition, for all $i\in\nat$, $x_i\leq x$. By monotonicity, $f\pair{x_i}\leq' f(x)$. Therefore, $f(x)$ is an upper bound for $\set{f\pair{x_i}}_{i\in\nat}$, with limit $\bigvee_{i\in\nat}f\pair{x_i}$. As the limit is the greatest lower bound, $\bigvee_{i\in\nat}f\pair{x_i}\leq' f(x)$, and the result follows.
\end{proof}

\begin{definition}\label{def:omega_cts}
Let $\ang{P,\leq}$ and $\ang{Q,\leq'}$ be $\omega$-complete posets and $f:P\to Q$ be a monotone function. We say $f$ is \textbf{$\omega$-continuous} iff for every chain $\set{x_i}_{i\in\nat}$ in $P$ with limit $x:=\bigvee_{i\in\nat}x_i\in P$,
\begin{align}\label{eq:limit_lower}
f\pair{\bigvee_{i\in\nat}x_i}=f(x)\leq'\bigvee_{i\in\nat}f\pair{x_i}.
\end{align}
\end{definition}

\begin{corollary}
$\omega$-continuous functions preserve limits of chains, i.e.
\begin{align}
\bigvee_{i\in\nat} f\pair{x_i}=f\pair{\bigvee_{i\in\nat} x_i}.
\end{align}
\end{corollary}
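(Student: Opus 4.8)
The plan is to obtain the desired equality by sandwiching: two inequalities in the partial order $\leq'$ that point in opposite directions, closed off by antisymmetry. Both inequalities are already available to us, one from the monotone-function corollary immediately preceding Definition \ref{def:omega_cts}, and the other from the definition of $\omega$-continuity itself.

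First I would fix the data: let $\set{x_i}_{i\in\nat}$ be a chain in the $\omega$-complete poset $\ang{P,\leq}$ with limit $x:=\bigvee_{i\in\nat}x_i\in P$, so that this supremum exists by $\omega$-completeness of $P$, and note that $\set{f(x_i)}_{i\in\nat}$ is a chain in $Q$ (since $f$ is monotone) whose supremum $\bigvee_{i\in\nat}f(x_i)$ exists by $\omega$-completeness of $Q$. Then I would invoke Equation \ref{eq:limit_higher}, which holds for any monotone $f$, to get $\bigvee_{i\in\nat}f(x_i)\leq' f(x)$. Next I would invoke the hypothesis that $f$ is $\omega$-continuous, i.e. Equation \ref{eq:limit_lower} in Definition \ref{def:omega_cts}, to get the reverse inequality $f(x)\leq'\bigvee_{i\in\nat}f(x_i)$.

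Finally, since $\leq'$ is a partial order and hence antisymmetric, the two inequalities $\bigvee_{i\in\nat}f(x_i)\leq' f(x)$ and $f(x)\leq'\bigvee_{i\in\nat}f(x_i)$ force $f(x)=\bigvee_{i\in\nat}f(x_i)$, which upon substituting $x=\bigvee_{i\in\nat}x_i$ is exactly the claimed identity $\bigvee_{i\in\nat}f(x_i)=f\pair{\bigvee_{i\in\nat}x_i}$.

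There is essentially no obstacle here: the content of the result is entirely carried by the two preceding facts, and the only work is to recognise that Equation \ref{eq:limit_higher} supplies the $\leq'$ direction valid for all monotone maps while $\omega$-continuity supplies the $\geq'$ direction, so that the corollary is a one-line consequence of antisymmetry. The only thing to be careful about is bookkeeping about which supremum lives in which poset and that both exist, which is guaranteed by the standing assumption that $P$ and $Q$ are $\omega$-complete.
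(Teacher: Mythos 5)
Your proposal is correct and takes essentially the same route as the paper's proof: both combine Equation \ref{eq:limit_higher} (the inequality valid for any monotone map) with the $\omega$-continuity inequality (Equation \ref{eq:limit_lower} of Definition \ref{def:omega_cts}) and conclude by antisymmetry of $\leq'$. The additional bookkeeping you supply---that both suprema exist by $\omega$-completeness of $P$ and $Q$, and that $\set{f\pair{x_i}}_{i\in\nat}$ is a chain by monotonicity---is left implicit in the paper but is a harmless elaboration of the same argument.
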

\begin{proof}
Immediate from Equations \ref{eq:limit_higher} and \ref{eq:limit_lower}, and that $\leq'$ is antisymmetric.
\end{proof}

\noindent Notice that this definition is analogous to the continuity of real functions, where $\bigwedge$ is replaced with $\lim$.

\begin{example}
Not all monotone functions between posets are $\omega$-continuous. Let $\ang{\nat\cup\set{\infty},\leq}$ be the poset of extended natural numbers with the usual order relation on $\infty$. Let $\ang{\set{0',\infty'},\leq'}$ be another poset such that $0'<'\infty'$. Let $f$ be the function
\begin{align}
f:\nat\cup\set{\infty}&\to\set{0',\infty'}\nonumber\\
n&\mapsto 0'\nonumber\\
\infty&\mapsto\infty'.
\end{align}
Clearly, $f$ is monotonic, because if $n\leq m$ then $f(n)=0'\leq' f(m)=0'$ and $n\leq\infty$ means $0'\leq'\infty'$. However, $f$ is not $\omega$-continuous. Consider the chain $\set{n}_{n\in\nat}$ with limit $\infty$. This chain is mapped to the constant sequence $\set{0'}_{n\in\nat}$ with limit $0'$. Therefore,
\begin{align}
f\pair{\bigvee_{n\in\nat}\set{n}}=f\pair{\infty}=\infty'\leq'\bigvee_{n\in\nat}f(n)=0'
\end{align}
is false.
\end{example}

\begin{definition}\label{def:dcpo}
A poset is \textbf{directed-complete} (dcpo) iff every directed subset has its least upper bound in the poset.
\end{definition}

\begin{definition}
A poset is \textbf{pointed directed-complete} (cppo) iff it is a directed-complete poset with a least element.
\end{definition}

We now recap a fixed point theorem in lattice theory: iterating an $\omega$-continuous function $f$ from a cppo to itself, starting from the bottom element, will eventually yield the least fixed point of $f$.

\begin{theorem}\label{thm:fp_thm}
(\textbf{Kleene's fixed point theorem}) Let $\ang{D,\leq,\bot}$ be a cppo and the function $f:\ang{D,\leq,\bot}\to\ang{D,\leq,\bot}$ be $\omega$-continuous. Let $F=\bigvee_{n\in\nat}f^n\pair{\bot}$ be the supremum of the ($\leq$-increasing) chain $\set{f^n\pair{\bot}}_{n\in\nat}$. Then $F$ is the least fixed point of $f$.
\end{theorem}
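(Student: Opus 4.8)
The plan is to verify three things in turn: that $\set{f^n(\bot)}_{n\in\nat}$ is a genuine $\leq$-ascending $\omega$-chain (so that $F$ is well-defined), that $F$ is a fixed point of $f$, and that $F$ lies below every fixed point of $f$. The first is a routine induction using that $f$ is monotone (every $\omega$-continuous function is monotone by Definition \ref{def:omega_cts}): the base case $f^0(\bot)=\bot\leq f(\bot)$ holds because $\bot$ is the least element, and if $f^n(\bot)\leq f^{n+1}(\bot)$, then applying the monotone $f$ gives $f^{n+1}(\bot)\leq f^{n+2}(\bot)$. Since $\ang{D,\leq,\bot}$ is a cppo and every $\omega$-chain is a directed subset, its supremum $F$ exists in $D$.

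To show $F$ is a fixed point, I would apply $\omega$-continuity directly. As $\omega$-continuous functions preserve the limits of chains, we have
\begin{align*}
f(F)=f\pair{\bigvee_{n\in\nat}f^n(\bot)}=\bigvee_{n\in\nat}f\pair{f^n(\bot)}=\bigvee_{n\in\nat}f^{n+1}(\bot).
\end{align*}
The remaining step, which I expect to be the delicate one, is the reindexing identity $\bigvee_{n\in\nat}f^{n+1}(\bot)=\bigvee_{n\in\nat}f^n(\bot)$. This holds because the two sets $\set{f^{n+1}(\bot)}_{n\in\nat}$ and $\set{f^n(\bot)}_{n\in\nat}$ differ only by the element $f^0(\bot)=\bot$, which is $\leq$-below every element of the chain and therefore cannot affect the supremum. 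Thus $f(F)=F$.

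Finally, to show $F$ is the least fixed point, I would take an arbitrary fixed point $x$ with $f(x)=x$ and prove $f^n(\bot)\leq x$ for all $n$ by induction: $f^0(\bot)=\bot\leq x$ since $\bot$ is least, and if $f^n(\bot)\leq x$ then $f^{n+1}(\bot)=f\pair{f^n(\bot)}\leq f(x)=x$ by monotonicity and the fixed-point property of $x$. Hence $x$ is an upper bound of the chain, and since $F$ is its least upper bound we obtain $F\leq x$. Combined with $f(F)=F$, this establishes that $F$ is the least fixed point of $f$.

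The main obstacle is really just the careful justification of the reindexing step, together with ensuring that $\omega$-continuity is being applied to a legitimate $\omega$-chain whose limit lies in $D$; the two remaining arguments are standard inductions over $\nat$ that hinge only on monotonicity and on $\bot$ being the least element.
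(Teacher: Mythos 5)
Your proposal is correct and takes essentially the same route as the paper's own proof: $\omega$-continuity yields $f(F)=\bigvee_{n\in\nat}f^{n+1}\pair{\bot}$, the reindexing is handled exactly as in the paper (there via $\bot$ being the identity of $\vee$, Corollary \ref{cor:bot_id_join}, which is the same observation as your ``dropping $\bot$ cannot affect the supremum''), and leastness follows by the identical induction showing any fixed point is an upper bound of the chain. One cosmetic remark: in this paper monotonicity is built into Definition \ref{def:omega_cts} as a hypothesis rather than being a consequence of $\omega$-continuity, so your parenthetical derivation of monotonicity is unnecessary here, though harmless; your extra verification that $\set{f^n\pair{\bot}}_{n\in\nat}$ is an ascending chain whose supremum exists (an $\omega$-chain is a directed subset of the cppo) is a detail the paper leaves implicit.
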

\begin{proof}
First we show that $F$ is a fixed point of $f$. Recall that $f$ is $\omega$-continuous.
\begin{align}
f\pair{F}=&f\pair{\bigvee_{n\in\nat}f^n\pair{\bot}}=\bigvee_{n\in\nat}f^{n+1}\pair{\bot}\nonumber\\
=&\bigvee_{n\in\nat^+}f^n\pair{\bot}=\bot\vee\bigvee_{n\in\nat^+}f^n\pair{\bot}=F\:,
\end{align}
as $\bot$ is the identity of $\vee$. Therefore, $F$ is a fixed point of $f$.

Then we show that $F$ is the $\leq$-least fixed point. Let $F'$ be any fixed point of $f$, so $f\pair{F'}=F'$. We show by induction that for all $n\in\nat$, $f^n\pair{\bot}\leq F'$.
\begin{enumerate}
\item (Base) For $n=0$, $f^0\pair{\bot}=\bot\leq F'$ by definition of it being a bottom element.
\item (Inductive) Assume $f^k\pair{\bot}\leq F'$. Apply $f$ to both sides of the inequality to get $f^{k+1}\pair{\bot}\leq f\pair{F'}=F'$ as $f$ is monotone and $F'$ is a fixed point. Therefore, $f^{k+1}\pair{\bot}\leq F'$ as well.
\end{enumerate}
By induction, this shows that $F'$ is an upper bound of the chain $\set{f^n\pair{\bot}}_{n\in\nat}$. As $F$ is the supremum of this chain, by definition $F\leq F'$. Therefore, $F$ is the $\leq$-least fixed point of $f$.
\end{proof}

\newpage

\section{The Axiom of Choice in Abstract Argumentation Theory}\label{app:AC}

In this appendix, we show that the axiom of choice is \textit{necessary} to assert the existence of naive and preferred extensions for \textit{all} AFs \cite{Spanring:14}. 

\begin{definition}
Let $\mathcal{F}$ be \textit{any} family of non-empty sets, i.e. $\pair{\forall X\in\mathcal{F}}X\neq\es$.\footnote{Note ``any'' means $\mathcal{F}$ can have a finite or any infinite number of these non-empty sets.} A \textbf{choice function} is a function
\begin{align}
f:\mathcal{F}\to\bigcup\mathcal{F}
\end{align}
that satisfies
\begin{align}
\pair{\forall X\in\mathcal{F}}f(X)\in X.
\end{align}
\end{definition}

\begin{definition}
The \textbf{axiom of choice (AC)} states the following: any family $\mathcal{F}$ of non-empty sets has a choice function.
\end{definition}

\noindent Intuitively, in a possibly infinite collection of non-empty sets, the axiom of choice permits one to choose exactly one element from each such set. We take as given that AC is equivalent to Zorn's lemma (ZL) \cite{kuratowski1922methode,zorn1935remark}:

\begin{lemma}\label{lem:ZL}
\textbf{Zorn's lemma (ZL):} Let $\ang{P,\leq}$ be a poset. If every chain in $P$ has an upper bound in $P$, then $P$ has an $\leq$-maximal element.
\end{lemma}

\noindent Recall that AC (as ZL) is sufficient to show that naive extensions exist for all AFs (Theorem \ref{thm:existence_of_naive}, page \pageref{thm:existence_of_naive}), and that preferred extensions exist for all AFs (Corollary \ref{cor:pref_exist}, page \pageref{cor:pref_exist}). We now show that AC is also \textit{necessary} for the existence of naive and preferred extensions. This was not stated explicitly in, e.g. \cite[Corollary 12]{Dung:95}

\begin{theorem}\label{thm:NAI_exist_implies_AC}
\cite[Theorem 7]{Spanring:14} If all AFs $\ang{A,R}$ have a naive extension, then AC is true.
\end{theorem}
\begin{proof}
Let $\mathcal{F}$ be any family of non-empty sets. We need to construct a choice function $f$. Recall that any function is a set of ordered pairs. The idea is to construct an AF from $\mathcal{F}$ such that its naive extensions correspond to choice functions on $\mathcal{F}$, hence verifying AC.

Let $A:=\set{(X,x)\:\vline\:X\in\mathcal{F},\:x\in X}$ be the arguments under consideration. Each argument picks a set $X\in\mathcal{F}$ and also an element $x\in X$. Let $R\subseteq A^2$ be defined as follows:
\begin{align}
R\pair{\pair{X_1,x_1},\pair{X_2,x_2}}\Leftrightarrow \sqbra{X_1=X_2\text{ and }x_1\neq x_2}.
\end{align}
Notice $R$ is a symmetric relation. The idea is that two such arguments attack each other if they disagree on which element of $X_1$ to choose. $\ang{A,R}$ is a well-defined AF.

Let $N\subseteq A$ be a naive extension of $\ang{A,R}$, which exists our hypothesis. We show this corresponds to a choice function on $\mathcal{F}$. Given this $N$, suppose there is some $X\in\mathcal{F}$ such that $\pair{\forall x\in X}\pair{X,x}\notin N$. Then $N\cup\set{\pair{X,x}}\supset N$ will be a conflict-free set - contradicting that $N$ is $\subseteq$-maximal conflict-free. Therefore, for every $X\in\mathcal{F}$, there is some $x\in X$ such that $(X,x)\in N$. Now suppose there is some $X\in\mathcal{F}$ such that for $x,y\in X$ distinct, $(X,x),\:(X,y)\in N$. But as $x\neq y$, the arguments $(X,x)$ and $(X,y)$ must be attacking, contradicting the conflict-freeness of $N$. It therefore follows that for each $X\in\mathcal{F}$, there is exactly one $x\in X$ such that $(X,x)\in N$. We can use this $N$, which exists, to define a choice function
\begin{align}
f_N:\mathcal{F}&\to\bigcup\mathcal{F}\nonumber\\
X&\mapsto f_N(X)=x
\end{align}
where $f_N(X)=x$ iff $(X,x)\in N$. Clearly, $\pair{\forall X\in\mathcal{F}}f_N(X)\in X$, and $f_N$ is total on $\mathcal{F}$ and single-valued. As $\mathcal{F}$ is \textit{any} family of non-empty sets, we have shown that this has a choice function. This verifies AC.
\end{proof}

\begin{theorem}\label{thm:PREF_exist_implies_AC}
\cite[Theorem 11]{Spanring:14} If all AFs $\ang{A,R}$ have a preferred extension, then AC is true.
\end{theorem}
\begin{proof}
Let $\mathcal{F}$ be any family of non-empty sets. We use the same construction as in Theorem \ref{thm:NAI_exist_implies_AC}. Notice that the resulting AF is symmetric. By our hypothesis, this AF will have some preferred extension $P$, which is also naive by Corollary \ref{cor:sym_PREF_NAI_equal} (page \pageref{cor:sym_PREF_NAI_equal}). Therefore, the set of nodes in $P$ defines a choice function on $\mathcal{F}$ in the same manner as in the proof of Theorem \ref{thm:NAI_exist_implies_AC}, thereby verifying AC.
\end{proof}

We can summarise the preceding two theorems as follows:

\begin{corollary}
TFAE:
\begin{enumerate}
\item The axiom of choice
\item Every AF has a naive extension.
\item Every AF has a preferred extension.
\end{enumerate}
\end{corollary}
\begin{proof}
Theorem \ref{thm:NAI_exist_implies_AC} shows the equivalence of statements 1 and 2. Theorem \ref{thm:PREF_exist_implies_AC} shows the equivalence of statements 1 and 3. The result follows.
\end{proof}

Note that (as usual) AC is not needed if we only consider finite AFs, e.g. in the context of modelling and implementing real argumentation.

\begin{theorem}
All finite AFs $\ang{A,R}$ satisfy $NAI\neq\es$ and $PREF\neq\es$.
\end{theorem}
\begin{proof}
If $\ang{A,R}$ is finite, then $\ang{\pow\pair{A},\subseteq}$ is a finite poset. Hence $CF,\:ADM\subseteq\pow\pair{A}$ are also finite posets by inheriting $\subseteq$ on $\pow\pair{A}$. All finite posets have at least one maximal element (e.g. \cite[page 16]{Lattices}). Therefore, $\max_{\subseteq}CF=:NAI\neq\es$ and $\max_{\subseteq}ADM=:PREF\neq\es$.
\end{proof}


\newpage

\bibliographystyle{abbrv}
\bibliography{AT_notes}

\end{document}